\definecolor{black-default}{rgb}{0, 0, 0}
\definecolor{medium-blue}{rgb}{0,0.25,1}
\definecolor{medium-green}{rgb}{0,0.75,0.25}
\definecolor{coral}{RGB}{150, 64, 0}
\newtheorem{theorem}{Theorem}
\newtheorem{lemma}{Lemma}
\newtheorem{proposition}{Proposition}
\newtheorem{corollary}{Corollary}
\newtheorem{assumption}{Assumption}
\newtheorem{definition}{Definition}
\newtheorem{remark}{Remark}
\DeclareMathOperator*{\argmin}{arg\,min}
\newcommand{\figref}[1]{{Fig.~\ref{#1}}}
\newcommand{\tableref}[1]{Table~\ref{#1}}
\newcommand{\secref}[1]{Section~\ref{#1}}
\newcommand{\subSecref}[1]{Section~\ref{#1}}
\newcommand{\apdxref}[1]{~\ref{#1}}
\newcommand{\lemmaref}[1]{Lemma~\ref{#1}}
\newcommand{\theoremref}[1]{Theorem~\ref{#1}}
\newcommand{\propositionref}[1]{Proposition~\ref{#1}}
\newcommand{\algoref}[1]{Algorithm~\ref{#1}}
\newcommand{\algrefsTwo}[2]{Algs.~\ref{#1}, \ref{#2}}
\newcommand{\alglineref}[1]{line~\ref{#1}}
\newcommand{\singleQuote}[1]{\lq{#1}\rq}
\newcommand{\alglinerefs}[2]{{lines~\ref{#1}-\ref{#2}}}
\newcommand{\paren}[1]{\left({#1}\right)}
\newcommand{\parenShort}[1]{({#1})}
\newcommand{\revision}[1]{\textcolor{black}{#1}}
\newcommand{\revisionTwo}[1]{\textcolor{black}{#1}}
\newcommand{\reverseAlg}[1]{\textcolor{coral}{#1}}
\newcommand{\algorithmScale}{0.72}
\newcommand{\barDimScale}{0.27}
\newcommand{\compareResultCostScale}{0.25}
\newcommand{\compareResultScale}{0.32}
\newcommand{\simSolWidth}{2.925cm}
\newcommand{\simSolHeight}{3.3cm}
\newcommand{\parameterScale}{0.25} %0.28
\newcommand{\revisionAlgorithm}{\color{black}}
\newcommand*{\figuretitle}[1]{%
    {\centering%   <--------  will only affect the title because of the grouping (by the
    \textbf{#1}%              braces before \centering and behind \medskip). If you remove
    \par\medskip}%            these braces the whole body of a {figure} env will be centered.
}
\renewenvironment{proof}[1][\proofname]{\par
  \vspace{-\topsep}% remove the space after the theorem
  \pushQED{\qed}%
  \normalfont
  \topsep0pt \partopsep0pt % no space before
  \trivlist
  \item[\hskip\labelsep
        \itshape
    #1\@addpunct{.}]\ignorespaces
}{%
  \popQED\endtrivlist\@endpefalse
  \addvspace{6pt plus 6pt} % some space after
}
\newcommand{\blockComment}[1]{\iffalse [1]\fi}
\renewcommand{\vec}[1]{\bm{#1}}
\newcommand{\vecm}[1]{\bm{#1}}
\newcommand{\removelatexerror}{\let\@latex@error\@gobble}
\renewcommand{\vec}[1]{\boldsymbol{#1}}
\title{Bidirectional Sampling-Based Motion Planning Without Two-Point Boundary Value Solution}
\author{Sharan~Nayak,~\IEEEmembership{Student Member,~IEEE,} and Michael W. Otte,~\IEEEmembership{Member,~IEEE}
\thanks{The authors Sharan Nayak (Graduate Student Researcher) and Michael Otte (Assistant Professor) are with the Department of Aerospace Engineering, University of Maryland, College Park (UMD), MD 20742 USA. Email: snayak18@umd.edu, otte@umd.edu.}}
\begin{document}
\maketitle

%%%%%%%%%%%%%%%%%%%%%%%%%%%%%%%%%%%%%%%%%%%%%%%%%%%%%%%%%%%%%%%%%%%%%%%%%%%%%%%%
\begin{abstract}
Bidirectional path and motion planning approaches decrease planning time, on average, compared to their unidirectional counterparts. In single-query feasible motion planning, using bidirectional search to find a \textit{continuous} motion plan requires an edge connection between the forward search tree and the reverse search tree. Such a tree-tree connection requires solving a two-point Boundary Value Problem (BVP). 
\revision{However, obtaining a closed-form two-point BVP solution can be difficult or impossible for many systems. While numerical methods can provide a reasonable solution in many cases, they are often computationally expensive, numerically unstable, or sensitive (to an initial guess) for the purposes of single-query sampling-based motion planning.}
To overcome this challenge, we present a novel bidirectional search \revision{strategy} that does not require solving the two-point BVP. Instead of connecting the \revision{forward and reverse} trees directly, the reverse tree's cost information is used as a guiding heuristic for forward search. This enables the forward search to  quickly \revision{grow down the reverse tree---}converging to a fully feasible solution \textit{without} a direct tree-tree connection and \textit{without} the solution to a two-point BVP. 
\revision{We propose two algorithms that use this strategy for single-query feasible motion planning for various dynamical systems, performing experiments in both simulation and hardware test-beds}. \revision{We find that these algorithms perform}  better than or \revisionTwo{comparable to} existing state-of-the-art methods \revision{with respect to} quickly finding an initial feasible solution.
\end{abstract}

\begin{IEEEkeywords}
Motion and Path Planning, Autonomous agents and Dynamics
\end{IEEEkeywords}

%%%%%%%%%%%%%%%%%%%%%%%%%%%%%%%%%%%%%%%%%%%%%%%%%%%%%%%%%%%%%%%%%%%%%%%%%%%%%%%%
\section{INTRODUCTION}
\label{sec:introduction}
\IEEEPARstart{S}{ampling} based motion planning approaches have become popular in robotics due to their simplicity and rapid exploration of high-dimensional state spaces. They involve randomly sampling collision-free feasible states of a robot in the configuration space and moving between them in a way that respects a robot's kinematics and dynamics.
\par Feasible planning \cite{lavalle1998rapidly,liu2019goal} and any-time asymptotically optimal (AO) motion planning \cite{karaman2011sampling,hauser2016asymptotically} are two problems in single-query \cite{lavalle2006planning} motion planning that have garnered interest in the motion planning community. Traditionally, feasible planning is used to find solutions to difficult problems (e.g. Piano mover's problem \cite{kuffner2000rrt}, Alpha puzzle \cite{zhang2008d}) where finding any solution is viewed as an achievement. 
\revision{For this reason, feasible planning is concerned with finding any valid solution, and all solutions are considered equivalent---there is no requirement of minimizing cost or maximizing reward}. More recently, feasible planning has also been used as the first step in any-time AO motion planning, where it is preferable to have \revision{both} an actionable solution as soon as possible 
\revision{but then also get better} solutions 
\revision{(with respect to a metric)} whenever there is \revision{additional planning time} to refine the solution toward optimal. In this article, we focus on solving the feasible motion planning problem as quickly as possible.

\begin{figure}[t!]
\begin{subfigure}
  \centering
  \figuretitle{Different types of searches}
  \includegraphics[scale=0.36]{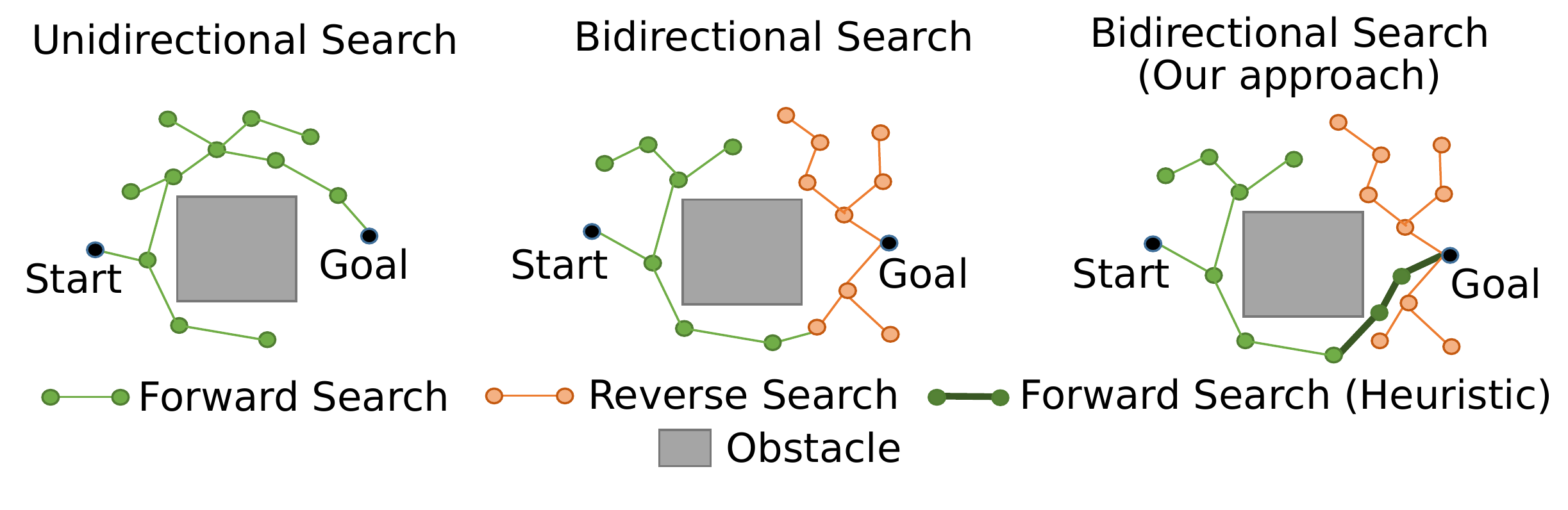}
\end{subfigure}%
\caption{Comparison of unidirectional (left) and bidirectional (middle) searches commonly used in single-query feasible planning, and our bidirectional search approach (right) where the forward search uses heuristic from reverse search to reach the goal. Our approach is applicable when two-point BVP cannot be solved.}
\label{fig:UniBiSearchFig}
\vspace{5mm}
\begin{subfigure}
  \centering
  \figuretitle{Availability and absence of two-point BVP}
  \hspace{7mm}
  \includegraphics[scale=0.35]{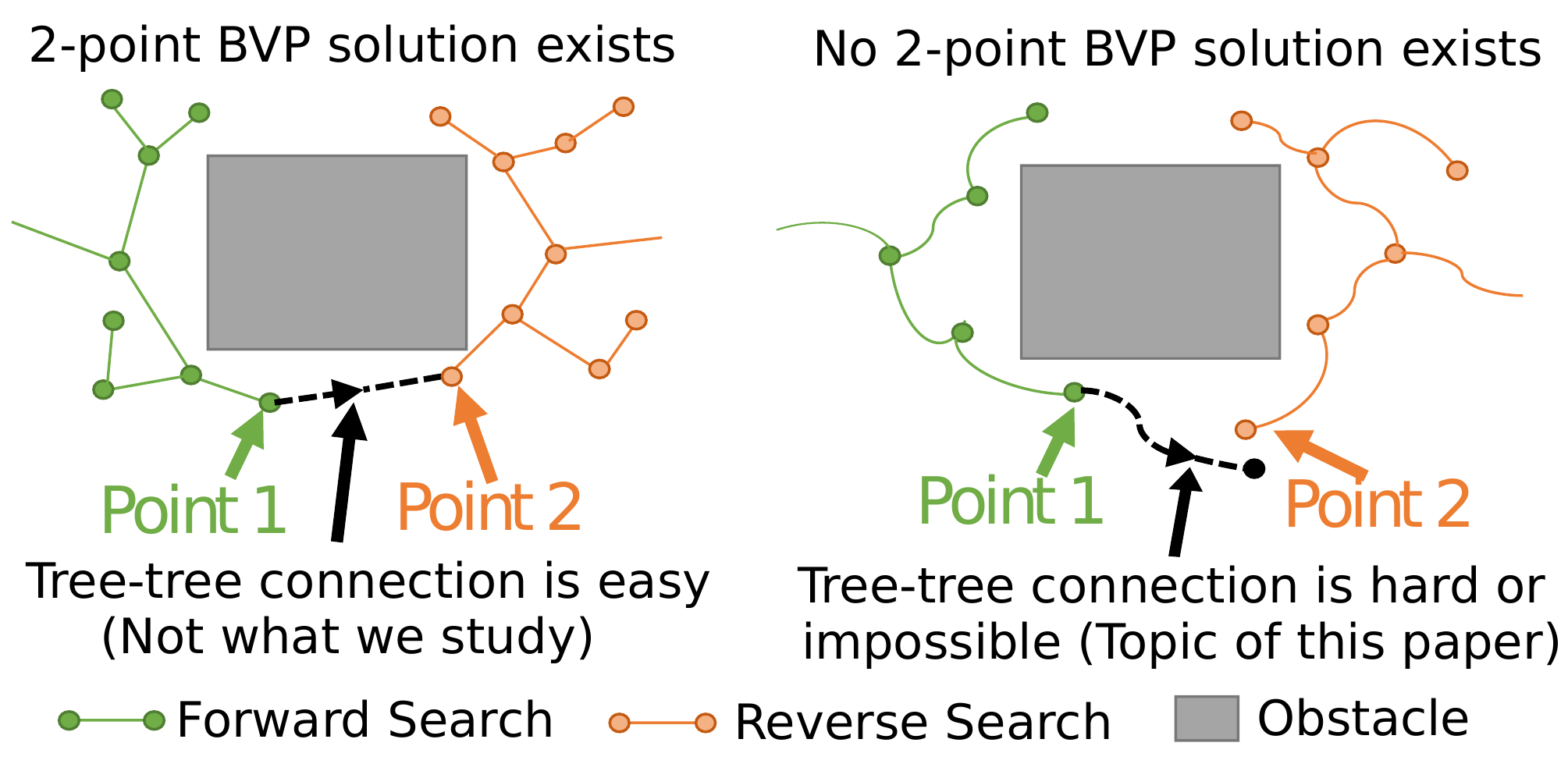}
\end{subfigure}%
\caption{Comparison of instances where the two-point BVP can (left) and cannot (right) be solved for connecting the forward and reverse trees in a bidirectional search.}
\label{fig:2BVPOnly}
\end{figure}
\par Most single-query sampling-based feasible motion planning algorithms use unidirectional or bidirectional search (\figref{fig:UniBiSearchFig}) for their planning. Unidirectional search has a single search-tree built from the start state forwards to the goal state \revision{or vice-versa,} whereas bidirectional search has two search trees: one expanding forward from the start or state and the other expanding backward from the goal state. Typically, the forward and reverse trees are connected if/when they grow ``close" to each other. However, as we shall describe shortly, this is not the approach taken in our work. 
\par Bidirectional searches are popular due to their improved speed \cite{liu2019goal} in finding an initial feasible solution in higher-dimensional search spaces and spaces with narrow passages \cite{jordan2013optimal}. Existing bidirectional search approaches \cite{jordan2013optimal,devaurs2013enhancing,starek2014bidirectional} have focused on solving the planning problem when a steering function is available, i.e., when the two-point Boundary Value Problem (BVP) \cite{lavalle2006planning,keller2018numerical} is solvable (\figref{fig:2BVPOnly}). However, it is difficult to derive closed-form solutions for many kinodynamic robots, especially with non-holonomic constraints. Practical solutions of two-point BVPs for many systems do not exist in a closed form, and may require the use of numerical methods like shooting approaches \cite{osborne1969shooting}, which are often computationally expensive for real-world planning time constraints. On the other hand, existing bidirectional approaches that do not require solving the two-point-BVP \cite{lavalle2001randomized} suffer from discontinuities when connecting the two trees, while \revision{post-processing methods} like curve-smoothing  \cite{liu2019goal} have been employed to remove the discontinuities, which may not be trivial.

Maneuver/Trajectory libraries \cite{branicky2008path,go2006autonomous} have been used to speed up execution of motion planners. The speed-up is achieved by storing the pre-computed trajectories in memory rather than calculating them online during the generation of a motion plan. \revision{However, while there are (typically) an unaccountably infinite number of trajectories that a system could potentially include in a maneuver library, only a finite subset of such trajectories can be included in practice due to memory constraints}. Therefore, maneuver libraries do not remove the requirement of solving a two-point-BVP when linking two sampling-based motion planning trees rooted at arbitrary start and goal coordinates\footnote{Special cases of symmetry can be used to create a lattice of trajectories compatible with graph search algorithms, which can be used as an alternative to sampling-based motion planning; however, such lattices present a different set trade-offs: they are not probabilistically compete and do not scale to high dimensional spaces.}.

\textbf{The main contribution of this article is a novel \mbox{\textit{bidirectional}} single-query feasible sampling-based motion planning algorithm called Generalized Bidirectional Rapidly-exploring Random Tree (GBRRT) that \textit{does not require solving the two-point BVP}}. The term `Generalized' implies that this algorithm can be applied to many systems including systems that are: holonomic, non-holonomic, and/or kinodynamic. Our method (\figref{fig:VisualAbstract}) differs from the bidirectional RRT-Connect \cite{lavalle2001randomized} approach in that GBRRT does not try to connect the two trees when they are close; instead, it uses the heuristic information provided by the reverse search tree to help the forward tree quickly grow towards the goal configuration. 

Another contribution of this article is an asymmetric variant of GBRRT called Generalized Asymmetric Bidirectional Rapidly-exploring Random Tree (GABRRT) that uses \textit{a naive reverse search} with no dynamical trajectories in the reverse tree (\figref{fig:BiDirRRTVariant}) to provide heuristic information to the forward tree; GABRRT performs better than GBRRT in certain conditions (\subSecref{subSection:GABRRT}). We prove that GBRRT and GABRRT are probabilistically complete \cite{lavalle2006planning}.
We run multiple experiments in simulation to evaluate the performance of GBRRT and GABRRT versus seven other algorithms across six different dynamical systems. We also run hardware experiments in a quadrotor test-bed to evaluate GABRRT in a more realistic physical setting.

\begin{figure}[ht]
\begin{subfigure}
  \centering
  \figuretitle{GBRRT applied to an example motion planning problem}
  \includegraphics[width=2.87cm, height=2.87cm]{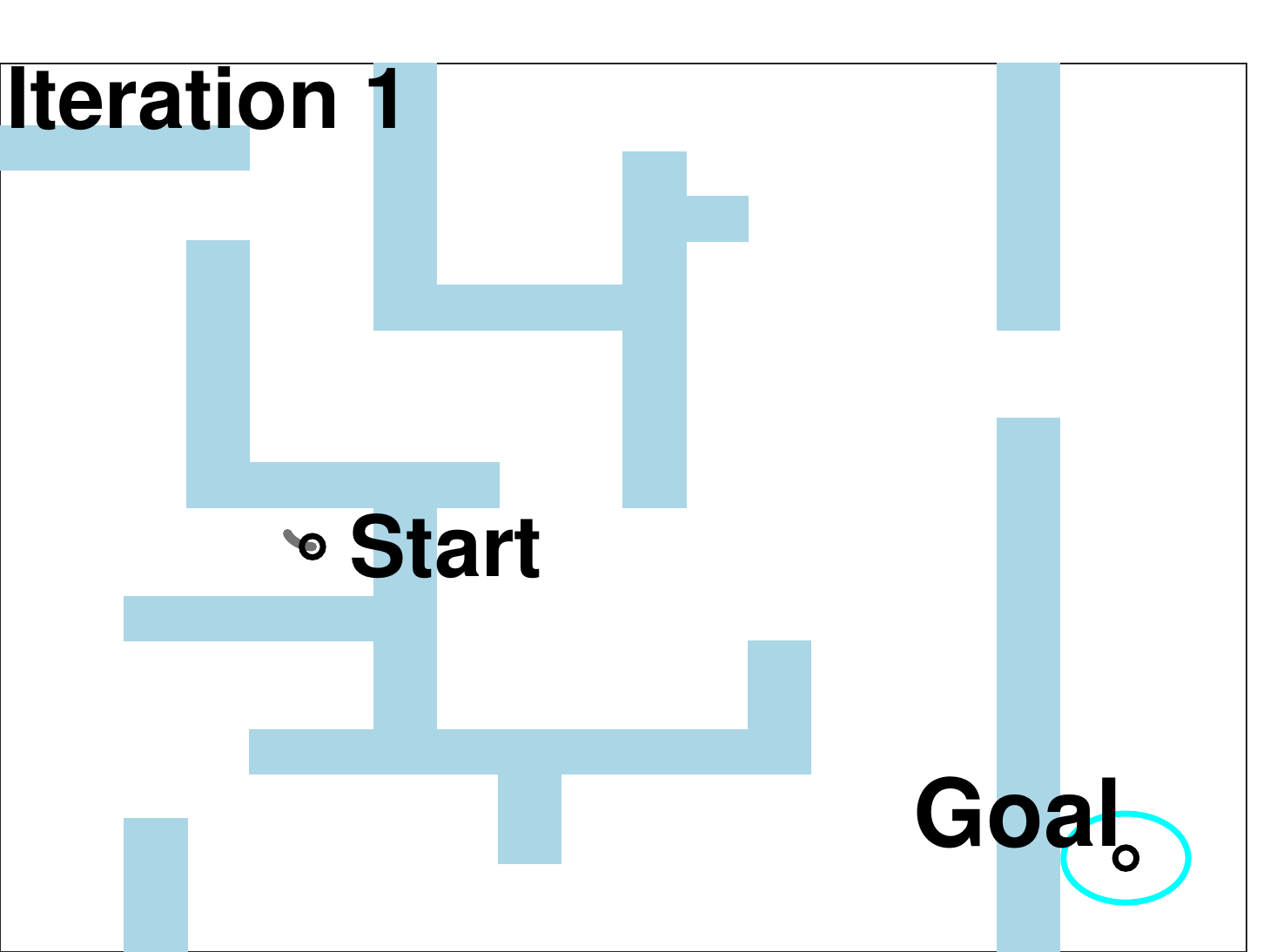}
\end{subfigure}%
\begin{subfigure}
\centering
  \includegraphics[width=2.87cm, height=2.87cm]{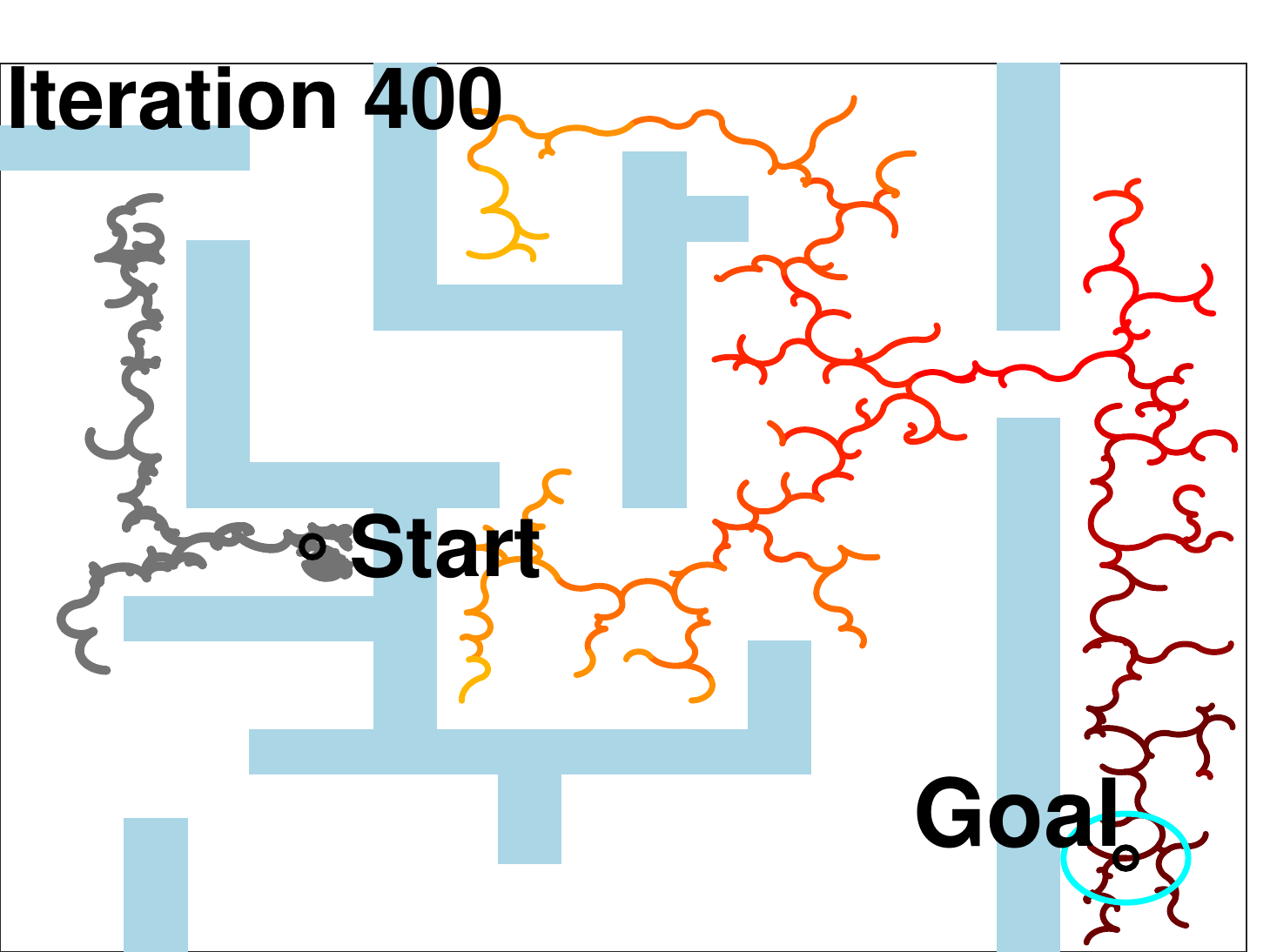}
\end{subfigure}
\begin{subfigure}
 \centering
  \includegraphics[width=2.87cm, height=2.87cm]{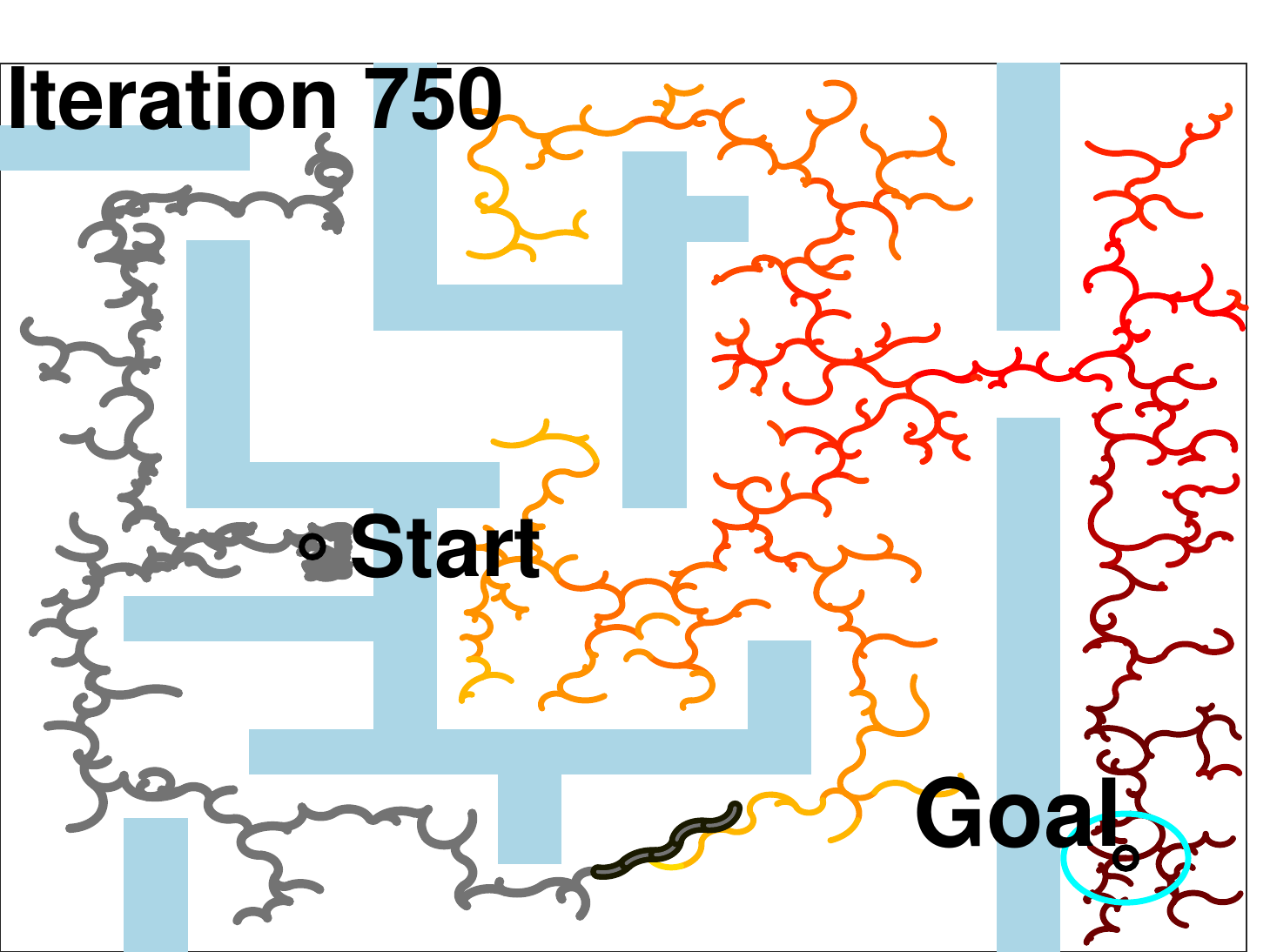}
\end{subfigure}%
\begin{subfigure}
\centering
  \includegraphics[width=2.87cm, height=2.87cm]{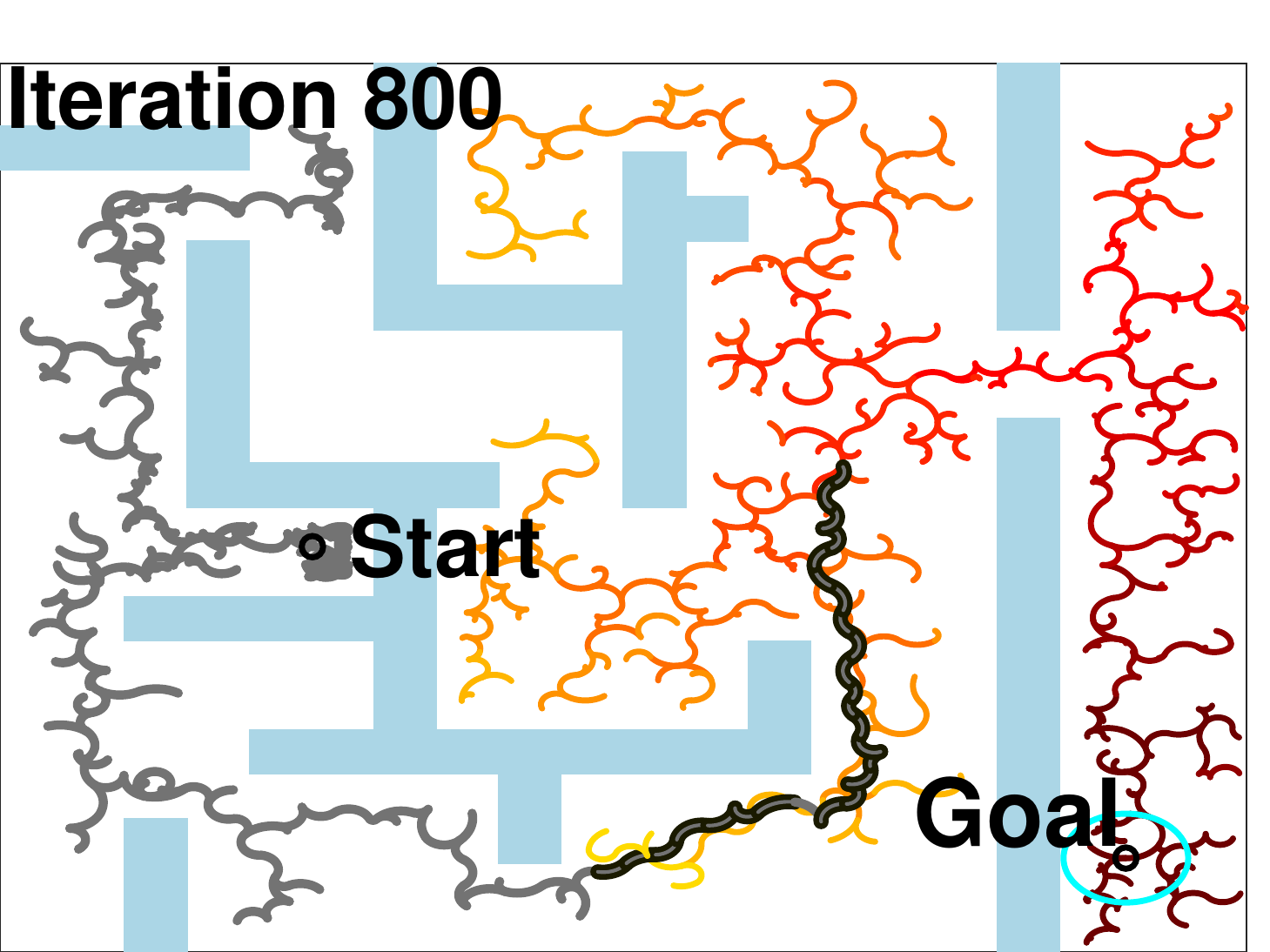}
\end{subfigure}
\begin{subfigure}
  \centering
  \includegraphics[width=2.87cm, height=2.87cm]{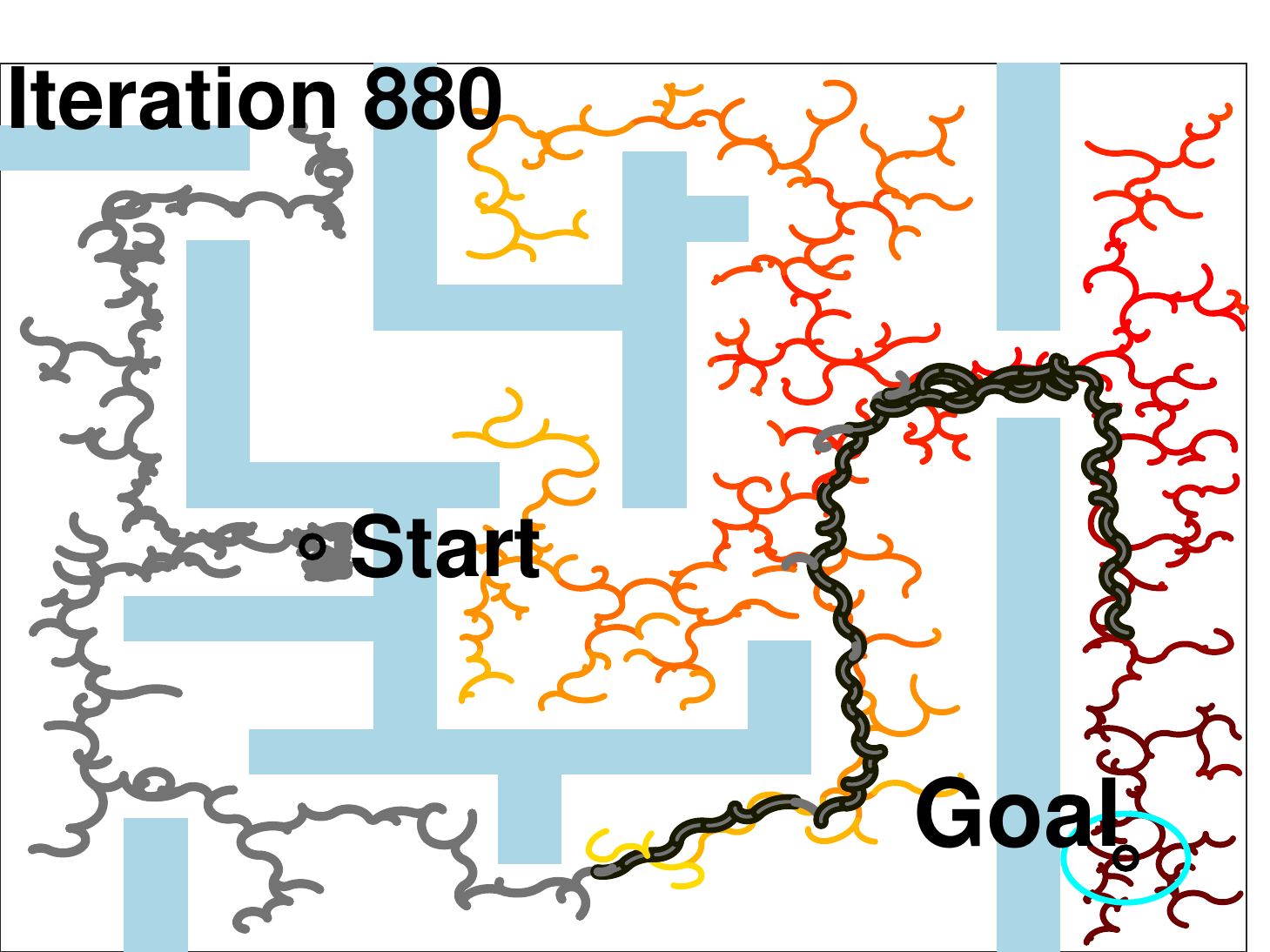}
\end{subfigure}%
\begin{subfigure}
\centering
  \includegraphics[width=2.87cm, height=2.87cm]{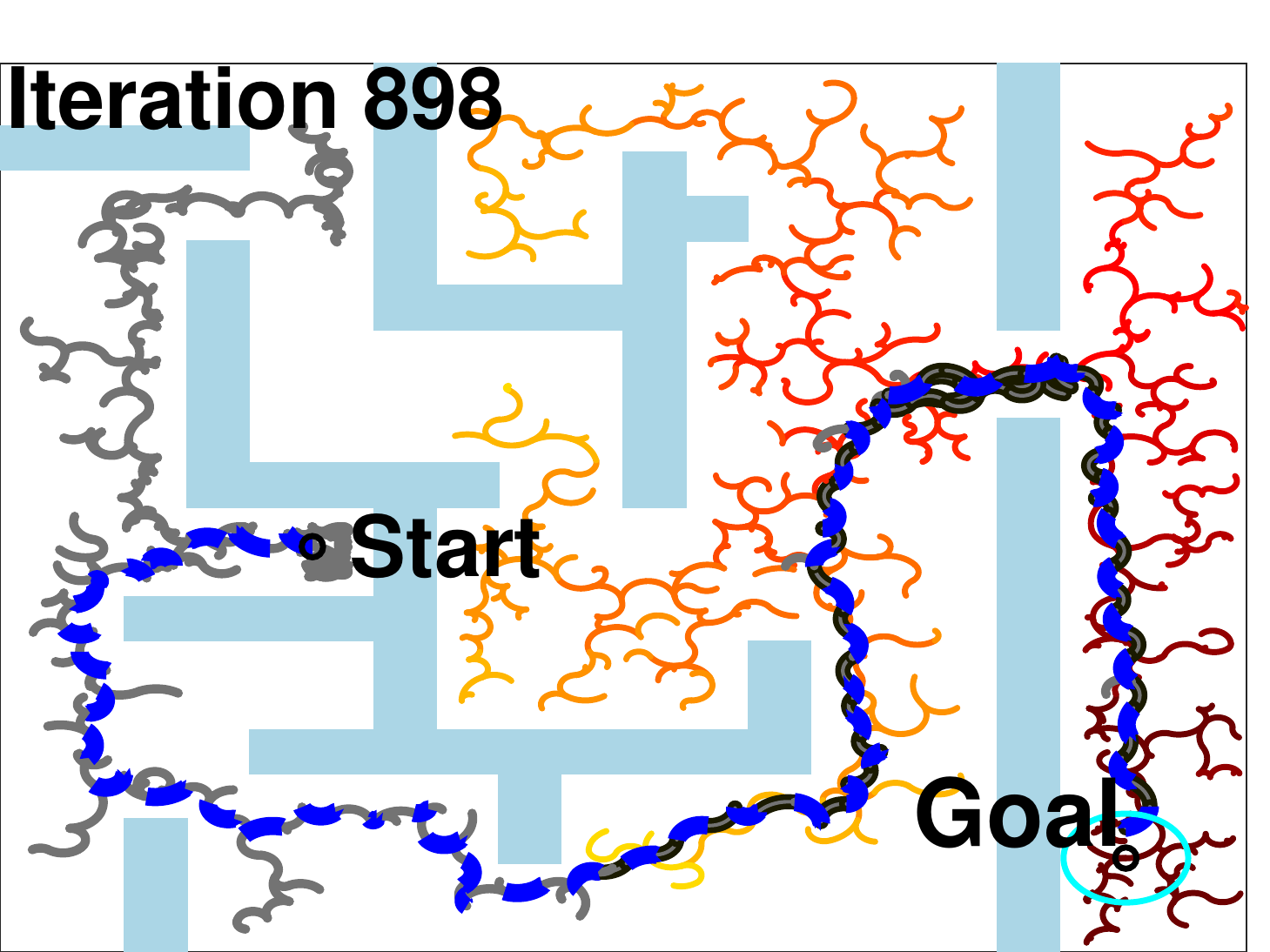}
\end{subfigure}
\begin{subfigure}
\centering
  \includegraphics[scale=0.6]{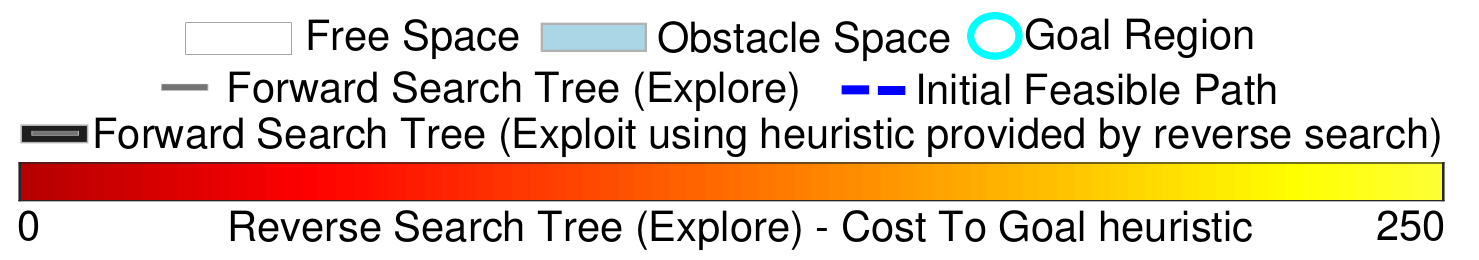}
\end{subfigure}
\vspace{-3mm}
\setlength{\belowcaptionskip}{-11pt}
\caption{Proposed Generalized Bidirectional RRT (GBRRT) algorithm, employing the forward search tree to utilize a combination of exploration and exploitation (heuristic provided by the reverse search tree) search to quickly find a feasible path in a maze.}
\label{fig:VisualAbstract}
\end{figure}

\par The rest of the article is organized as follows. \secref{sec:related_work} discusses related  work. \secref{sec:preliminaries} \revision{contains} the preliminaries, including the definition of terms used throughout the article. \secref{sec:problem_definition} provides a formal definition of our work. 
\secref{sec:algorithm_description} discusses our proposed algorithm\revision{s}. \secref{sec:specificImplementation} presents \revision{additional details related to the specific implementations used in our experiments}. \secref{sec:analysis} provides a proof of probabilistic completeness.
\secref{sec:experimental_setup} describes the setup used for running experiments. \secref{sec:results} presents the result of the experiments and limitations of our algorithm\revision{s}. \secref{sec:conclusion} concludes by summarizing the contributions and main results. An appendix contains additional experiments and runtime analysis.

\section{RELATED WORK}
\label{sec:related_work}
In the following subsections, we discuss related work on existing bidirectional sampling-based motion planning algorithms, algorithms addressing the lack of two-point BVP, and algorithms that use heuristics to improve convergence.

\subsection{Bidirectional sampling-based algorithms}
\label{subSection:BidirectionalAlgs}
There have been a variety of bidirectional sampling-based motion planning algorithms proposed in the literature. A seminal work is bidirectional RRT-Connect \cite{lavalle2001randomized} which grows two trees toward each other by having each tree grow toward the nearest vertex of the opposite tree.

Jordan et al. \cite{jordan2013optimal} develop a bidirectional variant of RRT* \cite{karaman2011sampling} that provides asymptotic optimality utilizing various heuristics and procedures to improve the convergence rate. Tahir et al. \cite{tahir2018potentially} and Xinyu et al. \cite{xinyu2019bidirectional} use artificial potential fields \cite{khatib1986real} to generate intelligent samples using gradient descent methods to improve speed of convergence of bidirectional RRT*. Devaurs et al. \cite{devaurs2013enhancing} update the unidirectional Transition-RRT (T-RRT) algorithm to create a bidirectional variant that achieves fast convergence and better quality paths over configuration spaces where cost functions are defined. Starek et al. \cite{starek2014bidirectional} propose a bidirectional approach to Fast Marching Trees (FMT*), which they show has better convergence rates to high-quality paths than many existing unidirectional planners. Strub et al. \cite{aitstar} use an asymmetric bidirectional search in Adaptively Informed Trees (AIT*) with the reverse tree created using Lifelong Planning A* (LPA*). % to find solutions and converge towards the optimum quickly. 
Although the algorithms mentioned above provide fast convergence rates, they cannot be used (in their current form) when the 2-point BVP cannot be solved, or used in practice when doing so is computationally expensive. 

Bidirectional algorithms \cite{lavalle2001randomized,liu2019goal} that do not require solving 2-point BVP, instead require extra computation to connect the discontinuity that exists between the two trees, e.g., using methods like point perturbation \cite{lavalle2001randomized} or Bézier curves \cite{liu2019goal}. In contrast, our approach avoids the tree-tree connection problem entirely by using the heuristic provided by the reverse search tree to grow \revisionTwo{the forward tree} towards the goal.

\subsection{\revision{Solving two-point BVP}}
\label{subSection:twoPointBVPAbsence}
\par Solving a 2-point BVP usually involves solving a differential equation constrained to the given start and end boundary conditions. This is non-trivial for robots with complex dynamics. Hence, researchers have \revision{explored} solving the motion planning problem \revision{either} by generating approximations to the two-point BVP \cite{hwan2011anytime}, using shooting approaches \cite{osborne1969shooting}, or by simplifying the dynamics by linearization \cite{webb2013kinodynamic} and then solving the \revision{resulting linear} 2-point BVP. More recently, \revision{a number of} random forward-propagation-based algorithms have been proposed that do not require solving the 2-point BVP. These include the Stable Sparse-RRT (SST) \cite{li2016asymptotically}, Informed SST (iSST) \cite{littlefield2018informed}, Dominance-Informed Region Trees (DIRT) \cite{littlefield2018efficient}, Asymptotically Optimal- RRT (AO-RRT) \cite{kleinbort2020refined} \cite{hauser2016asymptotically} and \revisionTwo{the algorithm using bundles of edges (BOE) \cite{shome2021asymptotically}}. These are anytime AO (near) algorithms that provide fast initial solutions (specifically iSST, DIRT, \revisionTwo{BOE} due to the use of heuristics) and then continue to improve their solutions over time with respect to a user-defined metric. A common theme in these random-propagation-based algorithms is that they are unidirectional. However, in expectation, bidirectional searches provide improved efficiency and faster convergence rates \cite{jordan2013optimal,lavalle2006planning} than unidirectional searches. Therefore, we present a new type of bidirectional search that does not require solving 2-point BVP. Some of the comparison algorithms included in our experiments \revisionTwo{(\secref{sec:experimental_setup})} are AO algorithms; \revisionTwo{however, we consider only their} feasible planning phase of operation---the discovery of the first feasible solution (the phase of operation for which GBRRT could potentially be used as a drop-in replacement).

\begin{figure}[ht]
\captionsetup{font={color=black, small}}
\centering
\begin{subfigure}
  \centering
  \includegraphics[scale=0.3]{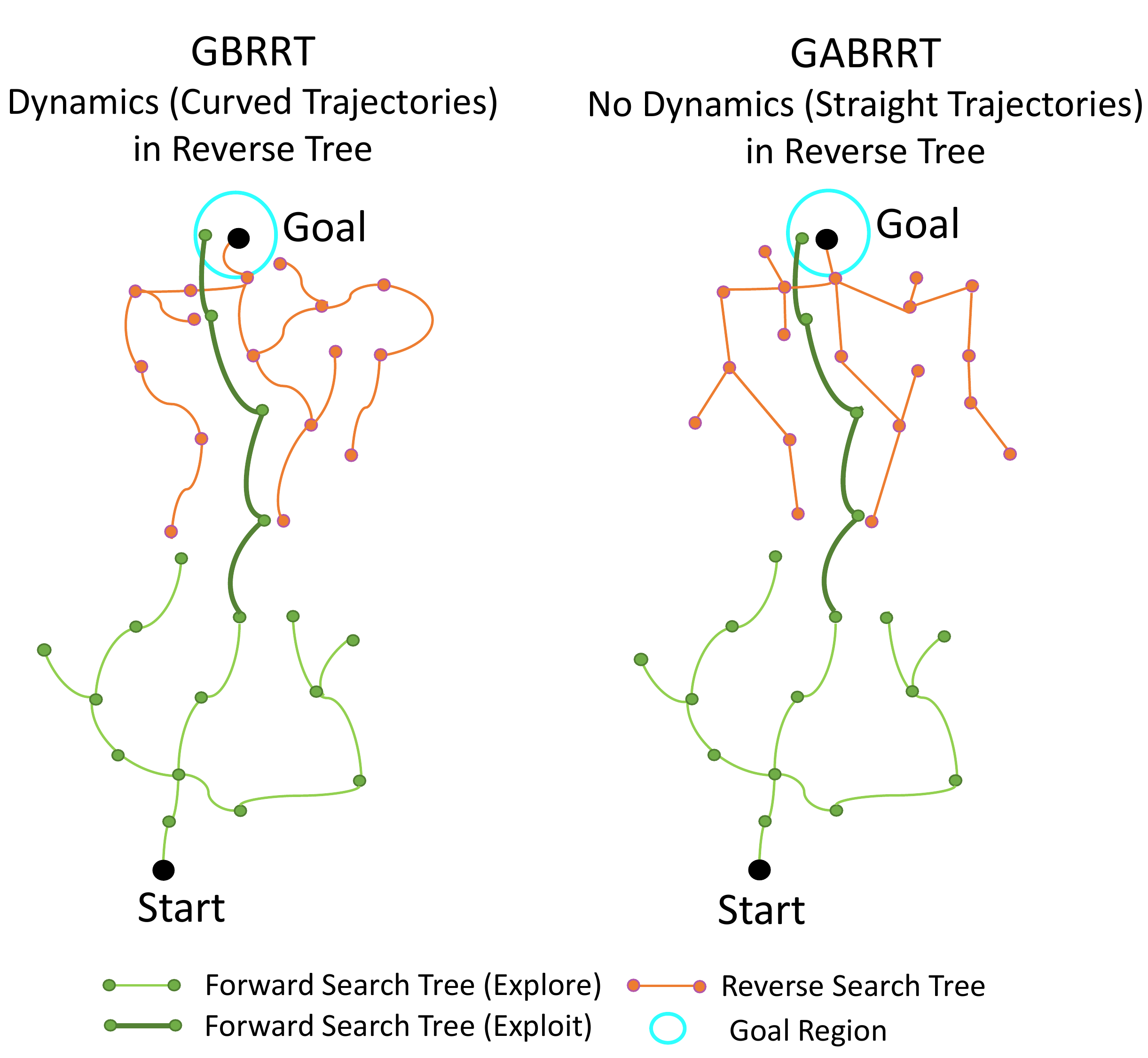}
\end{subfigure}%
\caption{Left: GBRRT (with dynamics in the reverse tree). Right: GABRRT (with no dynamics in the reverse tree).}
\label{fig:BiDirRRTVariant}
\vspace{-0.3cm}
\end{figure}

\subsection{Use of heuristics to improve solution convergence}
\label{subSection:HeuristicUsage}
\par A variety of heuristics have previously been used in the context of sampling-based motion planning because a well-chosen heuristic can often improve search efficiency. One common approach is sampling the goal node with a certain probability, also known as goal biasing \cite{lavalle2001randomized}. Akgun et al. \cite{akgun2011sampling} use sampling heuristics to sample more nodes near a found path to \revisionTwo{increase the rate of finding} a more optimal one. Urmson et al. \cite{urmson2003approaches} propose the heuristically guided RRT (hRRT) where a quality measure is used to balance exploration vs. exploitation when selecting the next node to expand, leading to lower-cost solutions. \revision{Anytime-RRT \cite{ferguson2006anytime} presents a general heuristic for rejecting samples in the context of anytime planning whenever the heuristic cost from the start to the new sample plus the heuristic cost from the sample to the goal is greater than the length of the best known path. C-FOREST \cite{otte2013c} extends this idea to AO sampling-based motion planning on both single-process and multi-process implementations.
In $D$-dimensional Euclidean spaces ($\mathbb{R}^D$), the sample region from \cite{ferguson2006anytime, otte2013c} takes the special form of a hyperellipse;  Informed RRT* extends the idea from \cite{otte2013c} by showing how to generate new samples from the hyperellipse embedded in $\mathbb{R}^D$  directly ({\it without} using rejection sampling).}

\par DIRT \cite{littlefield2018efficient}  and iSST \cite{littlefield2018informed} use user-defined heuristics to provide high rates of convergence towards optimal solution. However, sometimes it is difficult to come up with a heuristic that provides a benefit in a particular problem instance, even though heuristic for the entire problem domain may be easy to formulate \cite{aitstar}. To alleviate this issue, Strub et al. \revision{propose a geometric planner} \cite{aitstar} that uses the problem-specific cost-to-go heuristic provided by the ever-improving reverse search tree obtained from LPA* to converge towards the optimum. \revision{Our methods use a similar approach to \cite{aitstar} in using the reverse search tree to get the cost-to-goal heuristic but differ in that our algorithms do not require solving the two-point BVP.} \footnote{We note that \cite{lavalle2001randomized} and  \cite{urmson2003approaches} use heuristics for feasible (initial solution) planners. \cite{aitstar}, \cite{littlefield2018informed}, \cite{littlefield2018efficient}, \cite{akgun2011sampling}, \cite{ferguson2006anytime}, and \cite{otte2013c} use heuristics for AO (initial and subsequent solutions) planners.}
\par A technical report describing our preliminary research on bidirectional search without solving the two-point boundary value problem was released in October 2020 \cite{arxivArticle}.

\section{PRELIMINARIES}
\label{sec:preliminaries}
Let $\mathcal{X}$   be a smooth $n$-dimensional (manifold) state-space of a robot. \revision{The state-space may contain geometry states like position and angles, and higher-order dynamical states such as velocities, accelerations, etc.} Let the obstacle space $\mathcal{X}_{obs}$ be an open subset of $\mathcal{X}$. $\mathcal{X}_{obs}$ is the set of all states where the system is in collision with obstacles, e.g, in the workspace. The free space $\mathcal{X}_{free}$ is defined as $\mathcal{X}_{free} = \mathcal{X}_{obs}\setminus\mathcal{X}$. $\mathcal{X}_{free}$ is the closed subset of  $\mathcal{X}$ and \revision{set of all states that are not defined by obstacle collisions.} Let the start state be denoted by $x_{start}$, goal state $x_{goal}$ and the goal region be $\mathcal{X}_{goal}$ with ${x_{start}\in \mathcal{X}_{free}}$, and ${x_{goal}\in \mathcal{X}_{goal}}$ and ${\mathcal{X}_{goal} \subset \mathcal{X}_{free}}$. Let $\mathcal{U}$ be the \revisionTwo{$m$-dimensional} input control set. The system dynamics satisfies the differential equation of the form:
\begin{equation}\label{systemDiffEq}
\begin{aligned}
\dot{x}(t) &= f\paren{x(t), u(t)}, \hspace{2mm} x(t) \in \mathcal{X}, \hspace{2mm} u(t) \in \mathcal{U}.  \\
\end{aligned}
\end{equation}
System \eqref{systemDiffEq} is \revision{assumed to be} Lipschitz continuous in both state and input arguments if there exists $K_{x}$, $K_{u} > 0$ such that for all $x_{1}$, $x_{2} \in \mathcal{X}$, $u_{1}$, $u_{2} \in \mathcal{U}$:

\begin{equation}\label{lispchitzContinuity}
\begin{aligned}
\norm{f(x_{1}, u_{1}) - f(x_{2}, u_{1})} \leq K_{x}\norm{x_{1} - x_{2}}, \\
\norm{f(x_{1}, u_{1}) - f(x_{1}, u_{2})} \leq K_{u}\norm{u_{1} - u_{2}},
\end{aligned}
\end{equation}
where $\norm{\cdot}$ is the \revision{Euclidean} norm. \revision{This assumption is required for proving the probabilistic completeness of our proposed algorithms (\secref{sec:analysis}).}

\begin{definition}
A valid trajectory (also called edge) $\mathcal{E}$ of duration $t_{\mathcal{E}}$ is a continuous function that satisfies ${\mathcal{E} : [0, t_{\mathcal{E}}] \rightarrow \mathcal{X}_{free}}$, where $\mathcal{E}$ is generated by forward integrating system \eqref{systemDiffEq} using a control function $\Upsilon : [0, t_{\mathcal{E}}] \rightarrow \mathcal{U}$.
\end{definition}
\noindent This definition is valid for trajectories generated during forward search. \revision{We generate reverse tree trajectories by performing backwards integration of system \eqref{systemDiffEq}}. 
\par \revision{In the specific implementations used in our experiments, $t_{\mathcal{E}} \in \left[0, T_{max} \right]$ where $T_{max}$ is a user-defined maximum propagation time.} We randomly vary $t_{\mathcal{E}}$ but apply a \revision{constant control input} for the entire duration of  $t_{\mathcal{E}}$ as done in \cite{li2016asymptotically,kleinbort2018probabilistic}. \revision{Although constant control is useful in practice due to the discretization inherent in digital controllers \cite{li2016asymptotically}, the application of constant control is not a general requirement of our proposed algorithms.}

\begin{definition}
A distance metric $d_{M}$ is a function that satisfies \mbox{$d_{M}$ : $\mathcal{X}\times\mathcal{X} \rightarrow \mathbb{R}^{+}$} where $\times$ denotes the cartesian product and  $\mathbb{R}^{+}$ is the set of non-negative real numbers.
Assuming $x_{1}, x_{2}, x_{3} \in \mathcal{X}$, $d_{M}$ satisfies the following properties.
\begin{itemize}
    \item Non-negativity:  $d_{M}(x_{1}, x_{2}) \geq 0$
    \item Identity of indiscernibles: \scalebox{0.92}{$d_{M}(x_{1}, x_{2}) = 0 \iff x_{1} = x_{2}$}
    \item Symmetry: $d_{M}(x_{1}, x_{2}) =  d_{M}(x_{2}, x_{1})$
    \item Triangle inequality: \scalebox{0.85}{\mbox{$d_{M}(x_{1}, x_{2}) + d_{M}(x_{2}, x_{3}) \geq d_{M}(x_{1}, x_{3})$}}
\end{itemize}

\end{definition}

\noindent \revision{GBRRT and GABRRT only require a distance function that satisfies the triangle inequality property, which we define next.}

\begin{definition}
\revisionAlgorithm
A distance function $d_{X}$ is a function that satisfies \mbox{$d_{X}$ : $\mathcal{X}\times\mathcal{X} \rightarrow \mathbb{R}$} where $\mathbb{R}$ is the set of real numbers.
\end{definition}
\par \noindent \revision{
Unlike $d_{M}$, $d_{X}$ must only satisfy the triangle inequality property. Thus $d_{X}$ has less restrictive requirements than $d_{M}$ and this makes $d_{X}$ more general than $d_{M}$.}

\revision{We also define $d^{\mathtt{ND}}_{X}$ as the distance function on a lower-dimensional subspace of $\mathcal{X}$ where the higher-order dynamics (velocity or accelerations) are not included. The superscript ND refers to ``\textit{No Dynamics}''. GABRRT (\algoref{alg:GABRRT}) uses this distance function to generate the cost-to-go-heuristic from a non-dynamical reverse tree} \revision{(GBRRT, \algoref{alg:GBRRT}, does not).}

\begin{definition}
The cost $\mathcal{C}$ of a trajectory is a function that satisfies $\mathcal{C}:\mathscr{E} \rightarrow \mathbb{R}$ where $\mathscr{E}$ is the space of all trajectories.
\end{definition}
The cost \revision{can be} defined as the distance traveled along a trajectory or the energy or time required to execute the trajectory. In our experiments, we define $\mathcal{C}$ as the distance along a shortest trajectory, 
$
%\revision{
%\begin{equation}\label{costTrajectory}
%\begin{aligned}
C(\mathcal{E}) = \limsup\limits_{N\rightarrow \infty
} \sum_{i=1}^{N} d_{X}\paren{\mathcal{E}(t_{i-1}), \mathcal{E}(t_{i})}
%\end{aligned}
%\end{equation}
$,
where $t_{0} \leq t_{1} \leq t_{2} \ldots \leq t_{N}$. By construction, $d_{X}$ satisfies the triangle inequality, and can thus be used to define the ``distance'' between two points for the purposes of our algorithm. The trajectory cost has previously been used in the context of AO planners, but in our work, we use it to define the cost-from-start (Definition 6) and cost-to-goal (Definition 7) heuristics.

\begin{definition}
Let $\mathcal{G} \triangleq \paren{\mathbf{V}, \mathbf{E}}$ be a graph data structure embedded in $\mathcal{X}$ representing a motion search tree. Let $\mathbf{V}$ and $\mathbf{E}$ \revision{represent} the node set and edge set respectively. Let $x_{a}$ and $x_{b}$ represent any two nodes $\in \mathbf{V}$. Then a feasible path $\Gamma_{a,b}$ between $x_{a}$ and $x_{b}$ is defined as
$
%\begin{equation*}\label{feasiblePath}
%\begin{aligned}
{\Gamma_{a,b} \triangleq \{\mathcal{E}_{1}, \mathcal{E}_{2} ... \hspace{0.5mm} \mathcal{E}_{m}\}, \hspace{0.5mm}}
%\end{aligned}
%\end{equation*}
$,
where $\{\mathcal{E}_{1}, \mathcal{E}_{2} ... \hspace{0.5mm} \mathcal{E}_{m}\}$ is an ordered set of m valid trajectories with $\mathcal{E}_{i} \in \mathbf{E}$ and  $\mathcal{E}_{1}(0)=x_{a}$ and $\mathcal{E}_{m}(t_{\mathcal{E}_{m}})=x_{b}$ and ${\mathcal{E}_{i}(t_{\mathcal{E}_{i}}) = \mathcal{E}_{i+1}(0)}$.
\end{definition}

\begin{definition}
Let $\mathcal{G}_{for} \triangleq \paren{\mathbf{V}_{for}, \mathbf{E}_{for}}$ be a graph data structure embedded in $\mathcal{X}$ representing a forward search tree rooted at $x_{start}$. $\mathbf{V}_{for}$ and $\mathbf{E}_{for}$ \revision{represent} the node set and edge set respectively. \revision{Let $\Gamma_{start, a}$ be a feasible path connecting $x_{start}$ and $x_{a} \in \mathbf{V}_{for}$. The cost-from-start heuristic $g$  of the state $x_{a}$ is defined as
$
{g(x_{a}) = \sum_{i=1}^{m} \mathcal{C}\paren{\mathcal{E}_{i}}}
$,
where $\mathcal{E}_{i} \in \Gamma_{start, a}$, $\mathcal{E}_{1}(0)=x_{start}$, 
$\mathcal{E}_{m}(t_{\mathcal{E}_{m}})=x_{a}$, and $m$ is the number of edges connecting $x_{start}$ to $x_{a}$.}
\end{definition}

\begin{definition}
Let $\mathcal{G}_{rev} \triangleq \paren{\mathbf{V}_{rev}, \mathbf{E}_{rev}}$ be a graph data structure embedded in $\mathcal{X}$ representing a reverse search tree rooted at $x_{goal}$. $\mathbf{V}_{rev}$ and $\mathbf{E}_{rev}$ represent the node set and edge set respectively. \revision{Let $\Gamma_{a, goal}$ be a feasible path connecting $x \in \mathbf{V}_{rev}$ and $x_{goal}$. The cost-to-goal heuristic $h$  of a state $x_{a} \in \mathbf{V}_{rev}$ is defined as
$
h(x_{a}) = \sum_{i=1}^{n} \mathcal{C}\paren{\mathcal{E}_{i}},
$
where $\mathcal{E}_{i} \in \Gamma_{a, goal}$, $\mathcal{E}_{0}(0)=x_{a}$, $\mathcal{E}_{n}(t_{\mathcal{E}_{n}})=x_{goal}$, and $n$ is the number of edges connecting $x_{a}$ to $x_{goal}$.}
\end{definition}

\begin{definition} \label{def:clearance}
The clearance of a trajectory $\mathcal{E}$ is the maximal value $\delta $ \revisionTwo{such that} 
$
{\mathcal{B}_{\delta}^{m}\paren{\mathcal{E}(t)} \subset \mathcal{X}_{free} \forall t \in [0, t_{\mathcal{E}}]}
$,
where $\mathcal{B}_{\delta}^{m}$ is the m-dimensional hyper-ball of radius $\delta$ centered along a point in $\mathcal{E}(t)$.
\end{definition}
Definition~\ref{def:clearance} is used in the proof of probabilistic completeness (\secref{sec:analysis}).

\section{PROBLEM DEFINITION}
\label{sec:problem_definition}

Let $\Gamma_{start, g}$ be a feasible path connecting $x_{start}$ and some $x_{g} \in \mathcal{X}_{goal}$.
Note that $x_{g}$ is different from $x_{goal}$ (the starting point for the reverse tree as defined in Definition 7). While $x_{goal}$ is a valid candidate for $x_{g}$ (because $x_{goal} \in \mathcal{X}_{goal}$ by construction), the definition of $x_{g}$ is deliberately more general. $x_{g}$ is allowed to be any valid end state in the goal region $\mathcal{X}_{goal}$. Even though both ${x_{g},x_{goal} \in \mathcal{X}_{goal}}$, they are usually not the same---especially in scenarios where we cannot solve the two-point BVP.

\revisionTwo{The problem definition stated below is for the feasible path planning problem, in general. It does not include a specific $x_{goal}$, because the problem can be solved with either a unidirectional or bidirectional algorithm.}

\noindent\textbf{Problem}: Single-query feasible motion planning. \\
\noindent\textit{Given ($x_{start}$, $\mathcal{X}_{free}$, $\mathcal{X}_{goal}$), find a $\Gamma_{start, g}$} if one exists.

\section{ALGORITHM DESCRIPTION}
\label{sec:algorithm_description}
The main idea behind our \revision{proposed algorithms} is to use heuristic information from the reverse search tree to guide the forward search tree's advancement towards the goal. The search begins by using the forward and reverse search trees to explore different parts of the search space starting from both the start and goal states. Once the two trees encounter each other, the forward tree combines ongoing exploration with an exploitation strategy that leverages the cost-to-goal values stored in the reverse tree to direct the search. This causes the forward search tree to maneuver around obstacles and grow towards the goal quickly.
\par \revision{A design choice is made in our algorithms to maintain a feasible but {\it not} a (near-) optimal reverse tree, even though refining the reverse tree toward optimality could arguably provide a more focused guiding heuristic after the two trees become intertwined. Instead, because our algorithms are designed to solve the feasible motion planning problem, computational effort is focused on expediting the trees' mutual encounters by exploring the free space---so that the guiding heuristic can be used as soon as possible.}  
\par In \subSecref{subSectionGBRRT}, we present the proposed GBRRT algorithm. In \subSecref{subSectionGBRRTSearchesOutline}, we provide the general outline for the exploration and exploitation searches utilized in GBRRT. \revision{In \subSecref{subSection:GABRRT}, we give a high-level description of GABRRT and then provide the algorithm details in Appendix \ref{apdx:GABRRTAlgorithms}.}

\begingroup
\removelatexerror% Nullify \@latex@error
\begin{figure}[ht]
\centering
\scalebox{0.675}{
\begin{minipage}{10.7cm}
\centering
\begin{algorithm}[H]
    \caption{$\mathtt{GBRRT}(x_{start}, x_{goal}, \mathcal{X}_{goal}, \mathcal{U}, T_{max}, \delta_{hr}, \mathcal{P})$} \label{alg:GBRRT}
    $\mathcal{G}_{for} \gets \{\mathbf{V}_{for} \gets \{x_{start}\}, \mathbf{E}_{for} \gets \{\}\}$ \\ \label{alg:GBRRT:initGFor}
     $\mathcal{G}_{rev} \gets \{\mathbf{V}_{rev} \gets \{x_{goal}\}, \mathbf{E}_{rev} \gets \{\}\}$ \\ \label{alg:GBRRT:initGrev}
     $\mathbf{Q} \gets \{\}$ \\ \label{alg:GBRRT:initQ}
    
    \For{$k \gets 1$ to $M_{iter}$}
    {
        % Reverse maneuver
        \revision{$r_{k} \gets min(\gamma {\paren{\frac{\log{\abs{\mathbf{V_{rev}}}}}{\abs{\mathbf{V_{rev}}}}}}^{\frac{1}{d + 1}}, \, \delta_{hr})$} \\ \label{alg:GBRRT:ShrinkingRadius}
        
        \tcp{Reverse Tree Expansion (lines 6-11)}
         $\mathcal{E}_{rev} \gets \mathtt{RevSrchFastExplore}(\mathcal{G}_{rev}, \mathcal{U}, T_{max})$ \\ \label{alg:GBRRT:ReverseExpansion}
         
         \If {$\mathtt{not \, CollisionCheck}(\mathcal{E}_{rev})$ \label{alg:GBRRT:collisioncheckReverse}}
         {
             
              $x_{rev} \gets \mathcal{E}_{rev}.\mathtt{initialNode()}$ \\
              \label{alg:GBRRT:AssignXRev}
              $\mathbf{E}_{rev} \gets \mathbf{E}_{rev} \cup \{\mathcal{E}_{rev}\}$ \\ \label{alg:GBRRT:updateErev}
             $\mathbf{V}_{rev} \gets \mathbf{V}_{rev} \cup \{x_{rev} \}$ \\ \label{alg:GBRRT:updateVrev}
             
              \revision{$\mathtt{updatePriorityQueue} \paren{\mathcal{G}_{for}, \mathbf{Q},  x_{rev}, r_{k}}$} \\ \label{alg:GBRRT:updatePriorityQueue}
         }
    
        \tcp{Forward Tree Expansion (lines 12-30)}
         $ q \gets \mathcal{P}(k)$ \\ \label{alg:GBRRT:updateExploitRatio}
         
         $c_{rand} \sim \mathit{U}\paren{[0, 1]}$ \\ \label{alg:GBRRT:initp}

        \If {$c_{rand} < q$ \label{alg:getEdge:exploitRatio}}
        {
        $\mathcal{E}_{for} = \mathtt{NULL}$ \\ \label{alg:GBRRT:Enull}
        
        \tcp{Pop node from $\mathbf{Q}$}
        $x_{pop} \gets \mathtt{Pop}\paren{\mathbf{Q}}$ \\ \label{alg:GBRRT:PopQ}
        
        \If {$x_{pop} \neq \mathtt{NULL}$}
        {
                $\mathcal{E}_{for} \gets \mathtt{ForSrchExploit}\paren{\mathcal{G}_{rev}, \mathcal{U}, T_{max}, x_{pop}, \revision{r_{k}}}$ \\ \label{alg:GBRRT:ForwardSearchExploit}
        }
        
        \If {$\mathcal{E}_{for} = \mathtt{NULL}$}
            {
        $\mathcal{E}_{for} \gets \mathtt{ForSrchFastExplore}\paren{\mathcal{G}_{for}, \mathcal{U}, T_{max}}$ \\ \label{alg:GBRRT:ForwardSearchFastExplore}
            }
        }
  
       \If {$\revision{c_{rand}} \geq q \;\; \mathtt{or} \;\; \mathcal{E}_{for} = \mathtt{NULL}$}
       {
        \tcp{For Probabilistic completeness}
        $\mathcal{E}_{for} \gets \mathtt{ForSrchRandomExplore}\paren{\mathcal{G}_{for}, \mathcal{U}, T_{max}}$ \\ \label{alg:GBRRT:ForwardSearchRandomExplore}
       }
        
        \If {$\mathcal{E}_{for} \neq \mathtt{NULL}$}
        {
         
         \If {$\mathtt{not \, CollisionCheck}(\mathcal{E}_{for})$ \label{alg:GBRRT:collisioncheckForward}}
         {
             $x_{for} \gets \mathcal{E}_{for}.\mathtt{finalNode()}$ \\
              \label{alg:GBRRT:AssignXfor}
             $\mathbf{E}_{for} \gets \mathbf{E}_{for} \cup \{\mathcal{E}_{for}\}$ \\ \label{alg:GBRRT:updateEfor}
             $\mathbf{V}_{for} \gets \mathbf{V}_{for} \cup \{x_{for} \}$ \\ \label{alg:GBRRT:updateVfor}
            
            \If {$\mathtt{goalRegionReached}(\mathcal{X}_{goal}, x_{for})$ \label{alg:GBRRT:goalRegionReached}}
            {
                \Return {$\mathtt{Path}(\mathcal{G}_{for}, x_{start}, x_{for})$} \\ \label{alg:GBRRT:Path}
            }
            
            \revision{ $\mathtt{insertToPriorityQueue} \paren{\mathcal{G}_{rev}, \mathbf{Q},  x_{for}, r_{k}}$} \\ \label{alg:GBRRT:insertPriorityQueue}
         }
        }
    }
 \Return {$\mathtt{NULL}$}
\end{algorithm}
\end{minipage}}
\vspace{-2mm}
\end{figure}
\endgroup

\begin{figure}[ht]
\captionsetup{font={color=black, small}}
\centering
\begin{subfigure}
\centering
\includegraphics[scale=0.28]{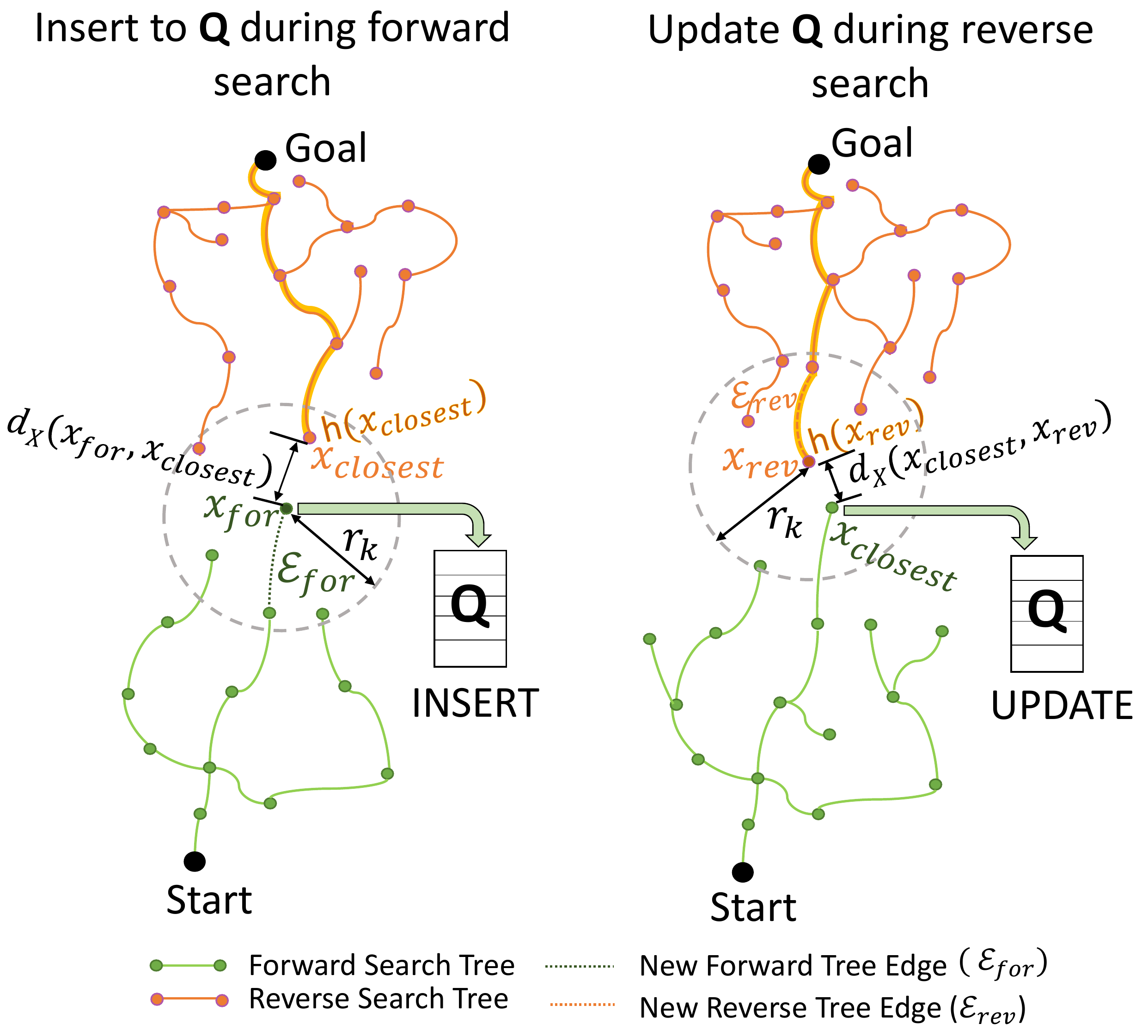}
\end{subfigure}
\caption{Left: A new forward tree node $x_{for}$ pushed to priority queue $\mathbf{Q}$ with key \mbox{$d_{X}(x_{for}, x_{closest}) + h(x_{closest})$} during forward tree expansion. Right: the cost of the same forward tree node being updated in $\mathbf{Q}$ during a reverse search expansion. The cost is updated only when the new cost \mbox{$d_{X}(x_{closest}, x_{rev}) + h(x_{rev})$} is less than its current key in $\mathbf{Q}$. \revisionTwo{The trees in the right sub-figure differ from those in the left to indicate that the cost update of $x_{for}$ (right) may occur a few iterations after the insertion operation (left) --- during which the trees may have been updated.}}
\label{fig:GBRRPushBoth}
\end{figure}

\begin{figure*}[ht]
\begin{minipage}[c]{0.55\textwidth}
\figuretitle{Search types used in forward expansion of GBRRT}
\includegraphics[scale=0.3]{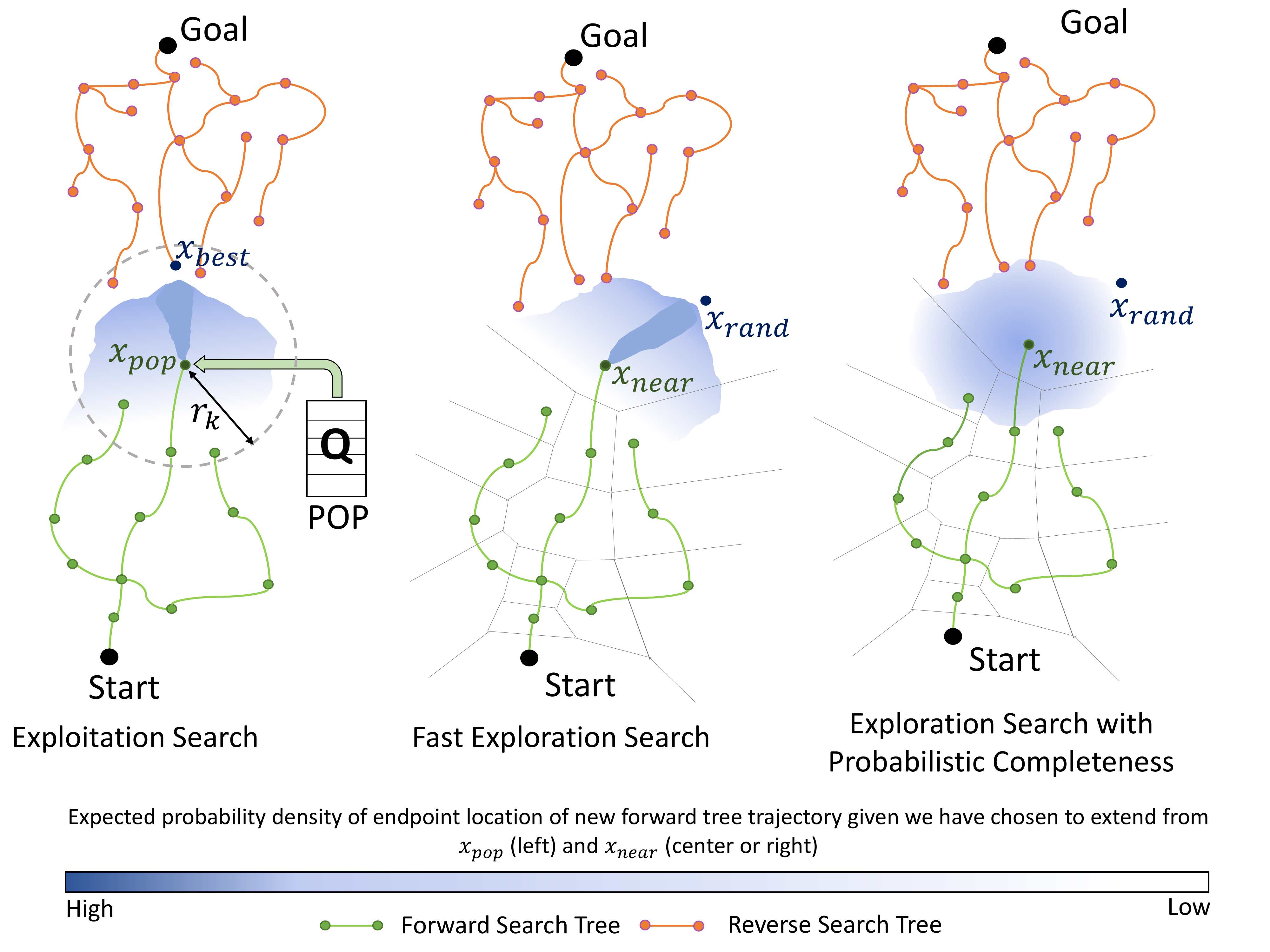}
\end{minipage}
\begin{minipage}[c]{0.45\textwidth}
\caption{\revisionTwo{\textbf{Left}: In exploitation search a forward tree node $x_{pop}$ is popped from $\mathbf{Q}$ and extension is biased in the direction of $x_{best}$, the best reverse tree node according to a user-defined cost function. The relative bias over the new endpoint's location is depicted as a tear shape oriented in the direction of $x_{best}$.
\textbf{Center}: In fast exploration search a random node $x_{rand}$ is sampled from the entire state-space and extension is biased in the direction of $x_{rand}$  from $x_{near}$, the nearest neighbor of $x_{rand}$ in the forward search tree. The relative bias over the new endpoint's location is depicted as a tear shape oriented in the direction of $x_{rand}$.
\textbf{Right}: In random exploration search a random node $x_{rand}$ is sampled from the entire state-space and extension from $x_{near}$ uses constant random control. The probability density of endpoint location is depicted as being spread out around $x_{near}$. Random exploration guarantees probabilistic completeness, but is not biased toward the search-tree frontier.
\textbf{All}: The new edges/trajectories are limited in duration, and duration may be random. Probability density blobs are depicted as being non-uniform to highlight the fact that the endpoint depends on $f\paren{x(t), u(t)}$, $\mathcal{U}$ and $T_{max}$. The embedded Voronoi diagram depicts which regions of space are nearest to each node in the forward tree. The Voronoi diagram is for illustrative purposes and is not calculated in GBRRT.}} 
\label{fig:GBRRTSearchTypes}
\end{minipage}
\end{figure*}

\begingroup
\removelatexerror% Nullify \@latex@error
\begin{figure*}[ht]
\begin{minipage}[c]{0.55\textwidth}
\figuretitle{Specific edge-generation types used in our version of GBRRT}
\includegraphics[scale=0.3]{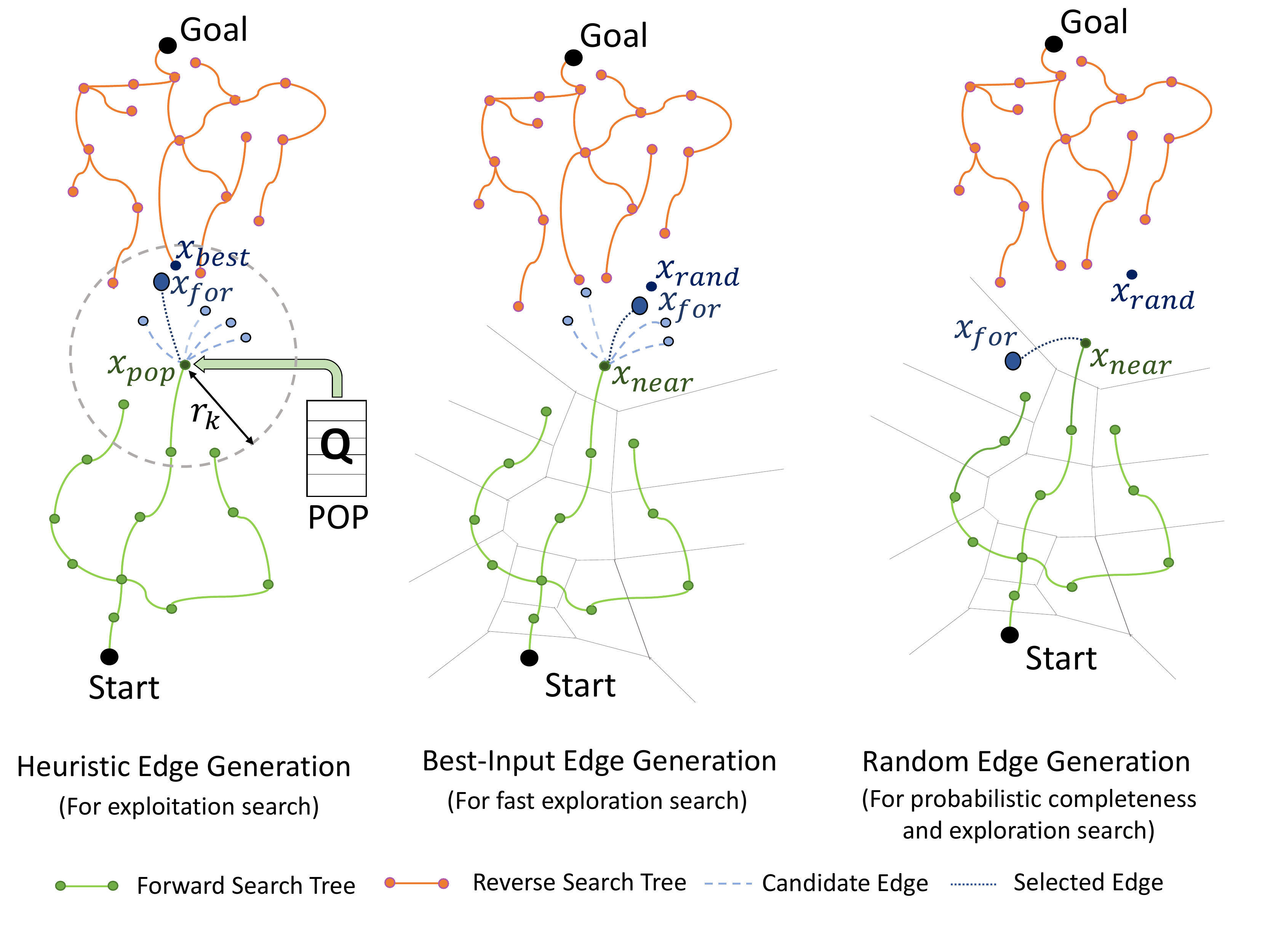}
\end{minipage}
\begin{minipage}[c]{0.45\textwidth}
\begin{minipage}[r]{\textwidth}
\caption{These figures illustrate the specific implementation of the searches outlined  in \figref{fig:GBRRTSearchTypes}. The left figure shows heuristic edge-generation where a series of random edges are propagated from $x_{pop}$ (node popped from $\mathbf{Q}$) and the best edge is chosen as the one whose final node $x_{for}$ get closest to $x_{best}$. The node $x_{best}$ is chosen using the cost function $g(x_{pop}) + d_{X}(x_{pop}, x) + h(x)$ where $x$ is the node in the reverse tree. The middle figure presents best-input edge generation where a series of random edges are propagated from $x_{near}$ and the best edge is chosen as the one that gets closest to $x_{rand}$. \revision{Observe that the selected edges in the left and middle figures lie in the darker tear-shaped region of probability density in \figref{fig:GBRRTSearchTypes}}. The right figure shows random edge-generation where an edge of random duration is propagated from $x_{near}$ to maintain probabilistic completeness.} 
\label{fig:GBRRTSpecificPropagation}
\end{minipage}
\scalebox{\algorithmScale}{
\begin{minipage}{10cm}
\centering
\begin{algorithm}[H]
  \caption{$\revision{\mathtt{insertToPriorityQueue}}(\mathcal{G}_{rev}, \mathbf{Q}, x_{for}, \revision{r_{k}})$} \label{alg:insertToPriorityQueue}
  
  \revisionTwo{$x_{closest} \gets \mathcal{G}_{rev}.\mathtt{NearestNeighbor}(x_{for})$} \\
  \label{alg:insertToPriorityQueue:NearestVertices}
  
  $\delta \gets d_{X}(x_{for}, x_{closest})$

  \If{\revisionTwo{$\delta \leq r_{k}$}}
  {
      $\mathbf{Q}\revision{\mathtt{.insert}}\paren{x_{for}, \delta +  h(x_{closest})}$ \\ \label{alg:insertToPriorityQueue:addToQ}
  }
      
\end{algorithm}
\end{minipage}}
\scalebox{\algorithmScale}{
\revisionAlgorithm
\begin{minipage}{10cm}
\begin{algorithm}[H]
  \caption{$\mathtt{updatePriorityQueue}(\mathcal{G}_{for}, \mathbf{Q}, x_{rev}, r_{k})$} \label{alg:updatePriorityQueue}
  
     \revisionTwo{$x_{closest} \gets \mathcal{G}_{for}.\mathtt{NearestNeighbor}(x_{rev})$} \\ \label{alg:updatePriorityQueue:NearestVertices}
     
      $\delta \gets d_{X}(x_{closest}, x_{rev})$
     
      \If{\revisionTwo{$\delta \leq r_{k}$}}
      {
          $\mathbf{Q}\mathtt{.update}\paren{x_{closest}, \delta +  h(x_{rev})}$ \\ \label{alg:updatePriorityQueue:addToQ}
      }
\end{algorithm}
\end{minipage}}
\end{minipage}
\vspace{-6mm}
\end{figure*}
\endgroup

\subsection{Proposed Generalized Bidirectional RRT (GBRRT)}
\label{subSectionGBRRT}
\subsubsection*{\textbf{Inputs}}
\revision{The inputs to the GBRRT (\algoref{alg:GBRRT}) include the scenario specific start node $x_{start}$, goal node $x_{goal}$, and goal region $\mathcal{X}_{goal}$; the system specific control input set $\mathcal{U}$; as well as a max propagation time $T_{max}$ (maximum duration of a trajectory); and two user defined parameters $\delta_{hr}$ and $\mathcal{P}$.
The parameter $\delta_{hr}$ defines the \revisionTwo{maximum} distance within which nodes from the reverse tree have a focusing effect on the growth of the forward tree, \revisionTwo{where ${0 < \delta_{hr} < \infty}$ ($hr$ stands for heuristic radius)}. The parameter $\mathcal{P}$ defines the algorithm's balance between performing exploration and exploitation during forward search, where \revisionTwo{$0 \leq \mathcal{P} \leq 1$} and, with a slight abuse of notation, 0 and 1 refer to the zero and one function, respectively. $\delta_{hr}$ and $\mathcal{P}$ are discussed more in \subSecref{sub:SelectionParameters}.}

\subsubsection*{\textbf{Initialization}}
The forward search tree $\mathcal{G}_{for}$ is initialized (\alglineref{alg:GBRRT:initGFor}) using the node set $\mathbf{V}_{for}$ (initialized to $x_{start}$) and edge set $\mathbf{E}_{for}$. The reverse tree $\mathcal{G}_{rev}$ is initialized (\alglineref{alg:GBRRT:initGrev}) using the node set $\mathbf{V}_{rev}$ (initialized to $x_{goal}$) and edge set $\mathbf{E}_{rev}$ (\alglineref{alg:GBRRT:initGrev}). We use a priority queue $\mathbf{Q}$ to maintain a list of potential forward tree nodes for quick expansion (\alglineref{alg:GBRRT:initQ}). 

\subsubsection*{\revisionTwo{\textbf{Shrinking neighborhood radius $r_{k}$}}}

\revisionTwo{Shrinking the radius $r_{k}$ of the neighborhood in which the two trees influence each other is necessary to balance the competing desires for a quick expected per-iteration runtime (requiring smaller $r_{k}$) with almost sure (A.S.) asymptotic graph connectivity between $x_{start}$ and ${x_g \in \mathcal{X}_{goal}}$ (requiring larger $r_{k}$) \cite{karaman2011sampling,solovey2020revisiting}.  Functions for $r_{k}$ that have previously appeared in the sampling based motion planning literature include $r_k = \min(\gamma {\parenShort{\frac{\log{\abs{\mathbf{V_{rev}}}}}{\abs{\mathbf{V_{rev}}}}}}^{{1}/{d}}, \, \delta_{hr})$ from \cite{karaman2011sampling} and $r_k = \min(\gamma {\parenShort{\frac{\log{\abs{\mathbf{V_{rev}}}}}{\abs{\mathbf{V_{rev}}}}}}^{{1}/{(d + 1)}}, \, \delta_{hr})$ from \cite{solovey2020revisiting}, where $d$ is the dimension of the system and $\gamma$ is a scenario specific percolation-theoretic parameter%
\footnote{
Regarding $\gamma$, if $r_k$ is defined as in \cite{karaman2011sampling}, then A.S.\  graph connectivity requires 
${\gamma \geq {\parenShort{2((1 + \frac{1}{d})^{{1}/{d}}) \frac{\abs{\mathcal{X}_{free}}}{\zeta_{d}}}}^{{1/d}}}$, where  $\abs{\mathcal{X}_{free}}$ is the volume of free space and $\zeta_{d}$ is the volume of the unit $d$-ball \cite{karaman2011sampling}. 
A new proof in \cite{solovey2020revisiting} uses
${\gamma \geq (2 + \theta)
\parenShort{ 
\frac{(1+\epsilon/4)c^*}{(d+1)\theta(1-\mu)} 
\frac{\abs{\mathcal{X}_{free}}}{\zeta_{d}}
}  ^  {\frac{1}{d+1}}}$, where ${\epsilon \in (0,1)}$ and $\theta \in (0, 1/4)$, and ${\mu > 0}$ , and $c^*$ is the A.S. convergent length of the path as $n \rightarrow \infty$.
Both bounds on $\gamma$ are of a theoretical interest rather than prescriptive because we typically do not know $\abs{\mathcal{X}_{free}}$ or $c^*$, though we may have bounds on them. Practical selection of $\gamma$ requires making an educated guess and refining if necessary.
Anecdotally, we observe that the range of $\gamma$ producing decent results is large in practice.
}. 
In Appendix~\ref{sub:RuntimeComplexity}, we analyze the expected per-iteration runtime that results from using either function as well from using a constant $r_{k}$.}

\par \revisionTwo{In our experiments, we calculate $r_{k}$ using the expression from \cite{solovey2020revisiting}, we discuss why this rate was chosen in Appendix~\ref{sub:RuntimeComplexity}.
$r_{k}$ is used in the insert and update $\mathbf{Q}$ operations (\alglineref{alg:GBRRT:updatePriorityQueue} and \alglineref{alg:GABRRT:insertToPriorityQueue}) and exploitation search (line 18).}

\subsubsection*{\textbf{Reverse tree expansion}}
We expand the reverse search tree in \revision{\alglinerefs{alg:GBRRT:ReverseExpansion}{alg:GBRRT:updatePriorityQueue}}. The reverse tree uses a pure exploration strategy for expansion using $\mathtt{RevSrchFastExplore}$ (\alglineref{alg:GBRRT:ReverseExpansion}). Its purpose is to explore the state space as fast as possible and get close to and provide a focusing heuristic to the forward tree. The edge $\mathcal{E}_{rev}$ returned by $\mathtt{RevSrchFastExplore}$ (\alglineref{alg:GBRRT:ReverseExpansion}) is checked for collision using $\mathtt{CollisionCheck}$ (\alglineref{alg:GBRRT:collisioncheckForward}). If no collision exists, then $\mathcal{E}_{rev}$ and its corresponding final node ${x}_{rev}$ are added to $\mathbf{E}_{rev}$ (\alglineref{alg:GBRRT:updateErev})  and $\mathbf{V}_{rev}$ (\alglineref{alg:GBRRT:updateVrev}) respectively. \revision{$x_{rev}$ is utilized to perform a heuristic update in $\mathbf{Q}$ using $\mathtt{updatePriorityQueue}$ function (\alglineref{alg:GBRRT:updatePriorityQueue}).} \revision{We describe $\mathtt{updatePriorityQueue}$ (\algoref{alg:updatePriorityQueue}) as well as $\mathtt{insertToPriorityQueue}$ (\algoref{alg:insertToPriorityQueue}) later in this section.}

\subsubsection*{\textbf{Forward tree expansion}}
We expand the forward search tree in \revision{\alglinerefs{alg:GBRRT:updateExploitRatio}{alg:GBRRT:insertPriorityQueue}}. The forward search uses a combination of exploitation (\alglineref{alg:GBRRT:ForwardSearchExploit}) and exploration (\alglineref{alg:GBRRT:ForwardSearchFastExplore}, \alglineref{alg:GBRRT:ForwardSearchRandomExplore}) strategies to expand the tree. 
\revision{The exploitation strategy uses the reverse tree's cost information for focused expansion, while the exploration strategy uses both random exploration (for probabilistic completeness) and fast exploration (\figref{fig:GBRRTSearchTypes}). These strategies are discussed in detail in \secref{subSectionGBRRTSearchesOutline}.}

The choice of whether exploitation is performed in the current iteration is determined by $\mathcal{P}$ which outputs the probability $q$ based on $k$'s value (\alglineref{alg:GBRRT:updateExploitRatio}). $q$ is compared to a value $c_{rand}$ generated uniformly at random on ${[1, 0]}$  (\alglineref{alg:GBRRT:initp}). If ${c_{rand} < q}$, the edge is generated by an exploitation process by popping the forward tree node $x_{pop}$ from $\mathbf{Q}$ (\alglineref{alg:GBRRT:PopQ}) and then using $\mathtt{ForSrchExploit}$ (\alglineref{alg:GBRRT:ForwardSearchExploit}). This routine chooses the `best' reverse tree node and generates a trajectory towards $x_{pop}$ to make focused progress towards the goal. 

The condition of ${x_{pop} = \mathtt{NULL}}$ implies $\mathbf{Q}$ is empty. This may happen, e.g.,, near the beginning of the search when the forward tree has not yet encountered the reverse tree. If ${x_{pop} = \mathtt{NULL}}$, the algorithm defaults to performing fast exploration using $\mathtt{ForSrchFastExplore}$ (\alglineref{alg:GBRRT:ForwardSearchFastExplore}). This function is designed to \revisionTwo{quickly} explore the state space (see \figref{fig:GBRRTSearchTypes}). If $\revision{{c_{rand}}} \geq q$ or ${\mathcal{E}_{for}=\mathtt{NULL}}$ (where ${\mathcal{E}_{for} = \mathtt{NULL}}$ implies both exploitation and fast exploration were unsuccessful), the algorithm performs a random exploration using $\mathtt{ForSrchRandomExplore}$ (\alglineref{alg:GBRRT:ForwardSearchRandomExplore}). \revisionTwo{The use of $\mathtt{ForSrchRandomExplore}$ provides probabilistic completeness. Using only the exploitation and {\it fast} exploration may produce an algorithm that is not probabilistically complete. Probabilistic completeness is further discussed in Section~\ref{sec:analysis}.}
\par The edge $\mathcal{E}_{for}$ is checked for collision \revisionTwo{(\alglineref{alg:GBRRT:collisioncheckForward})} and if no collision exists, $\mathcal{E}_{for}$ and its corresponding final node ${x}_{for}$ are added to $\mathbf{E}_{for}$ (\alglineref{alg:GBRRT:updateEfor}) and $\mathbf{V}_{for}$ (\alglineref{alg:GBRRT:updateVfor}) respectively. If $x_{for} \in \mathcal{X}_{goal}$, checked using $\mathtt{goalRegionReached}$ (\alglineref{alg:GBRRT:goalRegionReached}), then the output path is returned using $\mathtt{Path}$ (\alglineref{alg:GBRRT:Path}).

\subsubsection*{\textbf{Priority queue $\mathbf{Q}$ \revision{insert}}}
\par GBRRT (\algoref{alg:GBRRT}, \alglineref{alg:GABRRT:insertToPriorityQueue}) adds the new forward tree node $x_{for}$ to $\mathbf{Q}$ (\figref{fig:GBRRPushBoth}) using \revision{$\mathtt{insertPriorityQueue}$ (\algoref{alg:insertToPriorityQueue})}. This algorithm first determines the closest node $x_{closest} \in \mathcal{G}_{rev}$ to $x_{for}$ within a ball of radius $r_{k}$ (\alglineref{alg:updatePriorityQueue:NearestVertices}). If $x_{closest} \neq \mathtt{NULL}$, we use the \revisionTwo{key value} $d_{X}(x_{for}, x_{closest}) +  h(x_{closest})$  to push $x_{for}$ to $\mathbf{Q}$ (\alglineref{alg:updatePriorityQueue:addToQ}). \revisionTwo{The cost includes the actual cost of reaching $x_{for}$ plus the heuristic estimate} of reaching the goal from $x_{for}$. The condition of $x_{closest} = \mathtt{NULL}$ is true if the forward tree has not encountered the reverse tree.
\par Note that $x_{for}$ is pushed to $\mathbf{Q}$ during the forward tree's exploration and exploitation step but only popped from $\mathbf{Q}$ during the exploitation step. This necessitates the use of a \revision{priority} queue $\mathbf{Q}$ \revision{in GBRRT} to store the set of potential forward tree nodes obtained from both these steps and pop the best one for the current exploitation step.\revision{\footnote{\color{black} A design choice worthy of mention is that we do not reinsert the node back into $\mathbf{Q}$ once it is popped (\alglineref{alg:GBRRT:PopQ} in \algoref{alg:GBRRT}). We found through experiments that although reinserting the node helps in some cases, there may be cases where it causes the performance to decrease. An example is reinserting a node that is close to the goal but blocked by an obstacle. There may be intelligent ways of re-pushing such a node when certain conditions are satisfied, rather than naively re-pushing them all the time. This could be a direction for future research.}}

\subsubsection*{\textbf{\revision{Priority queue $\mathbf{Q}$ update}}}
\revision{The heuristic update in $\mathbf{Q}$ during the reverse search (\figref{fig:GBRRPushBoth}) occurs in $\mathtt{updatePriorityQueue}$ (\algoref{alg:updatePriorityQueue}). The new reverse tree node $x_{rev}$ obtained from expanding the reverse tree is used to select the closest forward tree node $x_{closest}$ within a ball of radius $r_{k}$ (\alglineref{alg:updatePriorityQueue:NearestVertices}). If $x_{closest} \neq \mathtt{NULL}$, we update the heuristic of $x_{closest}$ \revisionTwo{and corresponding key value} $d_{X}(x_{closest}, x_{rev}) +  h(x_{rev})$ (\alglineref{alg:updatePriorityQueue:addToQ}) only if the \revisionTwo{new value} is less than \revisionTwo{the old key value} in $\mathbf{Q}$.}

\revisionTwo{While $\mathtt{insertPriorityQueue}$ adds a new (forward tree) node to $\mathbf{Q}$, $\mathtt{updatePriorityQueue}$ updates the key value and position of a (forward tree) node already in $\mathbf{Q}$.}

\subsubsection*{\revisionTwo{\textbf{Naive goal biasing vs. Cost-To-Goal heuristic}}}
\revisionTwo{Naive goal biasing \cite{lavalle2001randomized}, a common approach where the goal node is sampled a certain percentage of times to improve motion planners' performance, is not incorporated in GBRRT.} GBRRT uses the cost-to-goal heuristic from the reverse search tree nodes, which is significantly more informed than traditional naive goal biasing. \revisionTwo{However, a naive goal biasing} is used to potentially speed up the execution of some of the comparison algorithms specified in \secref{sec:experimental_setup} (Experiment setup).

\subsection{General outline of exploration and exploitation searches}
\label{subSectionGBRRTSearchesOutline}

GBRRT uses an exploration strategy for the reverse tree expansion and a blend of exploration and exploitation strategies for the forward tree expansion. \revisionTwo{In this section, we discuss the general requirements of these strategies (\figref{fig:GBRRTSearchTypes}), while specific details of the particular edge-generation implementations used in our experiments appear in \secref{sec:specificImplementation}}.

\subsubsection{\textbf{Exploitation search}}
Exploitation search uses the heuristic values from the reverse tree to bias the growth of the forward tree. It begins by popping the forward tree node $x_{pop}$ from $\mathbf{Q}$. Next, $x_{pop}$ is used to select the best node $x_{best}$ (within a radius $r_{k}$) in the reverse tree according to a user-defined cost function. Finally, it performs forward propagation from $x_{pop}$ (from $\mathbf{Q}$) such that the endpoint of the resulting trajectory/edge gets close to $x_{best}$ (\figref{fig:GBRRTSearchTypes}). This search is visualized in \revisionTwo{\figref{fig:GBRRTSearchTypes}-Left, where the probability density of the new edge's endpoint is depicted as being higher in the region between $x_{pop}$ and $x_{best}$ using a tear shape)}.
GBRRT implements this search in function $\mathtt{ForSrchExploit}$.

\subsubsection{\textbf{Fast exploration search}}
A fast exploration search iteration is attempted in the event that the exploitation search iteration fails (when $x_{pop} = \mathtt{NULL}$). It involves sampling a random node $x_{rand}$ and selecting the nearest neighbor $x_{near}$ in the forward tree. Forward propagation from $x_{near}$ is performed such that the endpoint of the resulting trajectory gets close to $x_{rand}$ (\figref{fig:GBRRTSearchTypes}). This \revisionTwo{has been shown} to explore the state space \revisionTwo{relatively quickly} \cite{kuffner2000rrt} because, \revisionTwo{given the Voronoi partitioning generated by forward tree nodes, search is biased into the relatively large Voronoi regions associated with the search frontier}. \revisionTwo{\figref{fig:GBRRTSearchTypes}-Center contains a visualization in which a probability density of the new edge's endpoint being higher in the region between $x_{near}$ and $x_{rand}$ is depicted using a tear shape.} GBRRT implements this search in functions $\mathtt{ForSrchFastExplore}$ and $\mathtt{RevSrchFastExplore}$. \revisionTwo{In \secref{sec:specificImplementation}, we discuss a method of performing fast exploration that worked well in our experiments; faster methods may exist depending on problem scenario and system dynamics.}

\begingroup
\removelatexerror% Nullify \@latex@error
\begin{figure*}[t!]
\scalebox{\algorithmScale}{
\begin{minipage}[c]{8.3cm}
\begin{algorithm}[H]
  \caption{$\mathtt{ForSrchExploit}(\mathcal{G}_{rev}, \mathcal{U}, T_{max}, x_{pop}, \revision{r_{k}}$)} \label{alg:ForSrchExploit}

\tcp{Heuristic edge-generation} 
$\mathbf{V}_{rev\_near} \gets \{\, x\in \mathcal{G}_{rev}.\mathtt{Nodes()} \mid d_{X}( x_{pop}, x) \leq \revision{r_{k}} \} \!\!\!\!\!\!\!\!$ \\ \label{alg:ForSrchExploit: NearestVertices}

$\displaystyle{x_{best} \gets \argmin_{x\in \mathbf{V}_{rev\_near}} \left({g(x_{pop}) + d_{X}(x_{pop}, x) + h(x)}\right)}$ \\ \label{alg:ForSrchExploit: x_near}

$\mathcal{E}_{new} \gets \mathtt{BestInputProp} \paren{x_{pop}, x_{best},  \mathcal{U}, T_{max}}$ \\ \label{alg:ForSrchExploit: bestInputProp}

\Return {$\mathcal{E}_{new}$} \\ \label{alg:ForSrchExploit: returnEdge}

\end{algorithm}
\begin{algorithm}[H]
  \caption{$\mathtt{ForSrchFastExplore}(\revision{\mathcal{G}_{for}}, \mathcal{U}, T_{max})$} \label{alg:ForSrchFastExplore}
 
    \tcp{Best-input edge-generation}
    $x_{rand} \gets \mathtt{RandomState} \paren{}$ \\ \label{alg:ForSrchFastExplore:RandomState}
    $x_{near} \gets \mathtt{NearestNeighbor} \paren{x_{rand}, \revision{\mathcal{G}_{for}}}$ \\ \label{alg:ForSrchFastExplore:NearestNeighbor}
    $\mathcal{E}_{new} \gets \mathtt{BestInputProp} \paren{x_{near}, x_{rand}, \mathcal{U}, T_{max}}$ \\ \label{alg:ForSrchFastExplore:BestInputProp}
    \Return {$\mathcal{E}_{new}$}
\end{algorithm}
\end{minipage}}
\scalebox{\algorithmScale}{
\centering
\begin{minipage}[c]{8.5cm}
\begin{algorithm}[H]
  \tcp{Best-input prop. for frwd. search}
  \caption{$\mathtt{BestInputProp}(x_{near}, x_{rand}, \mathcal{U}, T_{max})$} \label{alg:BestInputProp}
  
    $\mathbf{E}_{fin} \gets \{\} $ \\
    \For{$k \gets 1$ to $N_{B}$}
    {
        $ \mathbf{E}_{fin} \gets \mathbf{E}_{fin} \cup \{\mathtt{MonteCarloProp}(x_{near}, \mathcal{U}, T_{max})\} \!\!\!\!\!\!\!\!\!\!\!\!\!\!\!\!\!\!$ \\ \label{alg:BestInputProp:MonteCarloProp}
    }
    
    $\displaystyle{\mathcal{E}_{new} \gets \argmin_{\mathcal{E}\in \mathbf{E}_{fin}}   \left(d_{X}(\mathcal{E}.finalNode(),\,\, x_{rand})\right)}$ \\ \label{alg:BestInputProp:bestNew}
  
    \Return {$\mathcal{E}_{new}$}
\end{algorithm}
\begin{algorithm}[H]
  \tcp{Random edge-generation}
  \caption{$\mathtt{ForSrchRandomExplore}(\revision{\mathcal{G}_{for}}, \mathcal{U}, T_{max})$} \label{alg:ForSrchRandomExplore}
    $x_{rand} \gets \mathtt{RandomState} \paren{}$ \\ \label{alg:ForSrchRandomExplore:RandomState}
    $x_{near} \gets \mathtt{NearestNeighbor} \paren{x_{rand}, \revision{\mathcal{G}_{for}}}$ \\ \label{alg:ForSrchRandomExplore:NearestNeighbor}
    $\mathcal{E}_{new} \gets \mathtt{MonteCarloProp} \paren{x_{near}, \mathcal{U}, T_{max}}$ \\ \label{alg:ForSrchRandomExplore:MonteCarloProp}
    \Return {$\mathcal{E}_{new}$}
\end{algorithm}
\end{minipage}}
\scalebox{0.7}{
\begin{minipage}{8.0cm}
\revisionAlgorithm
\begin{algorithm}[H]
  \tcp{Monte Carlo propagation for forward search}
  \caption{$\mathtt{MonteCarloProp}(x_{initial}, \mathcal{U}, T_{max})$} \label{alg:MonteCarloProp}
    $t_{\mathcal{E}} \gets \mathtt{Random}(0, T_{max})$ \\
     $u \gets \mathtt{Random}(\mathcal{U})$\\
    $\mathcal{E}_{new} = \int_{0}^{t_{{\mathcal{E}}}} f(x, u) \,dt$ where $\mathcal{E}_{new}(0)=x_{initial}$\\
    \Return {$\mathcal{E}_{new}$}
\end{algorithm}
\begin{algorithm}[H]
  \caption{$\mathtt{RevSrchFastExplore}(\mathcal{G}_{rev}, \mathcal{U}, T_{max})$} \label{alg:RevSrchFastExplore}
 
    \tcp{Best-input edge-gen. for reverse search}
    $x_{rand} \gets \mathtt{RandomState} \paren{}$ \\ \label{alg:RevSrchFastExplore:RandomState}
    $x_{near} \gets \mathtt{NearestNeighbor} \paren{x_{rand}, \mathcal{G}_{rev}}$ \\ \label{alg:RevSrchFastExplore:NearestNeighbor}
    $\mathcal{E}_{new} \gets \mathtt{BestInputProp} \paren{x_{near}, x_{rand}, \mathcal{U}, T_{max}}\!\!\!\!\!\!\!$ \\ \label{alg:RevSrchFastExplore:BestInputProp}
    \Return {$\mathcal{E}_{new}$}
\end{algorithm}
\end{minipage}}
\vspace{-5mm}
\end{figure*}
\endgroup

\subsubsection{\textbf{Search with probabilistic completeness property}}
This search provides probabilistic completeness for GBRRT, and also helps with exploration. It involves randomly sampling a node $x_{rand}$ and then selecting the nearest neighbor $x_{near}$ in the forward tree using $d_{X}$. Next, forward propagation from $x_{near}$ is performed using a randomly selected constant control (randomness helps to ensure probabilistic completeness). \revisionTwo{This search is not biased toward unexplored regions, and so the resulting trajectory may propagate in any direction. This exploration method is visualized in \figref{fig:GBRRTSearchTypes}-Right, where the probability density of endpoint of the new edge is depicted as being spread out around $x_{near}$.} GBRRT implements this search in function $\mathtt{ForSrchRandomExplore}$.
\par \revisionTwo{The \figref{fig:GBRRTSearchTypes} depictions of expected probability density being `tear-shaped' and `spread out' are used here for illustrative purposes and to help build intuition, they are not explicitly calculated as part of the algorithm. The actual shape of these regions will depend on many factors including system dynamics, control space, max propagation time, etc.}

\subsection{Generalized Asymmetric Bidirectional RRT (GABRRT)}
\label{subSection:GABRRT}
We present a variant of GBRRT called GABRRT (\algoref{alg:GABRRT}), where non-dynamical trajectories are utilized to build the reverse tree \revisionTwo{rather than reverse trajectories based on the system's dynamics}. GABRRT generally performs well when no dynamics  are considered in the planning problem \revisionTwo{(or dynamics create `straight-line' like trajectories)}. The non-dynamical distance function $d^{\mathtt{ND}}_{X}$ is applied when updating $\mathbf{Q}$ (\algrefsTwo{alg:insertToPQueueND}{alg:UpdatePQueueND}), building the reverse tree (\algoref{alg:RevSrchND}) and generating the heuristic values from the reverse tree (\algrefsTwo{alg:ForSrchExploitND}{alg:BestInputPropND}). The advantage of using GABRRT is that the reverse tree is computationally inexpensive to build because it does not require performing online integration. On the other hand, omitting the effects of higher-order dynamics in the guiding heuristic may lead to poor performance in scenarios where the dynamical constraints play a relatively important \revisionTwo{role} versus the geometric constraints. The algorithm for GABRRT is presented in detail in Appendix\apdxref{apdx:GABRRTAlgorithms}.

\section{GBRRT SPECIFIC IMPLEMENTATION \revision{OF SUBROUTINES USED IN OUR EXPERIMENTS}}
\label{sec:specificImplementation}
In \subSecref{subSection:PropagationTypes}, we describe how the search subroutines outlined in \subSecref{subSectionGBRRTSearchesOutline} have been implemented in our experiments. In \subSecref{sub:SelectionParameters}, we explain the parameters of our \revision{specific implementations} of GBRRT and GABRRT.

\subsection{Specific edge-generation types}
\label{subSection:PropagationTypes}
Here we provide the details of the edge-generation methods for forward search \revision{(\figref{fig:GBRRTSpecificPropagation})} used in our experiments (\secref{sec:experimental_setup}). It is important to note that, \revisionTwo{while these particular implementations have the edge-generation properties required by GBRRT,  many alternative strategies may  exist that also have the required properties. We use these versions in our experiments because they are easy to implement and straightforward to analyze.}

\subsubsection{\textbf{Heuristic edge-generation}}
\par \revision{This edge-generation method (\algoref{alg:ForSrchExploit}) exploits the heuristic provided by the reverse tree. The algorithm determines the nearest vertices set $\mathbf{V}_{rev\_near}$ to $x_{pop}$ in $\mathcal{G}_{rev}$ (\alglineref{alg:ForSrchExploit: NearestVertices}) within a ball of radius $r_{k}$. It then selects $x_{best} \in \mathbf{V}_{rev\_near}$ (\alglineref{alg:ForSrchExploit: x_near}) having the minimum cost $g(x_{pop}) + d_{X}(x_{pop}, x) + h(x)$ \revision{(\figref{fig:HeuristicOperation})}. Best-input propagation (\algoref{alg:BestInputProp}), explained in the next subsection, generates a new potential edge $\mathcal{E}_{new}$ (\alglineref{alg:ForSrchExploit: bestInputProp}) that starts at $x_{pop}$ and gets close to $x_{best}$ (\figref{fig:GBRRTSpecificPropagation}-Left).
}

\subsubsection{\textbf{Best-Input edge-generation}}
This edge-generation method (\algoref{alg:ForSrchFastExplore}) performs a fast exploration search (\figref{fig:GBRRTSpecificPropagation}). It samples a random node $x_{rand}$ in the state space using $\mathtt{RandomState}$ (\alglineref{alg:ForSrchFastExplore:RandomState}), then selecting its nearest neighbor in $\mathcal{G}_{for}$ according to the distance function $d_{X}$ using $\mathtt{NearestNeighbor}$ (\alglineref{alg:ForSrchFastExplore:NearestNeighbor}), and finally performs best-input propagation (\cite{kleinbort2018probabilistic}) using $\mathtt{BestInputProp}$ (\alglineref{alg:ForSrchFastExplore:BestInputProp}). Best-Input propagation (\algoref{alg:BestInputProp}) entails generating a list of $N_{B}$ Monte Carlo propagation trajectories (explained in next subsection) and then selects the trajectory that gets closest to the random sampled state $x_{rand}$ as specified by the distance function $d_{X}$ \revision{(\figref{fig:GBRRTSpecificPropagation}-Center)}.
\revision{Although this algorithm performs fast exploration, in practice, its probabilistic completeness is still an open question  \cite{kleinbort2018probabilistic}\cite{kunz2015kinodynamic}}. \revisionTwo{The reverse search version -- $\mathtt{RevSrchFastExplore}$, is quite similar, and is presented in \algoref{alg:RevSrchFastExplore} and its sub-functions in  Appendix\apdxref{apdx:ReverseSearchAlgorithms}.}

\begin{figure*}[ht]
\begin{minipage}[c]{0.33\textwidth}
\figuretitle{Pop operation from $\mathbf{Q}$ for heuristic edge-generation}
\includegraphics[scale=0.35]{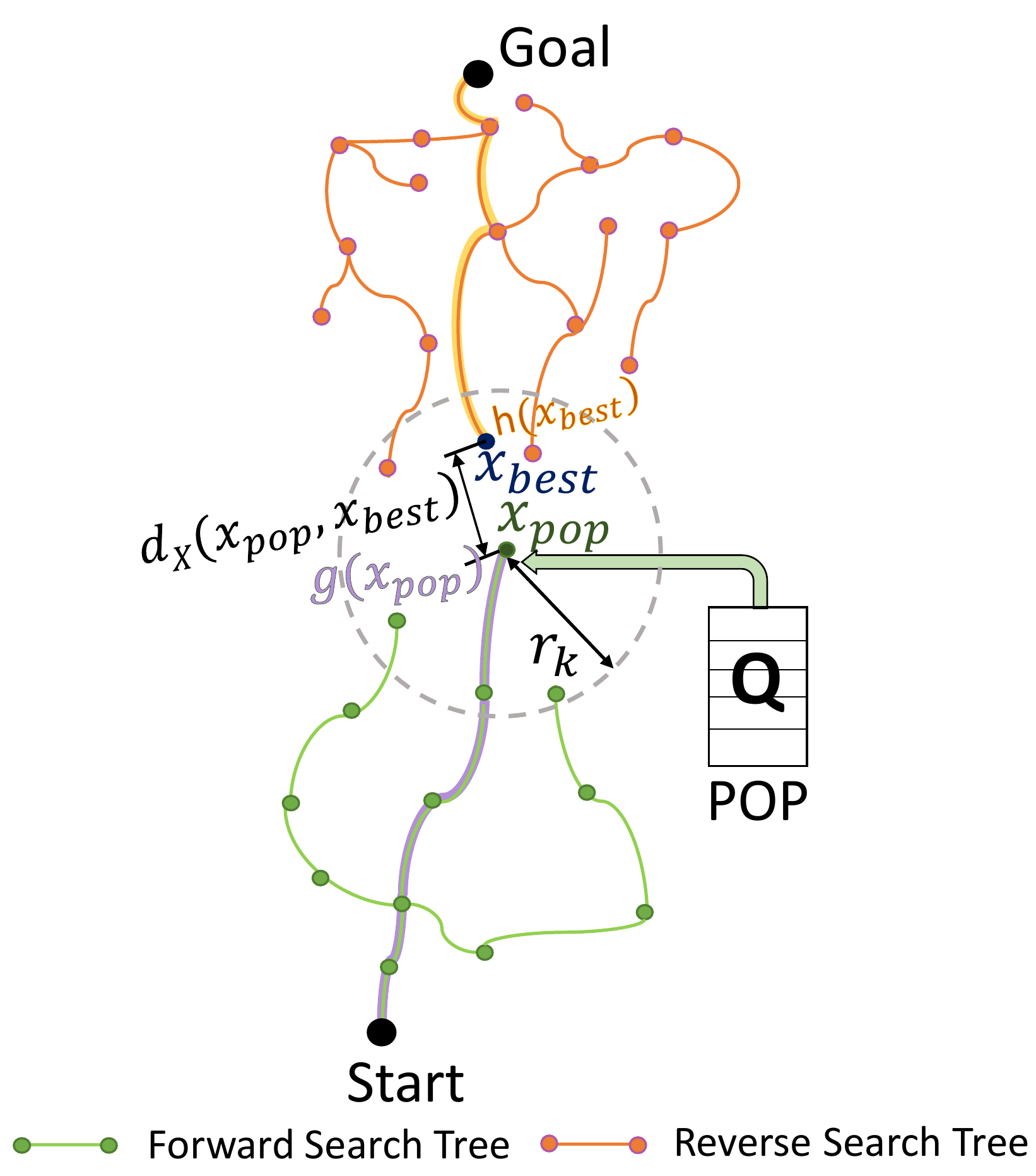}
\caption{A forward tree node $x_{pop}$ popped from $\mathbf{Q}$ and $x_{best}$ chosen from reverse tree using the cost $g(x_{pop}) + d_{X}(x_{pop}, x) + h(x)$ for heuristic edge generation. $x_{best}$ is within a ball of radius \revision{$r_{k}$} centered at $x_{pop}$.}
\label{fig:HeuristicOperation}
\vspace{8mm}
\figuretitle{Covering ball sequence}
\includegraphics[scale=0.43]{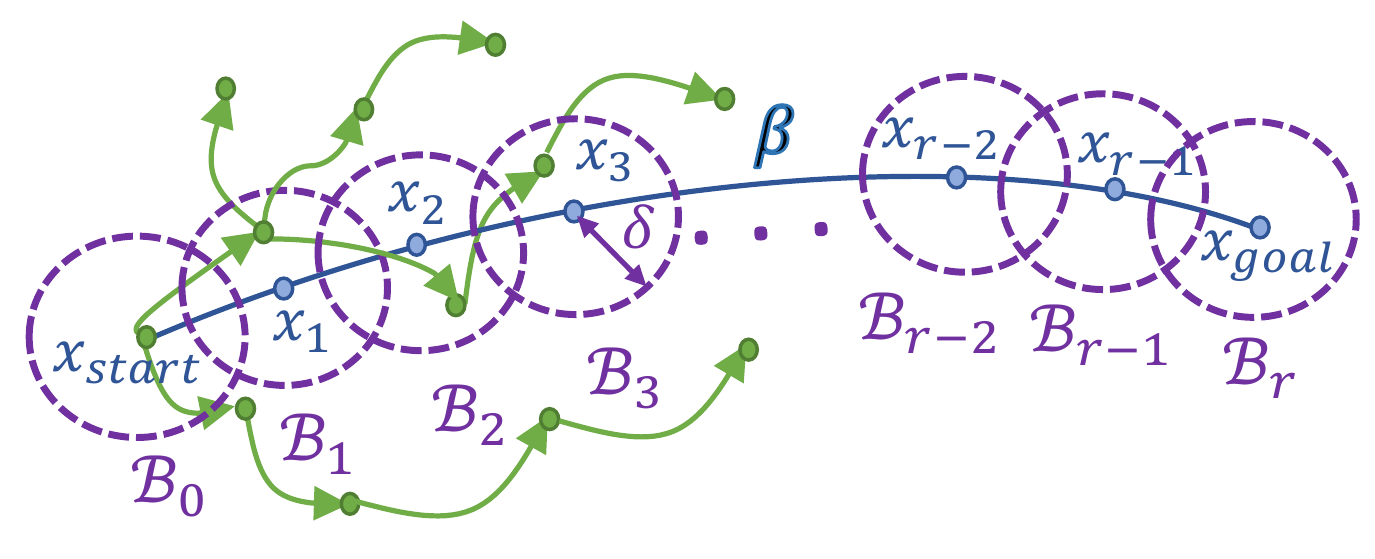}
\caption{Covering ball sequence on reference path $\beta$. Only forward search tree is shown. Figure inspired from Fig. 1 in \cite{kleinbort2020refined}.}
\label{fig:ProofDiagram}
\end{minipage}
\hspace{.05cm}
\begin{minipage}[c]{0.65\textwidth}
\figuretitle{\revision{Initial solution time vs. parameters ($N_{B}$ and $q$) with $\delta_{hr}$ fixed}}
\captionsetup{font={color=black, small}}
\begin{minipage}[c]{0.3\textwidth}
\begin{xy}
\xyimport(100, 100){\includegraphics[scale=\parameterScale]{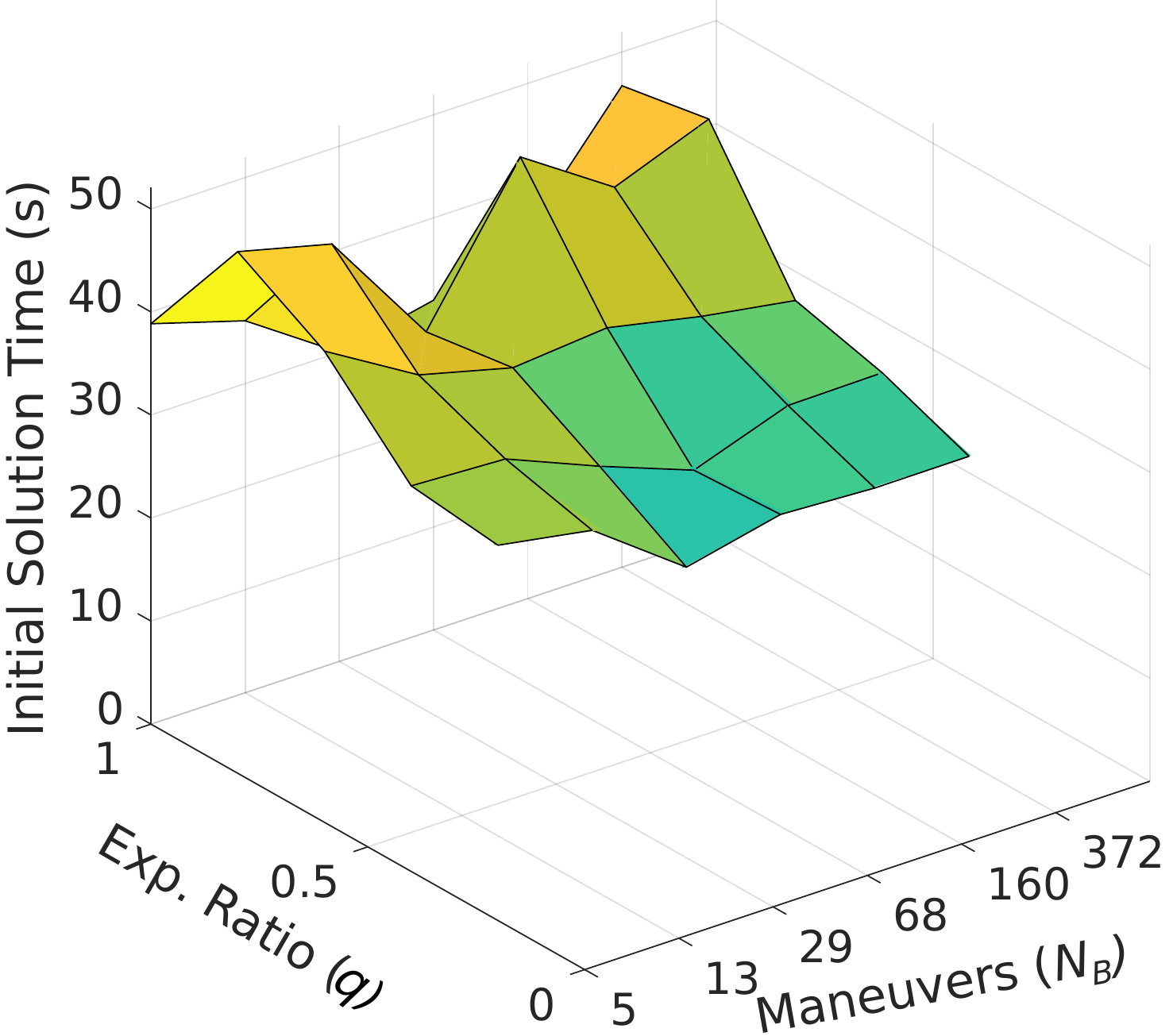}}
,(55, 100)*{\revision{\footnotesize \text{$\delta_{hr} = 4$ (Low)}}}
\end{xy}
\end{minipage}
\begin{minipage}[c]{0.3\textwidth}
\begin{xy}
\xyimport(100, 100){\includegraphics[scale=\parameterScale]{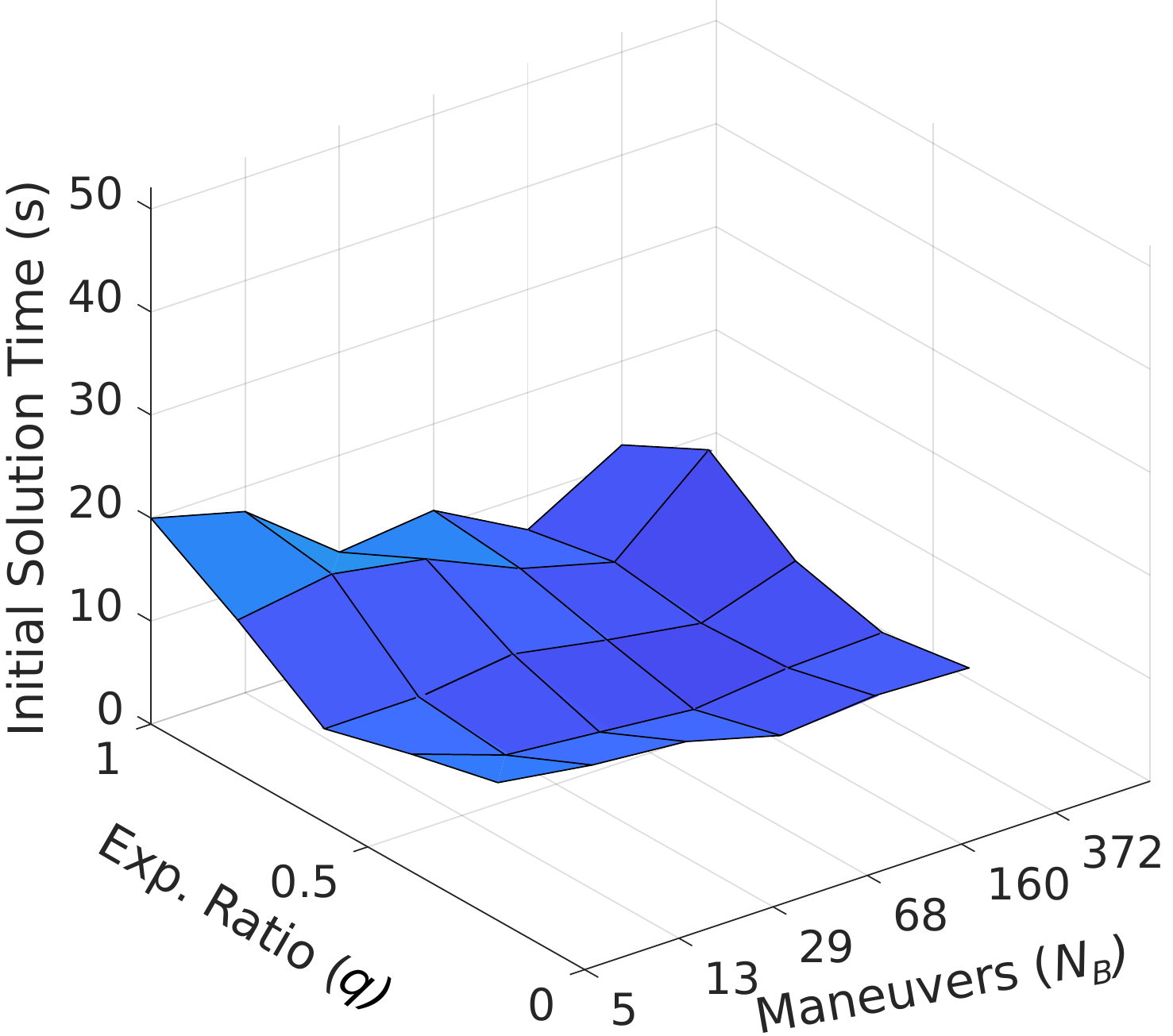}}
,(55, 100)*{\revision{\footnotesize \text{$\delta_{hr} = 16$ (Medium)}}}
\end{xy}
\end{minipage}
\begin{minipage}[c]{0.3\textwidth}
\begin{xy}
\xyimport(100, 100){\includegraphics[scale=\parameterScale]{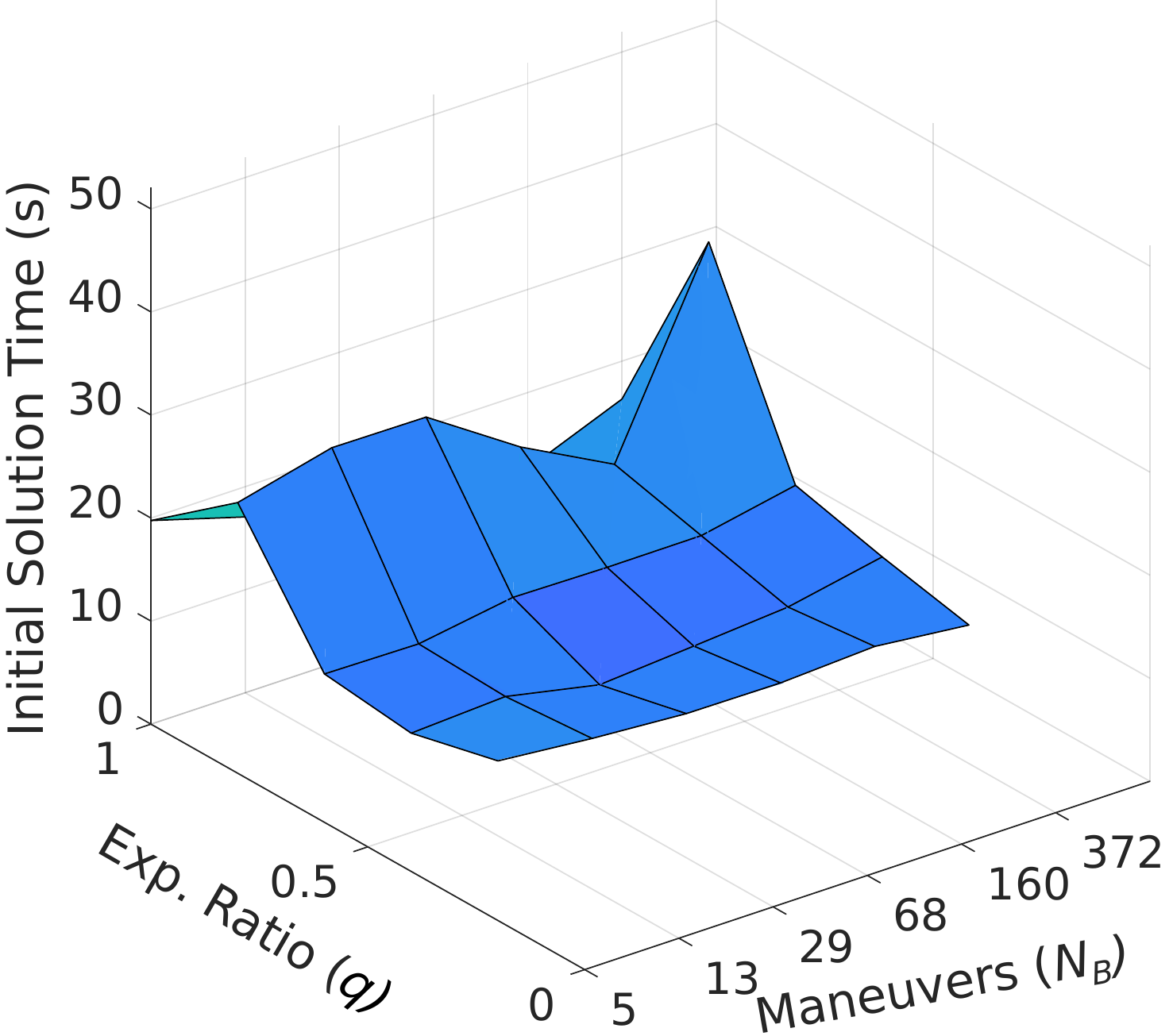}}
,(52, 100)*{\revision{\footnotesize \text{$\delta_{hr} = 64$ (High)}}}
\end{xy}
\end{minipage}
\label{fig:deltaHrFixed}
\vspace{5mm}
\figuretitle{\revision{Initial solution time vs. parameters ($\delta_{hr}$ and $N_{B}$) with $q$ fixed}}
\begin{minipage}[c]{0.3\textwidth}
\begin{xy}
\xyimport(100, 100){\includegraphics[scale=\parameterScale]{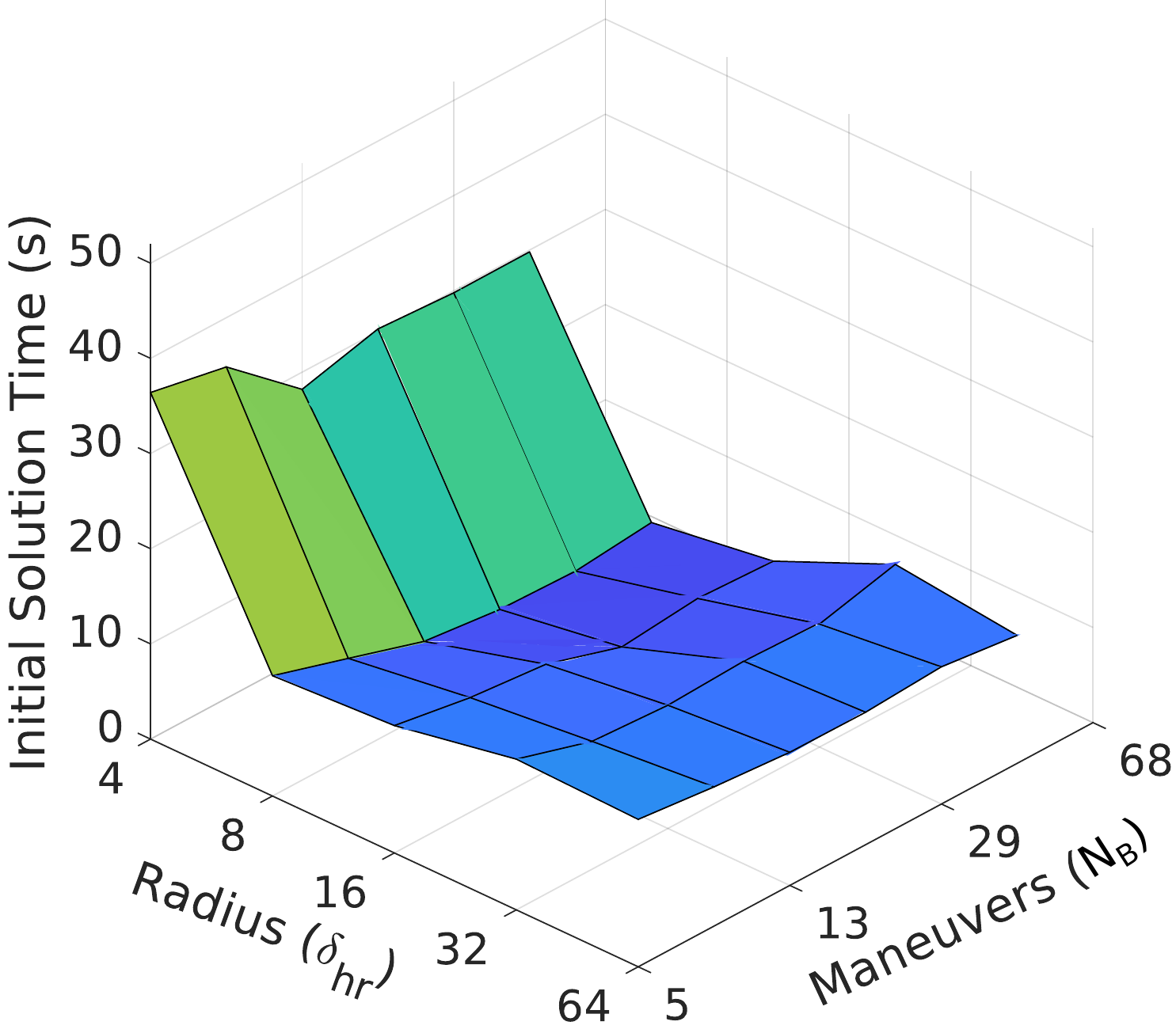}}
,(55, 100)*{\revision{\footnotesize \text{$q = 0.2$ (Low)}}}
\end{xy}
\end{minipage}
\begin{minipage}[c]{0.3\textwidth}
\begin{xy}
\xyimport(100, 100){\includegraphics[scale=\parameterScale]{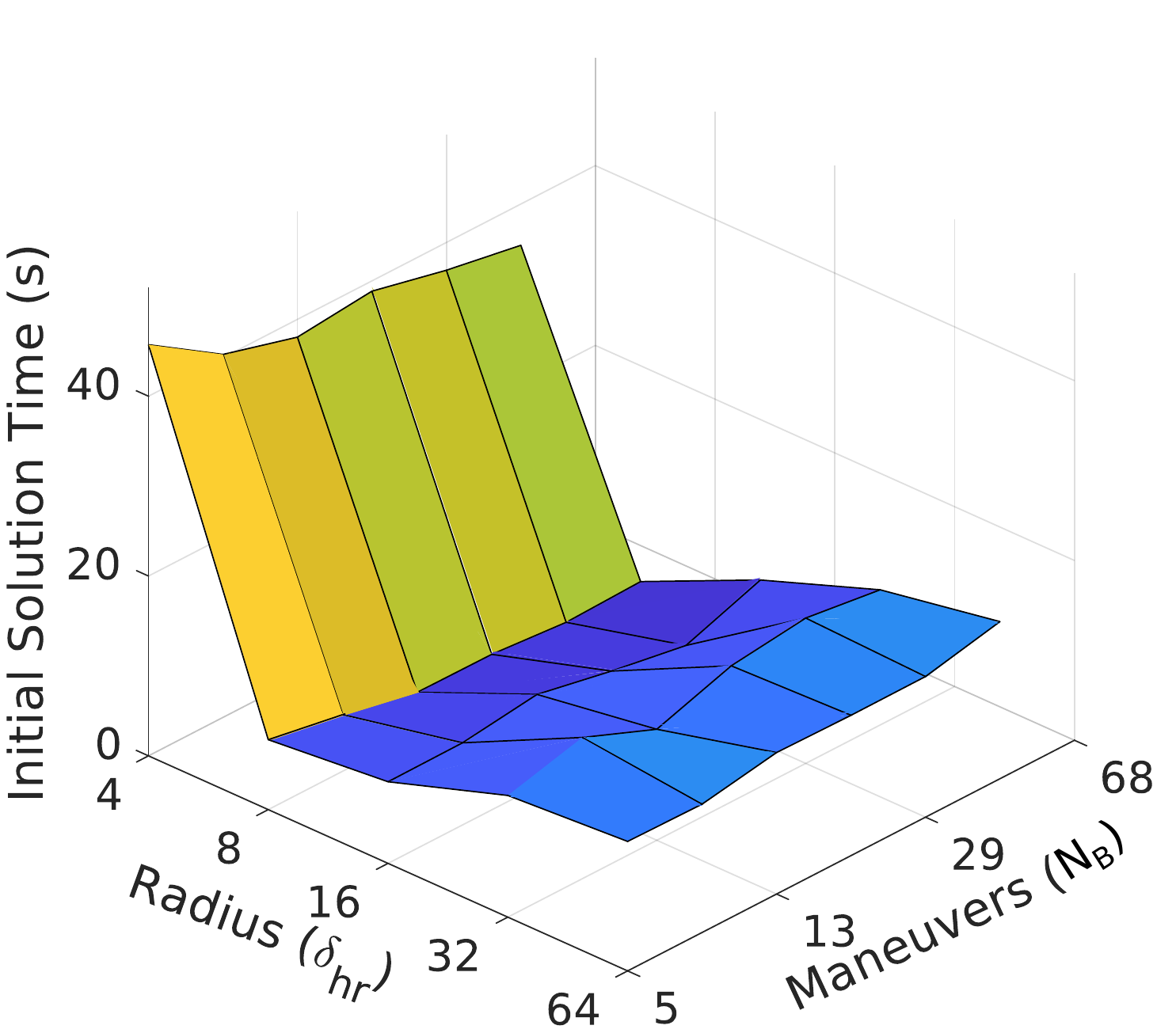}}
,(55, 100)*{\revision{\footnotesize \text{$q = 0.6$ (Medium)}}}
\end{xy}
\end{minipage}
\begin{minipage}[c]{0.3\textwidth}
\begin{xy}
\xyimport(100, 100){\includegraphics[scale=\parameterScale]{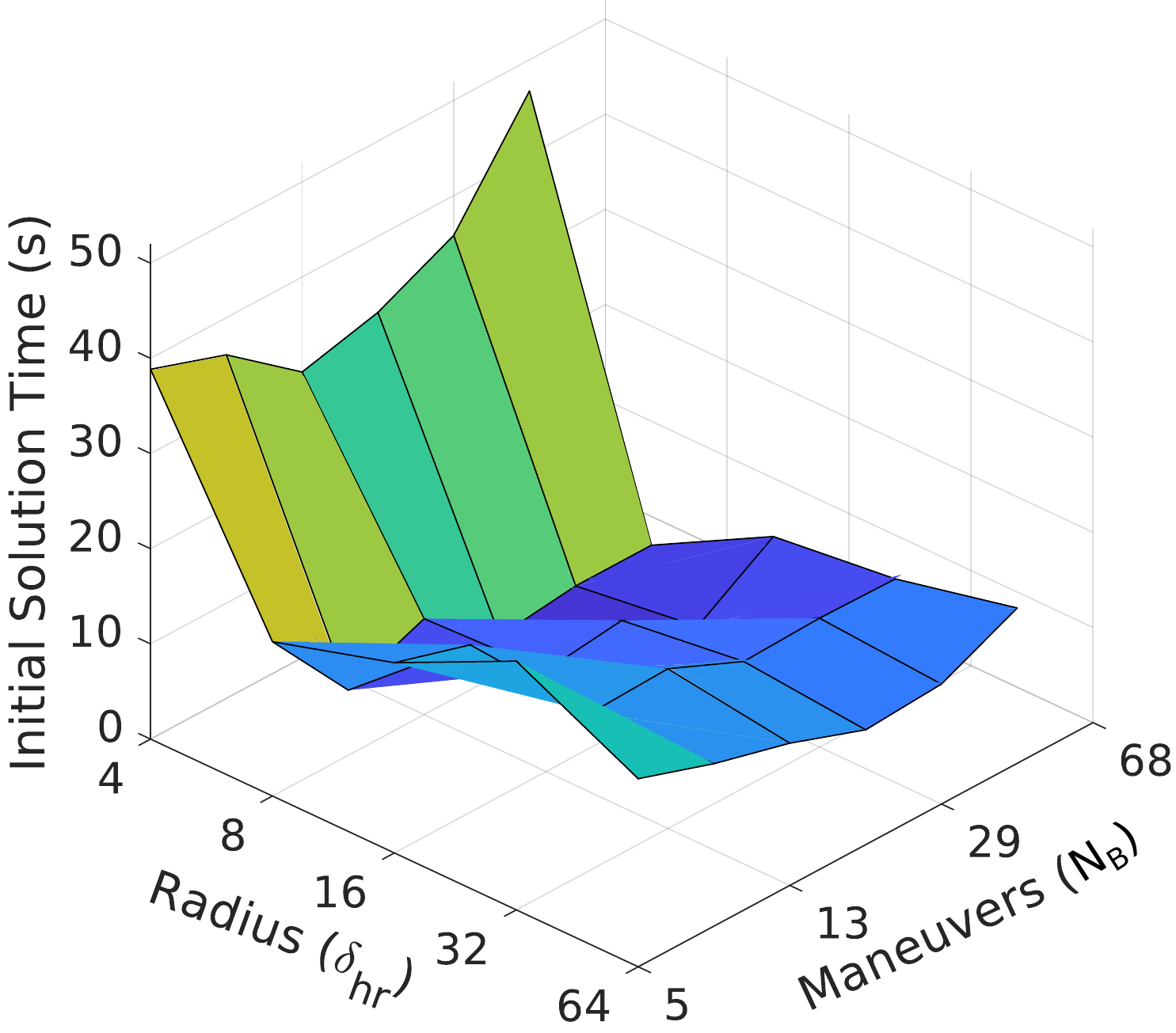}}
,(52, 100)*{\revision{\footnotesize \text{$q = 1.0$ (High)}}}
\end{xy}
\end{minipage}
\label{fig:exploitationRatioFixed}
\vspace{5mm}
\figuretitle{\revision{Initial solution time vs. parameters ($\delta_{hr}$ and $q$) with $N_{B}$ fixed}}
\begin{minipage}[c]{0.1\textwidth}
\begin{xy}
\xyimport(100, 100){\includegraphics[scale=\parameterScale]{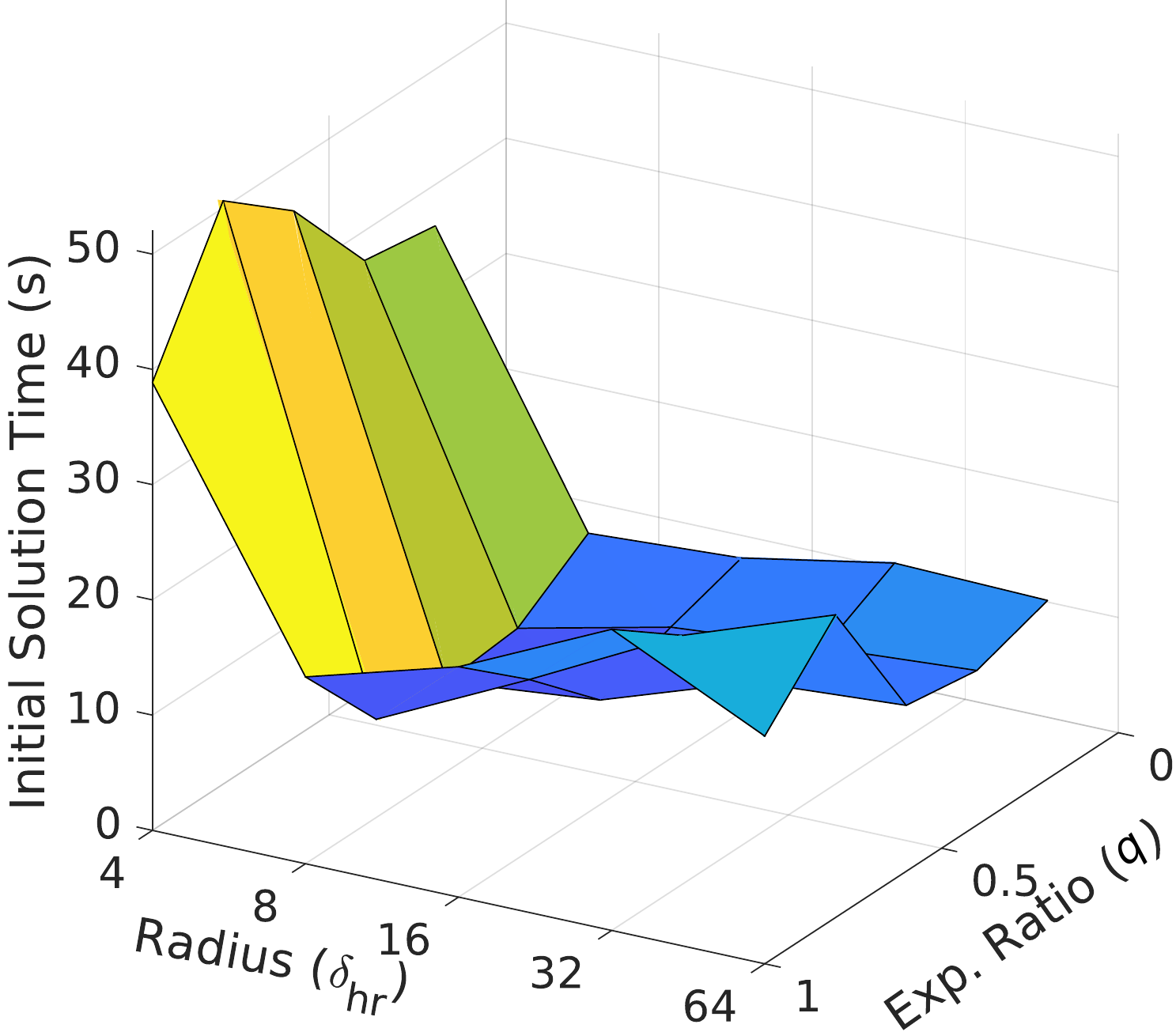}}
,(55, 100)*{\revision{\footnotesize \text{$N_{B} = 5$ (Low)}}}
\end{xy}
\end{minipage}
\begin{minipage}[c]{0.1\textwidth}
\begin{xy}
\xyimport(100, 100){\includegraphics[scale=\parameterScale]{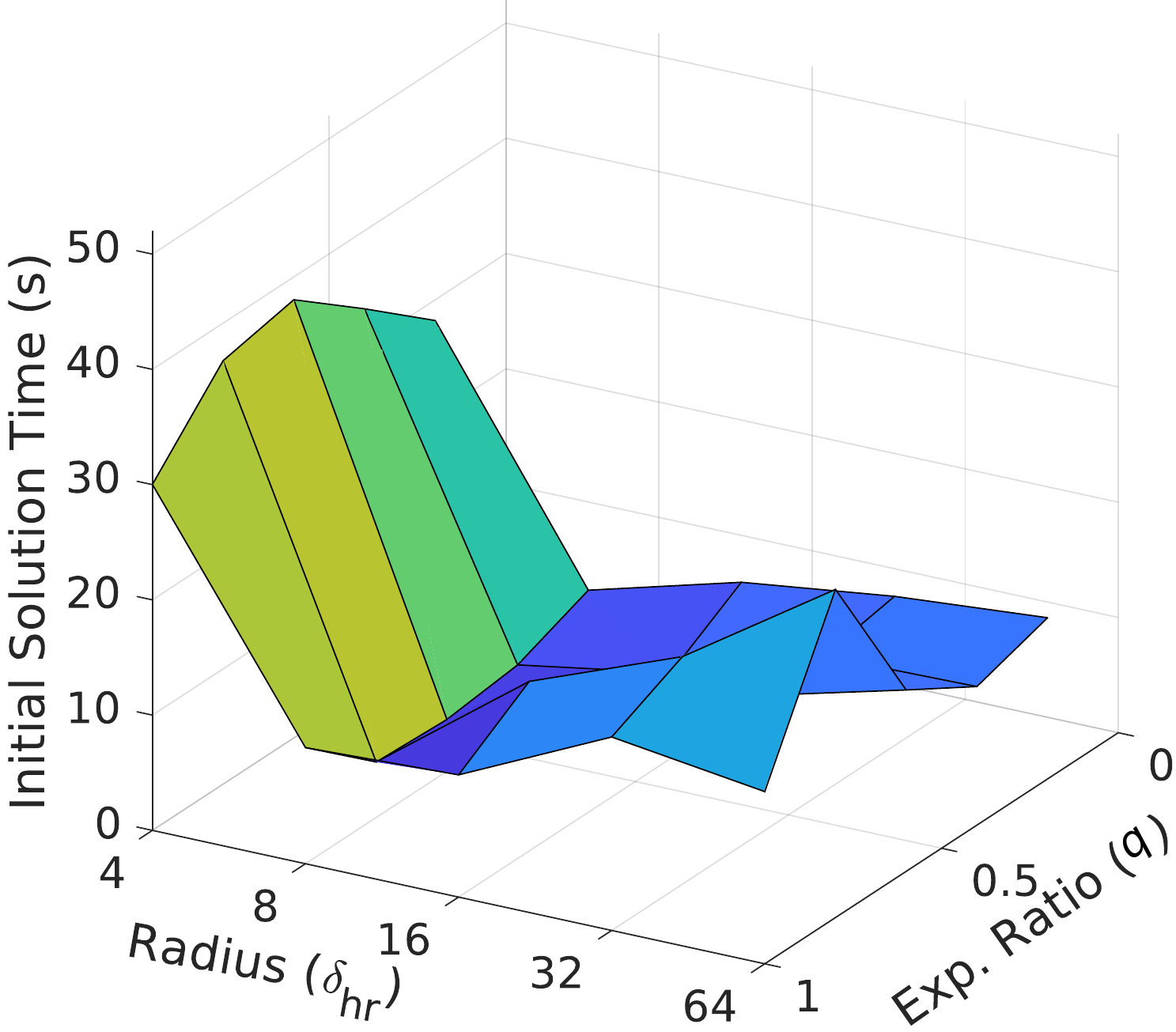}}
,(55, 100)*{\revision{\footnotesize \text{$N_{B} = 29$ (Medium)}}}
\end{xy}
\end{minipage}
\begin{minipage}[c]{0.1\textwidth}
\begin{xy}
\xyimport(100, 100){\includegraphics[scale=\parameterScale]{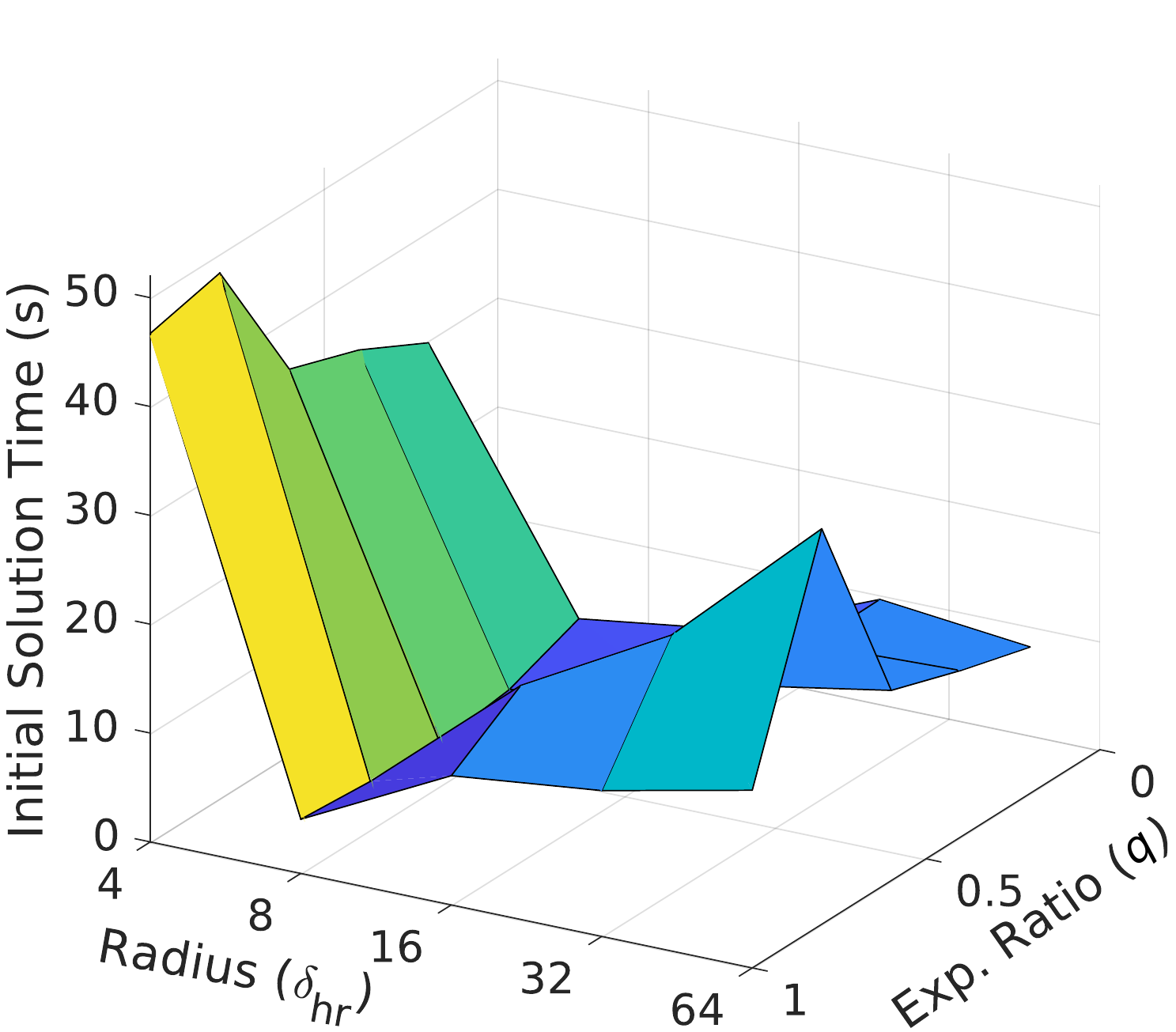}}
,(52, 100)*{\revision{\footnotesize \text{$N_{B} = 372$ (High)}}}
\end{xy}
\vspace{5mm}
\end{minipage}
\begin{minipage}[c]{0.65\textwidth}
\hspace{30mm}
\begin{xy}
\xyimport(100, 100){\includegraphics[scale=0.4]{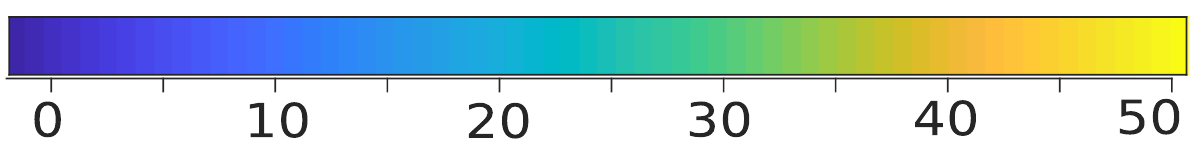}}
,(52, 100)*{}
\end{xy}
\end{minipage}
\caption{Variation of initial solution time vs. parameters ($\delta_{hr}$, $q$, $N_{B}$) of GABRRT. $\delta_{hr}$ and $N_{B}$ axis are log scales while $q$ axis is linear scale. We generate the figures using unicycle dynamics with the environment shown in \figref{fig:SoftwareSimulationSnapshot}. Each point is a mean of 70 trials. \revisionTwo{We set $r_{k} = \delta_{hr}$ (constant radius) in all trials to highlight effect of $\delta_{hr}$ on initial solution time.}}
\label{fig:parametersFixed}
\end{minipage}
\end{figure*}

\subsubsection{\textbf{Random edge-generation}}
This edge-generation method (\algoref{alg:ForSrchRandomExplore}) is required to perform search in a way that facilitates probabilistic completeness (\figref{fig:GBRRTSpecificPropagation}).
This method involves sampling a random node $x_{rand}$ in the state space, selecting its nearest neighbor in $\mathcal{G}_{for}$  (\alglineref{alg:ForSrchRandomExplore:NearestNeighbor}) and then performing Monte Carlo propagation \cite{kleinbort2018probabilistic} using $\mathtt{MonteCarloProp}$ (\alglineref{alg:ForSrchRandomExplore:MonteCarloProp}). Monte Carlo propagation  (\algoref{alg:MonteCarloProp}) requires randomly selecting \revisionTwo{an} input $u \in \mathcal{U}$ of random duration $t$ and then performing integration of system (1) to generate a new trajectory. We note that using this type of edge-generation within the Kinodynamic RRT algorithm as been proven to be probabilistic complete  \cite{kleinbort2018probabilistic}.

\subsection{Parameters of GBRRT/\revision{GABRRT}}
\label{sub:SelectionParameters}
\revision{Two user-chosen parameters affect the performance of GBRRT and GABRRT: heuristic radius ($\delta_{hr}$) and exploitation ratio function ($\mathcal{P}$). A third parameter is required for the particular implementation of $\mathtt{BestInputProp}$ that we use in our experiments: the number of best-input propagation trajectories ($N_{B}$). We now outline the \revisionTwo{role} of these parameters  and then perform a simulation study to evaluate the sensitivity of the performance of the algorithm to these (changing) parameters.} 
\par The heuristic radius $\delta_{hr}$ performs two roles. First, it defines the maximum distance to the closest reverse\revision{\slash forward} tree node that affects the cost value used to insert\revision{\slash update} a forward tree node to the priority queue $\mathbf{Q}$. 
Second, it limits the list of the potential reverse tree nodes considered for heuristic edge-generation (\alglineref{alg:ForSrchExploit: NearestVertices} in \algoref{alg:ForSrchExploit}). 
Choosing a low value of $\delta_{hr}$ will both decrease the likelihood that a forward tree node is added to the priority queue and decrease the number of reverse nodes considered for directed propagation. On the other hand, a high value of $\delta_{hr}$ will reduce the accuracy of the cost-to-goal heuristic and increase the chances that directed propagation is blocked by an obstacle collision.

\par The user-defined exploitation ratio $\mathcal{P}$ dictates the trade-off between exploration and exploitation during forward search. We choose $\mathcal{P}$ to be a constant probability $q$ in our experiments. We pick ${q > 0.5}$ to prioritize exploitation over exploration to quickly get to an initial feasible solution. \revision{However, a very high $q$ can cause insufficient exploration, especially in higher-dimensional systems leading to poor performance.}

\par The number of trajectories $N_{B}$ of best-input propagation affects the per iteration run-time of best-input and heuristic edge-generation. As the value of $N_{B}$ increases, the closer the set approximates the true reachability set of that node \cite{littlefield2018efficient}, at the expense of potentially increasing both the per-iteration run-time and the time to generate a feasible solution. A low value of $N_{B}$ decreases the per-iteration run-time of the algorithm but potentially increases the time to generate a feasible solution.
The effect of increasing $N_{B}$ \revision{has less impact} when using pre-computed maneuver libraries than using online integration.

\revision{To understand the effect of input parameters ($\delta_{hr}$, $q$, $N_{B}$) on the initial solution time, we conduct a simulation study (\figref{fig:parametersFixed}) using unicycle dynamics in the environment specified in \figref{fig:SoftwareResultGraphs}. Because we have three input parameters and one output parameter (initial solution time), we fix one input parameter while plotting the other two parameters on the x-y axis and the initial solution time on the z-axis. \revisionTwo{Due to space constraints, we choose Low, Medium, and High values to plot for each fixed input parameter (see \figref{fig:parametersFixed}).}}
\par \revision{The effect of $\delta_{hr}$ on the initial solution time is shown in \revision{\figref{fig:parametersFixed}}-Top Row. The initial solution time is higher for low (left) and high (right) values of $\delta_{hr}$ and best around the medium value (middle). The initial solution time did not vary much as $\delta_{hr}$ changed from the medium to high value.}
\par \revision{The effect of $q$ on the initial solution time is shown in \figref{fig:parametersFixed}-Middle Row. The initial solution time is lower for the medium value of $q$ (middle) when $\delta_{hr}$ is in the medium range 8-14. However, low $q$ does better when $\delta_{hr}$ is in the low range (2-4). This happens because when $\delta_{hr}$ is low, the exploitation search is not that effective, and hence experiments using a low value of $q$ (more exploration) produces faster initial solutions.}
\par \revision{The effect of $N_{B}$ on the initial solution time is shown in \figref{fig:parametersFixed}-Bottom Row. The initial solution time is low and relatively constant across the tested range of $N_{B}$ for medium values of $\delta_{hr}$ around 8-14. However, the initial solution time is higher for low (\textit{and high}) $N_{B}$ values with low $\delta_{hr}$ and high $q$ because not enough (\textit{in excess}) trajectories are considered for performing a productive exploitation search.}

\section{ANALYSIS}
\label{sec:analysis}
In this section, we prove the probabilistic completeness of GBRRT and GABRRT.
\revision{Computational complexity} is discussed in Appendix~\ref{sub:RuntimeComplexity}.
We begin by restating the definition of probabilistic completeness referenced from \cite{karaman2011sampling}. \revisionTwo{Next, we prove that GBRRT is probabilistic complete and the same for GABRRT.}

\begin{definition}
Let $\mathcal{A}$ be an algorithm that solves the single-query feasible motion planning problem ($x_{start}$, $\mathcal{X}_{free}$, $\mathcal{X}_{goal}$). Let $\{\mathbf{V}^{\mathit{\mathcal{A}}}_{n}\}_{n \in \mathbb{N}}$ and $\{\mathbf{E}^{\mathit{\mathcal{A}}}_{n}\}_{n \in \mathbb{N}}$ be the sequences of vertices and edge sets returned by algorithm $\mathcal{A}$ at each iteration $n$. Let the corresponding graph at iteration $n$ be \mbox{$\mathcal{G}^{\mathit{\mathcal{A}}}_{n}$ = ($\mathbf{V}^{\mathit{\mathcal{A}}}_{n}$, $\mathbf{E}^{\mathit{\mathcal{A}}}_{n}$)}. The algorithm $\mathcal{A}$ is probabilistic complete if

\begin{equation*}\label{ProbCompleteness}
\begin{aligned}
\liminf\limits_{n\rightarrow \infty} \mathbb{P}(\exists\, x_{g} \in \mathcal{X}_{goal} \cap \mathbf{V}^{\mathit{\mathcal{A}}}_{n}  \,\, \textit{such that} \\ \,\, x_{start} \,\, \textit{is connected to} \,\, x_{g} \,\, \textit{in} \,\, \mathcal{G}^{\mathit{\mathcal{A}}}_{n}) = 1.
\end{aligned}
\end{equation*}
\end{definition}

We leverage the proof of probabilistic completeness of RRT with random edge-generation provided by Kleinbort et al.\cite{kleinbort2018probabilistic} to prove the probabilistic completeness of GBRRT. For this proof, we assume the system represented by \eqref{systemDiffEq} resides in smooth $n$-dimensional \revision{Euclidean} manifolds and is Lipschitz continuous (\secref{sec:preliminaries}). We start with a reference solution path $\beta$ that is covered by $(r + 1)$ hyper-balls of maximal clearance radius $\delta$ (\figref{fig:ProofDiagram}) connecting $x_{start}$ and $x_{goal}$. To prove completeness property of GBRRT, we need to show that given a GBRRT vertex exists in the $i\mathit{th}$ ball, the probability $p_{GBRRT}$, that GBRRT in the next iteration will create a vertex in the $(i + 1)\mathit{th}$ ball when propagating from the $i\mathit{th}$ ball is bounded below by a positive constant. To prove this statement, we state the below propositions provided by Kleinbort et al.\cite{kleinbort2018probabilistic} for RRT with random edge-generation. We then use these propositions to show $p_{GBRRT} > 0$ (\lemmaref{lemma:gbrrtPositive}) and finally GBRRT is probabilistically complete (\theoremref{theorem:gbrrtProb}).

\begin{proposition}
\textbf{(From \cite{kleinbort2018probabilistic})} For system represented by equation \eqref{systemDiffEq} with Lipschitz continuity (\secref{sec:preliminaries}), the probability $p_{RRT}$ that RRT will generate a vertex in the \mbox{(i + 1)}$\mathit{th}$ ball when propagating from the i$\mathit{th}$ ball in the next iteration satisfies the condition $p_{RRT} > (\abs{\mathcal{B}_{\delta/5}}\cdot \rho)/\abs{\mathcal{X}}$ where $\rho$ is a positive constant, \revision{$\delta$ is the maximal clearance radius (Definition 8)}, $\mathcal{B}_{\delta/5}$ is the hyper-ball of radius $\delta/5$ centered at $x_{i}$,  $\abs{\mathcal{B}_{\delta/5}}$ and $\abs{\mathcal{X}}$ are the Lebesgue measures of $\mathcal{B}_{\delta/5}$ and $\mathcal{X}$ respectively.
\label{prop:RRTPropagation}
\end{proposition}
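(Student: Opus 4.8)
The plan is to follow the single-step argument of Kleinbort et al.~\cite{kleinbort2018probabilistic}, since this proposition is precisely their result restated for system~\eqref{systemDiffEq}. I would decompose the event ``RRT places a vertex in the $(i+1)$th ball during the next iteration'' into two sub-events whose probabilities multiply: (i) the random sample $x_{rand}$ lands in the small ball $\mathcal{B}_{\delta/5}$ centered at $x_i$, and (ii) the randomly chosen constant control, applied for a random duration, propagates the resulting nearest neighbor into the $(i+1)$th ball while staying collision-free. A lower bound on each sub-event probability, multiplied together, then yields the claimed bound $p_{RRT} > (\abs{\mathcal{B}_{\delta/5}}\cdot\rho)/\abs{\mathcal{X}}$.

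First I would handle the sampling sub-event. Because $\mathtt{RandomState}$ draws $x_{rand}$ uniformly over $\mathcal{X}$, the probability that $x_{rand}\in\mathcal{B}_{\delta/5}$ is exactly the Lebesgue-measure ratio $\abs{\mathcal{B}_{\delta/5}}/\abs{\mathcal{X}}$. The reason the radius is $\delta/5$ rather than the full clearance $\delta$ is geometric: conditioning on $x_{rand}\in\mathcal{B}_{\delta/5}$ together with the hypothesis that a tree vertex $v$ already lies in the $i$th ball, the nearest neighbor $x_{near}$ returned by $\mathtt{NearestNeighbor}$ can be no farther from $x_{rand}$ than $v$ is, and a short triangle-inequality chain then confines $x_{near}$ to a fixed neighborhood of $x_i$ that still sits strictly inside the $\delta$-clearance tube of the reference path $\beta$ (\figref{fig:ProofDiagram}). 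This simultaneously guarantees that $x_{near}$ is collision-free and that it is close enough to $x_i$ for a subsequent propagation toward $x_{i+1}$ to be feasible.

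Next I would bound the propagation sub-event. Since $\beta$ is a valid trajectory of system~\eqref{systemDiffEq}, there is a reference control $\bar u$ and a duration $\bar t\in[0,T_{max}]$ that carries $x_i$ to $x_{i+1}$ along $\beta$. Invoking Lipschitz continuity~\eqref{lispchitzContinuity} with a Gr\"onwall-type estimate, I would show that applying any control sufficiently near $\bar u$ for a duration sufficiently near $\bar t$, starting from any $x_{near}$ in the neighborhood established above, lands the endpoint inside the $(i+1)$th ball while keeping the entire edge inside the clearance tube. The set of such admissible $(u,t)$ pairs has positive Lebesgue measure, so the probability that $\mathtt{Random}(\mathcal{U})$ and $\mathtt{Random}(0,T_{max})$ jointly select one is bounded below by a positive constant $\rho$ depending only on $K_x$, $K_u$, $T_{max}$, and the ball geometry---never on $i$ or the iteration index.

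Multiplying the two independent lower bounds gives $p_{RRT}>(\abs{\mathcal{B}_{\delta/5}}\cdot\rho)/\abs{\mathcal{X}}$. I expect the Gr\"onwall estimate in the propagation step to be the main obstacle: the discrepancy between the reference and perturbed trajectories grows with the propagation horizon, so the covering balls must be spaced finely enough (equivalently, each $\bar t$ kept short enough) that this growth does not exceed the $\delta/5$ tolerance, and one must at the same time verify the perturbed edge never exits the clearance tube. Managing these two coupled error budgets, while keeping $\rho$ bounded away from zero \emph{uniformly} in $i$, is the delicate part of the argument.
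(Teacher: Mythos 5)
The paper does not actually prove \propositionref{prop:RRTPropagation}: it is stated as an imported, black-box result from \cite{kleinbort2018probabilistic} and is used only as a premise for \lemmaref{lemma:gbrrtPositive} and \theoremref{theorem:gbrrtProb}, so there is no in-paper proof to compare your attempt against. That said, your sketch is a faithful reconstruction of the cited argument: the decomposition into (sample lands in $\mathcal{B}_{\delta/5}(x_i)$) $\times$ (random constant control and random duration succeed), the uniform-sampling measure ratio $\abs{\mathcal{B}_{\delta/5}}/\abs{\mathcal{X}}$, the triangle-inequality chain confining $x_{near}$ to within $3\delta/5$ of $x_i$ (hence inside the clearance tube), and the Lipschitz/Gr\"onwall perturbation bound yielding a positive-measure set of admissible $(u,t)$ pairs and thus a uniform $\rho>0$ are exactly the ingredients of Kleinbort et al.'s proof. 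The one assumption you gloss over is that the reference trajectory $\beta$ must itself be realizable (or approximable) by a piecewise-constant control so that a reference pair $(\bar u,\bar t)$ carrying $x_i$ to $x_{i+1}$ exists on each segment; this is an explicit hypothesis in \cite{kleinbort2018probabilistic} and is consistent with the constant-control edge generation described in \secref{sec:preliminaries}, but it should be stated rather than assumed silently. With that caveat, your identification of the coupled error budgets (ball spacing versus Gr\"onwall growth over the propagation horizon) as the delicate point is accurate.
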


\begin{proposition}
\textbf{(From \cite{kleinbort2018probabilistic})} Given \propositionref{prop:RRTPropagation}, RRT is probabilistically  complete, i.e. the probability that RRT fails to reach $\mathcal{X}_{goal}$ from $x_{start}$ after $k$ iterations is at most $ae^{-bk}$ where $a, b > 0$.
\label{prop:RRTFails}
\end{proposition}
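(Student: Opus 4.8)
The plan is to reduce the claim to a concentration inequality for a sum of independent Bernoulli trials. Proposition~\ref{prop:RRTPropagation} supplies a \emph{uniform} positive lower bound $p \triangleq (\abs{\mathcal{B}_{\delta/5}}\cdot\rho)/\abs{\mathcal{X}} > 0$ on the single-iteration probability of extending a vertex from the $i$th covering ball into the $(i+1)$th ball. Since the reference path $\beta$ is covered by $(r+1)$ balls, reaching $\mathcal{X}_{goal}$ from $x_{start}$ amounts to accumulating $r$ such successful transitions in sequence; the heart of the argument is to show that the probability of \emph{failing} to accumulate these $r$ successes within $k$ iterations decays geometrically in $k$.

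First I would introduce a monotone progress variable $M_k \in \{0,1,\ldots,r\}$ equal to the index of the furthest covering ball containing an RRT vertex after $k$ iterations, with $M_0 = 0$ because $x_{start}$ lies in the first ball. The key observation is that whenever $M_k = i < r$ there is, by construction, a vertex in ball $i$, so Proposition~\ref{prop:RRTPropagation} applies and the conditional probability that the next iteration deposits a vertex in ball $i+1$ (so that $M_{k+1} \geq i+1$) is at least $p$, \emph{regardless of the past history}. This uniform conditional lower bound is exactly what is needed to couple $(M_k)$ to a sequence of i.i.d.\ $\mathrm{Bernoulli}(p)$ trials: by a standard stochastic-dominance argument the number of transitions completed within $k$ iterations dominates a $\mathrm{Binomial}(k,p)$ variable $S_k$, and the event that RRT has not yet reached $\mathcal{X}_{goal}$ after $k$ iterations is contained in $\{S_k < r\}$.

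Next I would bound $\mathbb{P}(S_k < r)$ via a Chernoff bound on the lower tail of the binomial. For $k \geq 2r/p$ the mean $\mu = kp$ obeys $r \leq \mu/2$, so the multiplicative Chernoff inequality gives $\mathbb{P}(S_k < r) \leq \mathbb{P}(S_k \leq \mu/2) \leq e^{-\mu/8} = e^{-kp/8}$. For the finitely many remaining values $k < 2r/p$ the failure probability is trivially at most $1$, which is absorbed by choosing $a$ large enough that $a\,e^{-bk} \geq 1$ on that range (e.g.\ $a = e^{r/4}$, $b = p/8$). The resulting uniform bound $a\,e^{-bk}$ on the failure probability tends to $0$ as $k \to \infty$, matching the probabilistic-completeness criterion stated in Definition~9.

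I expect the main obstacle to be the rigorous justification of the stochastic-dominance step rather than the concentration bound, which is routine. The subtlety is that the per-iteration advancement events are \emph{not} independent---whether an advance is even possible depends on the current frontier ball, which is itself random and history-dependent. The argument must therefore lean on the fact that Proposition~\ref{prop:RRTPropagation} bounds the \emph{conditional} advance probability given \emph{any} admissible history, not merely an unconditional probability, so that one can build an explicit coupling in which each RRT iteration dominates an independent coin flip of bias $p$. Additional care is needed to handle overlap at the boundaries of consecutive covering balls and to confirm that the ``furthest ball reached'' is indeed monotone nondecreasing, but these are technical rather than conceptual points.
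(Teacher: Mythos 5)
Your argument is correct, but note that the paper itself offers no proof of \propositionref{prop:RRTFails}: it is imported verbatim from \cite{kleinbort2018probabilistic}, and your write-up is essentially a faithful reconstruction of the proof given in that source --- a monotone ``furthest ball reached'' process whose advancement probability is uniformly bounded below by the constant from \propositionref{prop:RRTPropagation}, stochastic domination of the number of advances by a $\mathrm{Binomial}(k,p)$ variable, and a lower-tail Chernoff bound yielding the $a e^{-bk}$ decay. Since your approach matches the cited proof (including the correct handling of the history-dependence via the conditional bound), there is nothing further to add.
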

\begin{lemma}
$p_{GBRRT} \geq (1-q) p_{RRT}$ where $q$ is the exploitation ratio and $0 \leq q < 1$
\label{lemma:gbrrtPositive}.
\end{lemma}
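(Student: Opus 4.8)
The plan is to exploit the fact that GBRRT's forward expansion executes the very same random edge-generation subroutine as the RRT analyzed in \propositionref{prop:RRTPropagation} with a guaranteed probability, and then to ``dilute'' that RRT expansion bound by this probability. Concretely, in each forward iteration GBRRT draws $c_{rand} \sim \mathit{U}\paren{[0,1]}$ (\alglineref{alg:GBRRT:initp}) and compares it against $q = \mathcal{P}(k)$. First I would isolate the branch $c_{rand} \geq q$: whenever this holds, the control flow of \algoref{alg:GBRRT} bypasses the exploitation block (which sits under the guard $c_{rand} < q$) and forces a call to $\mathtt{ForSrchRandomExplore}$ (\alglineref{alg:GBRRT:ForwardSearchRandomExplore}), irrespective of the state of $\mathbf{Q}$ or the outcome of any exploitation or fast-exploration attempt. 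Since $c_{rand}$ is uniform on $[0,1]$, this branch is entered with probability exactly $1-q$.

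Next I would argue that, conditioned on entering this branch, the per-iteration probability of placing a new vertex in the $(i+1)$th ball is at least $p_{RRT}$. The key observation is that $\mathtt{ForSrchRandomExplore}$ (\algoref{alg:ForSrchRandomExplore}) is line for line the random edge-generation studied by Kleinbort et al.: it samples $x_{rand}$ uniformly, selects the nearest neighbor $x_{near}$ in $\mathcal{G}_{for}$ under $d_{X}$, and invokes $\mathtt{MonteCarloProp}$ (\algoref{alg:MonteCarloProp}) with a random constant control of random duration. Because this fresh sample and control are drawn independently of $c_{rand}$, the conditional success probability equals the unconditional success probability of RRT's random propagation, so \propositionref{prop:RRTPropagation} applies and yields a lower bound of $p_{RRT}$.

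With these two pieces in hand the result follows from a single conditioning step. Letting $A$ denote the event that a vertex is created in the $(i+1)$th ball (under the standing assumption that a vertex already exists in the $i$th ball), I would write
\[
p_{GBRRT} = \mathbb{P}(A) \geq \mathbb{P}\paren{A \cap \{c_{rand}\geq q\}} = \mathbb{P}\paren{c_{rand}\geq q}\,\mathbb{P}\paren{A \mid c_{rand}\geq q} \geq (1-q)\,p_{RRT},
\]
where the first inequality simply discards the nonnegative contributions to $\mathbb{P}(A)$ coming from the exploitation and fast-exploration branches. The hypothesis $0 \leq q < 1$ ensures $(1-q) > 0$, and since $p_{RRT} > 0$ by \propositionref{prop:RRTPropagation}, the bound is strictly positive, which is exactly what \theoremref{theorem:gbrrtProb} will require.

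The step I expect to be the main obstacle is the claim that \propositionref{prop:RRTPropagation} transfers unchanged to GBRRT's forward tree, even though that tree contains additional vertices inserted by the exploitation and fast-exploration steps that a pure RRT would not possess. I would defend this by noting that the Kleinbort bound depends only on the existence of a vertex inside $\mathcal{B}_{\delta/5}(x_i)$ together with the Lipschitz dynamics and clearance $\delta$, and not on the cardinality or spatial layout of the rest of the tree. Extra vertices can change which vertex is the nearest neighbor of $x_{rand}$, but a triangle-inequality argument keeps any such neighbor close enough to $x_i$ that the favorable random control retains a positive, $p_{RRT}$-sized probability of carrying the propagation into the $(i+1)$th ball. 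Hence the lower bound $p_{RRT}$ is preserved for GBRRT, and the inequality stands.
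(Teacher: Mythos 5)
Your proposal is correct and follows essentially the same route as the paper's proof: condition on the event $c_{rand}\geq q$ (probability $1-q$), observe that this branch executes exactly the random edge-generation analyzed by Kleinbort et al., and multiply the two bounds. The paper states this in two sentences; your version simply makes the conditioning explicit and additionally addresses the (real, but benign) subtlety that extra exploitation-generated vertices in $\mathcal{G}_{for}$ do not break Proposition~\ref{prop:RRTPropagation}, which the paper leaves implicit.
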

\begin{proof}
For GBRRT, the probability of performing exploration using random edge-generation (having probablisitic completeness) during forward search is at least $1-q > 0$. Using multiplication rule of probability for independent events, it directly follows that $p_{GBRRT} \geq (1-q)p_{RRT} > 0$.
\end{proof}
\begin{theorem}
GBRRT is probabilistically complete
\label{theorem:gbrrtProb}
\end{theorem}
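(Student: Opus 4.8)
The plan is to reduce the theorem to the exponential failure bound that \propositionref{prop:RRTFails} already established for RRT, using the uniform positive lower bound on the per-iteration ball-to-ball propagation probability furnished by \lemmaref{lemma:gbrrtPositive}. Concretely, I would fix the reference path $\beta$ together with its covering sequence of $(r+1)$ clearance-$\delta$ hyper-balls connecting $x_{start}$ and $x_{goal}$ (\figref{fig:ProofDiagram}). The entire covering-ball argument underlying \propositionref{prop:RRTFails} depends on the search tree only through a single quantity: a constant $p>0$ that lower-bounds, conditioned on some vertex lying in ball $i$, the probability of producing a vertex in ball $i+1$ on the next iteration. The first step is therefore to argue that GBRRT satisfies this same structural hypothesis with $p = p_{GBRRT}$.

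To that end, I would observe that on every forward iteration GBRRT executes the random edge-generation routine $\mathtt{ForSrchRandomExplore}$ (\alglineref{alg:GBRRT:ForwardSearchRandomExplore}) with probability at least $1-q>0$, and that this routine is exactly the Monte-Carlo propagation whose single-ball-transition probability is bounded below by $p_{RRT}$ in \propositionref{prop:RRTPropagation}. The monotonicity point I would make explicit is that the auxiliary exploitation and fast-exploration steps only add vertices and edges to the forward tree; they never remove a vertex from a covering ball, so conditioning on a richer forward tree can only increase (never decrease) the chance that a nearest neighbor lies in ball $i$ and that a subsequent random propagation lands in ball $i+1$. Formally I would phrase this as a coupling argument: couple GBRRT's random-exploration steps with a standalone RRT process so that, in the coupled realization, GBRRT's forward tree contains the RRT tree at every iteration, which reproduces $p_{GBRRT}\ge (1-q)\,p_{RRT}>0$ exactly as in \lemmaref{lemma:gbrrtPositive}.

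With the hypothesis verified, I would then replay the chaining step of \propositionref{prop:RRTFails} with $p_{RRT}$ replaced by $p_{GBRRT}$: traversing all $r$ ball transitions requires $r$ successful propagations, and a geometric/binomial tail estimate gives that the probability GBRRT has not connected $x_{start}$ to some $x_g\in\mathcal{X}_{goal}$ within $k$ iterations is at most $a\,e^{-bk}$ for constants $a,b>0$ that now depend on $p_{GBRRT}$ and $r$. Sending $k\to\infty$ drives this failure probability to zero, hence the success probability tends to one, which is precisely the condition in the definition of probabilistic completeness (Definition~9). This establishes \theoremref{theorem:gbrrtProb}.

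The step I expect to be the main obstacle is the monotonicity/coupling justification of the second paragraph: making rigorous the claim that the heuristic-driven exploitation and fast-exploration edges cannot sabotage the covering-ball progress of the embedded random-exploration process. The delicate point is that adding vertices alters the nearest-neighbor (Voronoi) structure used for propagation, so one must argue that the event ``a vertex lies in ball $i$ and the next random propagation reaches ball $i+1$'' is monotone in the vertex set, and that the probability-$(1-q)$ selection of the random routine is independent of the current tree and queue state. Once this is pinned down, the remainder is a direct transcription of the already-cited RRT analysis.
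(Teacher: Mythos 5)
Your proposal follows essentially the same route as the paper: lower-bound the per-iteration ball-to-ball transition probability by $(1-q)\,p_{RRT}>0$ via the guaranteed random edge-generation step (\lemmaref{lemma:gbrrtPositive}), then invoke the exponential failure bound of \propositionref{prop:RRTFails} to conclude completeness. Your additional coupling/monotonicity discussion---that extra exploitation and fast-exploration vertices cannot harm the covering-ball progress---is a useful elaboration of a step the paper's two-line proof leaves implicit, but it is a refinement of the same argument rather than a different approach.
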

\begin{proof}
Using \lemmaref{lemma:gbrrtPositive} and \propositionref{prop:RRTFails}, it follows that GBRRT is probabilistically  complete.
\end{proof}
Like GBRRT, the probability of performing exploration using random edge-generation (having probabilistic completeness) for GABRRT is at least $1-q> 0$. This leads to the following corollary.
\vspace{-1mm}
\begin{corollary}
GABRRT is probabilistically complete
\label{theorem:gabrrtProb}
\end{corollary}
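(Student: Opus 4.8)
The plan is to mirror the argument used for GBRRT (\theoremref{theorem:gbrrtProb}) almost verbatim, the key observation being that the modifications introduced by GABRRT are entirely confined to the reverse tree and the heuristic it supplies. First I would note that GABRRT differs from GBRRT only in three places: the reverse tree is built from non-dynamical trajectories, the distance function $d^{\mathtt{ND}}_{X}$ replaces $d_{X}$ when computing the cost-to-goal values, and these altered heuristic values feed the exploitation search. Crucially, none of these changes touch the $\mathtt{ForSrchRandomExplore}$ routine, which operates solely on the forward tree $\mathcal{G}_{for}$ via random edge-generation (Monte Carlo propagation) and is therefore functionally identical to its GBRRT counterpart. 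Since probabilistic completeness in GBRRT was earned exclusively by this forward random-exploration branch, the same source of completeness is preserved in GABRRT.

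Next I would re-derive the analog of \lemmaref{lemma:gbrrtPositive}. Exactly as in GBRRT, at every forward-search iteration the algorithm executes $\mathtt{ForSrchRandomExplore}$ whenever $c_{rand} \geq q$ (and also as a fallback when the exploitation and fast-exploration steps fail), so the probability that the forward tree is grown by the probabilistically-complete random edge-generation step is at least $1-q > 0$. Because this random propagation is independent of the reverse tree and of the exploitation outcome, the multiplication rule for independent events yields $p_{GABRRT} \geq (1-q)\, p_{RRT}$, where $p_{RRT} > 0$ is the per-step ball-advancement probability from \propositionref{prop:RRTPropagation}; hence $p_{GABRRT} > 0$. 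With a strictly positive lower bound on the probability of advancing from the $i$th covering ball to the $(i+1)$th ball along the reference path $\beta$, I would then invoke \propositionref{prop:RRTFails}: the probability of failing to connect $x_{start}$ to $\mathcal{X}_{goal}$ after $k$ iterations decays as $a e^{-bk}$ with $a, b > 0$, which forces the liminf in the definition of probabilistic completeness to one, establishing the corollary.

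I expect no genuine obstacle here; the result is a near-immediate consequence of the GBRRT analysis. The only point requiring care is the justification that replacing the dynamical reverse tree with a non-dynamical one cannot disrupt the completeness guarantee. The cleanest way to make this airtight is to emphasize that the reverse tree and its heuristic can only ever redirect the \emph{exploitation} branch of the forward search---they can neither remove nor reduce the $(1-q)$ probability mass allocated to random edge-generation---so the lower bound $p_{GABRRT} \geq (1-q)\, p_{RRT}$ holds regardless of how the reverse tree is constructed or how the cost-to-goal values are computed. Everything else reduces to the already-established GBRRT argument.
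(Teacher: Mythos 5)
Your proposal is correct and follows essentially the same route as the paper, which likewise argues that GABRRT's changes are confined to the reverse tree and heuristic, so the forward random edge-generation step still fires with probability at least $1-q>0$ and \lemmaref{lemma:gbrrtPositive} together with \propositionref{prop:RRTFails} carries over unchanged. Your write-up is simply a more explicit version of the paper's one-line justification.
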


\revision{Note that the proof of Theorem~\ref{theorem:gbrrtProb} relies on the edge-generation method having the probabilistic completeness property in $(1 - q) > 0$ fraction of the iterations, in expectation. It is independent of the other edge-generation implementations that are used to provide fast exploration and exploitation in the remaining $q < 1$ proportion of iterations. Second, this analysis is only applicable to versions of GBRRT and GABRRT that generate asymptotically dense coverings of the sample space. It  may {\it not} apply when using pre-computed maneuver libraries.}

\section{EXPERIMENT SETUP}
\label{sec:experimental_setup}
We run multiple software simulations and hardware experiments to test the performance of \revision{our proposed algorithms}. We use the initial solution's computation time and the solution success rate as the performance metrics for comparing algorithms. We also record the initial solution cost (\figref{fig:InitialSolutionCost}) and provide it in the appendix for completeness of analysis but note that our algorithms are not designed to produce low cost solutions. We also report the total mission time for hardware experiments, which is the sum of the computation time and flight time.

%apdxref{apdx:InitialSolutionCost} 
\begin{figure}[t!]
\begin{subfigure}
\centering
 \figuretitle{Maneuver libraries}
  \includegraphics[width=3.5cm, height=2.5cm]{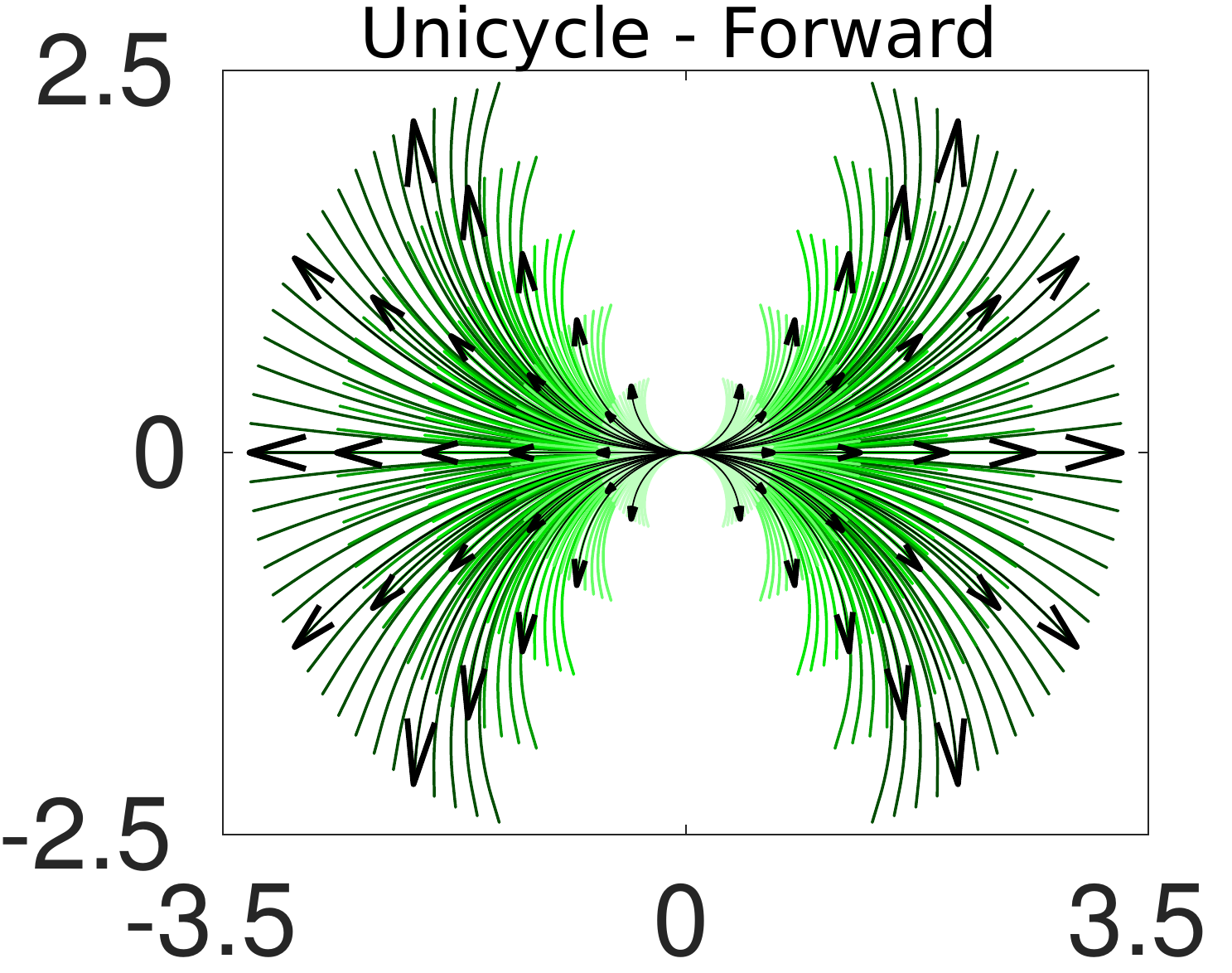}
\end{subfigure}
\begin{subfigure}
  \centering
  \includegraphics[width=3.5cm, height=2.5cm]{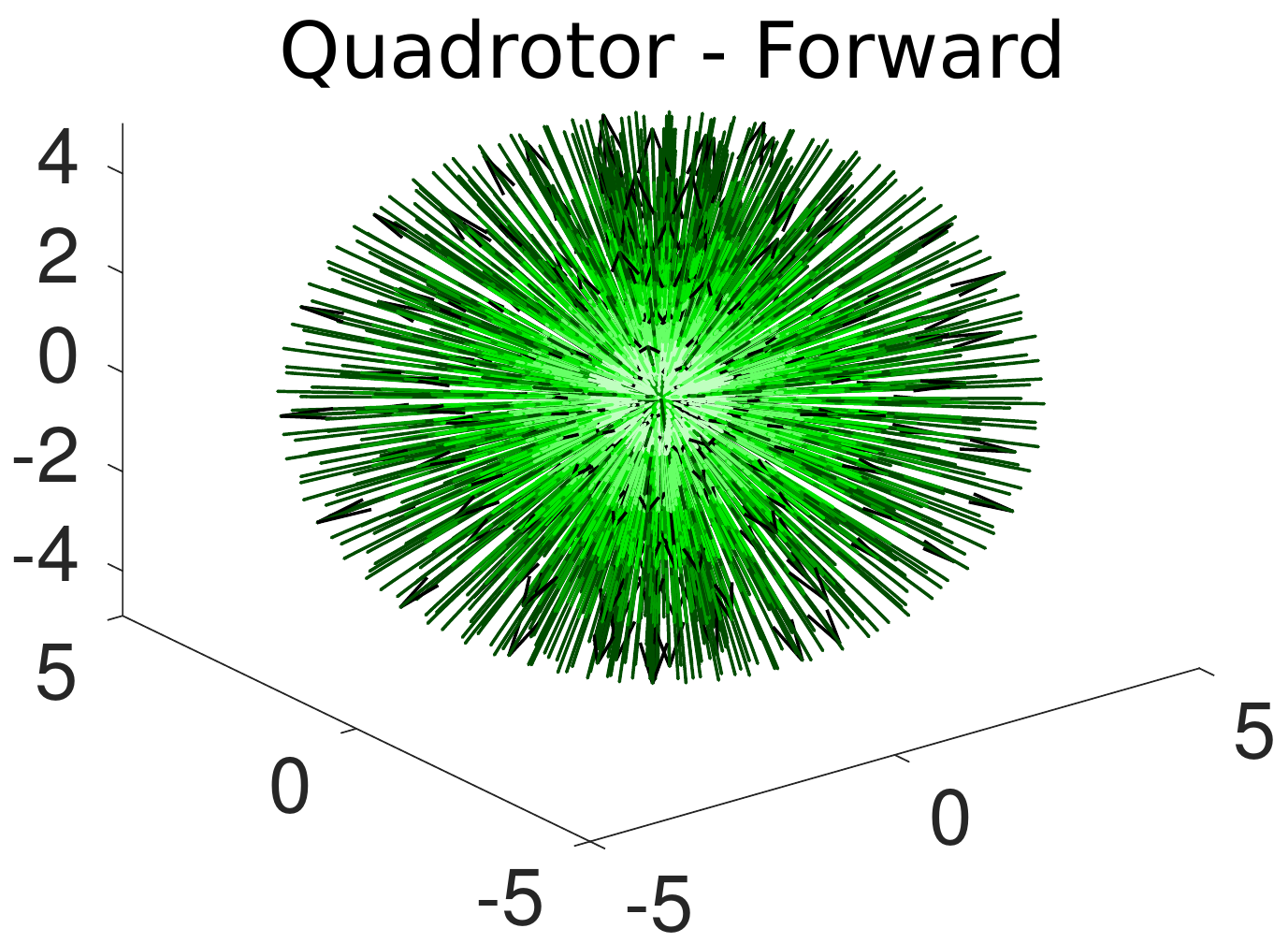}
\end{subfigure}%
\centering
\begin{subfigure}
  \centering
  \includegraphics[scale=0.2]{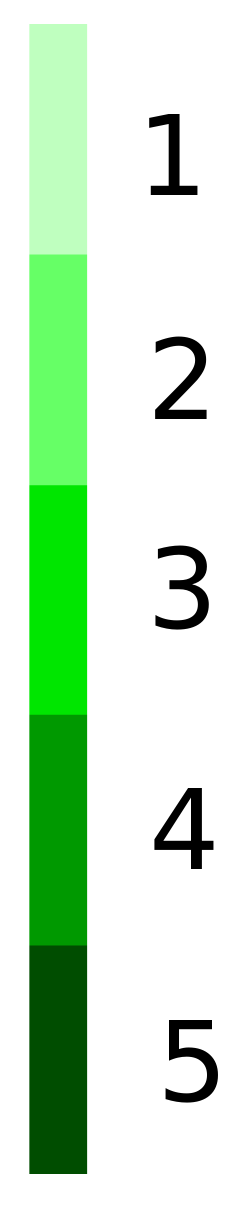}
\end{subfigure}%
\caption{Maneuver libraries for forward searches for unicycle (left) and quadrotor (right) in simulation experiments. Different shades indicate different reference distances from origin (quadrotor) and different peak velocities (unicycle) to achieve the maneuvers. Maneuvers at periodic intervals are shaded black for clarity.}
\label{fig:TrajectoryLibraries}
\vspace{-4mm}
\end{figure}

\begin{table*}
\centering
\caption{A summary of the dynamical systems used in experiments and the parameters used for evaluating GBRRT \revision{and GABRRT}}
\renewcommand{\arraystretch}{1.1}
\scalebox{0.85}{
\begin{tabular}[t]{|p{0.14\linewidth}|p{0.11\linewidth}|p{0.045\linewidth}|p{0.05\linewidth}|p{0.02\linewidth}|p{0.02\linewidth}|p{0.02\linewidth}|p{0.02\linewidth}|p{0.53\linewidth}|}
\hline
 Dynamical System & \revision{Type} & State Space Dim. & Control Space Dim. & $\delta_{hr}$ & $N_{B}$ & $q$ & \revisionTwo{$\gamma$} & Distance Function ($d_{X}$) \\ \hline
Kinematic Unicycle & \revision{Non-holonomic, Kinodynamic} & \hfil 3 & \hfil - & \hfil 7 & \hfil 40 & \hfil 0.8 & \revisionTwo{14} &
{\vspace{-.15cm} \scalebox{0.9}{ $\sqrt{\paren{x_{1} - x_{2}}^{2} + \paren{y_{1} - y_{2}}^{2}}$}} \\ \hline
Kinematic Quadrotor & \revision{Non-holonomic, Kinodynamic} & \hfil 3 & \hfil - & \hfil 8 & \hfil 90 & \hfil 0.8 & \revisionTwo{16} & {\vspace{-.15cm} $\sqrt{\paren{x_{1} - x_{2}}^{2} + \paren{y_{1} - y_{2}}^{2} + \paren{z_{1} - z_{2}}^{2}}$} \\ \hline
Cart-Pole & \revision{Holonomic \cite{cai2016fundamentals}, Kinodynamic} & \hfil 4 & \hfil 1 & \hfil 6 & \hfil 7 & 0.7 & \revisionTwo{10} & {\vspace{-.15cm} $\sqrt{\paren{x_{1} - x_{2}}^{2} + \paren{1.5}^{2} \paren{\Delta{\paren{\theta_{1}, \theta_{2}}}}^{2} + \paren{v_{1} - v_{2}}^{2} + \paren{\omega_{1} - \omega_{2}}^{2}} $} \\ \hline
Treaded-Vehicle & \revision{Non-holonomic, Kinodynamic} & \hfil 5 & \hfil 2 & \hfil 3 & \hfil \revision{7} & \hfil 0.7 & \revisionTwo{7} & {\vspace{-.15cm} $\sqrt{\parenShort{x_{1} - x_{2}}^{2} + \parenShort{y_{1} - y_{2}}^{2} + \revision{\parenShort{0.25}^{2} \parenShort{\parenShort{v_{L1} - v_{L2}}^{2} + \parenShort{v_{R1} - v_{R2}}^{2}}}}$} \\ \hline
Car with Trailer & \revision{Non-holonomic, Kinodynamic } & \hfil 6 & \hfil 2 & \hfil 4 & \hfil 7 & 0.7 & \revisionTwo{8} & {\vspace{-.15cm}$\sqrt{\paren{x_{1} - x_{2}}^{2} + \paren{y_{1} - y_{2}}^{2} + \revision{\paren{0.25}^{2} \paren{v_{1} - v_{2}}^{2}}}$} \\ \hline
Fixed-Wing Airplane & \revision{Non-holonomic, Kinodynamic} & \hfil 9 & \hfil 3 & \hfil 6 & \hfil \revision{7} & 0.7 & \revisionTwo{10} & {\vspace{-.15cm} \scalebox{0.87}{$\sqrt{\paren{x_{1} - x_{2}}^{2} + \paren{y_{1} - y_{2}}^{2} + \paren{z_{1} - z_{2}}^{2} + \revision{\paren{0.9}^{2}}(\paren{\dot{x}_{1} - \dot{x}_{2}}^{2} + \paren{\dot{y}_{1} - \dot{y}_{2}}^{2} + \paren{\dot{z}_{1} - \dot{z}_{2}}^{2}})$}\vspace{-.1cm}} \\ \hline
\end{tabular}}
\label{Tab:ComComparison}
\vspace{1mm}
\caption*{
The $\Delta{\paren{\theta_{1}, \theta_{2}}}$  in cart-pole $d_{X}$ represents the ``shortest distance" \cite{kuffner2004effective} between two angles accounting for wrap-around with $\Delta\paren{{\theta_{1}, \theta_{2}}} \in [-\pi, +\pi]$. The $\Delta\paren{{\theta_{1}, \theta_{2}}}^{2}$ term is multiplied by 2.25 to give more weight to angular position of the pole. \revision{The velocity components of the treaded, car-trailer and fixed-wing systems are multiplied by values $<$ 1 to provide more priority to position than velocity. For GABRRT, we use the same parameters as GBRRT and determine $d^{\mathtt{ND}}_{X}$ from $d_{X}$ by dropping the velocity terms. \revisionTwo{Finally, we choose the value of $\gamma$ such that $r_{k} < \delta_{hr}$ for approximately $80\%$ of the initial solution planning time on average (\figref{fig:SoftwareResultGraphs})}.}}
\end{table*}

\begin{figure*}[ht]
\centering
\begin{minipage}[c]{0.16\textwidth}
\begin{xy}
\xyimport(100, 100){\includegraphics[width=\simSolWidth, height=\simSolHeight]{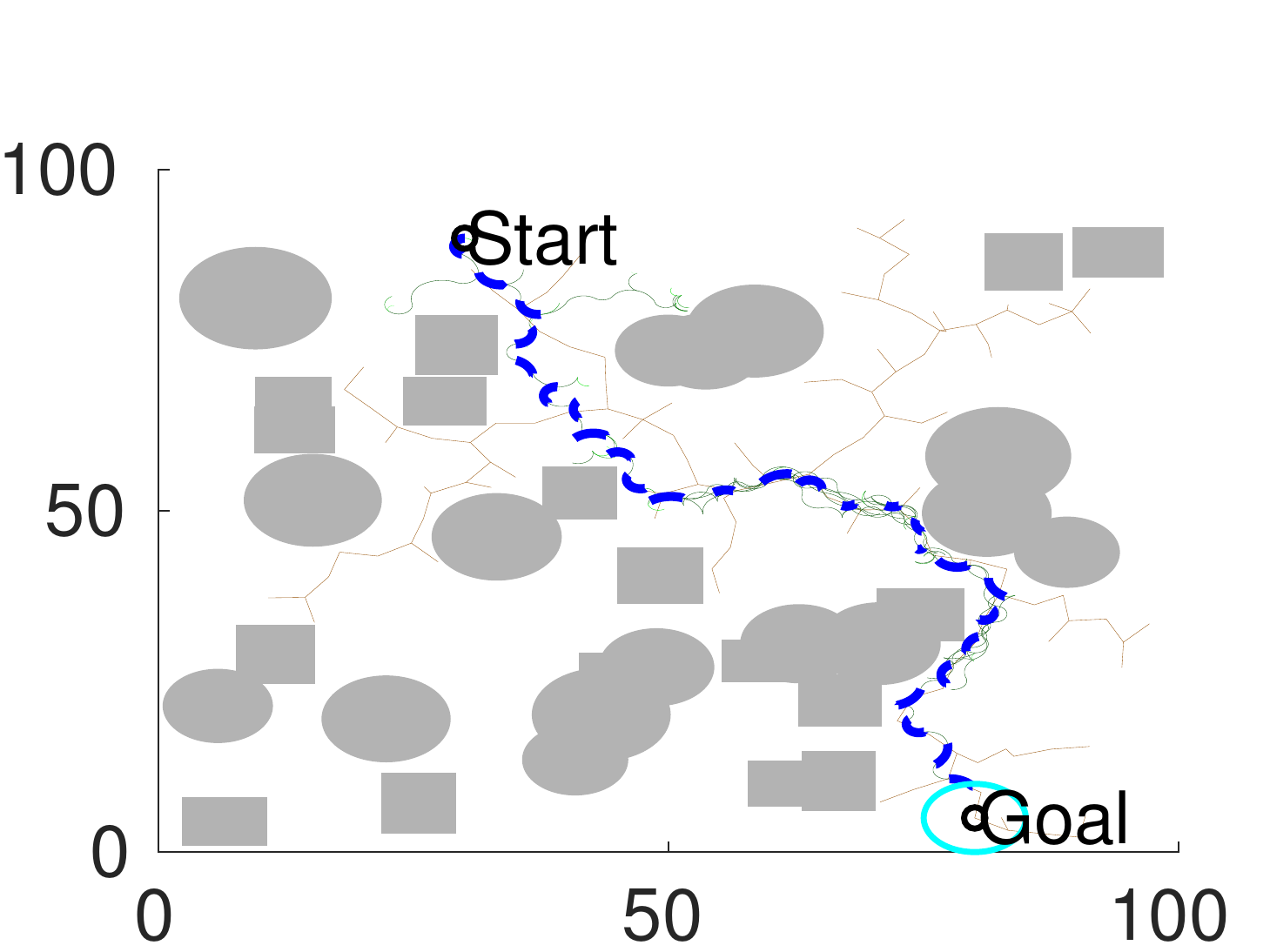}}
,(55, 85)*{\text{Unicycle (3D)}}
\end{xy}
\end{minipage}
\begin{minipage}[c]{0.16\textwidth}
\begin{xy}
\xyimport(100, 100){\includegraphics[width=\simSolWidth, height=\simSolHeight]{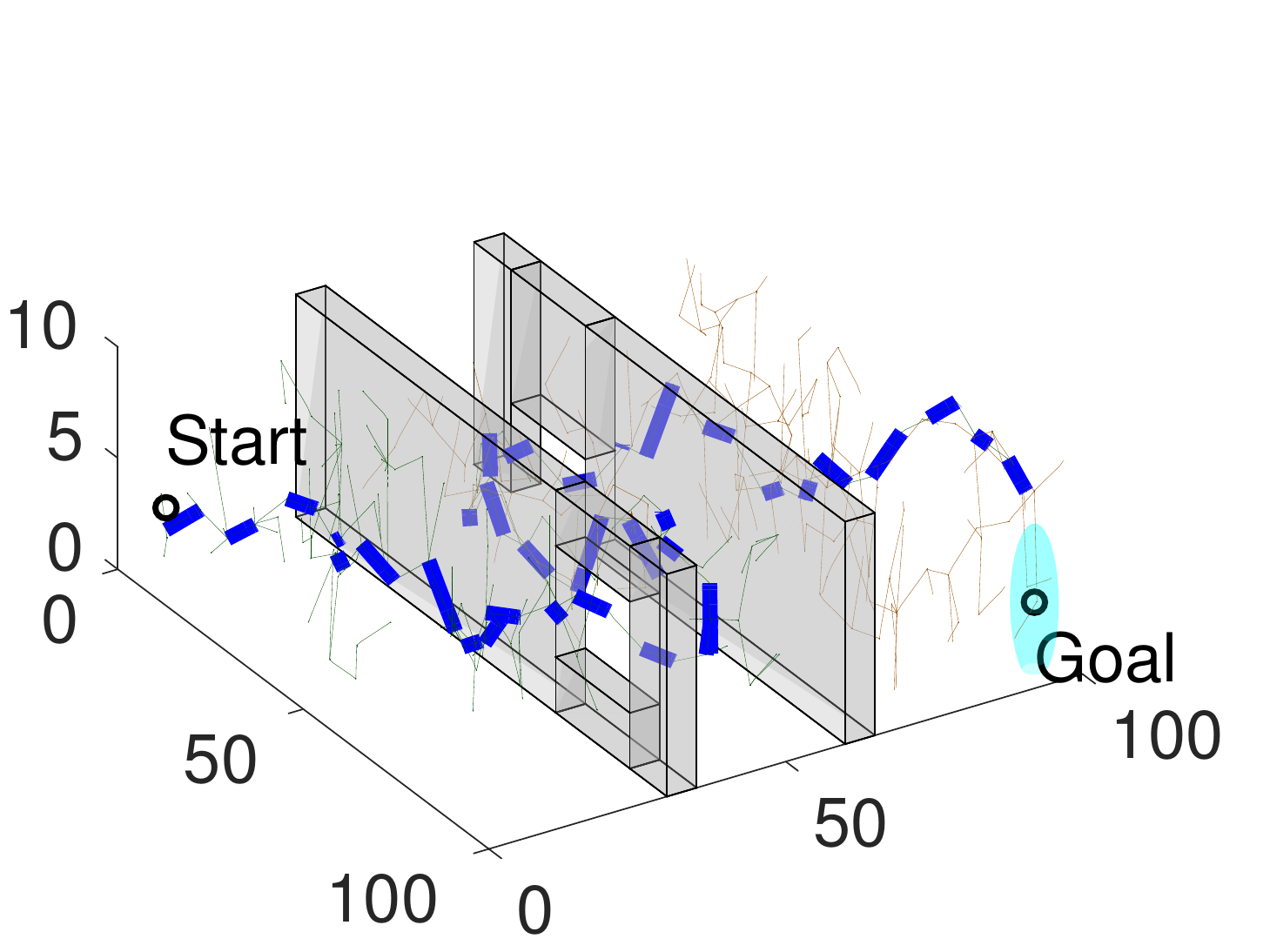}}
,(55, 85)*{\text{Quadrotor (3D)}}
\end{xy}
\end{minipage}
\begin{minipage}[c]{0.16\textwidth}
\begin{xy}
\xyimport(100, 100){\includegraphics[width=\simSolWidth, height=\simSolHeight]{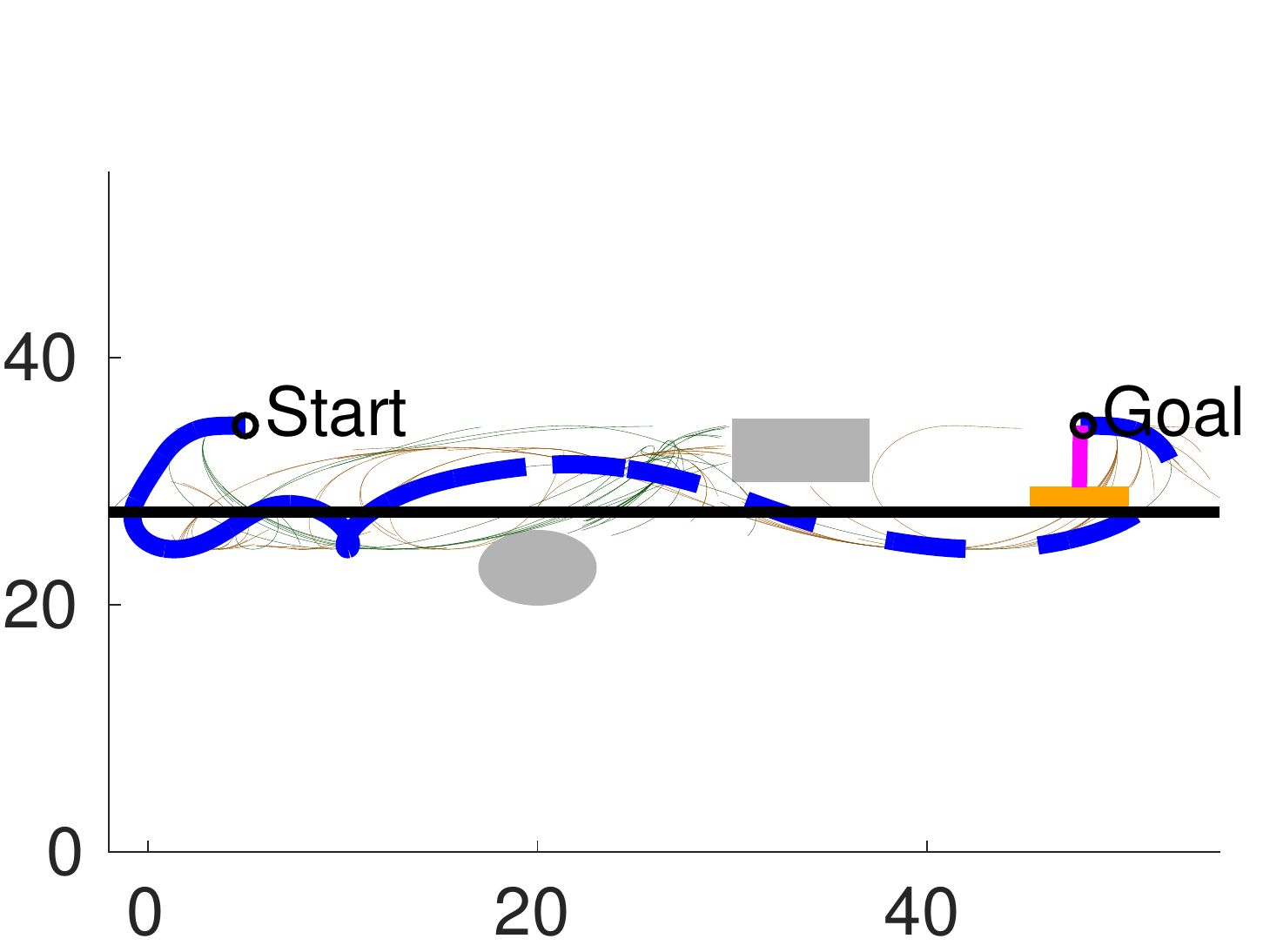}}
,(55, 85)*{\text{Cart-Pole (4D)}}
\end{xy}
\end{minipage}
\begin{minipage}[c]{0.16\textwidth}
\begin{xy}
\xyimport(100, 100){\includegraphics[width=\simSolWidth, height=\simSolHeight]{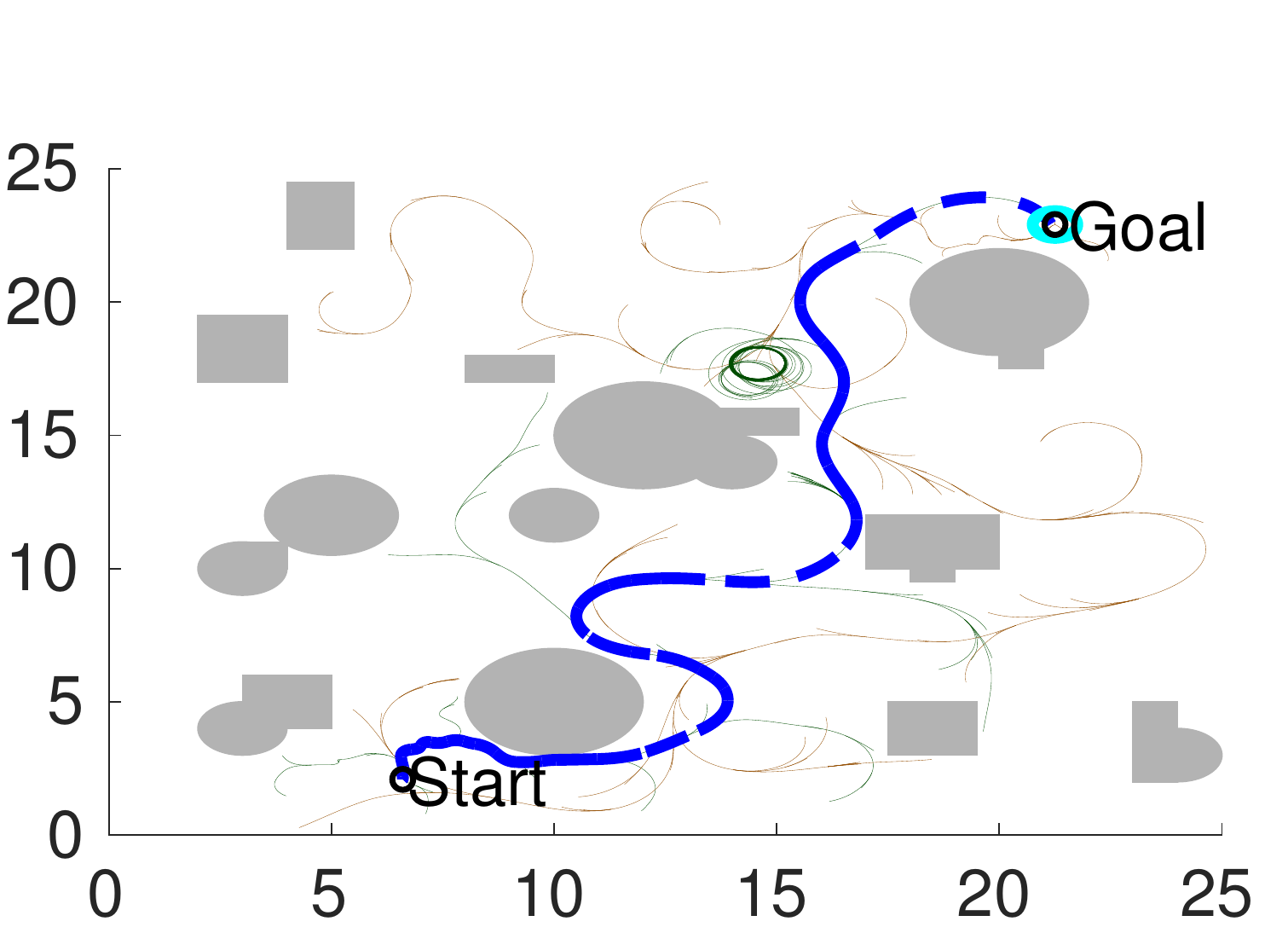}}
,(55, 85)*{\text{Treaded (5D)}}
\end{xy}
\end{minipage}
\begin{minipage}[c]{0.16\textwidth}
\begin{xy}
\xyimport(100, 100){\includegraphics[width=\simSolWidth, height=\simSolHeight]{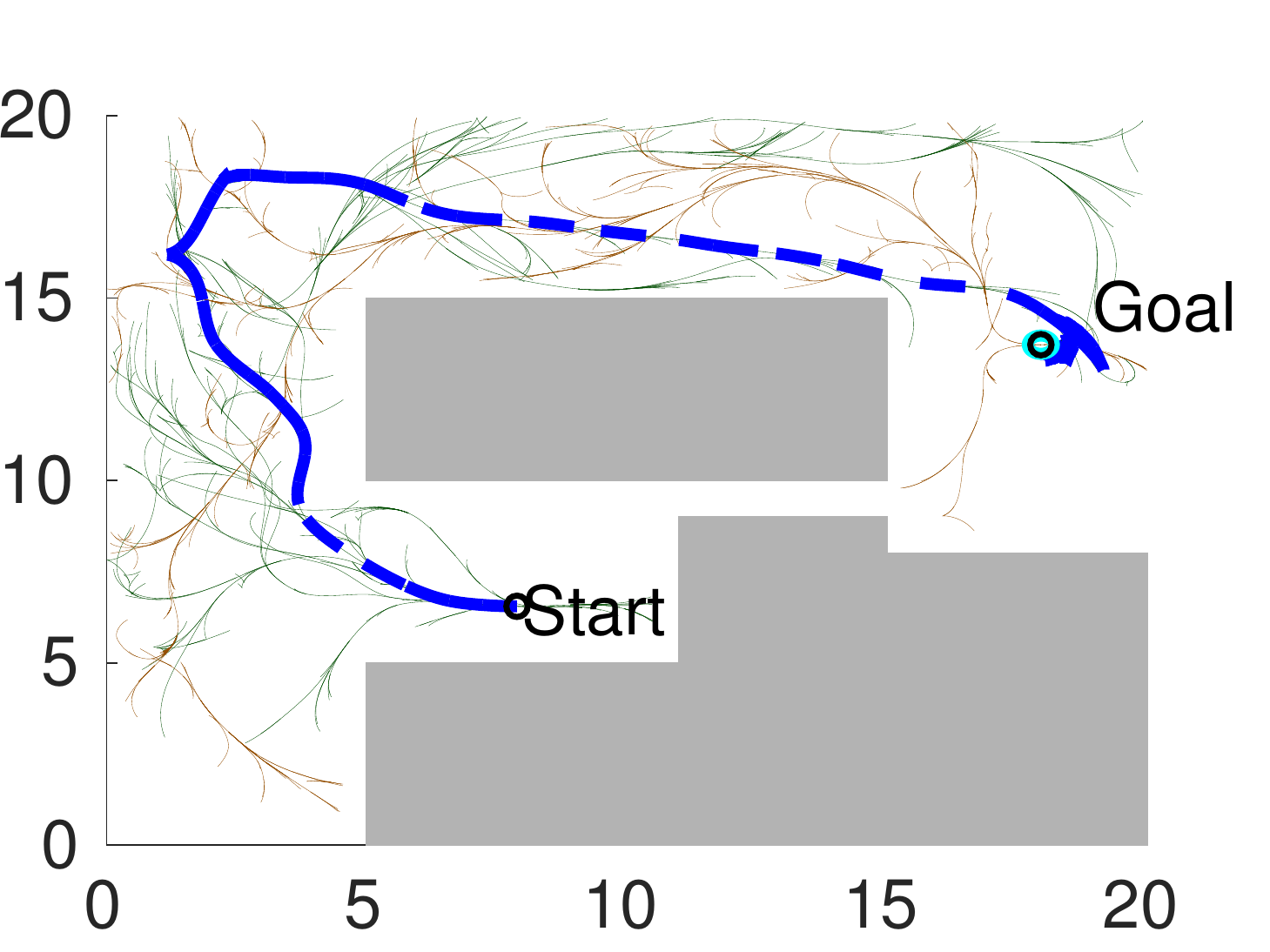}}
,(55, 92)*{\text{Car-Trailer 6D)}}
\end{xy}
\end{minipage}
\begin{minipage}[c]{0.16\textwidth}
\begin{xy}
\xyimport(100, 100){\includegraphics[width=\simSolWidth, height=\simSolHeight]{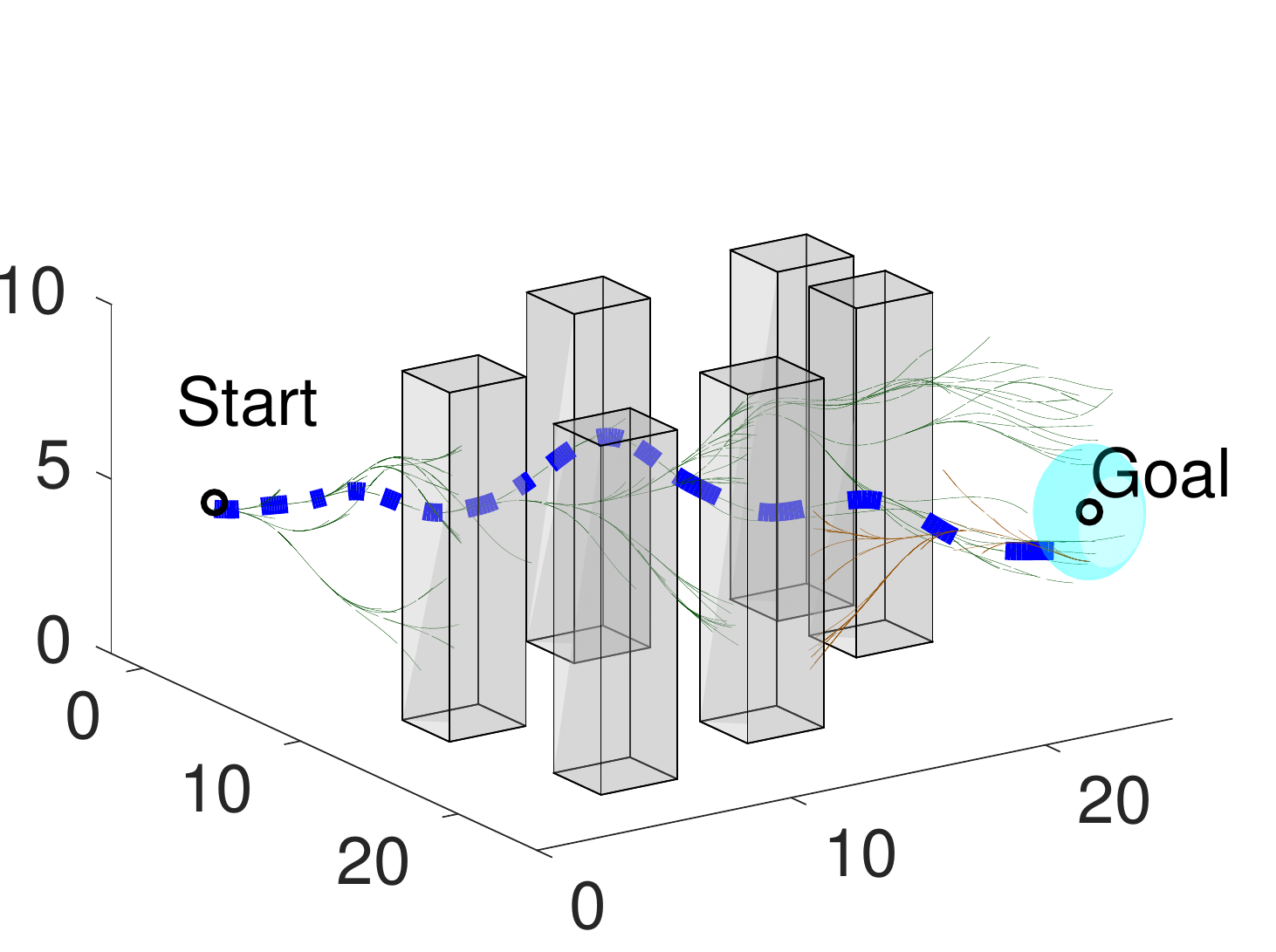}}
,(55, 85)*{\text{Fixed-Wing (9D)}}
\end{xy}
\end{minipage}
\begin{subfigure}
\centering
  \includegraphics[scale=0.55]{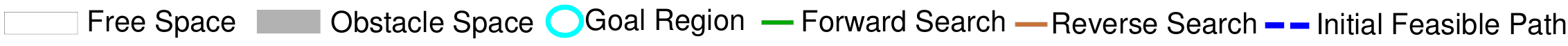}
\end{subfigure}
\caption{\revision{Initial feasible solution found by GABRRT (unicycle and quadrotor) and GBRRT (cart-pole, threaded, car-trailer, and fixed-wing).}}
\label{fig:SoftwareSimulationSnapshot}
\vspace{6mm}
\begin{subfigure}
  \centering
  \includegraphics[scale=\barDimScale]{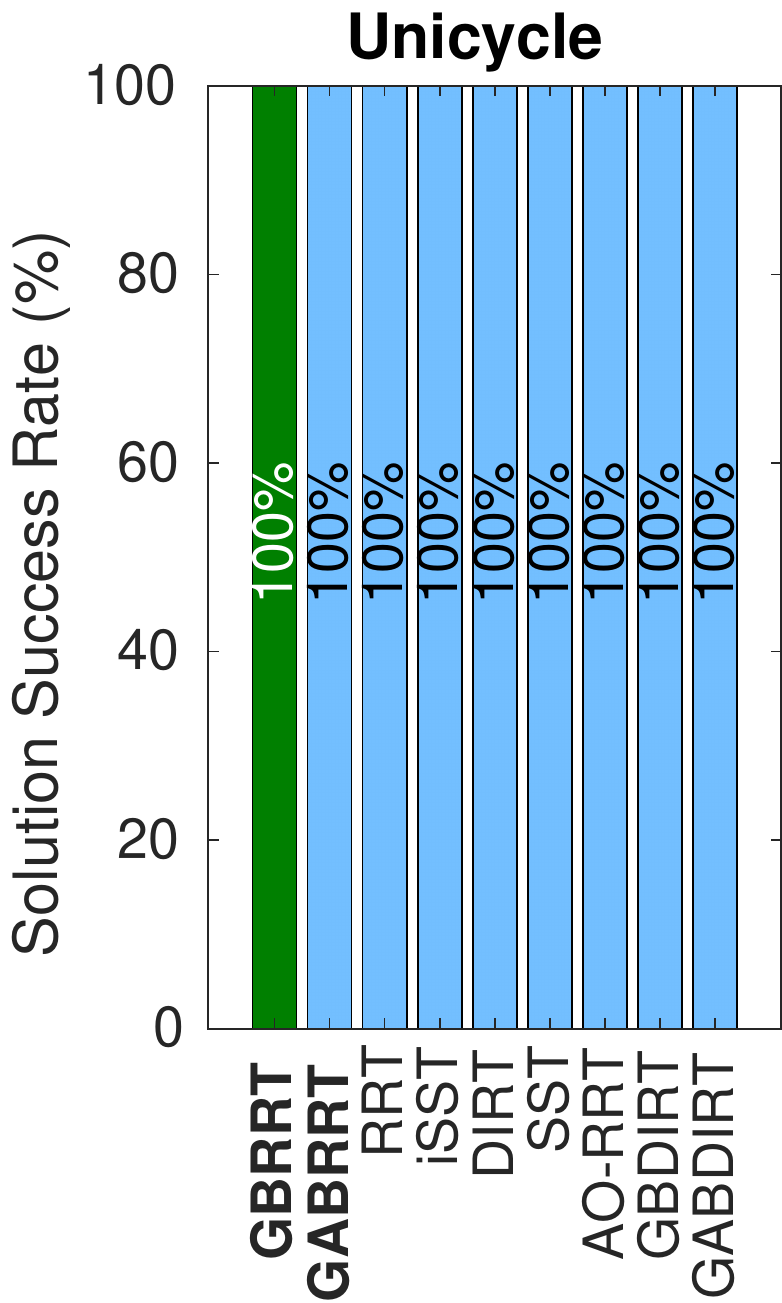}
\end{subfigure}%
\begin{subfigure}
  \centering
  \includegraphics[scale=\barDimScale]{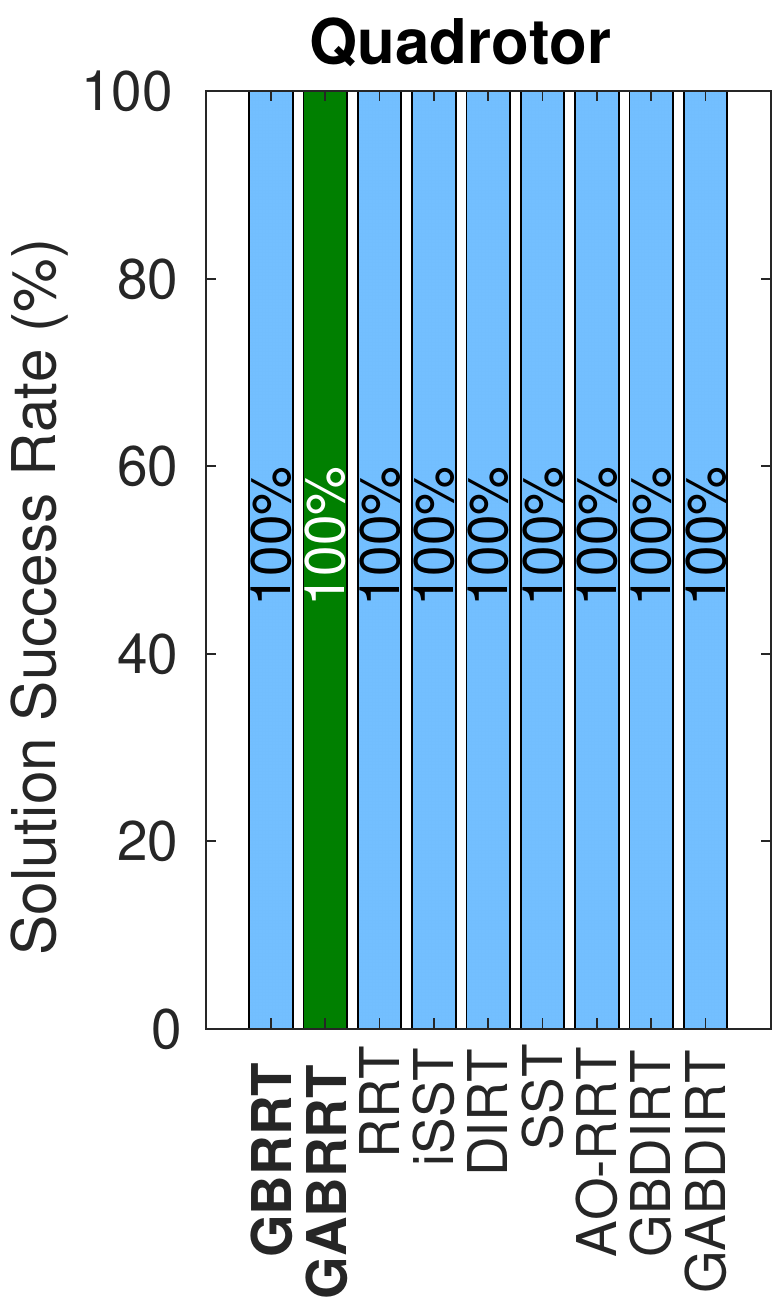}
\end{subfigure}%
\begin{subfigure}
  \centering
  \includegraphics[scale=\barDimScale]{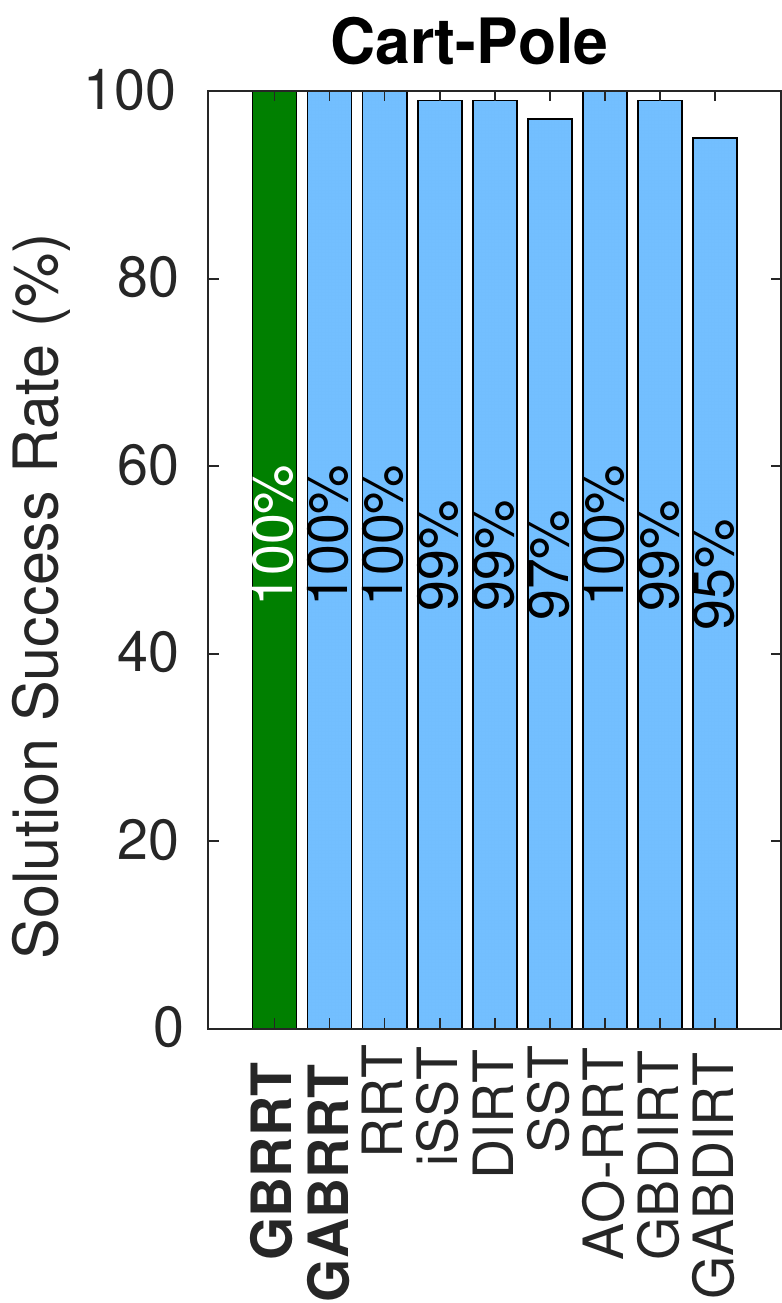}
\end{subfigure}%
\begin{subfigure}
  \centering
  \includegraphics[scale=\barDimScale]{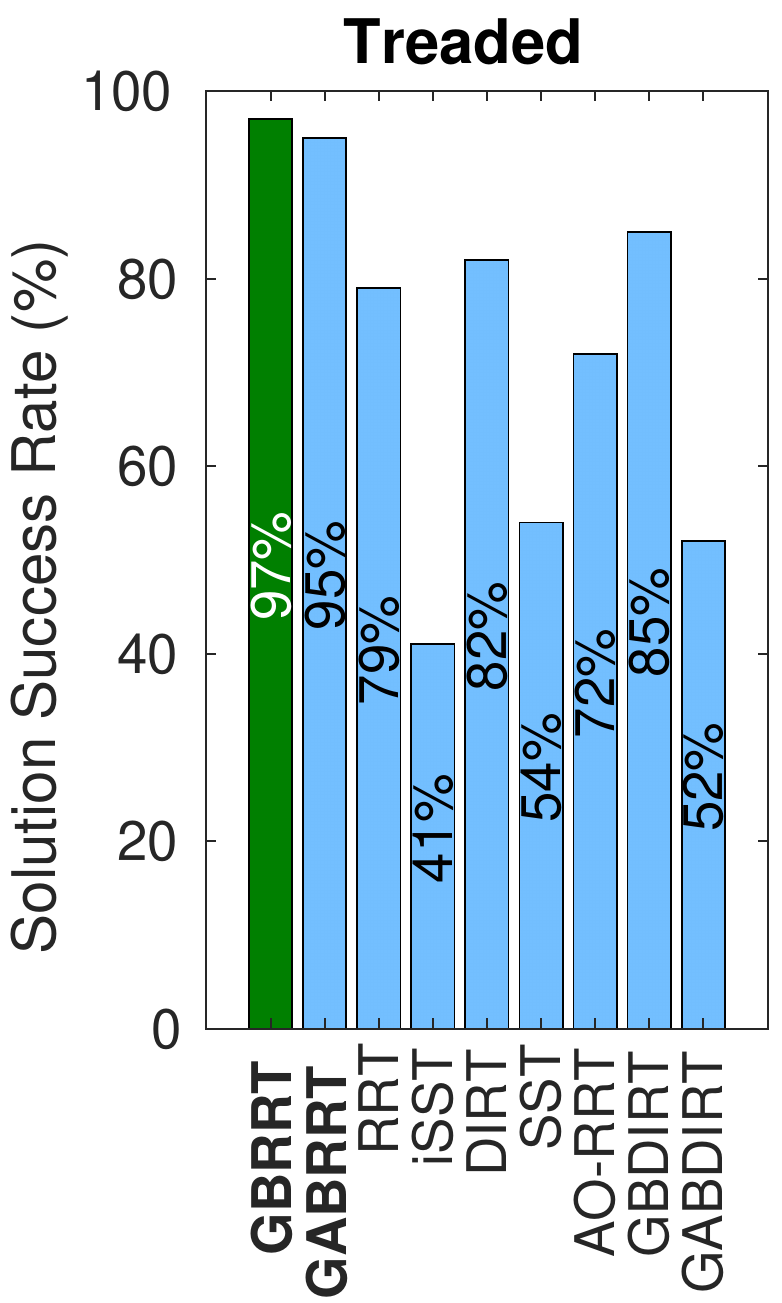}
\end{subfigure}%
\begin{subfigure}
  \centering
  \includegraphics[scale=\barDimScale]{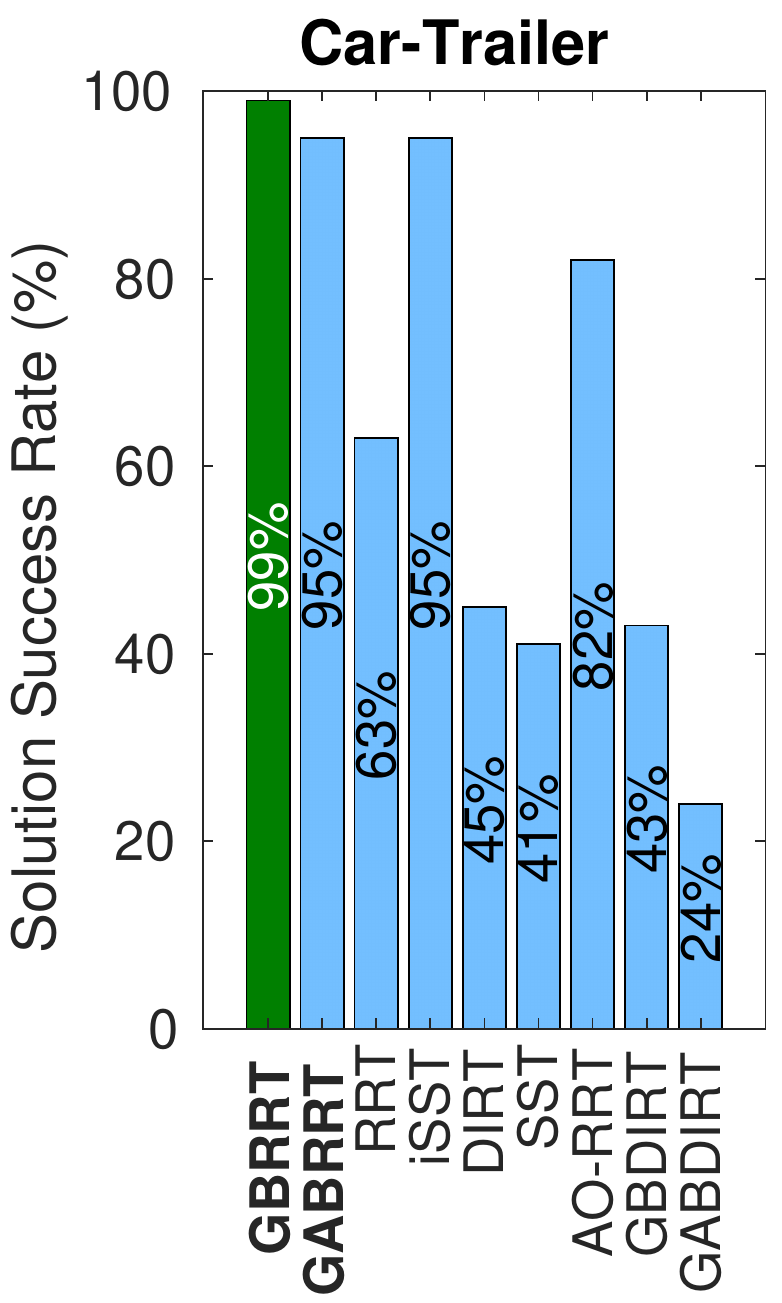}
\end{subfigure}%
\begin{subfigure}
  \centering
  \includegraphics[scale=\barDimScale]{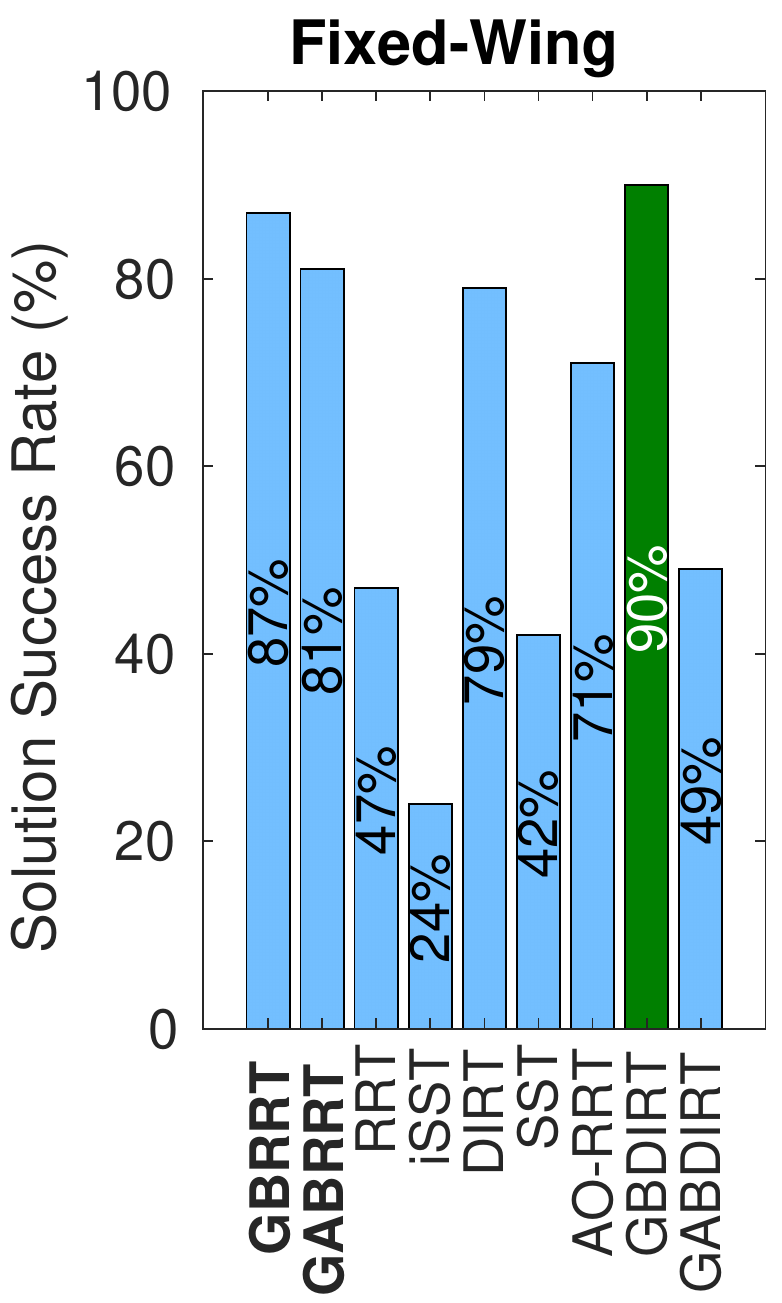}
\end{subfigure}%

\begin{subfigure}
  \centering
  \includegraphics[scale=\compareResultScale]{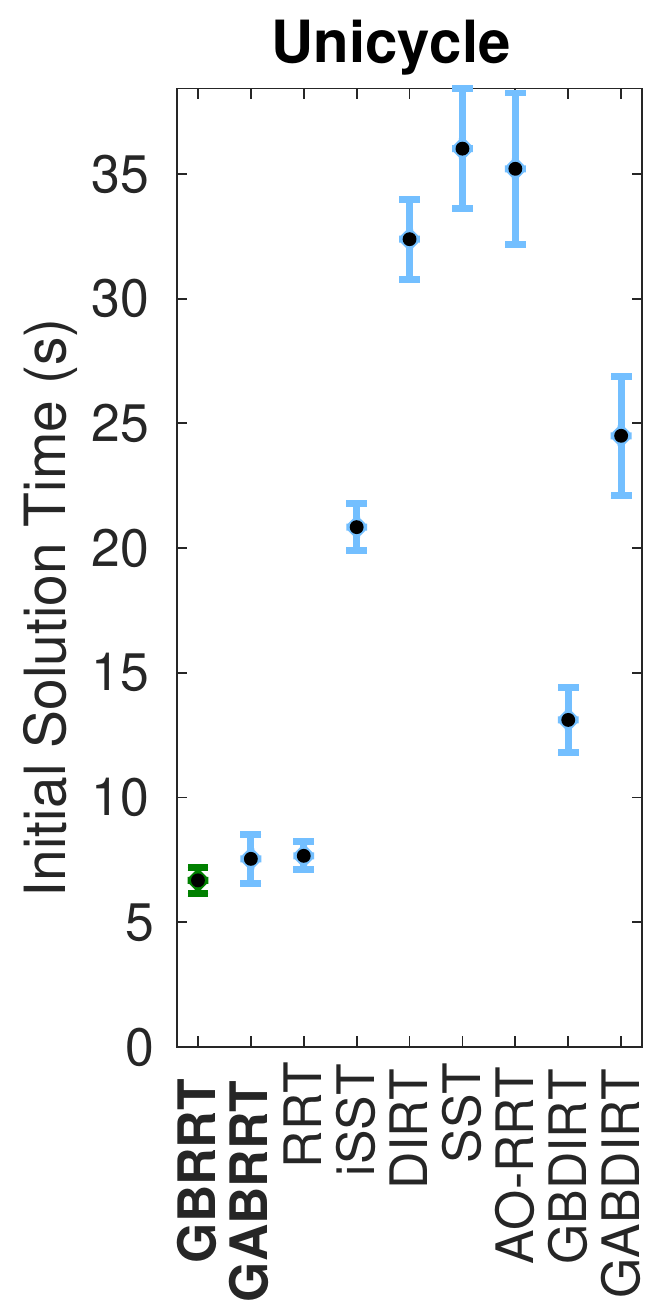}
\end{subfigure}%
\begin{subfigure}
  \centering
  \includegraphics[scale=\compareResultScale]{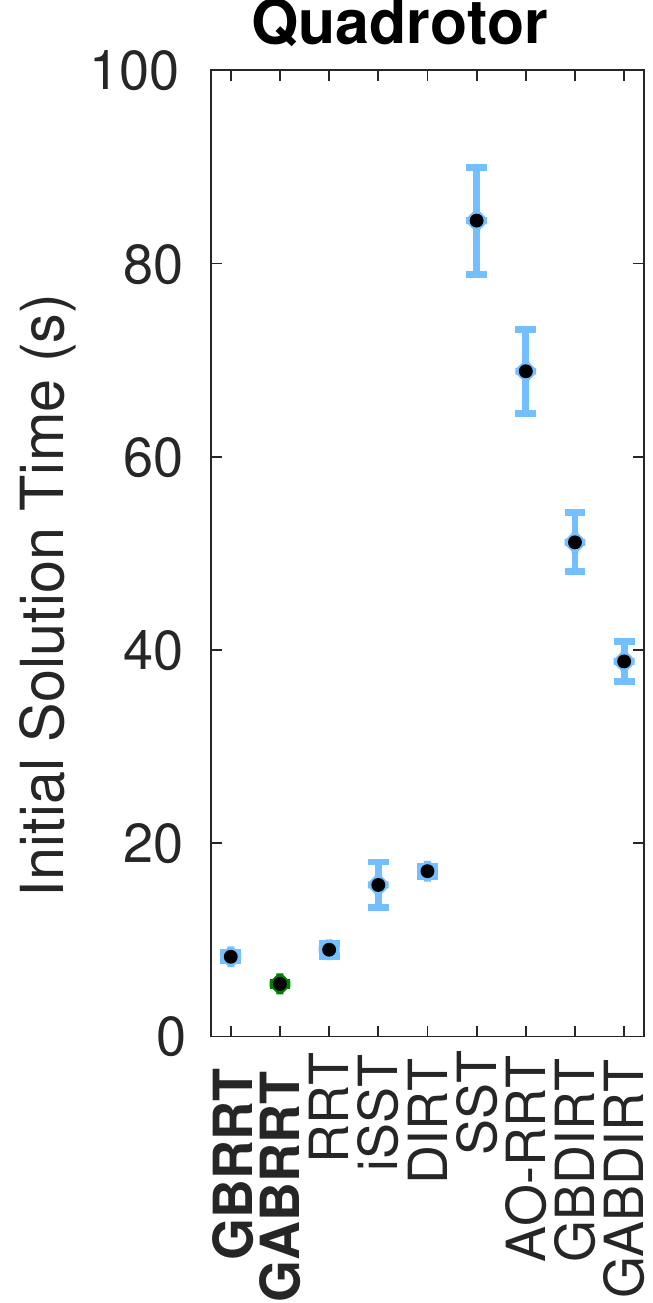}
\end{subfigure}%
\begin{subfigure}
  \centering
  \includegraphics[scale=\compareResultScale]{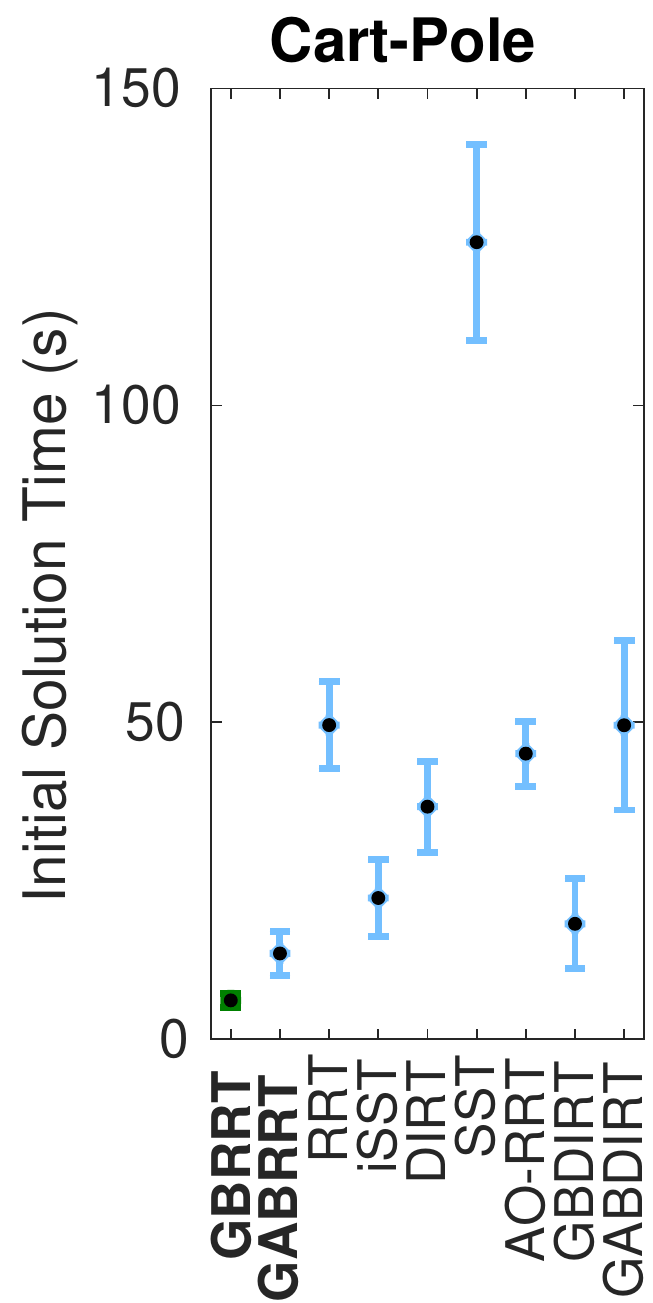}
\end{subfigure}%
\begin{subfigure}
  \centering
  \includegraphics[scale=\compareResultScale]{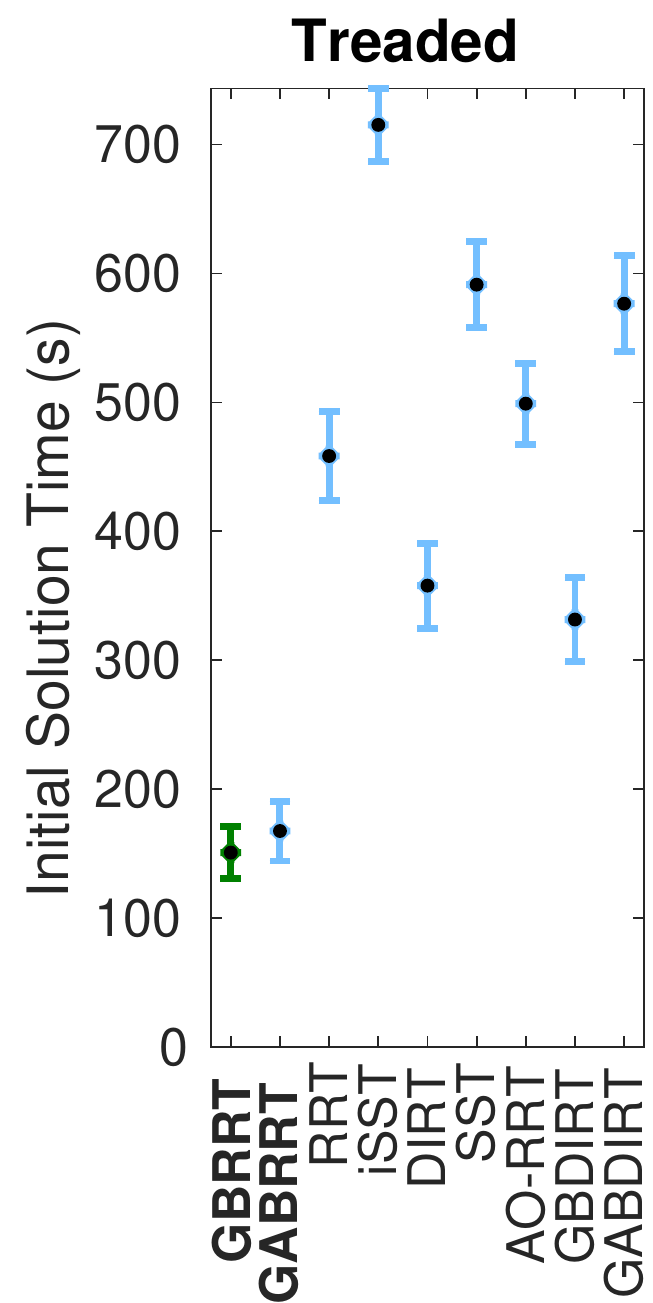}
\end{subfigure}%
\begin{subfigure}
  \centering
  \includegraphics[scale=\compareResultScale]{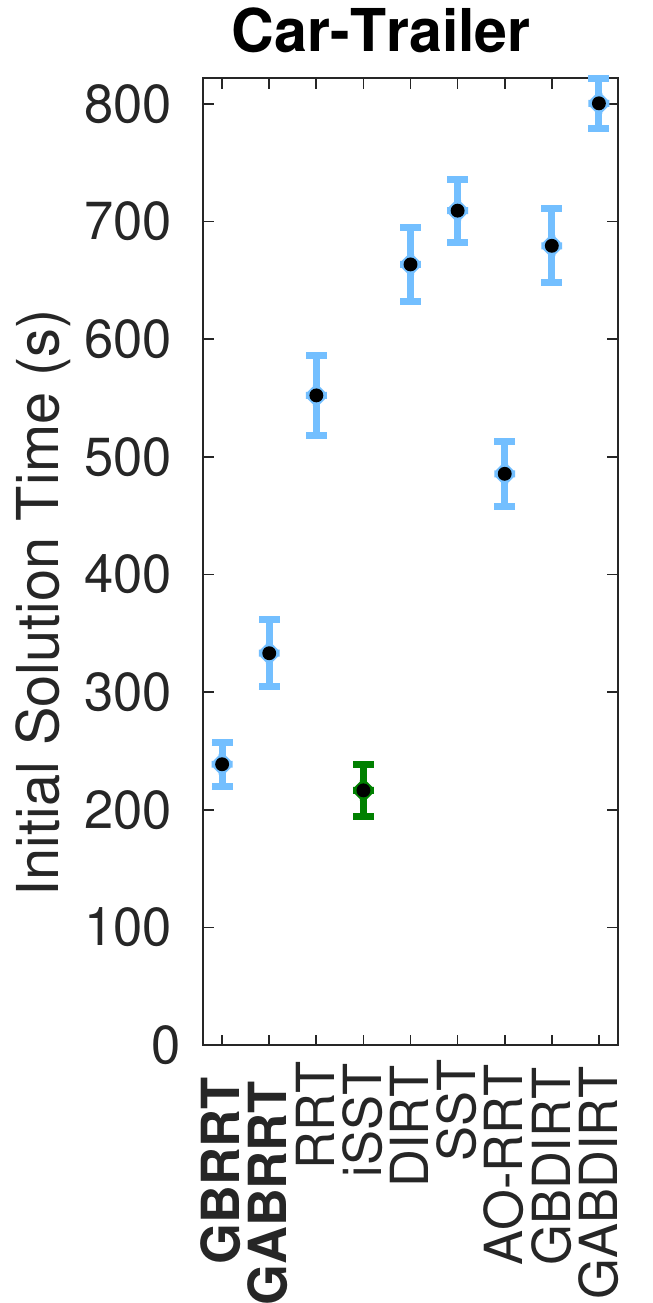}
\end{subfigure}%
\begin{subfigure}
  \centering
  \includegraphics[scale=\compareResultScale]{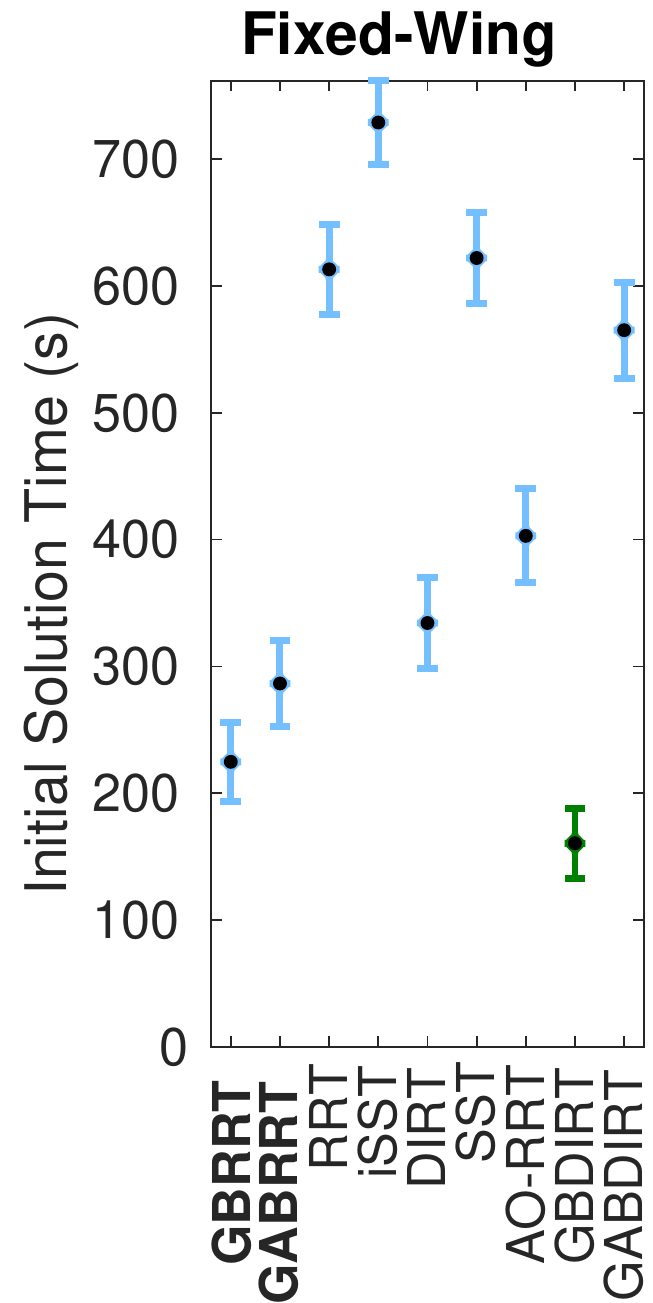}
\end{subfigure}%
\caption{Top row shows the percentage solution success rate, and the bottom row shows the mean and standard error of the initial solution time for 100 trials of running algorithms using various systems. The best performing algorithm is highlighted in green. The solution success rate is less than 100 \% for algorithms because they are not able to find a solution within the max time $S_{max}$. The mean and standard error are calculated using results from all 100 trials (successful and unsuccessful) with $S_{max}$ value used for unsuccessful trials.}
\label{fig:SoftwareResultGraphs}
\vspace{-2mm}
\end{figure*}

\subsection{Simulation experiments}
We run simulation experiments using six different dynamical systems - a kinematic unicycle (3D), a kinematic quadrotor (3D), cart-pole (4D), treaded (skid-steer) vehicle (5D), second-order car with trailer (6D), and fixed-wing airplane (9D). The distance function $d_{X}$ and GBRRT's parameter values for each system are specified in \tableref{Tab:ComComparison}. We run 100 trials for each dynamical system, with each trial having randomized start and goal states. \revision{At each goal state, we set the corresponding velocities to zero so that the systems comes to rest (except for the fixed-wing aircraft which is assumed to require a minimum velocity). We use $S_{max}$ to represent the maximum time allotted to each algorithm to generate a system solution. We set $S_{max}=$ 10 minutes for all algorithms for unicycle, quadrotor and cart-pole and $S_{max}=15$ minutes for threaded, car-trailer and fixed-wing systems.} 
\par For trajectory generation, we use pre-built trajectory libraries for the unicycle and quadrotor and online integration using Runge-Kutta order 4 (RK4) \cite{lambert1991numerical} for the other four systems \revision{(for all algorithms)}. We use these different methods of generating trajectories to show that \revision{our algorithms} work with distinct types of trajectory generation methods. The advantage of using trajectory libraries is that there is little overhead in considering many trajectories for the ``best-input" propagation. We limit the number of trajectories in the considered libraries (\figref{fig:TrajectoryLibraries}) by designing them such that linear and angular velocities are zero at the start and endpoints. This \revision{velocity} constraint is not assumed while performing online integration for the remaining systems. The backward integration for the reverse tree trajectories is carried out by flipping the time resolution sign and reversing the integrator's time limits.

\par The dynamical systems are explained in detailed below.

\subsubsection{Unicycle}
The unicycle is 3-dimensional ($x$, $y$, $\theta$) where x and y correspond to Cartesian positions and $\theta$ is the angle from the horizontal. The zero terminal (linear and angular) velocities are achieved  by using a time-varying non-linear controller \cite{carona2008control}. The maneuvers are generated by varying the maximum linear velocity along the trajectory from 1 to 5 m/s in 1 m/s increments and the angular velocities varying from 0 to $\pi/2$ rad/s in $\pi/18$ rad/s increments in all four quadrants. Its kinematics is represented by $\dot{x} = u\cos(\theta)$, $\dot{y} = u\sin(\theta)$, $\dot{\theta} = \omega$ where  $u$ is linear speed and $\omega$ is angular velocity.

\subsubsection{Quadrotor}
The quadrotor system is 3-dimensional ($x$, $y$, $z$) where x, y, and z correspond to Cartesian positions. The zero terminal velocities are achieved using a feedback linearization controller \cite{palunko2011adaptive}. The maneuver library is generated by providing final positions 1 to 5 meters distance away from the initial position (origin) in 10\degree \, increments in the heading. Its dynamics is represented by $M\vec{\dot{v}} + C(\vec{v})\vec{v} + D\vec{v} + g(\vec{\eta}) = \vecm{u}$, where $M$ is the mass matrix, $C(\vec{v})$ is the Coriolis-centripetal matrix, D is the damping matrix, $g$ is the gravity vector, $\vec{\eta}$, $\vec{v}$ and $\vecm{u}$ are the state, velocity and control input vectors respectively. The definition of these terms are given in \cite{palunko2011adaptive}.

\subsubsection{Cart-Pole}
The cart-pole system \cite{li2016asymptotically}\cite{papadopoulos2014analysis} consists of a pendulum attached to a cart which is restricted to move along a line. The state space is 4-dimensional ($x$, $\theta$, $v$, $\omega$) where $x$ and $v$ corresponds to the position and linear velocity of the cart and $\theta$ and $\omega$ corresponds to the angular position (from downward vertical) and angular velocity of the pendulum. The control space is 1-dimensional which is the force F on the cart. Its dynamics is represented by 
${\dot{x} = v}$, 
${\dot{\theta} = \omega}$,
${\dot{v} = \frac{(I + mL^{2})(F + mL\omega^{2}\sin(\theta)) +  (mL)^{2}\cos(\theta)\sin(\theta)g}{(M + m)(I + mL^{2}) - (mL)^{2}\cos^{2}{\theta}}}$,
${\dot{\omega} = \frac{(-mL\cos(\theta))(F + mL\omega^{2}\sin(\theta)) + (M + m)(-mgL\sin(\theta))}{(M + m)(I + mL^{2}) - (mL)^{2}\cos^{2}{\theta}}}$,
where $I$ is the moment of inertia, $L$ is the length of the pendulum, $g$ is the acceleration due to gravity, $M$ and $m$ are the masses of the cart and pendulum respectively.
\subsubsection{Treaded (Skid-Steer) Vehicle}
The skid steer vehicle \cite{pentzer2014model}\cite{sivaramakrishnan2019towards} is 5-dimensional ($x$, $y$, $\theta$, $v_{L}$, $v_{R}$) where $x$ and $y$ are Cartesian positions, $\theta$ is the angle of vehicle from horizontal, and $v_{L}$ and $v_{R}$ are the velocities of left and right wheels respectively. The control space is 2-dimensional ($a_{L}$, $a_{R}$) which corresponds to accelerations of left and right wheels. Its dynamics is represented by
${\dot{x} = v_{x}\cos(\theta)-v_{y}\sin(\theta)}$, 
${\dot{y} =  v_{x}\sin(\theta)-v_{y}\cos(\theta)}$, 
${\dot{\theta} = w_{z}}$, 
${\dot{v}_{L} = a_{L}}$, 
${\dot{v}_{R} = a_{R}}$, 
where $v_{x}$, $v_{y}$ and $w_{z}$ are functions of $v_{L}$ and $v_{R}$ and their definitions are provided in \cite{pentzer2014model}.
\subsubsection{Second-Order Car with Trailer}
The second order car with trailer \cite{littlefield2018efficient}  \cite{murray1993nonholonomic} is a 6-dimensional ($x$, $y$, $\theta$, $v$, $\omega$, $\theta_{1}$) system where $x$ and $y$ are Cartesian positions, $v$ is the linear velocity, $\theta$ is the angle of car from the horizontal, $\omega$ is the angular velocity, and $\theta_{1}$ is the angle of the trailer from the horizontal. The control space is 2-dimensional \revision{($a$, $\alpha$)} which corresponds to the desired linear and angular accelerations. Its dynamics is represented by $\dot{x} = vcos(\theta)\cos(\omega)$, $\dot{y} = v\sin(\theta)\cos(\omega)$, $\dot{\theta} = v\sin(\omega)$, $\dot{v} = \revision{a}$, $\dot{\omega} = \revision{\alpha}$, $\dot{\theta_{1}} = v\sin{\paren{\theta - \theta_{1}}}$ respectively.

\subsubsection{Fixed-Wing Airplane}
The fixed-wing airplane \cite{li2016asymptotically} \cite{paranjape2015motion} is 9-dimensional ($x$, $y$, $z$, $v$, $\omega$, $\beta$, $T$, $\alpha$, $\mu$) where $x$, $y$, $z$ are the Cartesian positions, $v$ - speed, $\omega$ - flight path angle, $\beta$ - heading angle, $T$ - thrust per unit mass, $\alpha$ - angle of attack and $\mu$ - wind axis roll angle. The control space is 3-dimensional ($T_{c}$, $\alpha_{c}$ $\mu_{c}$) which are the commanded thrust per unit mass, commanded angle of attack and commanded wind axis roll angle respectively. Its dynamics is represented by 
${\dot{x} = v\cos(\omega)\cos(\beta)}$, 
${\dot{y} = v\cos(\omega)\sin(\beta)}$, 
${\dot{z} = v\sin(\omega)}$, 
${\dot{T} = c_{T}\paren{T_{c} - T}}$, 
${\dot{\alpha} = c_{\alpha}\paren{\alpha_{c} - \alpha}}$, 
${\dot{\mu} = c_{\mu}\paren{\mu_{c} - \mu}}$,
${\dot{v} = \parenShort{T\cos(\alpha) - kv^{2}C_{D}(\alpha)} - g\sin(\omega)}$,
${\dot{\omega} = \paren{\frac{Tsin(\alpha)}{v} + kvC_{L}(\alpha)}cos(\mu) - \frac{gcos(\omega)}{v}}$,
${\dot{\beta} = \parenShort{\frac{T\sin(\alpha)}{v} + kvC_{L}(\alpha)}\frac{\sin(\mu)}{\cos(\omega)}}$,
where $g$ is the acceleration due to gravity, $k$ is the scaled inverse of the wing loading, $C_{D}(\alpha)$ and $C_{L}(\alpha)$ are the coefficients of lift and drag respectively.

\begin{figure*}[ht]
\centering
\begin{subfigure}
\centering
  \includegraphics[width=2.9cm, height=2.9cm]{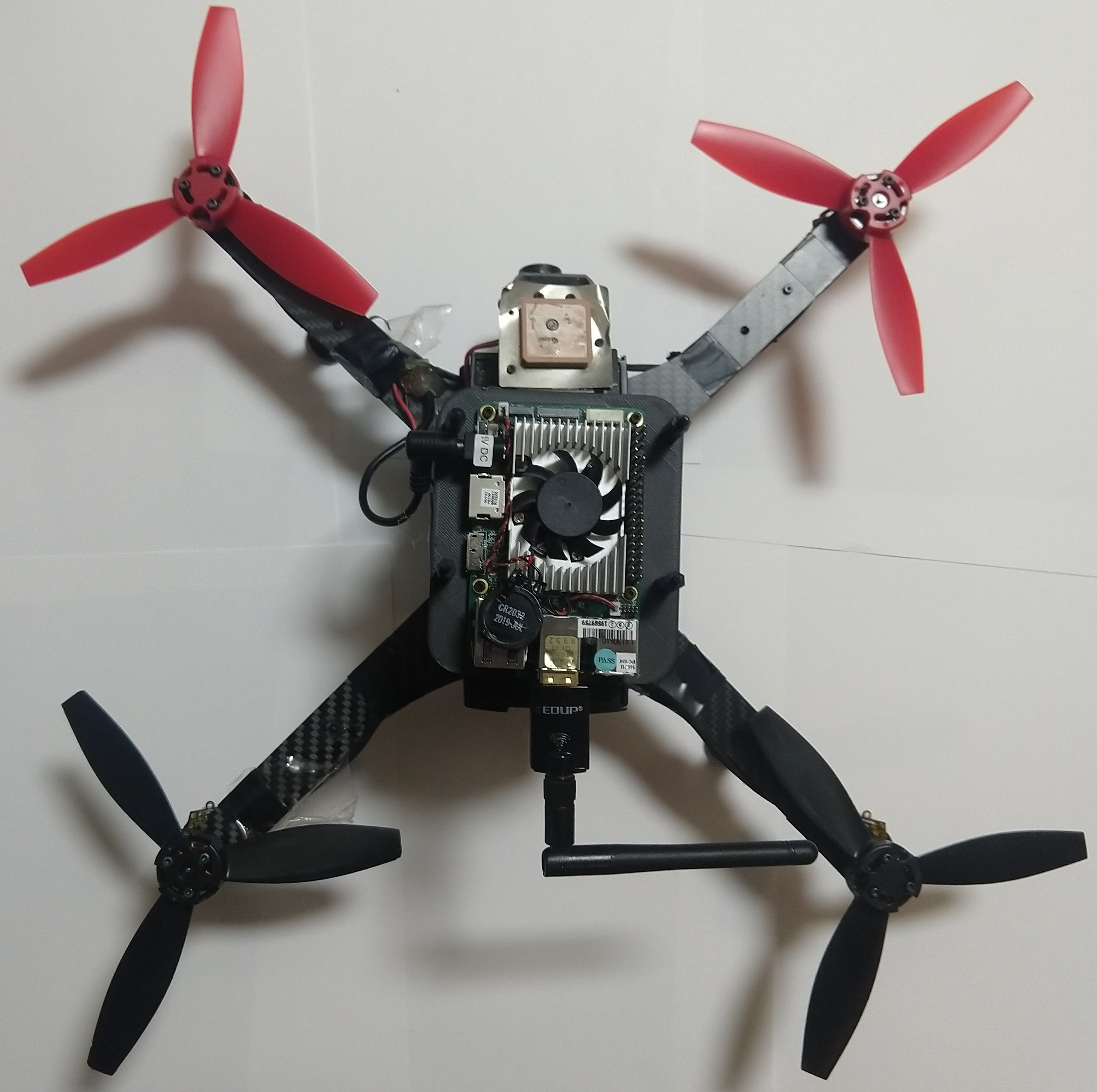}
\end{subfigure}
\begin{subfigure}
  \centering
  \includegraphics[scale=0.2]{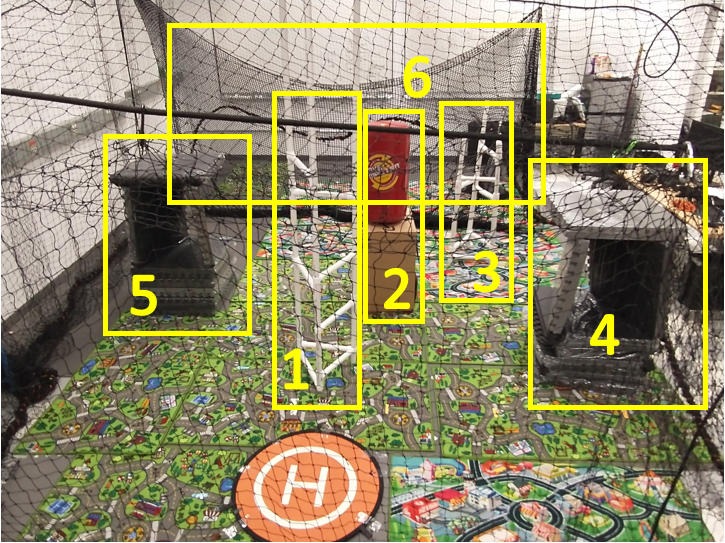}
\end{subfigure}%
\begin{subfigure}
\centering
  \includegraphics[width=4cm, height=3.1cm]{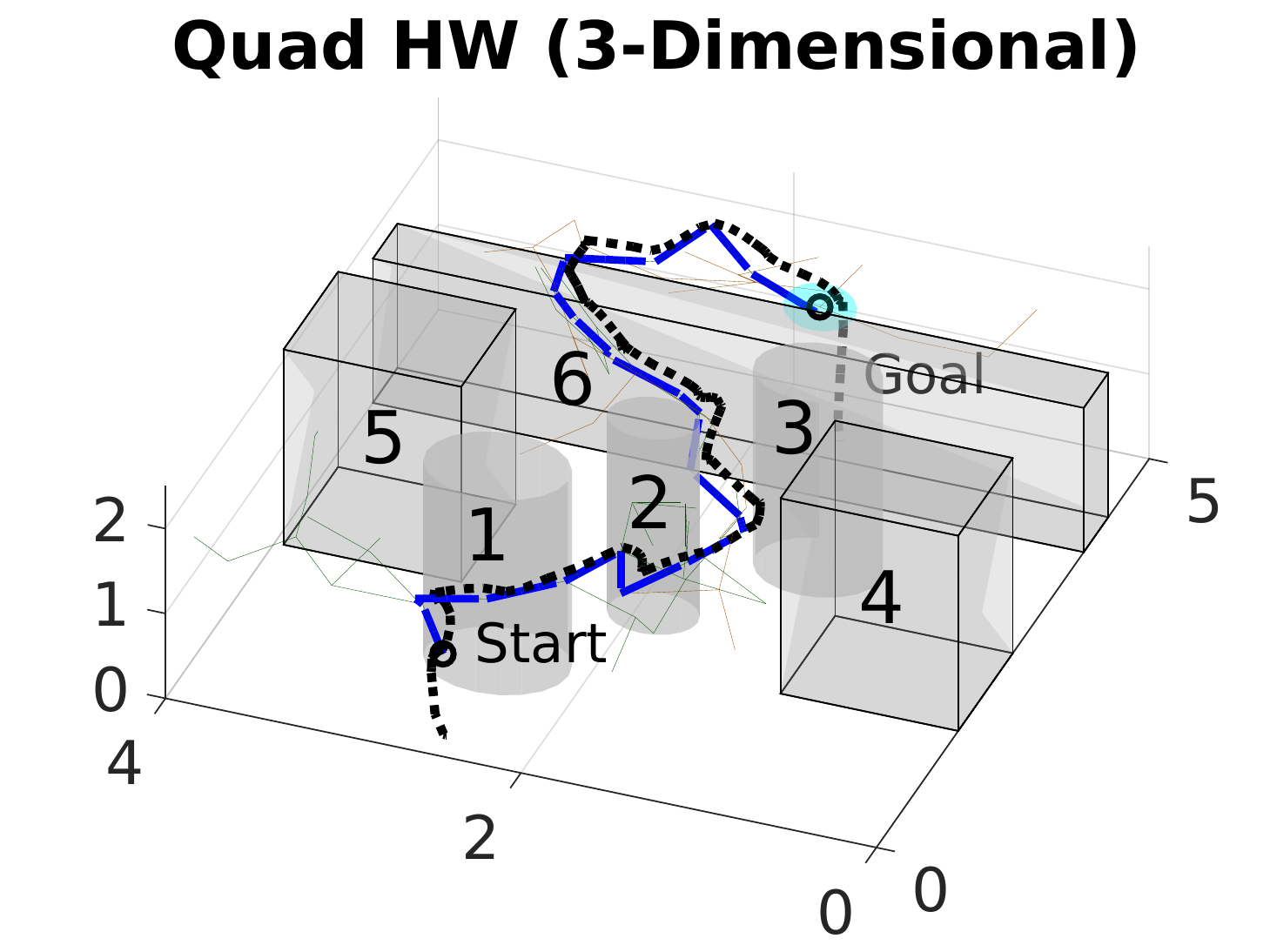}
\end{subfigure}
\begin{subfigure}
\centering
  \includegraphics[scale=0.225]{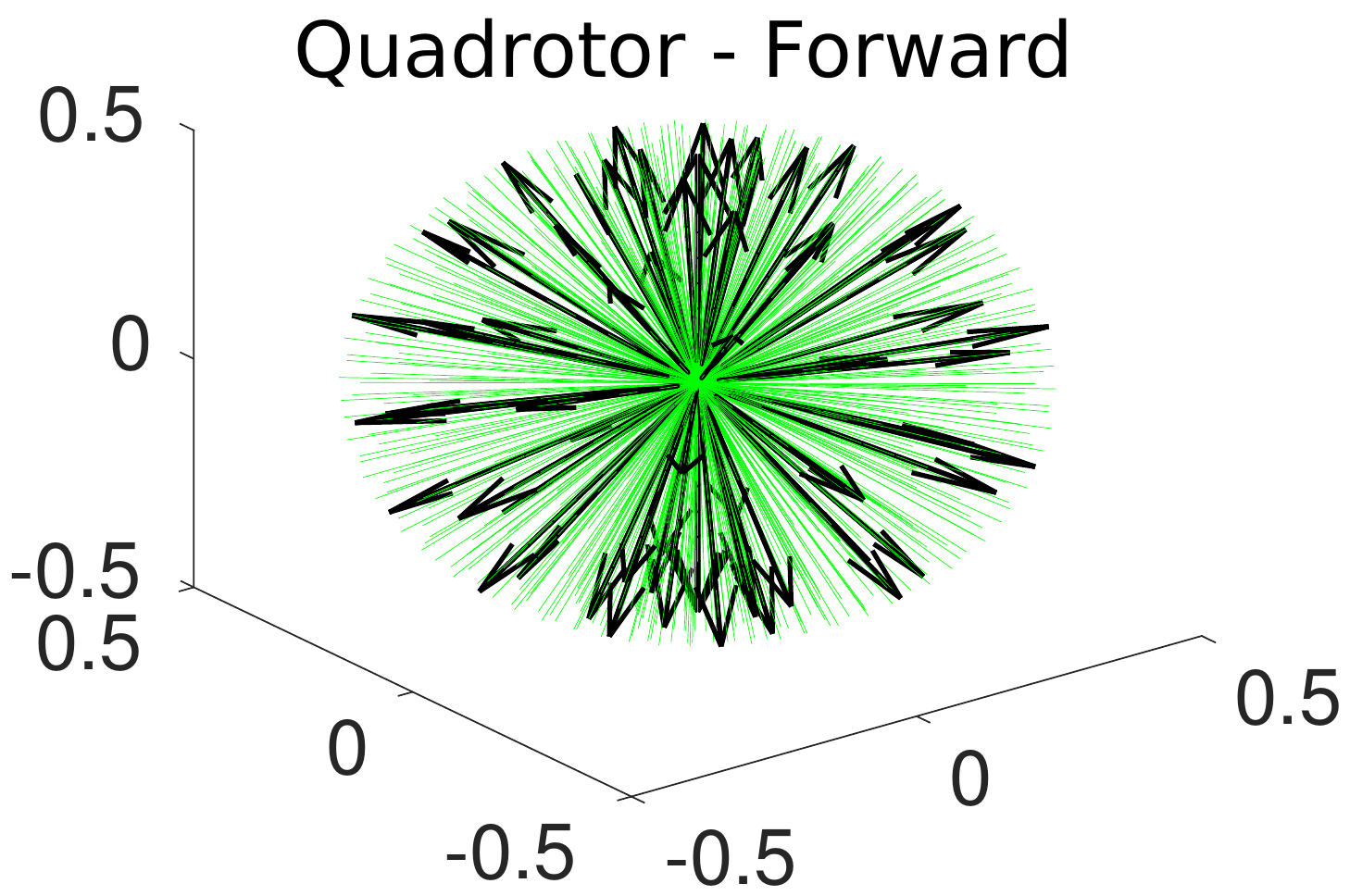}
\end{subfigure}
\begin{subfigure}
\centering
    \includegraphics[scale=0.25]{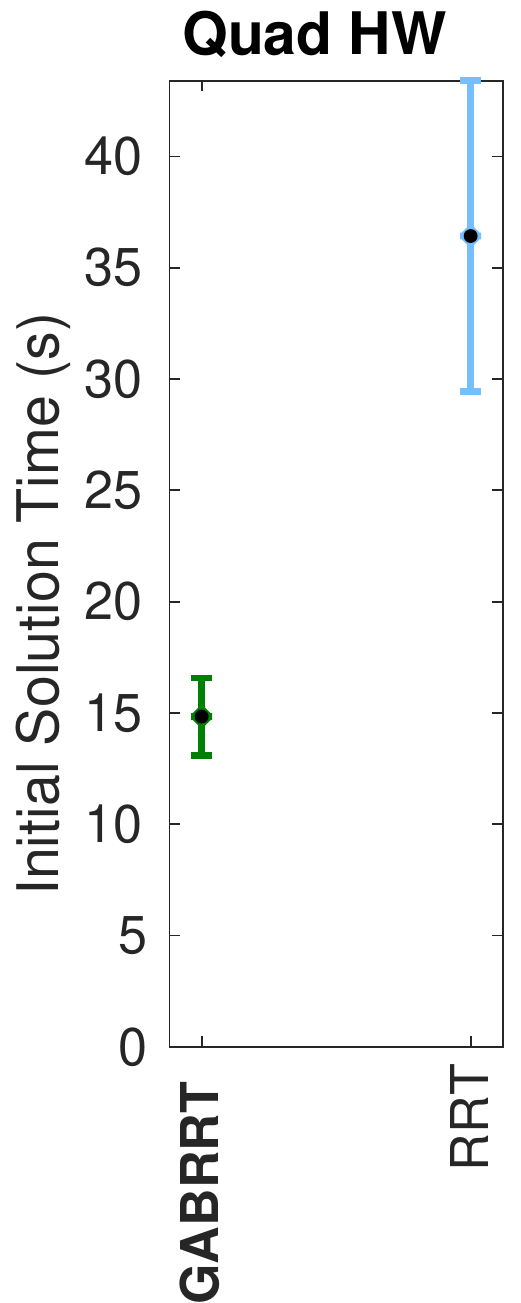}
\end{subfigure}
\begin{subfigure}
\centering
   \includegraphics[scale=0.25]{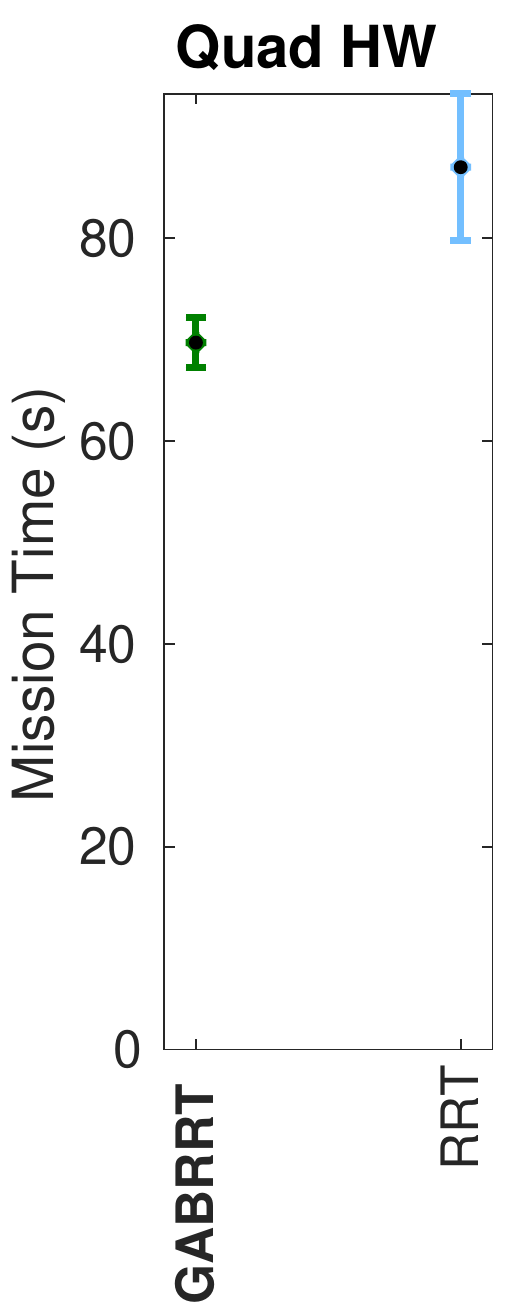}
\end{subfigure}
\begin{subfigure}
\centering
  \includegraphics[scale=0.56]{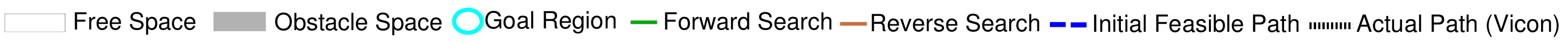}
\end{subfigure}
\caption{Left image shows the quadrotor used in our hardware experiments. The second image shows the actual view of the obstacle course and the third image shows the model view with actual path (Vicon) overlaid. The fourth image shows the maneuver library used for performing forward search. \revisionTwo{The right figures show the initial solution time and mission time for 100 trials of hardware experiments.}}
\label{fig:HardwareResult}
\end{figure*}

\begin{table}
\centering
\captionsetup{font={color=black, small}}
\captionof{table}{\revision{Experimental statistics of GBRRT for different systems}}
\renewcommand{\arraystretch}{1.8}
\scalebox{0.75}{
\revision{
\begin{tabular}[t]{|p{0.2\linewidth}|p{0.08\linewidth}|p{0.08\linewidth}|p{0.08\linewidth}|p{0.08\linewidth}|p{0.08\linewidth}|p{0.08\linewidth}|}
\hline
Informative Statistic & Uni. & Quad. & C.Pole & Tread. & C.Trail. & F.Wing \\ \hline
 $R_{r_{k}}$ & \hfil 24.95 & \hfil 13.27 & \hfil 58.41 & \hfil 61.28 & \hfil 54.96 & \hfil 58.79 \\ \hline
 $R_{edge}$ & \hfil 2.48 & \hfil 2.07 & \hfil 1.33 & \hfil 3.33 & \hfil 10.65 & \hfil 3.90 \\ \hline
$R_{x}$  & \hfil 0.07 & \hfil 0.08 & \hfil 0.11 & \hfil 0.12 & \hfil 0.2 & \hfil 0.24 \\ \hline
$R_{max}$ & \hfil 0.049 & \hfil 0.057 & \hfil 0.095 & \hfil 0.084 & \hfil 0.14 & \hfil 0.158 \\ \hline
\end{tabular}}}
\vspace{1mm}
\caption*{\revision{$R_{r_{k}}$ and $R_{edge}$ are averaged over 100 trials while $R_{x}$ and $R_{max}$ remain same across all trials.}}
\label{Tab:MetricValues}
\end{table}

\subsection{Comparison of algorithms}
We compare \revision{our proposed algorithms} with five \revision{existing} state-of-the-art modern algorithms that do not require a two-point BVP solver. This includes RRT using best-input propagation (5\% goal bias), SST\cite{li2016asymptotically} (5\% goal bias), iSST\cite{littlefield2018informed}, DIRT \cite{littlefield2018efficient}, and AO-RRT \cite{kleinbort2020refined} \cite{hauser2016asymptotically} (5\% goal bias). Even though SST, iSST, DIRT, and AO-RRT are AO algorithms which improve solution over time, we only run these algorithms until an initial solution is found because we are solving the quick initial feasible planning problem. DIRT and iSST are heuristic-based algorithms that require a heuristic during the execution of the algorithms. 
\revision{As suggested in \cite{littlefield2018informed} (the original iSST paper) we generate the heuristics for iSST and DIRT using a lower dimensional roadmap built using  PRM \cite{kavraki1996probabilistic}.} We use the same random seed to generate the roadmap for both iSST and DIRT in each trial. \revision{Because we are interested in single-query feasible planning, we factor in the time needed to generate the roadmap into the execution time that we report for these algorithms. We do not compare with AIT* \cite{aitstar} because it is a geometric planner and requires solving the two-point BVP for its operations.} 

\par \revision{We create two new bidirectional variants of DIRT called GBDIRT (Generalized Bidirectional DIRT) and GABDIRT (Generalized Asymmetric Bidirectional DIRT) to include in our comparison. These variants use our continually growing reverse search tree to generate the heuristic instead of using a roadmap. Similar to GBRRT and GABRRT, GBDIRT grows a dynamical reverse tree whereas GABDIRT uses a non-dynamical reverse tree to obtain the heuristic. Since the original DIRT algorithm (line 4 in Algorithm 1 in \cite{littlefield2018efficient}) requires a guiding heuristic during the entire execution of the algorithm, we choose the closest reverse tree node to the current forward tree to obtain the heuristic for GBDIRT and GABDIRT.}

\par Finally, we also compare our proposed algorithms with a numerical bidirectional variant of RRT (NBRRT for brevity) that uses exploration to grow the forward and reverse trees and a numerical boundary value solver to connect the two trees when they are within $\delta_{hr}$ distance of each other. This algorithm requires solving only a single two-point BVP to connect the forward and reverse trees, unlike algorithms like AIT* that require solving many two-point BVPs for their operations. We use Python 2.7 scipy library's $\mathbf{\mathtt{solve\_bvp}}$ \cite{kierzenka2001bvp} solver that uses a collocation formulation to solve the two point BVP. This solver requires the number of boundary conditions be equal to the number of equations describing the system's dynamics. We have $2n$ boundary conditions ($n$ for both the beginning and end of the connecting trajectory) but only $n$ equations describing the system's dynamics. Therefore, we obtain $n$ additional equations by differentiating the original $n$ equations with respect to time. We verify the solution returned by the solver is valid (kinodynamically feasible) by forward integrating the system using the input solution returned by the solver, and then accept the trajectory if it ends within a tolerance value of the intended point (\tableref{Tab:NBRRTSolverValues}). For the unicycle and quadrotor, we select the best trajectory from their respective trajectory library that reduces the error on the final intended point. We implement the algorithms in Python 2.7 and perform the experiments on a system with an Intel i7-7700 4-core CPU with 32GB RAM.

\subsection{\revisionTwo{Statistics chosen for GBRRT insights}}
We report various experimental statistics for GBRRT for all systems considered  (\tableref{Tab:MetricValues}). These statistics provide insights into the inner workings of GBRRT across the different systems, and include:
\begin{itemize}
  \item $R_{r_{k}}$ - Average percentage of times the forward and reverse trees are within $r_{k}$ distance away. 
  \item $R_{edge}$ - Average ratio of $\delta_{hr}$ to the average length/cost of kinodynamic edges in the forward and reverse tree.
  \item $R_{x}$ - Ratio of $\delta_{hr}$ to the length of the \singleQuote{$x$} dimension of state space common to all systems.
   \item $R_{max}$ - Ratio of $\delta_{hr}$ to the maximum distance function value possible across the state space.
\end{itemize}
We get these statistics using the same set of 100 GBRRT trials used in the comparison experiments  (\figref{fig:SoftwareResultGraphs}).

\vspace{-0.2cm}
\subsection{Hardware experiments}
\revision{We perform the hardware experiments in an} obstacle course of size 5m $\times$ 4m $\times$ \revision{2.3m} (\figref{fig:HardwareResult}) with start and end locations set at (0.2, 2.5, 1.0) and (4.5, \revision{1.7}, 1.5) respectively. The obstacle course is traversed using a modified Bebop 2 quadrotor \cite{prgflyt} in 3D space. The quadrotor is equipped with an Intel UP board (\figref{fig:HardwareResult}) having Intel Atom x5-z8350 4-core, 1.44 GHz CPU with 4GB RAM for running the algorithms. We run \revision{100} experimental trials, with each trial having a different seed. We use a Proportional-Derivative (PD) controller for position tracking and a Vicon motion capture system for state information. The obstacles in the course are modeled using 3D cylinders and rectanguloids. The quadrotor is modeled as a sphere of radius 20 cm for easy collision detection. \revision{We compare GABRRT with RRT in our experiments. We choose GABRRT because it is best-performing proposed algorithm in our quadrotor simulation experiments and RRT is the best non-proposed algorithm (\figref{fig:SoftwareResultGraphs})}. We use a maneuver library with each trajectory having a length of 0.5 m with the start and end velocities set to zero (\figref{fig:HardwareResult}). We use the same distance function $d_{X}$ as for the quadrotor simulation. We set \revision{GABRRT} parameters $\delta_{hr}$ = 1.0, $N_{B}$ = \revision{90} and $q$ = 0.8.

\section{RESULTS}
\label{sec:results}

\begin{table}
\centering
\captionsetup{font={color=black, small}}
\caption{\revision{Numerical boundary value solver results}}
\renewcommand{\arraystretch}{2.0}
\scalebox{0.9}{
\revision{
\begin{tabular}[t]{|p{0.27\linewidth}|p{0.06\linewidth}|p{0.06\linewidth}|p{0.08\linewidth}|p{0.08\linewidth}|p{0.08\linewidth}|p{0.08\linewidth}|}
\hline
Metric & Uni. & Quad. & C.Pole & Tread. & C.Trail. & F.Wing \\ \hline
 Solution Time & \hfil 15.18 & \hfil 35.93 & \hfil 171.29 & \hfil 171.33 & \hfil 528.91 & \hfil 796.32 \\ \hline
Total Success Rate & \hfil 100 & \hfil 100 & \hfil 91 & \hfil 99 & \hfil 82 & \hfil 23  \\ \hline
Conn. Success Rate & \hfil 67 & \hfil 1 & \hfil 58 & \hfil 92 & \hfil 70 & \hfil 0  \\ \hline
Tolerance & \hfil 0.01 & \hfil 0.01 & \hfil 0.75 & \hfil 0.2 & \hfil 0.3 & \hfil 0.75  \\ \hline
\end{tabular}}}
\label{Tab:NBRRTSolverValues}
\vspace{1mm}
\caption*{\revision{Results averaged over 100 trials. The connected success rate refers to the percentage of trials the solution was found using the numerical boundary value solver. It is less than the total success rate because there were some trials where the forward tree reached the goal region without connection to the reverse tree (attempts to connect using 2-point BVP solver were unsuccessful).}}
\vspace{-2mm}
\end{table}

\revision{This section provides the simulation experiment results (\subSecref{sub:simResults}), NBRRT comparison results (\subSecref{sub:NBRRTResults}), GBRRT experimental statistics interpretation (\subSecref{sub:insightResults}), the hardware experiment results (\subSecref{sub:HardwareResults}) and finally the limitations of our algorithms.}

\subsection{\revision{Simulation experiment results}}
\label{sub:simResults}
\revision{Example feasible paths generated by GBRRT and GABRRT for the systems we consider are shown in \figref{fig:SoftwareSimulationSnapshot}. The plots for the solution success rate percentage and initial solution time metrics are presented in \figref{fig:SoftwareResultGraphs}.
Considering these metrics, GBRRT performs best for the \revisionTwo{unicycle}, cart-pole, threaded and, car-trailer \revisionTwo{(success rate metric)} systems, GABRRT performs the best for the quadrotor systems, and, GBDIRT performs the best for the fixed-wing system.}
\par \revision{GABRRT uses an inexpensive non-dynamical reverse tree to generate the cost-to-go heuristic. \revisionTwo{For this reason, it performs better for systems having `straight-line' like trajectories (quadrotor)}. On the other hand, although iSST and DIRT also use heuristics from a non-dynamical PRM roadmap, they take longer to generate a feasible solution. One reason for this is that roadmaps usually takes longer to construct than trees containing the same number of nodes (due to the fact that they typically involve more edges per node).
}
\par \revision{GBRRT uses the cost-to-go heuristic from a reverse tree with dynamics. This makes GBRRT perform better for higher-dimensional systems (cart-pole, threaded, car-trailer, and fixed-wing) that have velocity constraints in addition to position constraints at their goal states, and the distance functions have higher-order dynamics.  We believe this also causes GBRRT to perform better than DIRT and iSST for these systems (DIRT and iSST use a heuristic generated from a lower-dimensional non-dynamical roadmap). In such cases we also observe that if the goal region is made large for these systems, then GABRRT performs on par or better than GBRRT. We believe this happens because the size of the state space's positional dimensions is relatively large compared to the size of the velocity dimensions.}

\par \revision{GBDIRT performs the best for the fixed-wing system and performs better than DIRT for majority of the systems. We believe this demonstrates that, at least in the case of single query feasible planning, trees are usually better suited to the task of generating a cost-to-go heuristic than roadmaps. We note that this parallels the broader use of trees (vs.\ roadmaps) for single query planning more generally. }

\subsection{\revision{NBRRT comparison results}}
\label{sub:NBRRTResults}
\par \revision{NBRRT (\tableref{Tab:NBRRTSolverValues}) does not perform the best in terms of the initial solution time metric among the algorithms tested. The results for the solution success rate were mixed, with NBRRT performing the best for the threaded system and the worst for the fixed-wing system. We found the connection success rate is zero for all systems if a low enough tolerance value is used. Although NBRRT's performance is mixed, we state that if the problem can be solved efficiently using numerical methods or has special symmetries such that solving the two-point BVP is trivial, then it is advisable to use one of the pre-existing numerical bidirectional algorithms. However, if numerical methods are not easy to apply, then our approach provides a general solution method that can be used.}

\vspace{-2mm}
\subsection{\revision{Interpretation of GBRRT experimental statistics}}
\label{sub:insightResults}
\par Table II presents the experimental statistics for GBRRT. The interpretation of the various statistic results follows:
\par $R_{r_{k}}$ - The percent overlap between the forward and reverse trees is lower for lower-dimensional systems (unicycle, quadrotor). We observe that once the trees meet each other, the reverse tree heuristic directs the forward tree towards the goal. However, the percent overlap is much higher for higher-dimensional systems (sometimes called ``curse of dimensionality'' \cite{koppen2000curse}). This causes the feasible solution to be found much later after the initial encounter of the forward and reverse trees.

\par \revision{$R_{edge}$ - This value is $>1$ for all systems (because $\delta_{hr}$ chosen is always greater than the average length/cost of the kinodynamic edge). This is necessary to make the forward tree consider the cost-to-go heuristic from reverse tree nodes sufficiently far from its current location to make quick progress towards the goal.}

\par \revision{$R_{x}$ and $R_{max}$ - These values are less than one for all the systems because the $\delta_{hr}$ value chosen is much less than the diameter  of the systems' state space. These values are higher for the higher-dimensional systems (car-trailer, fixed-wing) because their state-spaces have smaller positional dimensions compared to the lower-dimensional systems (\figref{fig:SoftwareSimulationSnapshot}).}

\vspace{-2mm}
\subsection{\revision{Hardware experiment results}}
\label{sub:HardwareResults}
\par \revision{The initial feasible path generated by GABRRT in a hardware experiment trial with the actual path (obtained using Vicon motion capture system) and the results for 100 trials of hardware experiments are shown in \figref{fig:HardwareResult}. GABRRT, on average, performs better than RRT with respect to initial solution time and total mission time (solution time + flight time). A Student's T-test \cite{boslaugh2012statistics} and Kolmogorov Smirnov (KS) \cite{massey1951kolmogorov} tests verify that the initial solution times for GABRRT and RRT are significantly different. However, we did not observe statistical significance for the mission-time (KS test). This is not surprising due to the fact that flight time is proportional to the path length, and GABRRT does not optimize for path length since it is a feasible motion planning algorithm.}

\vspace{-2mm}
\subsection{Limitations of \revision{our proposed algorithms}}
\revision{Our proposed algorithms} are parameter-based, and their performance depends on the appropriate selection of parameters, including: heuristic radius $\delta_{hr}$, exploitation ratio function $\mathcal{P}$ and number of best-input trajectories $N_{B}$. The effects of using different $\delta_{hr}$, $\mathcal{P}$ and $N_{B}$ values are discussed in \subSecref{sub:SelectionParameters}.
We provide guidelines for parameters selection as follows:

\subsubsection{Number of Maneuvers ($N_{B}$)}
  The solution time is largely unaffected for systems using maneuver libraries when we increase $N_{B}$ within an extensive range (\figref{fig:parametersFixed}). However, if we choose $N_{B}$ to be a small value, the solution time does increase. Hence for systems using maneuver libraries, we recommend a high value dependent on the number of maneuvers in the library. For systems using online integration, a value between 5-10 worked well in our experiments. Higher-dimensional systems generally performed well in our experiments when $N_{B}$ was lower.
\subsubsection{Exploitation ratio ($q$)} We recommend using a value between 0.5-1.0 to prioritize exploitation over exploration to find the initial solution quickly. For lower-dimensional systems, a value between 0.8-0.95 worked well in our experiments. For higher-dimensional systems, the range 0.5-0.7 worked well.
\subsubsection{Heuristic radius ($\delta_{hr}$)} The heuristic radius is dependent on the system and the environment used. A general rule of thumb for $\delta_{hr}$ is that it should be chosen less than the ``average size" of obstacles in the environment (assuming positional coordinates used in $d_{X}$) so that heuristic from reverse tree nodes blocked by obstacles are not considered. For example, $\delta_{hr} > 10$ may not work well for the quadrotor environment (\figref{fig:SoftwareSimulationSnapshot}) since the large $\delta_{hr}$ may cause our algorithms to consider the heuristic from reverse tree nodes blocked by the walls (whose thickness is 10). In addition, $\delta_{hr}$ should be chosen, so it is greater than the average size of the kinodynamic edge (\tableref{Tab:MetricValues}) so that it considers reverse tree nodes sufficiently far away from its current point to make progress towards the goal. In our experiments, a value between 3-10 worked well. 
Finally, if none of these guidelines work, the last resort is to perform a simple parameter sweep for the system in question or a similar problem.

\par GBRRT \revision{(GABRRT)}, like RRT \cite{lavalle2001randomized}\cite{kleinbort2018probabilistic}, is a feasible motion planning algorithm. As such, it is not designed to be asymptotically optimal. That said, it is of potential relevance to any-time planning with asymptotic algorithms because one potential application of GBRRT \revision{(GABRRT)} is that it can be used to provide an initial solution and then be improved using another method. \revision{This is because feasible motion planning algorithms tend to find an initial solution more quickly than AO motion planning algorithms. Thus if an any-time solution is desired, it is possible to start planning with a feasible algorithm until the first solution is found and then switch to an AO algorithm. In practice, both algorithms often use the same underlying graph data structure and subroutines. GBRRT \revision{(GABRRT)} could potentially be made asymptotically optimal by replacing the node selection process with nearest neighbors to random selection \cite{karaman2011sampling}\cite{kleinbort2018probabilistic} at the expense of reduced convergence rates. In addition, the cost function used for inserting the node in the priority queue $\mathbf{Q}$ has to be updated to use $g(x_{for}) + d_{X}(x_{for}, x_{closest}) + h(x_{closest})$ (see \figref{fig:GBRRPushBoth}) instead of $d_{X}(x_{for}, x_{closest}) + h(x_{closest})$ to prevent the algorithm from only selecting nodes close to the goal for expansion and thus consider the total cost from start to goal.} 
\par Applying GBRRT to systems involving interactions with the environment like rigid body systems having frictional contacts is challenging. This is because computing the reverse propagation primitive is hard for these systems since we cannot uniquely solve the backward simulation problem \cite{twigg2008backward}. A partial solution based on Linear Complementary Problem (LCP) formulation is provided in \cite{twigg2008backward} to alleviate the issue, but future research challenges remain.

\section{CONCLUSION}
\label{sec:conclusion}
We present Generalized Bidirectional RRT (GBRRT), a new bidirectional \revision{single-query} sampling-based motion planning algorithm that does not require a two-point BVP solver. GBRRT produces a continuous motion plan for solving the initial feasible motion planning problem. Instead of connecting the forward and reverse trees,  the forward tree to uses a cost-to-goal heuristic provided by the reverse search tree to quickly find an initial feasible solution. The cost-to-goal heuristic calculated on the fly applies to the specific problem instance and alleviates the user from coming up with a heuristic. 
\par \revision{We provide a variant of GBRRT called Generalized Asymmetric Bidirectional RRT (GABRRT) that uses a computationally inexpensive non-dynamical reverse search to generate the cost-to-go heuristic. In experimental trials GABRRT performs better than GBRRT in scenarios without  higher-order dynamics. We also combine the idea used in GBRRT/GABRRT (reverse tree provided heuristic) to an existing state-of-the-art unidirectional algorithm called DIRT \cite{littlefield2018efficient} to create Generalized Bidirectional DIRT (GBDIRT) and Generalized Asymmetric Bidirectional (GABDIRT). GBDIRT outperforms DIRT in the majority of the systems tested.}

\par We prove that GBRRT and GABRRT are probabilistically complete.
Finally, we run multiple simulation experiments using systems of varying dimensions and hardware experiments using a quadrotor. These show that our algorithms quickly find a feasible solution in many different scenarios.

\par \revision{Considering solution success rate and average initial solution time, GBRRT performs the best for the unicycle, cart-pole, threaded and car-trailer systems \revisionTwo{(solution success rate)} and GBDIRT for the fixed-wing system. This is because these systems generally consider higher-order dynamics in their planning and distance functions. GABRRT performs the best for the quadrotor system where higher-order dynamics \revisionTwo{(`straight-line' like trajectories)} are not considered. GABRRT also outperforms RRT in the hardware experiments we run in initial solution time and mission time metrics. These results shows the power of performing a bidirectional search and using an on-the-fly calculated problem-specific heuristic in \revision{improving the performance of} finding an initial solution.}
\par A potential avenue for future research is developing an AO version of \revision{our proposed algorithms}. This version should be able to provide good convergence rates while still maintaining asymptotically optimality. Other possible directions include testing the performance of GBRRT \revision{and GABRRT} using different definitions of \revision{cost and distance functions}.

\vspace{-1mm}
\section*{Acknowledgments}
We thank Chahat Deep Singh and Nitin Sanket from the Perception and Robotics Group at the University of Maryland (UMD) for providing the modified bebop 2 quadrotor used in our experiments, and Mohamed Khalid M. Jaffar in the Motion and Teaming Lab at UMD for his valuable feedback during manuscript preparation. We genuinely appreciate the reviewers' and editor's feedback and comments, many of which helped us to improve the structure and performance of the proposed algorithms, as well as helping us to substantially improve the manuscript from its preliminary version. This research was supported by the Minta Martin Research Fellowship Fund, the Clark Doctoral Fellowship, and U.S.\ Office of Naval Research (ONR) award \#N00014-20-1-2712.

\appendix

\subsection{\revision{Initial solution cost}}

\label{apdx:InitialSolutionCost}
\begin{figure}[ht]
\begin{minipage}[c]{0.5\textwidth}
\captionsetup{font={color=black, small}}
\begin{minipage}[c]{0.19\textwidth}
\includegraphics[scale=\compareResultCostScale]{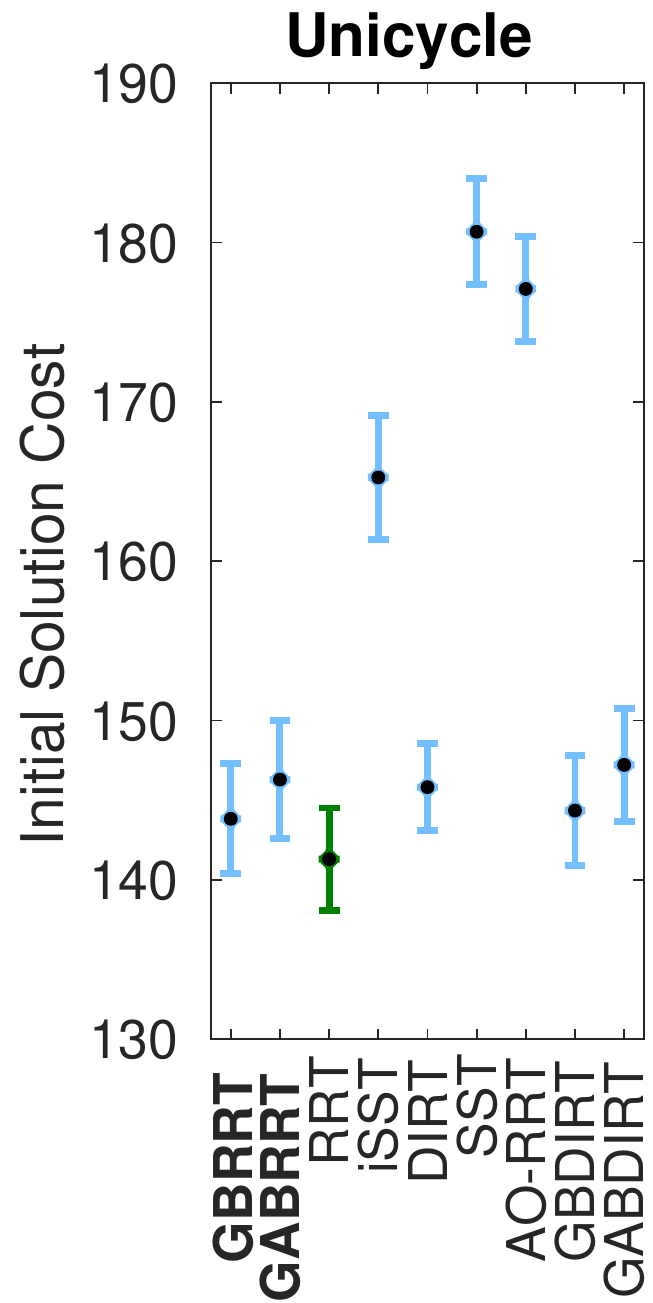}
\end{minipage}
\begin{minipage}[c]{0.19\textwidth}
\includegraphics[scale=\compareResultCostScale]{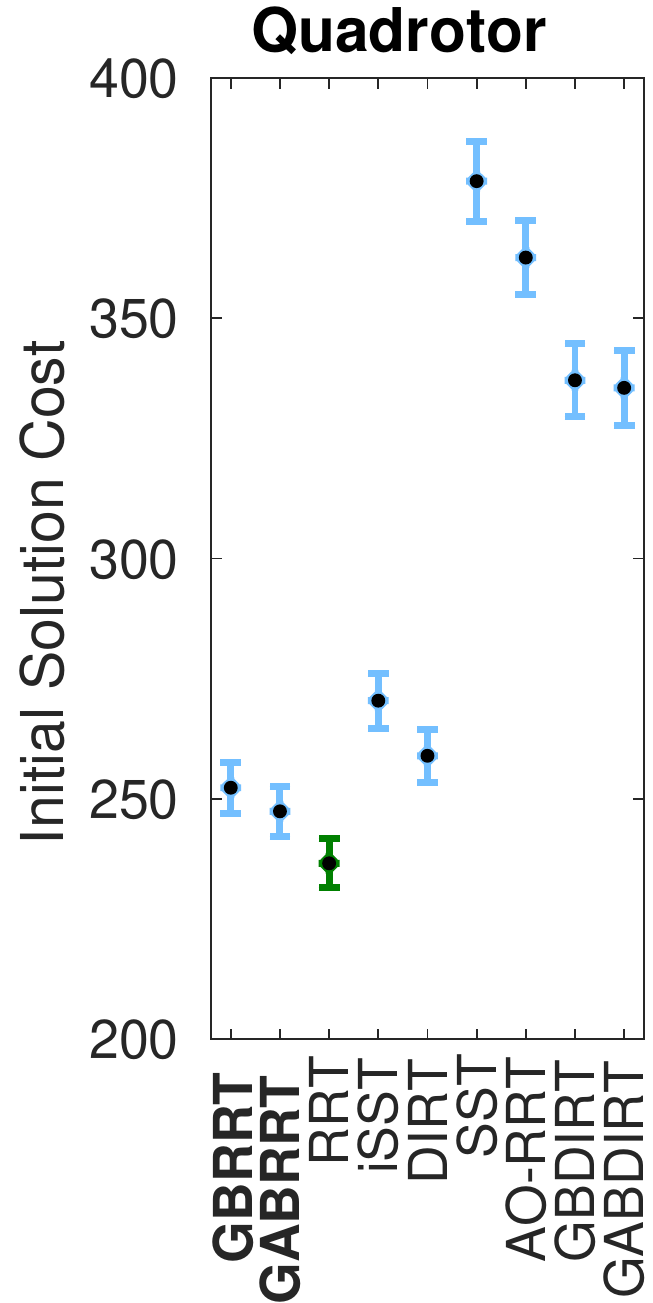}
\end{minipage}
\begin{minipage}[c]{0.19\textwidth}
\includegraphics[scale=\compareResultCostScale]{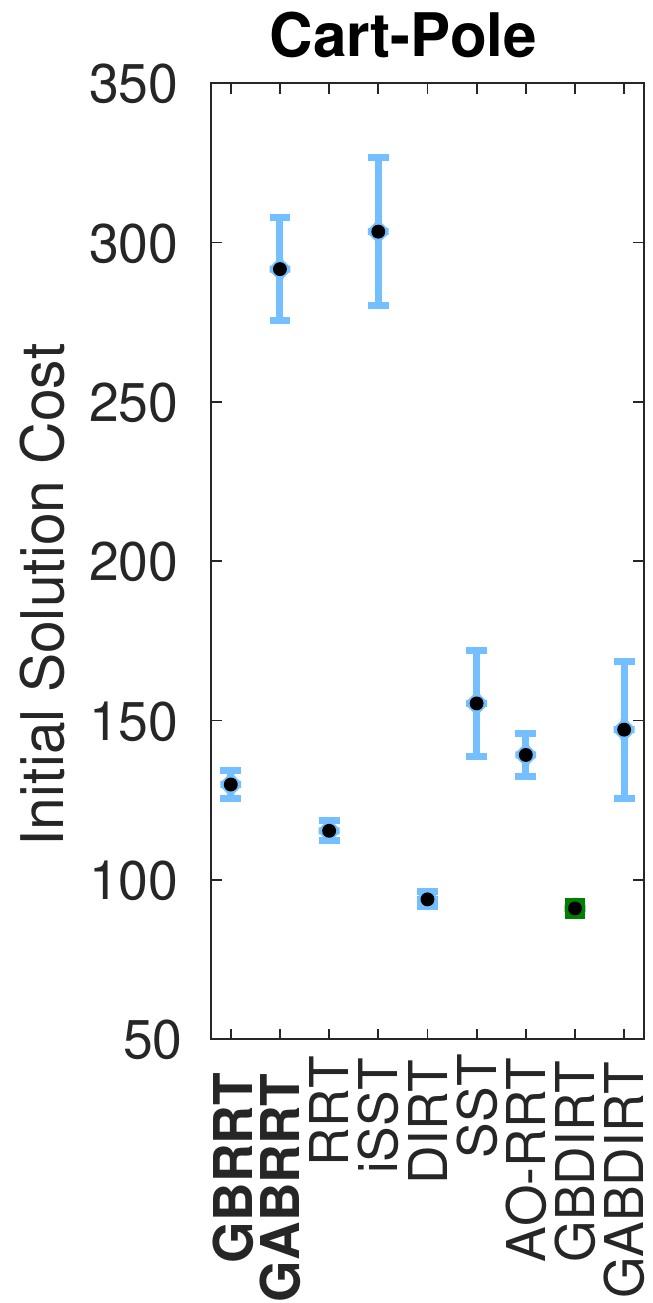}
\end{minipage}
\begin{minipage}[c]{0.19\textwidth}
\includegraphics[scale=\compareResultCostScale]{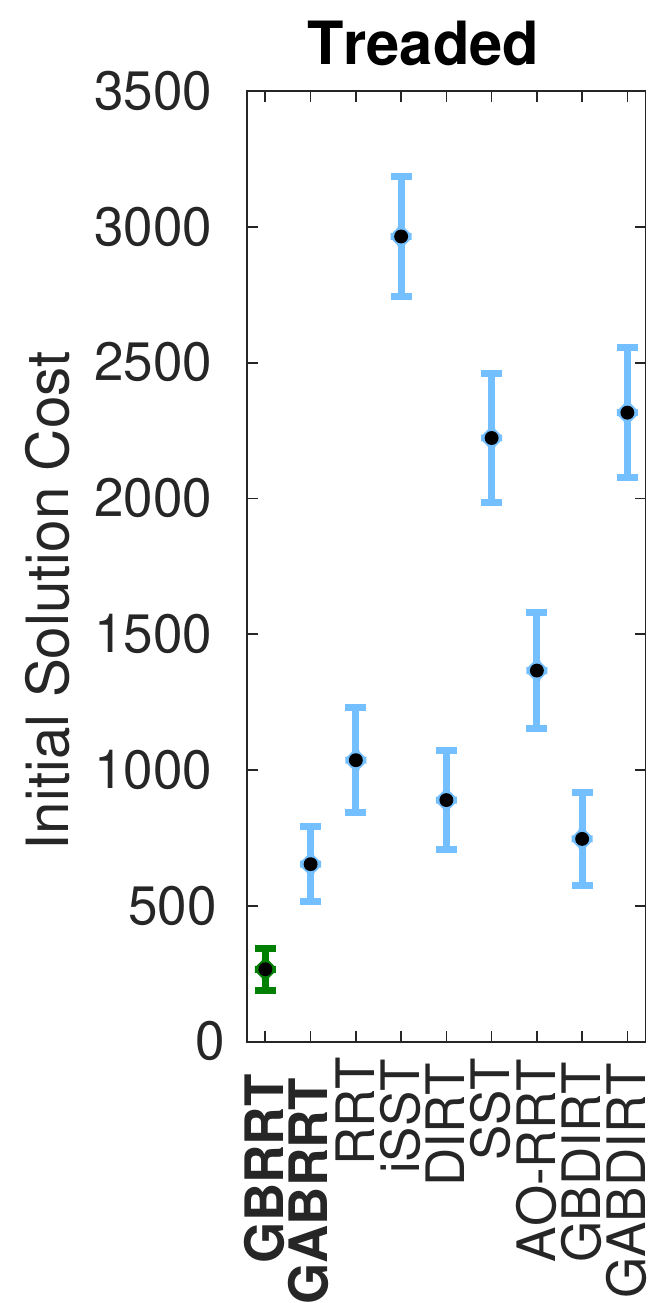}
\end{minipage}
\begin{minipage}[c]{0.19\textwidth}
\includegraphics[scale=\compareResultCostScale]{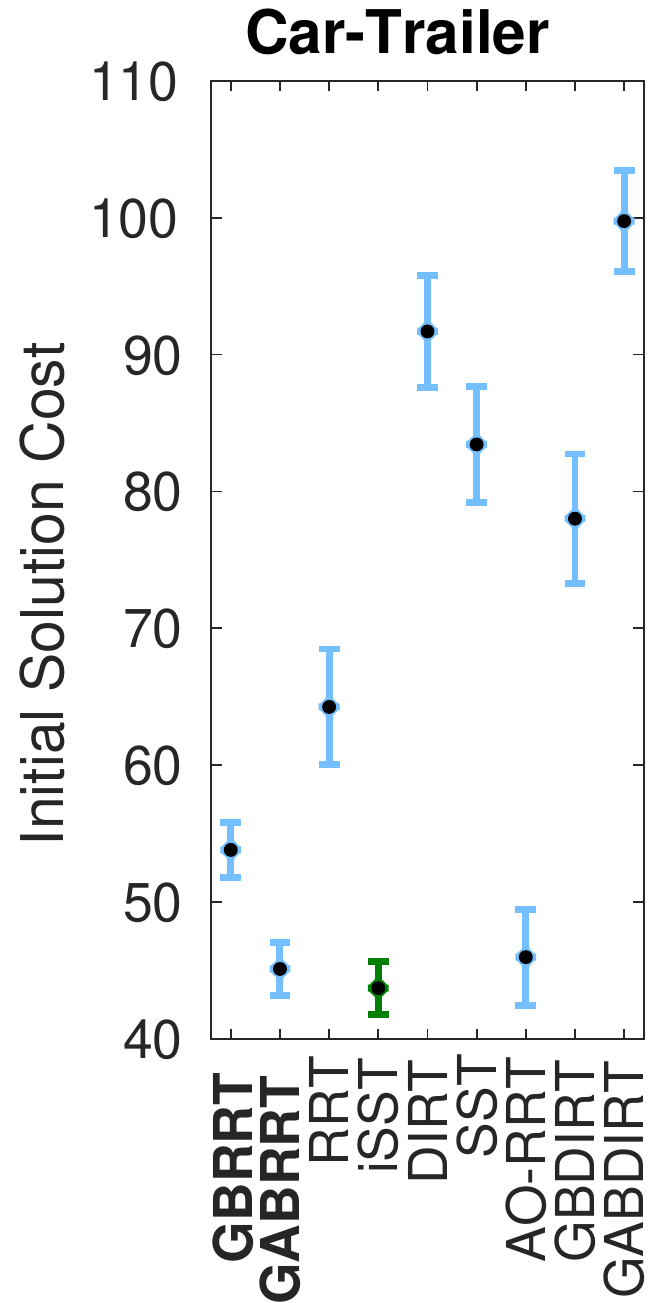}
\end{minipage}
\end{minipage}
\begin{minipage}[c]{0.5\textwidth}
\begin{minipage}[c]{0.19\textwidth}
\includegraphics[scale=\compareResultCostScale]{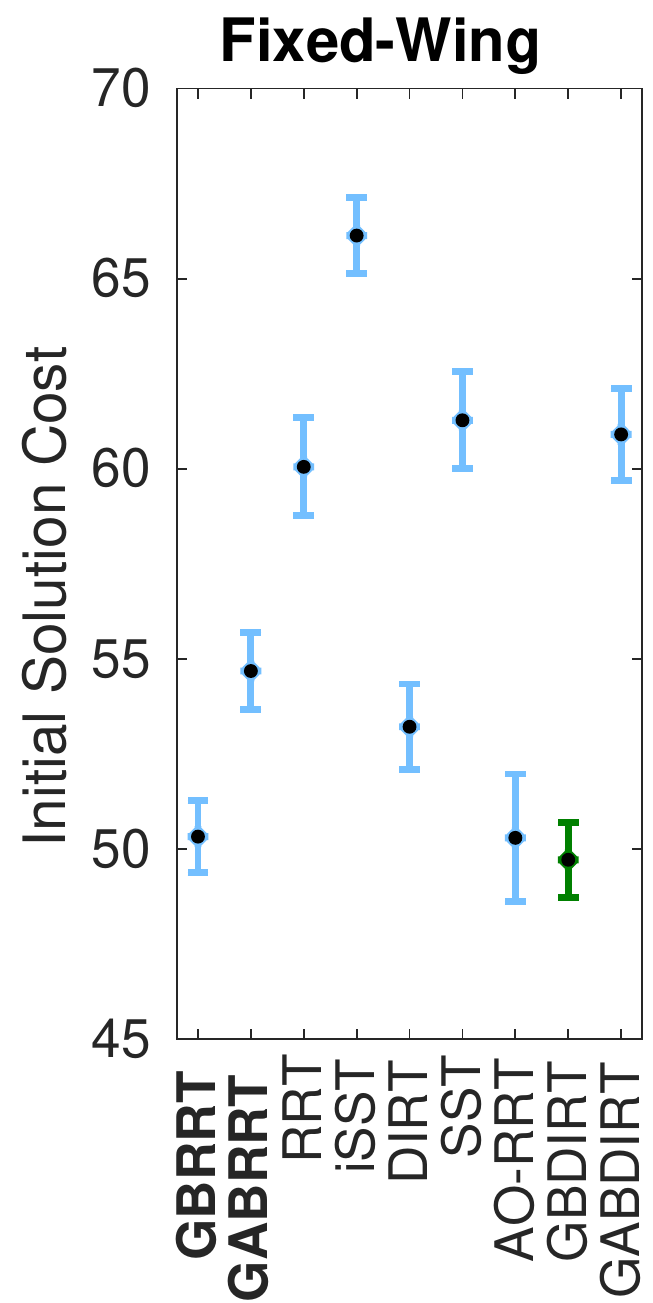}
\end{minipage}
\begin{minipage}[c]{0.15\textwidth}
\includegraphics[scale=\compareResultCostScale]{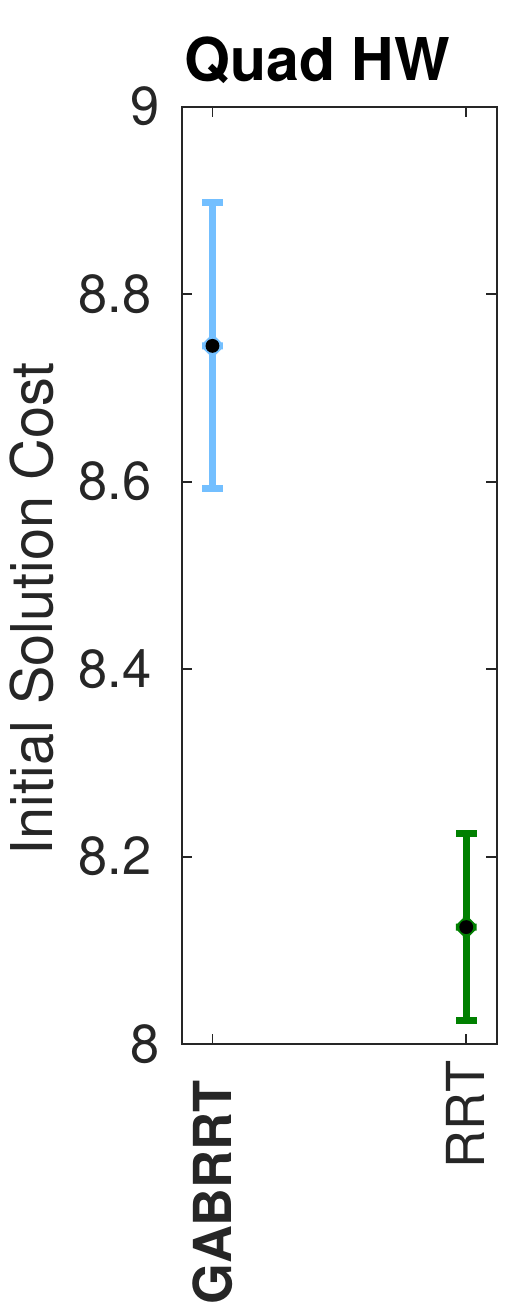}
\end{minipage}
\begin{minipage}[c]{0.6\textwidth}
\caption{Initial solution cost of algorithms for different systems. These graphs are shown only for completeness of analysis as GBRRT does not minimize the solution cost. We use the maximum cost incurred from all algorithms for a specific system as the cost value for unsuccessful trials.}
\label{fig:InitialSolutionCost}
\end{minipage}
\end{minipage}
\vspace{-5mm}
\end{figure}

\subsection{Comparison with variations of GBRRT}
We compare GBRRT with algorithms created from modifications to its operations to show the effect of different operations on the performance metrics (\figref{fig:modificationComparisons}). The algorithms include GBRRT-NF (No fast exploration (Alg. 5), replaced by random exploration),  GBRRT-NU (No priority Q-update, (Alg. 3)), GBRRT-NE (No Exploitation, (Algs. 2, 3, 4)) and GBRRT-RU (Range-based update instead of nearest update in Alg. 3). Fig. 16 shows that baseline GBRRT performs at least as good, on average and in most cases, as these other variations.

\begin{figure}[h!]
\begin{minipage}[c]{0.16\textwidth}
\begin{xy}
\xyimport(100, 100){\includegraphics[width=1.3cm, height=3.1cm]{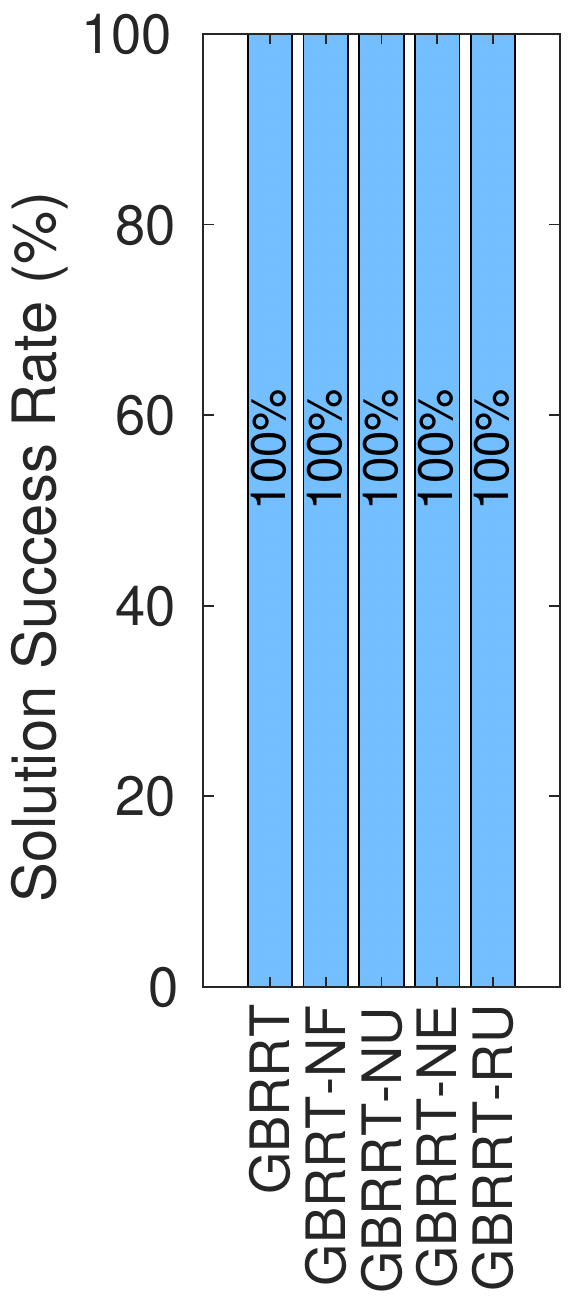}}
,(60, 105)*{\footnotesize \text{Unicycle}}
\end{xy}
\end{minipage}
\begin{minipage}[c]{0.16\textwidth}
\begin{xy}
\xyimport(100, 100){\includegraphics[width=1.0cm, height=3.1cm]{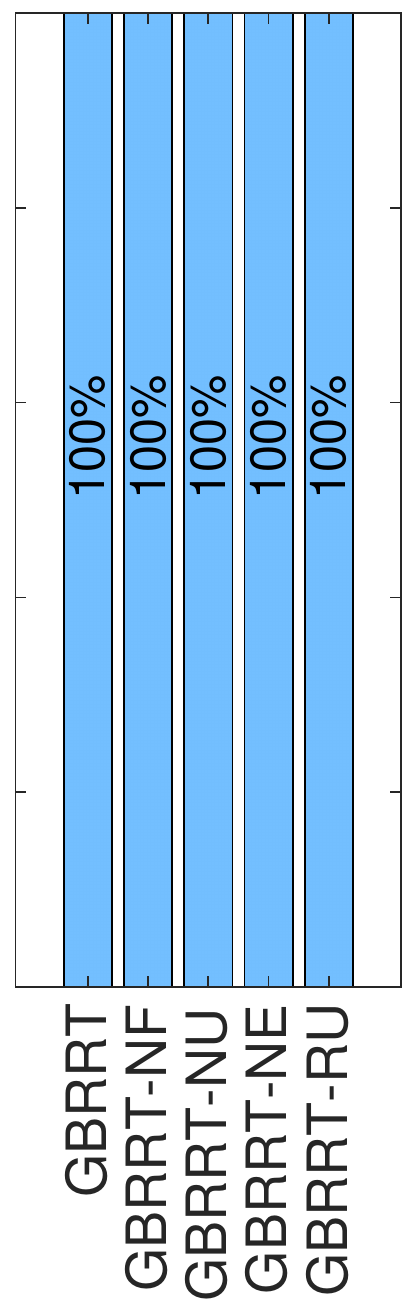}}
,(55, 105)*{\footnotesize \text{Quadrotor}}
\end{xy}
\end{minipage}
\begin{minipage}[c]{0.16\textwidth}
\begin{xy}
\xyimport(100, 100){\includegraphics[width=1.1cm, height=3.1cm]{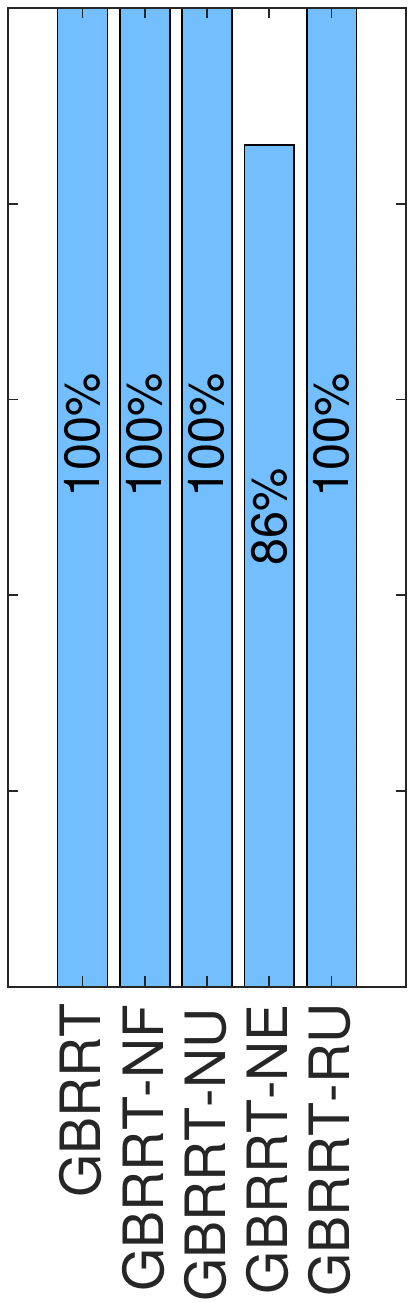}}
,(55, 105)*{\footnotesize \text{Cart-Pole}}
\end{xy}
\end{minipage}
\begin{minipage}[c]{0.16\textwidth}
\begin{xy}
\xyimport(100, 100){\includegraphics[width=1.1cm, height=3.1cm]{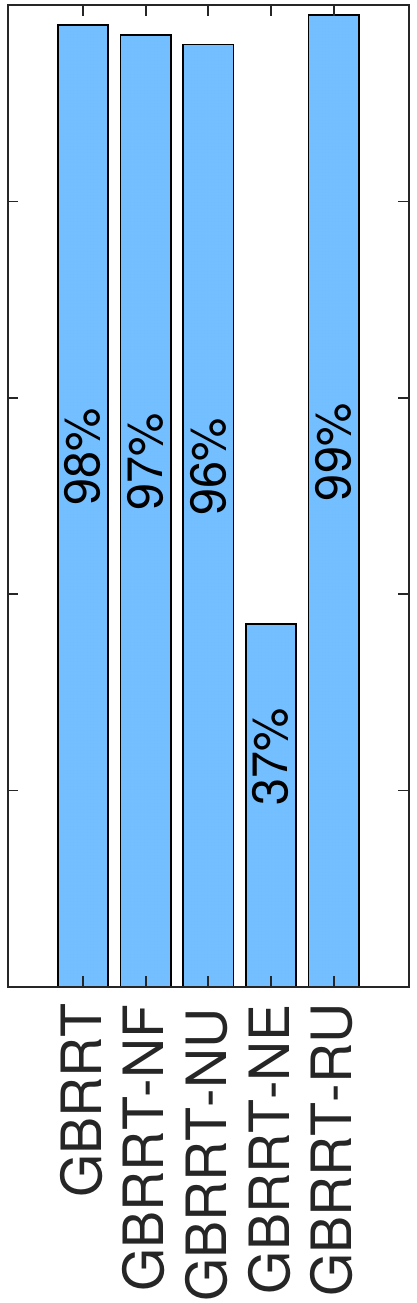}}
,(55, 105)*{\footnotesize \text{Treaded}}
\end{xy}
\end{minipage}
\begin{minipage}[c]{0.16\textwidth}
\begin{xy}
\xyimport(100, 100){\includegraphics[width=1.1cm, height=3.1cm]{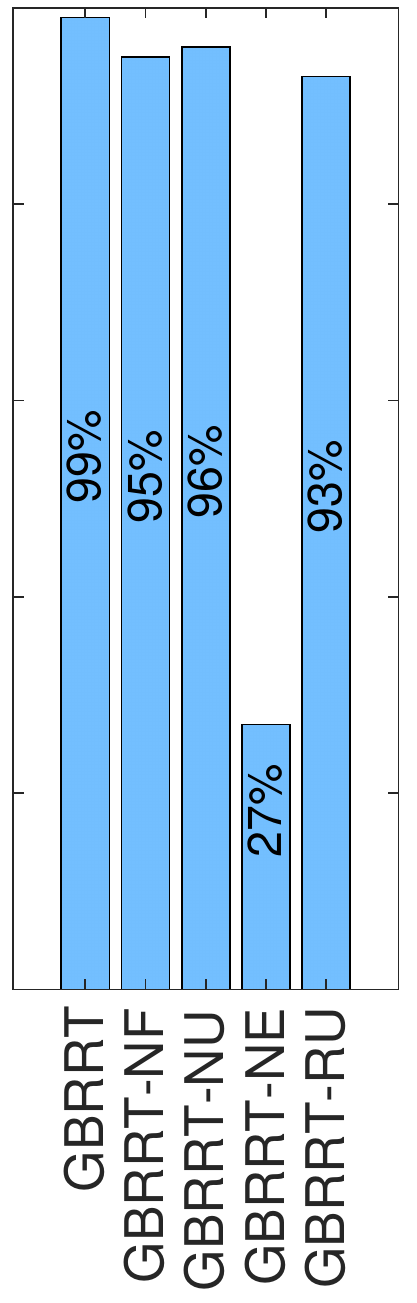}}
,(55, 105)*{\footnotesize \text{Car-Trailer}}
\end{xy}
\end{minipage}
\begin{minipage}[c]{0.16\textwidth}
\begin{xy}
\xyimport(100, 100){\includegraphics[width=1.1cm, height=3.1cm]{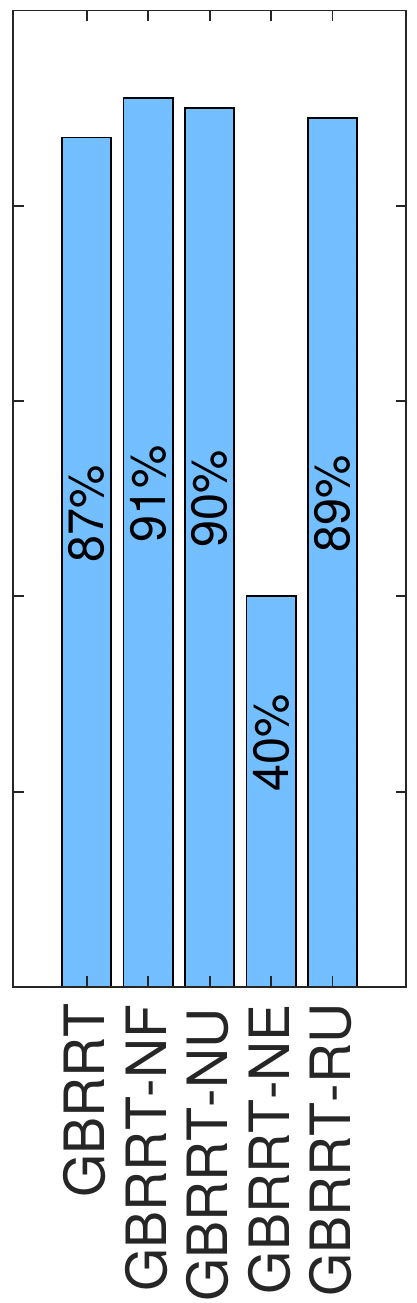}}
,(55, 105)*{\footnotesize \text{Fixed-Wing}}
\end{xy}
\end{minipage}
\vspace{4mm}
\begin{minipage}[c]{0.16\textwidth}
\begin{xy}
\xyimport(100, 100){\includegraphics[width=1.3cm, height=3.2cm]{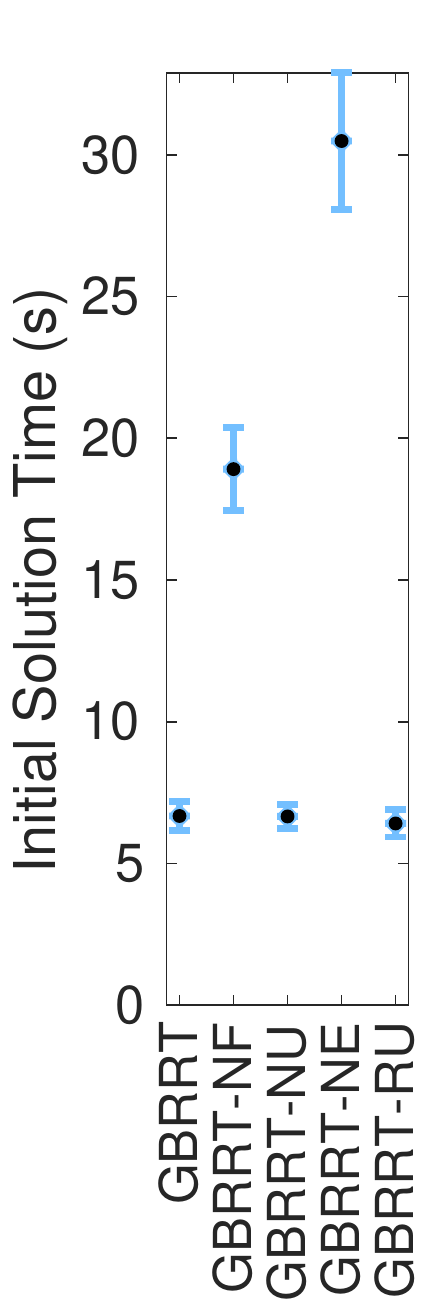}}
,(60, 102)*{\footnotesize \text{Unicycle}}
\end{xy}
\end{minipage}
\begin{minipage}[c]{0.16\textwidth}
\begin{xy}
\xyimport(100, 100){\includegraphics[width=1.35cm, height=3.2cm]{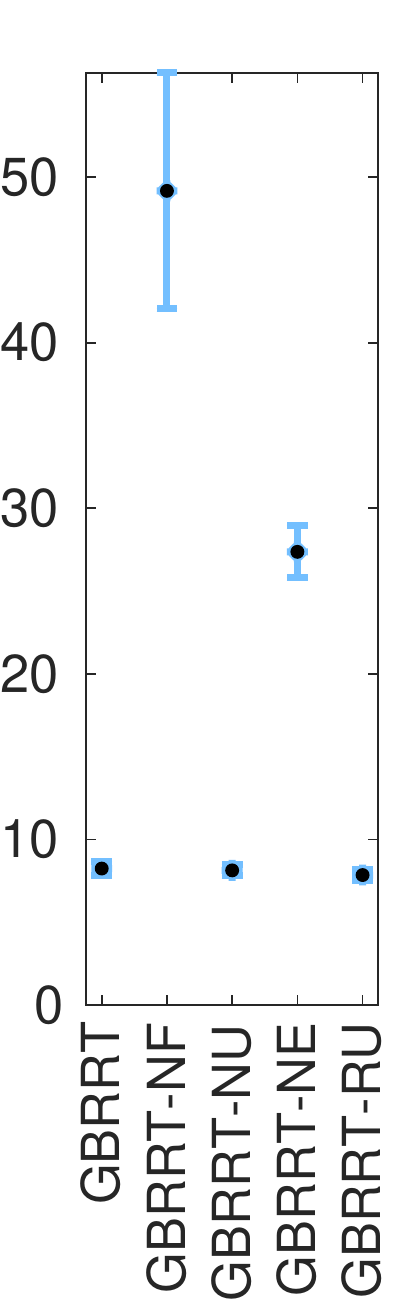}}
,(60, 102)*{\footnotesize \text{Quadrotor}}
\end{xy}
\end{minipage}
\begin{minipage}[c]{0.16\textwidth}
\begin{xy}
\xyimport(100, 100){\includegraphics[width=1.35cm, height=3.2cm]{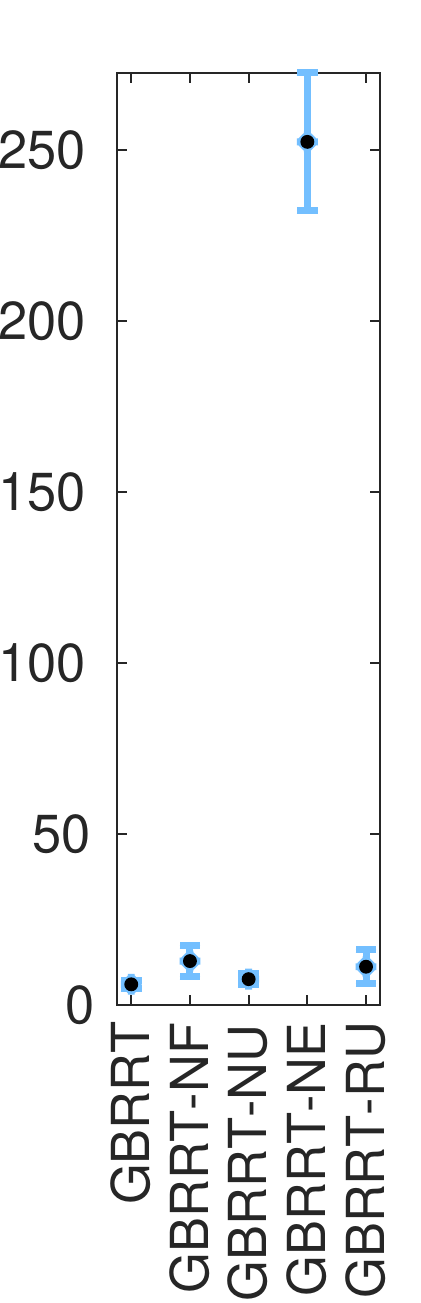}}
,(60, 102)*{\footnotesize \text{Cart-Pole}}
\end{xy}
\end{minipage}
\begin{minipage}[c]{0.16\textwidth}
\begin{xy}
\xyimport(100, 100){\includegraphics[width=1.35cm, height=3.2cm]{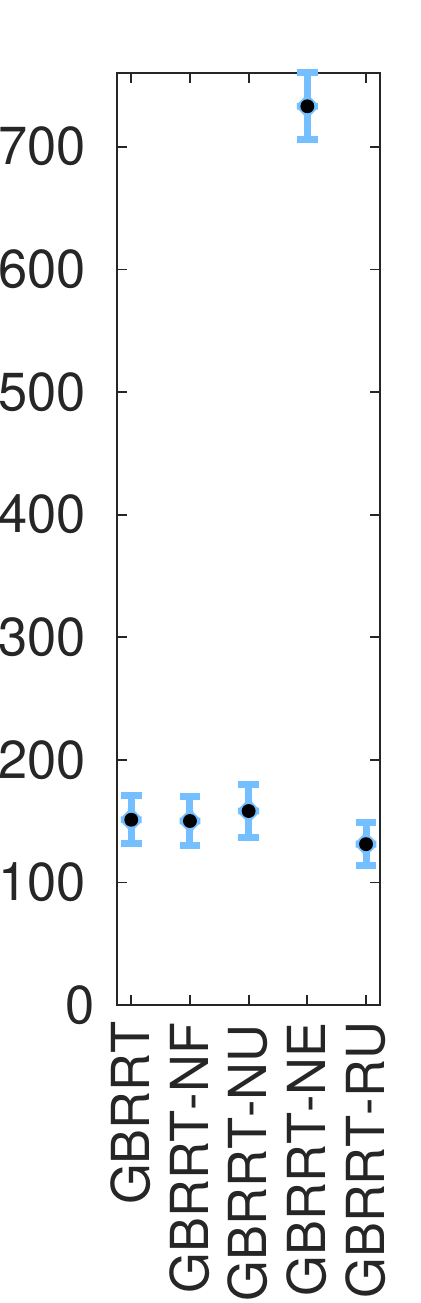}}
,(60, 102)*{\footnotesize \text{Treaded}}
\end{xy}
\end{minipage}
\begin{minipage}[c]{0.16\textwidth}
\begin{xy}
\xyimport(100, 100){\includegraphics[width=1.35cm, height=3.2cm]{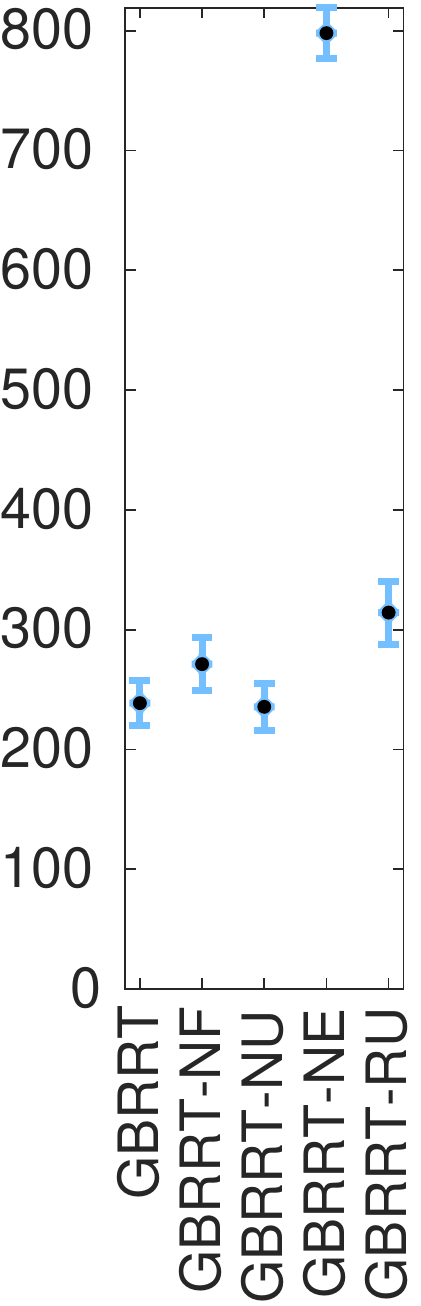}}
,(60, 105)*{\footnotesize \text{Car-Trailer}}
\end{xy}
\end{minipage}
\begin{minipage}[c]{0.16\textwidth}
\begin{xy}
\xyimport(100, 100){\includegraphics[width=1.35cm, height=3.2cm]{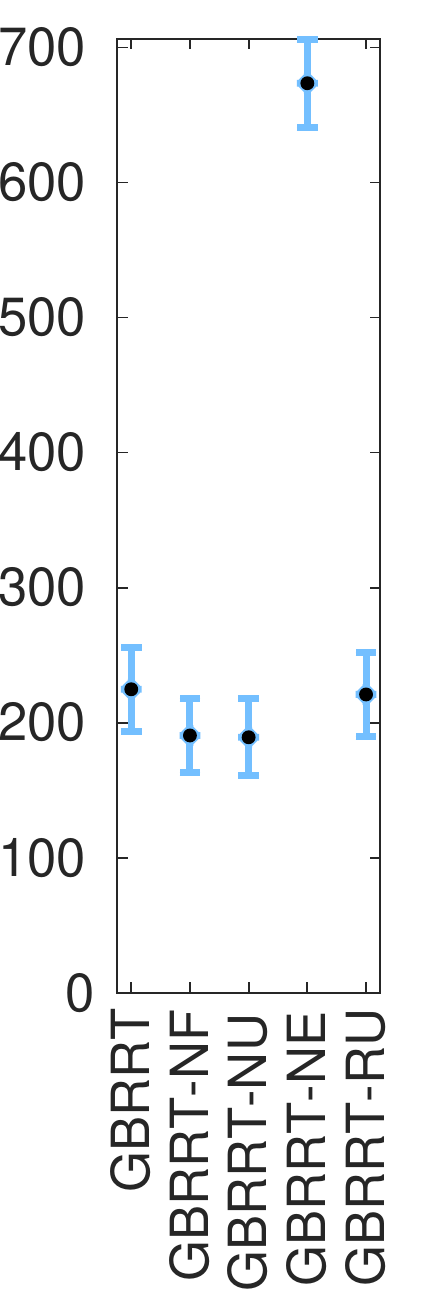}}
,(60, 102)*{\footnotesize \text{Fixed-Wing}}
\end{xy}
\end{minipage}
\caption{Comparison of success rate and solution time metrics for different modifications of GBRRT --- GBRRT-NF (No Fast exploration), GBRRT-NU (No Priority Q-Update), GBRRT-NE (No Exploitation) and GBRRT-RU (Range-Based Update).} \label{fig:modificationComparisons}
\end{figure}

\begingroup
\removelatexerror% Nullify \@latex@error
\begin{figure*}[h!]
\scalebox{\algorithmScale}{
\centering
\begin{minipage}[c]{7.8cm}
\begin{algorithm}[H]
  \tcp{Rev. search Best-input prop.}
  \caption{$\mathtt{BestInputProp}(x_{near}, x_{rand}, \mathcal{U}, T_{max})$} \label{alg:BestInputPropRev}
  
    $\mathbf{E}_{initial} \gets \{\} $ \\
    \For{$k \gets 1$ to $N_{B}$}
    {
        $ \mathbf{E}_{initial} \gets \mathbf{E}_{initial} \cup \{\mathtt{MonteCarloProp}(x_{near}, \mathcal{U}, T_{max})\} \!\!\!\!\!\!$ \\ \label{alg:BestInputPropRev:MonteCarloProp}
    }
    
    $\displaystyle{\mathcal{E}_{new} \gets \argmin_{\mathcal{E} \in \mathbf{E}_{initial}}  ({d_{X}(x_{rand},\,\, \mathcal{E}.initialNode())}}$ \\ \label{alg:BestInputPropRev:bestNew}
  
    \Return {$\mathcal{E}_{new}$}
\end{algorithm}
\begin{algorithm}[H]
  \caption{$\reverseAlg{\mathtt{UpdatePQueueND}}(\mathcal{G}_{for}, \mathbf{Q}, x_{rev}, r_{k})$} \label{alg:UpdatePQueueND}
  
  $x_{closest} \gets \reverseAlg{\mathcal{G}_{for}.\mathtt{NearestNeighbor}(x_{rev})}$\\
  
  \label{alg:UpdatePQueueND:NearestVertices}
     
      \If{\revisionTwo{$x_{closest} \leq r_{k}$}}
      {
          $\mathbf{Q}\mathtt{.update}\paren{x_{closest}, \reverseAlg{d^{\mathtt{ND}}_{X}}(x_{closest}, x_{rev}) +  h(x_{rev})}$ \\ \label{alg:UpdatePQueueND:addToQ}
      }
\end{algorithm}
\end{minipage}}
\scalebox{0.7}{
\begin{minipage}{8.2cm}
\revisionAlgorithm
\begin{algorithm}[H]
  \tcp{Monte Carlo propagation for reverse search}
  \caption{$\mathtt{MonteCarloProp}(x_{final}, \mathcal{U}, T_{max})$} \label{alg:MonteCarloPropRev}
    $t_{\mathcal{E}} \gets \mathtt{Random}(0, T_{max})$ \\
     $u \gets \mathtt{Random}(\mathcal{U})$\\
     \revision{$\mathcal{E}_{new} = -\int_{t_{{\mathcal{E}}}}^{0} f(x, u) \,dt$ where $\mathcal{E}_{new}(t_{{\mathcal{E}}})=x_{final}$\\}
    \Return {$\mathcal{E}_{new}$}
\end{algorithm}
\begin{algorithm}[H]
  \caption{$\reverseAlg{\mathtt{insertToPQueueND}}(\reverseAlg{\mathcal{G}^{\mathtt{ND}}_{rev}}, \mathbf{Q}, x_{for}, r_{k})$} \label{alg:insertToPQueueND}
  
  $x_{closest} \gets\reverseAlg{\mathcal{G}^{\mathtt{ND}}_{rev}.\mathtt{NearestNeighbor}(x_{for})}$ \\
  \label{alg:insertToPQueueND:NearestVertices}

  \If{\revisionTwo{$x_{closest} \leq r_{k}$}}
  {
      $\mathbf{Q}\mathtt{.insert}\paren{x_{for}, \reverseAlg{d^{\mathtt{ND}}_{X}}(x_{for}, x_{closest}) +  h(x_{closest})}$ \\ \label{alg:insertToPQueueND:addToQ}
  }
      
\end{algorithm}
\end{minipage}}
\scalebox{0.72}{
\begin{minipage}{8.5cm}
\begin{algorithm}[H]
    \caption{$\reverseAlg{\mathtt{RevSrchFastExploreND}}(\reverseAlg{\mathcal{G}^{\mathtt{ND}}_{rev}}, \mathcal{U}, T_{max})$} \label{alg:RevSrchND}
    $x_{rand} \gets \mathtt{RandomState} \paren{}$ \\ \label{alg:getExtendEdge:RandomState}
    $x_{near} \gets \mathtt{NearestNeighbor} \paren{x_{rand}, \reverseAlg{\mathcal{G}^{\mathtt{ND}}_{rev}}}$ \\ \label{alg:getExtendEdge:NearestNeighbor}
    $\reverseAlg{\mathcal{E}^{\mathtt{ND}}_{new} \gets \mathtt{Extend} \paren{x_{near}, x_{rand}}}$ \\ \label{alg:getExtendEdge:BestInputProp}
    \Return {$\mathcal{E}^{\mathtt{ND}}_{new}$}
\end{algorithm}
\begin{algorithm}[H]
  \tcp{Fwd. search for Best-input prop.}
  \caption{$\reverseAlg{\mathtt{BestInputPropND}}(x_{near}, x_{rand}, \mathcal{U}, T_{max})$} \label{alg:BestInputPropND}
  
    $\mathbf{E}_{fin} \gets \{\} $ \\
    \For{$k \gets 1$ to $N_{B}$}
    {
        $ \mathbf{E}_{fin} \gets \mathbf{E}_{fin} \cup \{\mathtt{MonteCarloProp}(x_{near}, \mathcal{U}, T_{max})\} \!\!\!\!\!\!$ \\ \label{alg:BestInputPropND:MonteCarloProp}
    }
    
    $\displaystyle{\mathcal{E}_{new} \gets \argmin_{\{\mathcal{E}\in \mathbf{E}_{fin}\}} ({\reverseAlg{d^{\mathtt{ND}}_{X}}(\mathcal{E}.finalNode(),\,\, x_{rand})})}$ \\ \label{alg:BestInputPropND:bestNew}
    
    \Return {$\mathcal{E}_{new}$}
\end{algorithm}
\end{minipage}}
\vspace{-1mm}
\end{figure*}
\endgroup

\vspace{-3mm}
\subsection{\revision{Reverse search algorithms}}
\label{apdx:ReverseSearchAlgorithms}
\revision{Algorithms \ref{alg:BestInputPropRev} and \ref{alg:MonteCarloPropRev} present reverse search subroutines.  }

\begingroup
\removelatexerror% Nullify \@latex@error
\begin{figure}[h!]
\vspace{-8mm}
\centering
\scalebox{0.64}{
\begin{minipage}{10.3cm}
\begin{algorithm}[H]
  \caption{$\reverseAlg{\mathtt{ForSrchExploitND}}(\reverseAlg{\mathcal{G}^{\mathtt{ND}}_{rev}}, \mathcal{U}, T_{max}, x_{pop}, r_{k}$)} \label{alg:ForSrchExploitND}

$x_{best} \gets \underset{g(x_{pop}) + \reverseAlg{d^{\mathtt{ND}}_{X}}(x_{pop}, x) + h(x)}{\min} \{x\in \mathbf{V}_{rev\_near}\}$ \\ \label{alg:ForSrchExploitND: x_near}

$\mathcal{E}_{new} \gets \reverseAlg{\mathtt{BestInputPropND}} \paren{x_{pop}, x_{best},\mathcal{U}, T_{max}}$ \\ \label{alg:ForSrchExploitND: bestInputProp}

\Return {$\mathcal{E}_{new}$} \\ \label{alg:ForSrchExploitND: returnEdge}
\end{algorithm}

\revisionAlgorithm
\begin{algorithm}[H]
    \caption{$\reverseAlg{\mathtt{GABRRT}}(x_{start}, x_{goal}, \mathcal{X}_{goal}, \mathcal{U}, T_{max}, \delta_{hr}, \mathcal{P})$} \label{alg:GABRRT}
    $\mathcal{G}_{for} \gets \{\mathbf{V}_{for} \gets \{x_{start}\}, \mathbf{E}_{for} \gets \{\}\}$ \\ \label{alg:GABRRT:initGFor}
     $\reverseAlg{\mathcal{G}^{\mathtt{ND}}_{rev}} \gets \{\mathbf{V}_{rev} \gets \{x_{goal}\}, \reverseAlg{\mathbf{E}^{\mathtt{ND}}_{rev}} \gets \{\}\}$ \\ \label{alg:GABRRT:initGrev}
     $\mathbf{Q} \gets \{\}$ \\ \label{alg:GABRRT:initQ}
    
    \For{$k \gets 1$ to $M_{iter}$}
    {
        % Reverse maneuver
        \tcp{Reverse Tree Expansion (lines 12-30)}
        $r_{k} \gets \min(\gamma {\paren{\frac{\log{\abs{\mathbf{V_{rev}}}}}{\abs{\mathbf{V_{rev}}}}}}^{\frac{1}{d + 1}}, \, \delta_{hr})$

         \reverseAlg{$\mathcal{E}^{\mathtt{ND}}_{rev} \gets \mathtt{RevSrchFastExploreND}(\reverseAlg{\mathcal{G}^{\mathtt{ND}}_{rev}}, \mathcal{U}, T_{max})$} \\ \label{alg:GABRRT:ReverseExpansion}
         
         \If {$\mathtt{not \, CollisionCheck}(\reverseAlg{\mathcal{E}^{\mathtt{ND}}_{rev}})$ \label{alg:GABRRT:collisioncheckReverse}}
         {
             
              $x_{rev} \gets \reverseAlg{\mathcal{E}^{\mathtt{ND}}_{rev}}.\mathtt{initialNode()}$ \\
              \label{alg:GABRRT:AssignXRev}
              \reverseAlg{$\mathbf{E}^{\mathtt{ND}}_{rev} \gets \mathbf{E}^{\mathtt{ND}}_{rev} \cup \{\mathcal{E}^{\mathtt{ND}}_{rev}\}$} \\ \label{alg:GABRRT:updateErev}
             $\mathbf{V}_{rev} \gets \mathbf{V}_{rev} \cup \{x_{rev} \}$ \\ \label{alg:GABRRT:updateVrev}
             
              \reverseAlg{$\mathtt{UpdatePQueueND} \paren{\mathcal{G}_{for}, \mathbf{Q},  x_{rev}, r_{k}}$} \\ \label{alg:GABRRT:updatePriorityQueue}
         }
    
        \tcp{Forward Tree Expansion (lines 10-28)}
         $ q \gets \mathcal{P}(k)$ \\ \label{alg:GABRRT:updateExploitRatio}
         
         $c_{rand} \sim \mathit{U}\paren{[0, 1]}$ \\ \label{alg:GABRRT:initp}

        \If {$c_{rand} < q$ \label{alg:GABRRT:exploitRatio}}
        {
        $\mathcal{E}_{for} = \mathtt{NULL}$ \\ \label{alg:GABRRT:Enull}
        
        \tcp{Pop node from $\mathbf{Q}$}
        $x_{pop} \gets \mathtt{Pop}\paren{\mathbf{Q}}$ \\ \label{alg:GABRRT:PopQ}
        
        \If {$x_{pop} \neq \mathtt{NULL}$}
        {
                \reverseAlg{ $\mathcal{E}_{for} \gets 
                \mathtt{ForSrchExploitND}\paren{\reverseAlg{\mathcal{G}^{\mathtt{ND}}_{rev}}, \mathcal{U}, T_{max}, x_{pop}, r_{k}}$} \\ \label{alg:GABRRT:ForwardSearchExploit}
        }
        
        \If {$\mathcal{E}_{for} = \mathtt{NULL}$}
            {
        $\mathcal{E}_{for} \gets \mathtt{ForSrchFastExplore}\paren{\mathcal{G}_{for}, \mathcal{U}, T_{max}}$ \\ \label{alg:GABRRT:ForwardSearchFastExplore}
            }
        }
  
       \If {$c_{rand} \geq q \;\; \mathtt{or} \;\; \mathcal{E}_{for} = \mathtt{NULL}$}
       {
        \tcp{For Probabilistic completeness}
        $\mathcal{E}_{for} \gets \mathtt{ForSrchRandomExplore}\paren{\mathcal{G}_{for}, \mathcal{U}, T_{max}}$ \\ \label{alg:GABRRT:ForwardSearchRandomExplore}
       }
        
        \If {$\mathcal{E}_{for} \neq \mathtt{NULL}$}
        {
         
         \If {$\mathtt{not \, CollisionCheck}(\mathcal{E}_{for})$ \label{alg:GABRRT:collisioncheckForward}}
         {
             $x_{for} \gets \mathcal{E}_{for}.\mathtt{finalNode()}$ \\
              \label{alg:GABRRT:AssignXfor}
             $\mathbf{E}_{for} \gets \mathbf{E}_{for} \cup \{\mathcal{E}_{for}\}$ \\ \label{alg:GABRRT:updateEfor}
             $\mathbf{V}_{for} \gets \mathbf{V}_{for} \cup \{x_{for} \}$ \\ \label{alg:GABRRT:updateVfor}
            
            \If {$\mathtt{goalRegionReached}(\mathcal{X}_{goal}, x_{for})$ \label{alg:GABRRT:goalRegionReached}}
            {
                \Return {$\mathtt{Path}(\mathcal{G}_{for}, x_{start}, x_{for})$} \\ \label{alg:GABRRT:Path}
            }
            
             %\tcp{Push node to $\mathbf{Q}$}
            \ \reverseAlg{ $\mathtt{insertToPQueueND} \paren{\reverseAlg{\mathcal{G}^{\mathtt{ND}}_{rev}}, \mathbf{Q},  x_{for}, r_{k}}$} \\ \label{alg:GABRRT:insertToPriorityQueue}
         }
        }
    }
 \Return {$\mathtt{NULL}$}
\end{algorithm}
\end{minipage}}
\vspace{-7mm}
\end{figure}
\endgroup

\subsection{\revision{Generalized Asymmetric Bidirectional RRT (GABRRT)}}
\label{apdx:GABRRTAlgorithms}

\noindent \revision{
GABRRT appears in \algoref{alg:GABRRT} and its subroutines in Algorithms 11, 13-16. Differences between GBRRT and GABRRT appear brown. $\mathcal{G}^{\mathtt{ND}}_{rev}$, $\mathcal{E}^{\mathtt{ND}}_{rev}$, $\mathbf{E}^{\mathtt{ND}}_{rev}$, $d^{\mathtt{ND}}_{X}$ denote the non-dynamical reverse tree, reverse trajectory, edge set, and distance functions, respectively.}
$\mathtt{Extend}$ returns a straight line trajectory $\mathcal{E}^{\mathtt{ND}}_{new}$ starting at $x_{near}$ and ending at a point on the line between $x_{near}$ and $x_{rand}$. The length  ${\|\mathcal{E}^{\mathtt{ND}}_{new}\| \leq \epsilon}$, where $\epsilon$ is a user-defined value. $\mathtt{ForSrchFastExplore}$ (\algoref{alg:ForSrchFastExplore}) and $\mathtt{ForSrchRandomExplore}$ (\algoref{alg:ForSrchRandomExplore}) appear in \secref{sec:specificImplementation}.

\subsection{\revisionTwo{Computational Complexity and Shrinking $d$-Ball Rate}} \label{sub:RuntimeComplexity}

\newcommand{\forwardTreeCount}{|\mathbf{V}^{i}_{\mathrm{for}}|}
\newcommand{\reverseTreeCount}{|\mathbf{V}^{i}_{\mathrm{rev}}|}

\revisionTwo{%
In this section, we derive a worst case runtime bound and then discuss how the function used to shrink the neighborhood $d$-Ball affects the expected runtime of iteration $i$.   
GBRRT and GABRRT require two data structures. A dynamic space partitioning tree is used to perform dynamic insertion (DI), nearest neighbor (NN), and range queries (RS), while a priority heap stores nodes based on priority key values and supports Push, Pop, and Update operations \cite{samet1990design, cormen2009introduction}. Our runtime analysis assume these subroutines have the following runtime bounds, where $N_{T}$ is the number of points in the tree.}

\begin{assumption} \label{assumption:A}
\revisionTwo{DI takes expected time $\mathcal{O}\paren{\log{}N_{T}}$.}
\label{assum:insertion}
\end{assumption}
\begin{assumption} \label{assumption:B}
\revisionTwo{NN takes expected time $\mathcal{O}\paren{\log{}N_{T}}$.}
\label{assum:NN}
\end{assumption}

\revisionTwo{These bounds are met by many algorithms \cite{cormen2009introduction}. GBRRT does not require exact NN and RS queries and, similar to other sampling based motion planning algorithms \cite{karaman2011sampling}, can also work with `approximate' solutions to these operations \cite{arya1998optimal,arya2000approximate,Eppstein.eatal.SCG05}. For example, the Skip Quadtree  \cite{Eppstein.eatal.SCG05} supports a special kind of approximate RS query in which all points within range $r_{k}$ are returned plus {\it possibly some additional points} that are further than $r_{k}$ but within $r_{k}(1 +\varepsilon)$ for a user defined $\varepsilon$. The Skip Quadtree is dynamic, allowing for both new point insertions and look-ups in expected time $\log N_{T}$. The approximate range query returns $\nu$ points in expected time $\mathcal{O}(\varepsilon^{1-d} \log(N_{T}) + \nu)$. The Skip Quadtree also meets Assumptions 1 and 2.}

\begin{assumption}  \label{assumption:C}
\revisionTwo{RS query (or approximate RS query) in dynamic space partitioning tree takes $\mathcal{O}\paren{c_d \log{}N_{T} + \nu}$ expected runtime and returns $\nu$ neighbors in expectation, where $c_d$ is a (possibly user chosen) parameter that may depend on $d$ but is otherwise constant.}
\label{assum:RS}
\end{assumption}

\revisionTwo{While various implementations of  priority heaps exist \cite{cormen2009introduction}, many perform  push and pop operations in $\mathcal{O}\paren{\log{}N_{Q}}$ runtime, where $N_{Q}$ is the number of points in the queue \cite{cormen2009introduction}.
}
\begin{assumption} \label{assumption:D}
\revisionTwo{Push, Pop, and Update operations of the priority queue each take time  $\mathcal{O}\paren{\log{}N_{Q}}$.}
\label{assum:PushPop}
\end{assumption}

\revisionTwo{A trivial worst case runtime for GBRRT and GABRRT is given below in Theorem~\ref{theorem:gbrrtWorstCase} for the case when Assumptions \ref{assumption:A}-\ref{assumption:D} hold.  We denote $\forwardTreeCount$ and  $\reverseTreeCount$ as the number of forward and reverse tree nodes at the start of iteration $i$, respectively. $n$ is the total number of nodes stored in both trees at the start of iteration $i$. Note that $n = \forwardTreeCount + \reverseTreeCount \leq 2i$}.

\begin{theorem}\label{theorem:gbrrtWorstCase}
\revisionTwo{The worst case runtime of GBRRT and GABRRT is $\mathcal{O}\paren{n^2}$.}
\end{theorem}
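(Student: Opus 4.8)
The plan is to bound the cost of a single iteration $i$ of \algoref{alg:GBRRT} (and identically \algoref{alg:GABRRT}) under Assumptions~\ref{assumption:A}--\ref{assumption:D}, and then accumulate these per-iteration costs over the run. Throughout I would write $N_T$ for the size of whichever space-partitioning tree is queried and $N_Q$ for the size of $\mathbf{Q}$; since neither tree nor the queue ever holds more than $n$ nodes at the start of iteration $i$, we have $N_T, N_Q = \mathcal{O}\paren{n}$, so every $\log N_T$ and $\log N_Q$ factor is $\mathcal{O}\paren{\log n}$.

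First I would dispatch the cheap work. The reverse expansion (lines 6--11) and the forward edge-generation routines (\algoref{alg:ForSrchExploit}, \algoref{alg:ForSrchFastExplore}, \algoref{alg:ForSrchRandomExplore}) each perform at most $N_B$ integrations of bounded duration $T_{max}$ followed by collision checks; because $N_B$ and $T_{max}$ are constants independent of $n$, this is $\mathcal{O}\paren{1}$ in $n$. The nearest-neighbor queries used by fast/random exploration and by $\mathtt{insertToPriorityQueue}$ and $\mathtt{updatePriorityQueue}$ each cost $\mathcal{O}\paren{\log n}$ by Assumption~\ref{assumption:B}; the dynamic insertion of a new node costs $\mathcal{O}\paren{\log n}$ by Assumption~\ref{assumption:A}; and each $\mathtt{Push}$, $\mathtt{Pop}$, or $\mathtt{Update}$ on $\mathbf{Q}$ costs $\mathcal{O}\paren{\log n}$ by Assumption~\ref{assumption:D}. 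The dominant term comes only from exploitation: in \algoref{alg:ForSrchExploit} (\alglineref{alg:ForSrchExploit: NearestVertices}) the range query with radius $r_{k} \leq \delta_{hr}$ returns $\mathbf{V}_{rev\_near}$ of size $\nu$, costing $\mathcal{O}\paren{c_d \log n + \nu}$ by Assumption~\ref{assumption:C}, after which the $\argmin$ over $\mathbf{V}_{rev\_near}$ (\alglineref{alg:ForSrchExploit: x_near}) costs a further $\mathcal{O}\paren{\nu}$. In the worst case every reverse-tree node lies within $r_{k}$, so $\nu = \mathcal{O}\paren{n}$, and the iteration cost is $\mathcal{O}\paren{n}$, which absorbs all the $\mathcal{O}\paren{\log n}$ terms above.

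Finally I would accumulate. Each iteration adds at most one forward and one reverse node, so reaching $n$ stored nodes entails $\Theta\paren{n}$ iterations, formalized by the relation $n \leq 2i$; multiplying the $\mathcal{O}\paren{n}$ worst-case per-iteration cost by the $\mathcal{O}\paren{n}$ iterations yields the claimed $\mathcal{O}\paren{n^2}$ bound. The main obstacle is the worst-case accounting of the range query: one must recognize that the \emph{output-sensitive} $\nu$ term of Assumption~\ref{assumption:C}, together with the linear $\argmin$ scan it feeds, is the \emph{unique} source of super-logarithmic per-iteration cost, and that $\nu$ can be as large as $\Theta\paren{n}$ precisely before $r_{k}$ shrinks below $\delta_{hr}$ (or when nodes cluster densely). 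The remaining care is bookkeeping---confirming that the queue operations, whose size is also $\mathcal{O}\paren{n}$, stay at $\mathcal{O}\paren{\log n}$ and cannot exceed this bound, and that identifying the iteration count with the node count via $n \leq 2i$ is what converts the per-iteration estimate into the stated $\mathcal{O}\paren{n^2}$ total.
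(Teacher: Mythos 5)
Your proposal is correct and follows essentially the same route as the paper's proof: the per-iteration cost is dominated by the range query (and the scan over its $\nu$ returned nodes), which in the worst case satisfies $\nu \leq n$, and summing this $\mathcal{O}\paren{n}$ bound over the $\mathcal{O}\paren{n}$ iterations gives $\mathcal{O}\paren{n^2}$. The paper states this more tersely by taking $r_k = \infty$ so every neighbor query returns the entire opposing tree; your added bookkeeping of the logarithmic operations is consistent with, and subsumed by, that argument.
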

\begin{proof}
\revisionTwo{In a (worse than) worst case $\delta_{hr} = \gamma = \infty$ such that $r_{k} = \infty$ and every query for forward (resp.\ reverse) tree neighbors returns the entire forward (resp.\ reverse) tree, which contains $\forwardTreeCount$ (resp.\ $\reverseTreeCount$) nodes at iteration $i$. An upper bound on the time required per iteration is obtained by observing $\nu \leq \forwardTreeCount + \reverseTreeCount = n$. Summing over iterations $i = 1, \ldots, n$ leads to a worst case bound of $\mathcal{O}\paren{n^2}$.}
\end{proof}

\begin{remark}
\revisionTwo{
The {\it worst case} $\mathcal{O}\paren{n^2}$ time is identical to that of many other sampling based motion planning algorithms. Such a worst case is typically only realized in degenerative cases, and average runtimes tend to be much lower in practice.}
\end{remark}

\revisionTwo{Before deriving {\it expected} per iteration runtime bounds, it is useful to discuss what we do {\it not} know about the distribution of nodes stored within the forward and backward trees.}

\revisionTwo{Expected runtime analysis of sampling based motion planning algorithms often focuses on the geometric case (without system dynamics) where the set of potential neighbors already in the tree can be assumed to be generated from a uniform random distribution over the free space, at least in the limiting case as ${n \rightarrow \infty}$. In contrast, the nodes stored in the forward and reverse trees of GBRRT and the forward tree of GABRRT are selected using a combination of randomness, forward or backward integration of system dynamics (or, alternatively, maneuver libraries). Moreover, a portion of all nodes' position in the forward tree are influenced by the reverse tree focusing heuristic. Whether or not the nodes within either tree resemble a set of points drawn uniformly at random and i.i.d.\ is, {\it at best}, system/scenario dependent. It remains an open question whether or not the sampling distributions realized by GBRRT and GABRRT converge toward the uniform distribution (even for the special case of geometric planning).}

\revisionTwo{
When calculating expected runtime, {\it if} points can be assumed to be drawn i.i.d from a uniform distribution, {\it then} many existing proofs \cite{karaman2011sampling,otte2016rrtx} take advantage of the fact that the expected number of nodes $\mathbb{E}(| \mathrm{V}_{\mathrm{near}} |)$ in a $d$-ball is proportional to the ball's volume $V_{d}(r)$. Given $N_T$ nodes drawn uniformly random and i.i.d.\ from $X_{free}$ then 
$\mathbb{E}(| \mathrm{V}_{\mathrm{near}, r} |) \sim N_T \frac{V_{d}(r)}{||X_{free}||}
$, where  $||X_{free}||$ is the volume of the free space, and where the free space is assumed to have ``nice'' boundaries, e.g., locally Lipschitz continuous boundaries for some positive finite Lipschitz constant.
In a static environment, $||X_{free}||$ is constant. This discussion leads to the following Proposition~\ref{prop:nodesInBallUniform}.}

\begin{proposition} \label{prop:nodesInBallUniform}
\revisionTwo{If points are sampled uniformly at random and i.i.d. from $X_{free}$, then $\mathbb{E}(| \mathrm{V}_{\mathrm{near}, r} |) = \Theta(N_T V_{d}(r))
$.}
\end{proposition}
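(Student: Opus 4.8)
The plan is to reduce the expected count to a single-point inclusion probability and then sandwich a volume ratio between constant multiples of $V_{d}(r)$. First I would apply linearity of expectation: because the $N_T$ points are drawn independently and uniformly from $X_{free}$, the expected number that fall inside the query ball $B_r$ of radius $r$ equals $N_T$ times the probability that one point lands in $B_r \cap X_{free}$. Uniformity makes this single-point probability exactly the volume ratio, so that
\[ \mathbb{E}(|\mathrm{V}_{\mathrm{near}, r}|) = N_T \, \frac{\|B_r \cap X_{free}\|}{\|X_{free}\|}. \]
Since the environment is static, $\|X_{free}\|$ is a fixed positive constant, and the entire claim reduces to showing that $\|B_r \cap X_{free}\|$ is $\Theta(V_{d}(r))$, where $V_{d}(r) = \|B_r\|$.

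The upper bound is immediate: the intersected region can be no larger than the whole ball, so $\|B_r \cap X_{free}\| \le V_{d}(r)$, which gives $\mathbb{E}(|\mathrm{V}_{\mathrm{near}, r}|) \le N_T V_{d}(r)/\|X_{free}\| = O(N_T V_{d}(r))$, the hidden constant being $1/\|X_{free}\|$. This direction uses nothing beyond the positivity and finiteness of $\|X_{free}\|$.

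For the lower bound I would invoke the ``nice boundary'' hypothesis stated just before the proposition, namely that $\partial X_{free}$ is locally Lipschitz with a positive finite Lipschitz constant. The only obstruction to $\|B_r \cap X_{free}\|$ being a constant fraction of $V_{d}(r)$ is a ball whose center sits near the boundary, so that part of the ball pokes outside the free space. A locally Lipschitz boundary satisfies an interior cone (uniform corkscrew) condition: there exist constants $c_0 \in (0,1]$ and $r_0 > 0$, depending only on the Lipschitz constant, such that for every $x \in X_{free}$ and every $r \le r_0$ we have $\|B_r(x) \cap X_{free}\| \ge c_0 V_{d}(r)$. Substituting this bound yields $\mathbb{E}(|\mathrm{V}_{\mathrm{near}, r}|) \ge N_T c_0 V_{d}(r)/\|X_{free}\| = \Omega(N_T V_{d}(r))$, and combining the two directions gives $\Theta(N_T V_{d}(r))$.

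The hard part will be the lower bound, specifically making the interior-cone constant $c_0$ uniform over all admissible ball centers: balls strictly interior to $X_{free}$ trivially contribute the full volume $V_{d}(r)$, but balls straddling $\partial X_{free}$ require the Lipschitz geometry to guarantee a boundary-independent fraction remains inside. I would note that in the regime relevant to the algorithm the radius $r = r_{k}$ shrinks toward zero as $n \to \infty$, so for large $n$ we always have $r_{k} \le r_0$ and the cone condition applies without further caveat; this shrinking-radius observation is the only place the boundary-regularity assumption is genuinely needed, and it is exactly what keeps $c_0$ from degenerating.
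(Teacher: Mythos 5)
Your proposal is correct, and its core is the same volume-ratio computation that the paper relies on; the difference is that the paper never writes a proof for this proposition at all --- it presents the statement as an immediate consequence of the preceding discussion, which simply asserts $\mathbb{E}(|\mathrm{V}_{\mathrm{near},r}|) \sim N_T V_d(r)/\|X_{free}\|$ for uniform i.i.d.\ samples in a static environment with ``nice'' (locally Lipschitz) boundaries. Your write-up formalizes exactly that assertion via linearity of expectation, and then goes further than the paper by isolating the one place where the claim is not trivial: the lower bound for balls that straddle $\partial X_{free}$, where $\|B_r \cap X_{free}\|$ can be strictly smaller than $V_d(r)$. Invoking an interior-cone/corkscrew constant $c_0$ from the Lipschitz hypothesis, and noting that the shrinking radius $r_k$ eventually falls below the cone-condition scale $r_0$, is precisely the content hiding behind the paper's ``nice boundaries'' caveat, so your argument buys a rigorous two-sided bound where the paper settles for a proportionality heuristic. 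One small caution: as you implicitly recognize, the $\Theta$ bound genuinely requires restricting to small $r$ (for $r$ exceeding the diameter of $X_{free}$ the intersection volume saturates at $\|X_{free}\|$ while $V_d(r)$ keeps growing), so the shrinking-radius remark is not merely a convenience but is needed for the statement to hold as written.
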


\begin{lemma} \label{remark:approxQueryBound}
\revisionTwo{Given range $r$ and $N_T$ points sampled uniformly at random i.i.d., and assuming $\varepsilon$ and $d$ are both constant, the expected value of $\nu$ for the approximate RS query described above is $\mathbb{E}(\nu(r,N_T)) = \Theta(N_T V_{d}(r))$.}
\end{lemma}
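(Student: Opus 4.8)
The plan is to sandwich $\nu$ between the cardinalities of two concentric $d$-balls and then invoke \propositionref{prop:nodesInBallUniform} on each bound. By the defining guarantee of the approximate range query, every point within distance $r$ of the query point is necessarily returned, while every returned point necessarily lies within distance $r(1+\varepsilon)$. Writing $\mathrm{V}_{\mathrm{near}, r}$ for the set of tree points inside the $d$-ball of radius $r$ intersected with $\mathcal{X}_{free}$, this yields the inequality, holding pointwise over every realization of the random sample,
\begin{equation*}
|\mathrm{V}_{\mathrm{near}, r}| \;\leq\; \nu(r, N_T) \;\leq\; |\mathrm{V}_{\mathrm{near}, r(1+\varepsilon)}|.
\end{equation*}

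First I would take expectations across this chain, using monotonicity of expectation, to obtain
\begin{equation*}
\mathbb{E}(|\mathrm{V}_{\mathrm{near}, r}|) \;\leq\; \mathbb{E}(\nu(r, N_T)) \;\leq\; \mathbb{E}(|\mathrm{V}_{\mathrm{near}, r(1+\varepsilon)}|).
\end{equation*}
Applying \propositionref{prop:nodesInBallUniform} to the left endpoint gives $\mathbb{E}(|\mathrm{V}_{\mathrm{near}, r}|) = \Theta(N_T V_d(r))$, which already furnishes the lower bound. Applying the same proposition to the right endpoint gives $\mathbb{E}(|\mathrm{V}_{\mathrm{near}, r(1+\varepsilon)}|) = \Theta(N_T V_d(r(1+\varepsilon)))$.

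The key remaining step is to show this upper bound collapses to the same order as the lower bound. Since a $d$-ball's volume scales as the $d$-th power of its radius, $V_d(r(1+\varepsilon)) = (1+\varepsilon)^d V_d(r)$, and because $\varepsilon$ and $d$ are held constant by hypothesis, the factor $(1+\varepsilon)^d$ is a constant that is absorbed into the $\Theta$ notation. Hence $\mathbb{E}(|\mathrm{V}_{\mathrm{near}, r(1+\varepsilon)}|) = \Theta(N_T V_d(r))$ as well, and squeezing between matching lower and upper bounds yields $\mathbb{E}(\nu(r, N_T)) = \Theta(N_T V_d(r))$.

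The argument is essentially a squeeze, so I anticipate no serious obstacle; the one point requiring care is making explicit that $(1+\varepsilon)^d$ may be hidden inside $\Theta(\cdot)$ \emph{precisely because} $\varepsilon$ and $d$ are treated as constants --- were either allowed to grow, the collapse would fail and the upper bound would retain the extra $(1+\varepsilon)^d$ factor. A secondary subtlety, already absorbed by the hypotheses inherited through \propositionref{prop:nodesInBallUniform}, is that both balls are intersected with $\mathcal{X}_{free}$; the assumed locally Lipschitz boundary of $\mathcal{X}_{free}$ ensures the proportionality to volume persists for the enlarged ball of radius $r(1+\varepsilon)$ without changing the order.
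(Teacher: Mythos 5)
Your proof is correct and follows essentially the same route as the paper: the paper's own (terser) proof simply records the volume ratio $(1+\varepsilon)^d$ between the two concentric balls and notes it is constant, leaving the sandwich $|\mathrm{V}_{\mathrm{near},r}| \leq \nu \leq |\mathrm{V}_{\mathrm{near},r(1+\varepsilon)}|$ and the appeal to Proposition~\ref{prop:nodesInBallUniform} implicit, whereas you spell those steps out explicitly. No gaps.
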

\begin{proof}
\revisionTwo{It is a useful fact that the ratio between the volume of a \mbox{$d$-ball} of radius $r(1 + \varepsilon)$ and the volume of a $d$-ball of radius $r$ is:
$
\left(\frac{\pi^{d/2}}{\Gamma({1 + d/2})}(r_{k}(1 + \varepsilon))^{d}\right)  /  \left(\frac{\pi^{d/2}}{\Gamma({1 + d/2})}r_{k}^{d}\right) = (1 + \varepsilon)^{d}
$, where $\Gamma(\cdot)$ is the gamma function. 
The volume ratio $(1 + \varepsilon)^{d}$ does not depend on $r_{k}$ and, once $d$ is chosen, is a constant.}
\end{proof}

\revisionTwo{
If the point sampling process {\it cannot} be modeled as uniformly at random and i.i.d., then it is possible that $\mathbb{E}(| \mathrm{V}_{\mathrm{near}, r} |) = \Omega(N_T V_{d}(r))$. For example, if new points are more likely to be sampled in the vicinity of existing points such that, in expectation, greater than half of all points have greater than  $N_T \frac{V_{d}(r)}{||X_{free}||}$ neighbors. Since the value of $\mathbb{E}(| \mathrm{V}_{\mathrm{near}, r} |)$ achieved in GBRRT/GABRRT is an open question, we shall derive lower bounds on runtime for the $\mathbb{E}(| \mathrm{V}_{\mathrm{near}, r} |) = \Omega(N_T V_{d}(r))$ case.
How $\mathbb{E}(| \mathrm{V}_{\mathrm{near}, r} |) = \Omega(N_T V_{d}(r))$ may affect $\nu$ is important for our analysis.
}

\begin{lemma} \label{lemma:nodesInBall}
\revisionTwo{If $\mathbb{E}(| \mathrm{V}_{\mathrm{near}, r} |) = \Omega(N_T V_{d}(r))$ then $\mathbb{E}(\nu(r,N_T)) = \Omega(N_T V_{d}(r))$.} \hfill
\end{lemma}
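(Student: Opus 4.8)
The plan is to exploit the fact that the approximate range-search query is guaranteed to return a \emph{superset} of the exact $r$-neighborhood, so the desired lower ($\Omega$) bound transfers directly from the assumed lower bound on $\mathbb{E}(|\mathrm{V}_{\mathrm{near}, r}|)$. Recall from the description of the approximate RS query (the Skip Quadtree of \cite{Eppstein.eatal.SCG05}, invoked in Assumption~\ref{assumption:C}) that every point lying within distance $r$ of the query point is returned, together with \emph{possibly some additional} points lying in the annulus between $r$ and $r(1+\varepsilon)$. The key observation is therefore a \emph{deterministic} (per-realization) containment: the set returned by the approximate query always contains the exact $r$-neighborhood, whence the count $\nu(r,N_T)$ of returned points satisfies the pointwise inequality $\nu(r,N_T) \ge |\mathrm{V}_{\mathrm{near}, r}|$ for every outcome of the random point set.

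First I would state this containment explicitly and note that it holds for each realization (not merely in expectation), since the approximate query by construction never omits a true $r$-neighbor. Next, I would apply monotonicity of expectation to the pointwise inequality to obtain $\mathbb{E}(\nu(r,N_T)) \ge \mathbb{E}(|\mathrm{V}_{\mathrm{near}, r}|)$. Finally, I would substitute the hypothesis $\mathbb{E}(|\mathrm{V}_{\mathrm{near}, r}|) = \Omega(N_T V_{d}(r))$ to conclude $\mathbb{E}(\nu(r,N_T)) = \Omega(N_T V_{d}(r))$, which is exactly the claim.

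There is essentially no technical obstacle here; the entire content of the lemma is the recognition that the approximate query returns a superset of the exact neighbor set, so no lower bound is \emph{lost} by using the approximate query in place of an exact one. The only point requiring care is to verify that the quantity $\nu$ is indeed the total count of points \emph{returned} (including the extra annulus points), as defined in Assumption~\ref{assumption:C}, rather than the count of exact neighbors; with that reading the inequality $\nu \ge |\mathrm{V}_{\mathrm{near}, r}|$ is immediate. This should be contrasted with Lemma~\ref{remark:approxQueryBound}, where the two-sided $\Theta$ bound in the uniform i.i.d.\ case additionally required an \emph{upper} bound, supplied there by the constant volume ratio $(1+\varepsilon)^{d}$ between the $r(1+\varepsilon)$-ball and the $r$-ball; since the present statement asserts only the $\Omega$ direction, the superset inclusion alone is sufficient and the volume-ratio argument is not needed.
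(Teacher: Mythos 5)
Your proposal is correct and follows essentially the same route as the paper's proof: the paper likewise observes that by construction of the approximate RS query $\nu(r,N_T) \geq |\mathrm{V}_{\mathrm{near},r}|$ pointwise, takes expectations, and then transfers the $\Omega$ bound (spelled out there via the $\liminf$ characterization of Bachmann--Landau notation). Your additional remarks on why the volume-ratio argument of the companion lemma is not needed here are accurate but not part of the paper's argument.
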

\begin{proof}
\revisionTwo{By definition, using Bachmann–Landau notation, ${{\mathbb{E}(| \mathrm{V}_{\mathrm{near}, r} |) = \Omega(N_T V_{d}(r))} \iff {{\displaystyle{\liminf_{N_T \rightarrow \infty}}} \frac{\mathbb{E}(| \mathrm{V}_{\mathrm{near}, r} |) }{N_T V_{d}(r)}  > 0}}$. 
By construction of the approximate range search (RS) query ${\nu(r,N_T) \geq | \mathrm{V}_{\mathrm{near}, r} |}$ and so ${\mathbb{E}(\nu(r,N_T)) \geq \mathbb{E}(| \mathrm{V}_{\mathrm{near}, r} |)}$. Thus, 
${{\displaystyle{\liminf_{N_T \rightarrow \infty}}} \frac{\mathbb{E}(\nu(r,N_T)) }{N_T V_{d}(r)}  > 0}$ and so by definition, using Bachmann–Landau notation,  $\mathbb{E}(\nu(r,N_T)) = \Omega(N_T V_{d}(r))$.}
\end{proof}

\revisionTwo{We now characterize the expected runtime required for the forward and reverse search portions of GBRRT and GABRRT in iteration $i$.
Note $\abs{\mathbf{Q}} \leq \forwardTreeCount$ at iteration $i$, where $\abs{\mathbf{Q}}$ is the numbers of nodes in the priority queue $\mathbf{Q}$.}

\begin{lemma}\label{lemma:gbrrtForward}
Given Assumptions \ref{assumption:A}-\ref{assumption:D}, if $\mathbb{E}(\nu(r,\reverseTreeCount)) = \mathcal{O}\parenShort{\log{}\reverseTreeCount}$, then the expected runtime of forward tree operations during iteration $i$ is $\mathcal{O}\parenShort{\log{}\forwardTreeCount + \log{}\reverseTreeCount}$; and alternatively, if $\mathbb{E}(\nu(r,\reverseTreeCount)) =  \Omega \parenShort{\log{}\reverseTreeCount}$, then the expected runtime of forward tree operations during iteration $i$  is $\Theta(\nu(r, \reverseTreeCount))$.
\end{lemma}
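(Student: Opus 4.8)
The plan is to charge each forward-tree operation performed in a single iteration $i$ of \algoref{alg:GBRRT} against the per-subroutine bounds of Assumptions~\ref{assumption:A}--\ref{assumption:D}, sum them, and take expectations. First I would itemize the work. With probability $q = \mathcal{P}(k)$ the iteration enters the exploitation branch, which pops a node from $\mathbf{Q}$ in time $\mathcal{O}\parenShort{\log|\mathbf{Q}|}$ (Assumption~\ref{assumption:D}) and invokes $\mathtt{ForSrchExploit}$ (\algoref{alg:ForSrchExploit}); the dominant cost there is the range query collecting $\mathbf{V}_{rev\_near}$ from $\mathcal{G}_{rev}$ within the iteration radius $r = r_{k}$, which by Assumption~\ref{assumption:C} costs $\mathcal{O}\parenShort{c_d\log\reverseTreeCount + \nu}$ and returns $\nu = \nu(r,\reverseTreeCount)$ neighbours, after which the $\argmin$ over $\mathbf{V}_{rev\_near}$ spends a further $\Theta(\nu)$. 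The exploration routines $\mathtt{ForSrchFastExplore}$ and $\mathtt{ForSrchRandomExplore}$ each do one nearest-neighbour query on $\mathcal{G}_{for}$, costing $\mathcal{O}\parenShort{\log\forwardTreeCount}$ (Assumption~\ref{assumption:B}). After a valid edge is produced, the dynamic insertion of $x_{for}$ into $\mathcal{G}_{for}$ costs $\mathcal{O}\parenShort{\log\forwardTreeCount}$ (Assumption~\ref{assumption:A}), the nearest-neighbour query on $\mathcal{G}_{rev}$ inside $\mathtt{insertToPriorityQueue}$ (\algoref{alg:insertToPriorityQueue}) costs $\mathcal{O}\parenShort{\log\reverseTreeCount}$, and the heap push costs $\mathcal{O}\parenShort{\log|\mathbf{Q}|}$. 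Every remaining step --- sampling $c_{rand}$, evaluating $\mathcal{P}(k)$, the $N_{B}$ Monte-Carlo propagations with their $\argmin$, collision checks, and the goal test --- is bounded by a constant independent of the tree sizes.

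Next I would aggregate using $|\mathbf{Q}| \le \forwardTreeCount$ and the fact that $c_d$ and $q$ are constants once $d$ and $\mathcal{P}$ are fixed, so the expected per-iteration forward runtime is $\mathcal{O}\parenShort{\log\forwardTreeCount + \log\reverseTreeCount + \mathbb{E}(\nu(r,\reverseTreeCount))}$, where the neighbour term carries a factor $q$ that is absorbed into the asymptotic notation. In the first case the hypothesis $\mathbb{E}(\nu(r,\reverseTreeCount)) = \mathcal{O}\parenShort{\log\reverseTreeCount}$ collapses the neighbour term into the logarithmic ones, giving $\mathcal{O}\parenShort{\log\forwardTreeCount + \log\reverseTreeCount}$ directly. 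In the second case I would prove the tight $\Theta$ bound from both sides: the upper bound is the same sum, once the logarithmic terms are shown to be dominated by $\mathbb{E}(\nu)$; the matching lower bound follows because, whenever the exploitation branch is taken (an event of constant probability $q > 0$), the $\argmin$ over $\mathbf{V}_{rev\_near}$ must examine all $\nu$ returned neighbours, so the runtime is $\Omega(\nu)$ and hence the expected runtime is $\Omega\parenShort{q\,\mathbb{E}(\nu)} = \Omega\parenShort{\mathbb{E}(\nu)}$.

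The main obstacle is absorbing the $\log\forwardTreeCount$ term in the second case, since the target $\Theta(\nu(r,\reverseTreeCount))$ (read as $\Theta$ of the expected neighbour count) leaves no explicit slack for it and the hypothesis $\mathbb{E}(\nu) = \Omega\parenShort{\log\reverseTreeCount}$ bounds only $\log\reverseTreeCount$. I would close the gap by noting that each iteration of \algoref{alg:GBRRT} appends at most one node to each tree --- one reverse node on \alglinerefs{alg:GBRRT:updateErev}{alg:GBRRT:updateVrev} and one forward node on \alglinerefs{alg:GBRRT:updateEfor}{alg:GBRRT:updateVfor} --- so $\forwardTreeCount, \reverseTreeCount \le i + 1$, and under the standing assumption that both trees keep growing at rates bounded away from zero they stay within a constant factor, yielding $\log\forwardTreeCount = \Theta\parenShort{\log\reverseTreeCount} = \mathcal{O}\parenShort{\mathbb{E}(\nu)}$. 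Combining this domination with the two-sided estimate gives the expected forward runtime $\Theta\parenShort{\mathbb{E}(\nu(r,\reverseTreeCount))}$, which is the claimed bound.
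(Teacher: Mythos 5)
Your proof is correct and follows essentially the same route as the paper's: itemize the forward-search operations, charge each against Assumptions~\ref{assumption:A}--\ref{assumption:D} (NN and DI on the trees, push/pop on $\mathbf{Q}$ with $|\mathbf{Q}| \leq \forwardTreeCount$, the range query, and the $\Theta(\nu)$ scan of its output), then split on whether $\mathbb{E}(\nu(r,\reverseTreeCount))$ is $\mathcal{O}$ or $\Omega$ of $\log \reverseTreeCount$. The one place you go beyond the paper is in absorbing the $\log\forwardTreeCount$ term into $\Theta(\nu(r,\reverseTreeCount))$ in the second case: the paper's own proof of this lemma elides that step and only invokes the ``both trees retain a constant fraction of the $n$ nodes'' argument later, in the proof of Theorem~\ref{th:perIteration}, whereas you supply the needed comparability assumption inside the lemma itself --- a matter of where the bookkeeping is done rather than a substantive divergence, and arguably the more careful placement.
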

\begin{proof}
\revisionTwo{Forward search involves a NN query and a DI operation on the forward tree and push/pop of a forward tree node from $\mathbf{Q}$; all take expected time $\mathcal{O}\parenShort{\log{}\forwardTreeCount}$. It also involves a NN query and RS query on the reverse tree. 
For constant $d$, these operations take expected time $\mathcal{O}\parenShort{\log{}\reverseTreeCount}$ and $\mathcal{O}\parenShort{\log{}\reverseTreeCount + \nu(r,\reverseTreeCount)}$, respectively. The expected time involved in processing the $\mathbb{E}( \nu(r, \reverseTreeCount))$ nodes returned by the RS query is $\Theta(\nu(r, \reverseTreeCount))$. The proof is finished by considering the case where  $\mathbb{E}(\nu(r, \reverseTreeCount)) = \mathcal{O}\parenShort{\log{}\reverseTreeCount}$; and alternatively, the case where $\mathbb{E}(\nu(r, \reverseTreeCount)) =  \Omega \parenShort{\log{}\reverseTreeCount}$.}
\end{proof}

\begin{lemma}
\revisionTwo{Given Assumptions \ref{assumption:A}-\ref{assumption:D}, the expected runtime of reverse search operations during iteration $i$ is $\mathcal{O}\parenShort{\log{}\forwardTreeCount + \log{}\reverseTreeCount}$.
\label{lemma:gbrrtReverse}}
\end{lemma}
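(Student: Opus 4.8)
The plan is to mirror the structure of the proof of \lemmaref{lemma:gbrrtForward}, enumerating each operation performed during the reverse-tree expansion (\algoref{alg:GBRRT}, \alglinerefs{alg:GBRRT:ReverseExpansion}{alg:GBRRT:updatePriorityQueue}) and bounding its expected cost via Assumptions~\ref{assumption:A}--\ref{assumption:D}. First I would account for $\mathtt{RevSrchFastExplore}$ (\algoref{alg:RevSrchFastExplore}): it issues a single $\mathtt{NearestNeighbor}$ query on the reverse tree, which costs $\mathcal{O}(\log \reverseTreeCount)$ in expectation by Assumption~\ref{assumption:B}, followed by best-input propagation, which generates a fixed number $N_{B}$ of Monte Carlo propagations and takes an $\argmin$ over them---both $\mathcal{O}(N_{B})$, i.e.\ constant in the tree sizes. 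The subsequent collision check and the dynamic insertion of the new node $x_{rev}$ into the reverse tree contribute $\mathcal{O}(1)$ and $\mathcal{O}(\log \reverseTreeCount)$ (Assumption~\ref{assumption:A}), respectively.

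Next I would analyze the priority-queue maintenance performed by $\mathtt{updatePriorityQueue}$ (\algoref{alg:updatePriorityQueue}). This routine issues one $\mathtt{NearestNeighbor}$ query into the \emph{forward} tree, costing $\mathcal{O}(\log \forwardTreeCount)$ by Assumption~\ref{assumption:B}, computes a single distance $d_{X}(x_{closest}, x_{rev})$ in constant time, and---if $\delta \le r_{k}$---performs one $\mathbf{Q}.\mathtt{update}$ operation. Since the number of entries in $\mathbf{Q}$ at iteration $i$ satisfies $\abs{\mathbf{Q}} \le \forwardTreeCount$, Assumption~\ref{assumption:D} bounds the update by $\mathcal{O}(\log \forwardTreeCount)$. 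Summing these contributions with the reverse-tree operations above yields a total expected cost of $\mathcal{O}(\log \forwardTreeCount + \log \reverseTreeCount)$ per iteration.

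The step I expect to matter most is the observation that distinguishes this bound from the forward-search bound of \lemmaref{lemma:gbrrtForward}: the reverse expansion never performs a range-search (RS) query. Whereas forward exploitation must enumerate the $\nu$ reverse-tree nodes within radius $r_{k}$ (so its cost depends on $\mathbb{E}(\nu)$), $\mathtt{updatePriorityQueue}$ touches the forward tree through a \emph{single} nearest-neighbor lookup only. Consequently no $\nu$-dependent term appears, and the bound collapses to the clean logarithmic form regardless of whether the sampled points behave as uniform i.i.d.\ draws---which is why Assumption~\ref{assumption:C} is not needed here. I would close by noting that the identical argument applies to GABRRT, whose reverse expansion (\algoref{alg:GABRRT}) replaces $\mathtt{updatePriorityQueue}$ with $\mathtt{UpdatePQueueND}$ (\algoref{alg:UpdatePQueueND}); this substitution swaps $d_{X}$ for $d^{\mathtt{ND}}_{X}$ but preserves the single-nearest-neighbor structure, leaving every bound unchanged.
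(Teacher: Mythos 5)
Your proposal is correct and follows essentially the same route as the paper's proof: it identifies the same dominant operations (nearest-neighbor query and insertion on the reverse tree at expected cost $\mathcal{O}(\log \reverseTreeCount)$, plus a nearest-neighbor query on the forward tree and a priority-queue update at expected cost $\mathcal{O}(\log \forwardTreeCount)$) and sums them. The paper states this in a single sentence; your additional observations---that no range-search query occurs in reverse expansion so no $\nu$-dependent term arises, and that the argument carries over to GABRRT---are accurate elaborations rather than a different argument.
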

\begin{proof}
\revisionTwo{The computationally dominant operations performed during reverse search are NN query and insertion in reverse tree which both take expected time $\mathcal{O}\parenShort{\log{}\reverseTreeCount}$ and NN query in forward tree and update operation in $\mathbf{Q}$ which both take expected time $\mathcal{O}\parenShort{\log{}\forwardTreeCount}$.
}
\end{proof}

\begin{theorem} \label{th:perIteration}
\revisionTwo{Given Assumptions \ref{assumption:A}-\ref{assumption:D}, if $\mathbb{E}(\nu(r,n)) = \mathcal{O}\parenShort{\log{} n }$, then the expected runtime of iteration $i$ is $\mathcal{O}\parenShort{\log{} n}$, alternatively, if $\mathbb{E}(\nu(r,n)) =  \Omega \parenShort{\log{} n }$,  then the expected runtime of iteration $i$ is $\Theta(\nu(r,n))$.}
\end{theorem}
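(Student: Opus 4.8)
The plan is to obtain the per-iteration cost by summing the expected runtime of the forward-search operations and the reverse-search operations, which are bounded separately in \lemmaref{lemma:gbrrtForward} and \lemmaref{lemma:gbrrtReverse}, and then to convert everything from the tree-specific counts $\forwardTreeCount$ and $\reverseTreeCount$ into the combined count $n$. First I would observe that both $\forwardTreeCount \leq n$ and $\reverseTreeCount \leq n$ by definition, so that the reverse-search bound $\mathcal{O}\parenShort{\log \forwardTreeCount + \log \reverseTreeCount}$ from \lemmaref{lemma:gbrrtReverse} is immediately $\mathcal{O}\parenShort{\log n}$. Second, I would record the bridging fact that the two trees grow at comparable rates: each iteration adds at most one node to each tree, so $\reverseTreeCount = \Theta(n)$, giving $\log \reverseTreeCount = \Theta(\log n)$; combining this with the proportionality of $\nu$ to the tree size established in \propositionref{prop:nodesInBallUniform}, \lemmaref{remark:approxQueryBound} and \lemmaref{lemma:nodesInBall} yields $\nu(r, \reverseTreeCount) = \Theta(\nu(r, n))$. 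This last identity is what lets me restate the forward-search cost, which is governed by an RS query on the reverse tree and therefore returns $\nu(r, \reverseTreeCount)$ neighbors, in terms of $\nu(r,n)$.

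With these facts in hand, I would split into the two regimes of the hypothesis. In the regime $\mathbb{E}(\nu(r,n)) = \mathcal{O}\parenShort{\log n}$, the bridging identity gives $\mathbb{E}(\nu(r, \reverseTreeCount)) = \mathcal{O}\parenShort{\log \reverseTreeCount}$, which is exactly the precondition of the first case of \lemmaref{lemma:gbrrtForward}; that lemma then certifies a forward-search cost of $\mathcal{O}\parenShort{\log \forwardTreeCount + \log \reverseTreeCount} = \mathcal{O}\parenShort{\log n}$. Adding the $\mathcal{O}\parenShort{\log n}$ reverse-search cost leaves the total at $\mathcal{O}\parenShort{\log n}$. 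In the regime $\mathbb{E}(\nu(r,n)) = \Omega\parenShort{\log n}$, the second case of \lemmaref{lemma:gbrrtForward} applies and the forward-search cost is $\Theta(\nu(r, \reverseTreeCount)) = \Theta(\nu(r, n))$; since $\nu(r,n) = \Omega\parenShort{\log n}$ dominates the $\mathcal{O}\parenShort{\log n}$ reverse-search term, the sum collapses to $\Theta(\nu(r,n))$.

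The step I expect to require the most care is the bridging identity $\nu(r, \reverseTreeCount) = \Theta(\nu(r,n))$, because it quietly relies on two things the paper is careful not to overclaim: that the reverse tree is a constant fraction of all nodes (which needs the observation that both trees expand at most once per iteration, together with an argument that neither tree stalls indefinitely relative to the other), and that the expected neighbor count scales linearly in the tree size. The latter is clean when points are i.i.d.\ uniform by \propositionref{prop:nodesInBallUniform}, and survives into the non-uniform $\Omega(N_T V_d(r))$ case via \lemmaref{lemma:nodesInBall}; I would therefore phrase the identity in the one-sided form consistent with those lemmas so that the theorem's $\Theta(\nu(r,n))$ conclusion is read as the tight characterization the paper intends rather than an unconditional equality. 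A secondary subtlety is confirming, in the first regime, that invoking the $\mathcal{O}$ branch of \lemmaref{lemma:gbrrtForward} is legitimate even when $\reverseTreeCount$ is momentarily small; here the monotonicity $\mathbb{E}(\nu(r, \reverseTreeCount)) \leq \mathbb{E}(\nu(r,n))$ suffices to transfer the $\mathcal{O}\parenShort{\log n}$ bound without needing the lower-bound half of the bridging identity.
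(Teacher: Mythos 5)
Your proposal is correct and follows essentially the same route as the paper: both combine Lemma~\ref{lemma:gbrrtForward} and Lemma~\ref{lemma:gbrrtReverse} and use the fact that neither tree's cardinality overwhelms the other (i.e., $|\mathbf{V}^{i}_{\mathrm{for}}|$ and $|\mathbf{V}^{i}_{\mathrm{rev}}|$ are each a nonvanishing fraction of $n$) to restate the tree-specific bounds in terms of $n$ and let the dominant term determine the per-iteration cost. Your treatment is somewhat more explicit than the paper's about the bridging identity and the two regimes, but the decomposition and key lemmas are identical.
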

\begin{proof}
\revisionTwo{Combine Lemma~\ref{lemma:gbrrtForward} and Lemma~\ref{lemma:gbrrtReverse} with the fact that ${2 \leq \forwardTreeCount  + \reverseTreeCount = n \leq 2i}$, and we alternate between attempting to grow the forward and reverse trees, so we expect both trees to have a nonzero proportion of the $n$ nodes in the limit, i.e., $\lim_{i \rightarrow \infty} \frac{\forwardTreeCount}{\reverseTreeCount + \forwardTreeCount} = c$ such that ${0 < c < 1}$. Because neither tree's cardinality is expected to overwhelm the other, the most conservative runtime associated with either tree determines the runtime of the overall algorithm.} 
\end{proof}

\revisionTwo{Theorem \ref{th:perIteration} shows that $\mathbb{E}(\nu(r,n))$ plays a key role in determining the expected runtime of iteration $i$ in GBRRT and GABRRT. $\mathbb{E}(\nu(r,n))$ is affected by the radius $r$ of the $d$-ball as well as the sampling distribution of the $n$ points in $X_{free}$. As already mentioned, the sampling distribution that results when running GBRRT or GABRRT is an open question. 
We shall now discuss the radius of the ball, looking at three cases of interest --- using a constant ball, and shrinking at the ball according to two functions that have appeared in the literature. We consider the uniform i.i.d. case followed by the case where ${\mathbb{E}(\nu(r,N_T)) = \Omega(N_T V_{d}(r))}$.}

\begin{theorem}\label{lem:iidPerIterationRuntime}
\revisionTwo{Given Assumptions \ref{assumption:A}-\ref{assumption:D}, and assuming $d$ is constant, nodes are drawn uniformly at random and i.i.d., and ${r_k = c}$ for a positive constant ${c < \infty}$, the expected runtime of GBRRT/GABRRT iteration $i$ is $\Theta(n)$.}
\end{theorem}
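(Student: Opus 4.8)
The plan is to obtain the result by substituting the uniform-sampling range-query estimate of Lemma~\ref{remark:approxQueryBound} into the per-iteration runtime dichotomy of Theorem~\ref{th:perIteration}. The key observation is that a \emph{constant} radius $r_k = c$, together with a \emph{constant} dimension $d$, makes the volume of the neighborhood $d$-ball $V_{d}(c)$ a positive finite constant that does not depend on $n$. Thus the whole argument reduces to identifying which branch of Theorem~\ref{th:perIteration} applies and reading off the corresponding bound.

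First I would invoke the uniform i.i.d.\ assumption: by Lemma~\ref{remark:approxQueryBound} (equivalently, Proposition~\ref{prop:nodesInBallUniform} combined with the constant volume-ratio bound for the approximate query), the expected number of neighbors returned by the approximate range-search query satisfies $\mathbb{E}(\nu(c,N_T)) = \Theta(N_T V_{d}(c))$. Since $V_{d}(c)$ is a positive constant once $c$ and $d$ are fixed, this simplifies to $\mathbb{E}(\nu(c,N_T)) = \Theta(N_T)$. Next I would handle the bookkeeping between the tree the query actually sees and the total node count: the governing range query in the forward-search portion runs on the reverse tree (Lemma~\ref{lemma:gbrrtForward}), so $N_T = \reverseTreeCount$, and because the algorithm alternates between growing the two trees we have $\reverseTreeCount = \Theta(n)$ (as argued in the proof of Theorem~\ref{th:perIteration}). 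Combining these gives $\mathbb{E}(\nu(c,n)) = \Theta(n)$.

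Finally I would check which branch of Theorem~\ref{th:perIteration} is in force. Since $n$ grows faster than $\log n$, we have $\mathbb{E}(\nu(c,n)) = \Theta(n) = \Omega(\log n)$, so the \emph{second} branch applies, yielding an expected per-iteration runtime of $\Theta(\nu(c,n)) = \Theta(n)$, which is the claim. The argument is essentially a direct substitution, so I do not expect a single hard technical obstacle; the only points requiring care are (i) confirming that $V_{d}(c)$ is genuinely $n$-independent once $d$ is fixed, so that the $\Theta(N_T V_d(c))$ estimate collapses to $\Theta(N_T)$, and (ii) the tree-size bookkeeping, namely that the argument passed to the range query ($\reverseTreeCount$) is a constant fraction of the total node count $n$, so that $\Theta(\reverseTreeCount)$ and $\Theta(n)$ coincide. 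Both are immediate from the stated assumptions and the alternation structure of the algorithm.
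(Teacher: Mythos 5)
Your argument is correct and follows essentially the same route as the paper's own proof: invoke Proposition~\ref{prop:nodesInBallUniform} and Lemma~\ref{remark:approxQueryBound} to get $\mathbb{E}(\nu(r,n)) = \Theta(n V_{d}(r)) = \Theta(n)$ for constant $r_k$ and $d$, note that $n = \Omega(\log n)$ so the second branch of Theorem~\ref{th:perIteration} applies, and conclude $\Theta(n)$. The extra bookkeeping you supply about the reverse-tree cardinality being a constant fraction of $n$ is already absorbed into the statement of Theorem~\ref{th:perIteration} in the paper, so your write-up is just a slightly more explicit version of the same proof.
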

\begin{proof}
\revisionTwo{${\mathbb{E}(\nu(r,n)) = \Theta(n V_{d}(r))}$ by Proposition~\ref{prop:nodesInBallUniform} and Lemma~\ref{remark:approxQueryBound}. By construction, $V_{d}(r)$ is constant for constant $r$ and so $\mathbb{E}(\nu(r,n)) = \Theta(n)$. 
By definition ${n = \Omega \parenShort{\log{} n }}$.  Thus, by Theorem~\ref{th:perIteration}, the expected runtime of iteration $i$ is $\Theta(\nu(r,n))$. Because $\mathbb{E}(\nu(r,n)) = \Theta(n)$ when $r_k$ is constant, the expected runtime of iteration $i$ is $\Theta(n)$.
}
\end{proof}

\begin{corollary} \label{cor:noniidruntime}
\revisionTwo{Given Assumptions \ref{assumption:A}-\ref{assumption:D} and constant $d$, and assuming nodes are sampled such that $\mathbb{E}(| \mathrm{V}_{\mathrm{near}, r} |) = \Omega(N_T V_{d}(r))$, and assuming ${r_k = c}$ for some positive constant ${c < \infty}$, then the {expected} runtime of GBRRT/GABRRT  iteration $i$ is $\Omega(n)$.}
\end{corollary}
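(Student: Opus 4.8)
The plan is to follow exactly the same skeleton as the proof of Theorem~\ref{lem:iidPerIterationRuntime}, but to carry through the weaker $\Omega$ bound supplied by the hypothesis rather than the exact $\Theta$ available in the i.i.d.\ case. First I would observe that because $d$ is held constant and the ball radius is fixed at $r_k = c$ with $0 < c < \infty$, the volume $V_d(c)$ of the $d$-ball is a strictly positive constant; hence $N_T V_d(c) = \Theta(N_T)$. Applying the standing hypothesis $\mathbb{E}(| \mathrm{V}_{\mathrm{near}, r} |) = \Omega(N_T V_d(r))$ at $r = c$ then yields $\mathbb{E}(| \mathrm{V}_{\mathrm{near}, c} |) = \Omega(n V_d(c)) = \Omega(n)$, where $n$ is the total number of stored nodes at the start of iteration $i$.

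Next I would invoke Lemma~\ref{lemma:nodesInBall}, whose hypothesis is precisely $\mathbb{E}(| \mathrm{V}_{\mathrm{near}, r} |) = \Omega(N_T V_d(r))$, to transfer this lower bound from the true near-set count to the (possibly larger) number of points $\nu(c,n)$ returned by the approximate range query, obtaining $\mathbb{E}(\nu(c,n)) = \Omega(n V_d(c)) = \Omega(n)$. Because $n$ dominates $\log n$, this immediately gives $\mathbb{E}(\nu(c,n)) = \Omega(\log n)$, which is exactly the precondition for the second case of Theorem~\ref{th:perIteration}. That theorem then certifies that the expected per-iteration runtime is $\Theta(\nu(c,n))$; retaining only the lower-bound half of this $\Theta$ and substituting $\mathbb{E}(\nu(c,n)) = \Omega(n)$ delivers the claimed expected runtime of $\Omega(n)$.

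The step that deserves the most care is the passage from ``$\Theta(\nu(c,n))$'' (a statement about the random query size) to an $\Omega(n)$ bound on the expected runtime: I would make explicit that expected runtime $= \Theta(\nu(c,n))$ is shorthand for expected runtime $= \Theta(\mathbb{E}[\nu(c,n)])$, so that the positive constant from the lower-bound half of the $\Theta$ composes with the $\Omega(n)$ growth of $\mathbb{E}[\nu(c,n)]$ to give expected runtime $\geq c' n$ for some $c' > 0$ and all sufficiently large $n$. The genuine conceptual obstacle---and the reason this is only a corollary yielding $\Omega$ rather than a restatement of Theorem~\ref{lem:iidPerIterationRuntime} yielding $\Theta$---is that the non-i.i.d.\ hypothesis supplies only a one-sided bound on the expected neighbourhood size. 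Unlike the uniform i.i.d.\ case, where Proposition~\ref{prop:nodesInBallUniform} together with Lemma~\ref{remark:approxQueryBound} pins $\mathbb{E}(\nu(c,n))$ down to $\Theta(n)$, here clustering of the sampled nodes could in principle inflate $\mathbb{E}(\nu(c,n))$ well beyond $\Theta(n)$, so no matching upper bound is available and $\Omega(n)$ is the strongest conclusion the hypotheses can support.
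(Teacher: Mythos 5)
Your proposal is correct and follows essentially the same route as the paper, whose entire proof is the one-liner ``combine Theorem~\ref{lem:iidPerIterationRuntime} with Lemma~\ref{lemma:nodesInBall}''; you have simply unpacked that combination explicitly (hypothesis $\Rightarrow$ $\mathbb{E}(\nu(c,n)) = \Omega(n)$ via Lemma~\ref{lemma:nodesInBall}, then the second case of Theorem~\ref{th:perIteration}, keeping only the lower-bound half). Your closing remark correctly identifies why only a one-sided $\Omega(n)$ bound survives in the non-i.i.d.\ setting.
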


\noindent \revisionTwo{Corollary~\ref{cor:noniidruntime} results from combining Theorem~\ref{lem:iidPerIterationRuntime}
with Lemma~\ref{lemma:nodesInBall}.}

\revisionTwo{Many sampling based motion planning algorithms shrink the neighborhood $d$-ball  to reduce expected runtime. For example, RRT$^*$ \cite{karaman2011sampling} shrinks the neighborhood ball according to ${r_{k} = \gamma \big(\frac{\log n}{n} \big)^{1/d}}$ to achieve an expected per iteration runtime of $\mathcal{O}(\log n)$.
We analyze the effects of using this strategy for GBRRT and GABRRT. While ${r_{k} = \min (\gamma (\frac{\log n}{n})^{1/d},  \delta_{hr})}$, the use of $\delta_{hr}$ ceases after a finite number of iterations because ${(\frac{\log n}{n})^{1/d} = \mathcal{O}(\delta_{hr})}$. So, our asymptotic analysis focuses on the  ${r_{k} = \gamma \big(\frac{\log n}{n} \big)^{1/d}}$ case.}

\begin{theorem}\label{lem:iidPerIterationRuntimeShrinking}
\revisionTwo{Given Assumptions \ref{assumption:A}-\ref{assumption:D} and constant $d$, and assuming nodes are drawn uniformly at random and i.i.d.\, and assuming $r_{k} = \gamma \big(\frac{\log n}{n} \big)^{1/d}$ then the {expected} runtime of GBRRT/GABRRT iteration $i$ is $\Theta(\log n)$.}
\end{theorem}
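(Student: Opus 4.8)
The plan is to reduce everything to the per-iteration characterization already established in Theorem~\ref{th:perIteration}, so that the only real work is computing the expected number of neighbors $\mathbb{E}(\nu(r_k,n))$ returned by the range query when the radius shrinks as $r_k = \gamma \big(\frac{\log n}{n}\big)^{1/d}$. Theorem~\ref{th:perIteration} states that the expected iteration runtime is $\mathcal{O}(\log n)$ when $\mathbb{E}(\nu(r,n)) = \mathcal{O}(\log n)$ and is $\Theta(\nu(r,n))$ when $\mathbb{E}(\nu(r,n)) = \Omega(\log n)$. Thus I first pin down the asymptotic order of $\mathbb{E}(\nu(r_k,n))$, decide which branch applies, and read off the runtime. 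Note that the statement already assumes $r_k = \gamma \big(\frac{\log n}{n}\big)^{1/d}$ (the $\delta_{hr}$ cap having ceased to bind after finitely many iterations, as argued just above the theorem), so I need not track the $\min(\cdot,\delta_{hr})$ term in the asymptotic analysis.

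Next I would invoke the uniform i.i.d.\ assumption. By Proposition~\ref{prop:nodesInBallUniform}, under uniform i.i.d.\ sampling the expected number of tree nodes inside a $d$-ball of radius $r$ satisfies $\mathbb{E}(|\mathbf{V}_{\mathrm{near},r}|) = \Theta(n\,V_d(r))$, and by Lemma~\ref{remark:approxQueryBound} the approximate range query returns, in expectation, a number of points of the same order, i.e.\ $\mathbb{E}(\nu(r,n)) = \Theta(n\,V_d(r))$ for constant $\varepsilon$ and constant $d$. Writing $V_d(r) = C_d\, r^d$ with $C_d = \frac{\pi^{d/2}}{\Gamma(1+d/2)}$ a constant once $d$ is fixed, I substitute the shrinking radius to obtain
\begin{equation*}
V_d(r_k) = C_d\, r_k^{\,d} = C_d\, \gamma^{d}\,\frac{\log n}{n}.
\end{equation*}
Consequently
\begin{equation*}
\mathbb{E}(\nu(r_k,n)) = \Theta\!\left( n \cdot C_d\, \gamma^{d}\,\frac{\log n}{n}\right) = \Theta(\log n),
\end{equation*}
since $C_d$ and $\gamma^d$ are constants for fixed $d$.

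Finally, because $\mathbb{E}(\nu(r_k,n)) = \Theta(\log n)$ is in particular $\Omega(\log n)$, the second branch of Theorem~\ref{th:perIteration} applies, giving an expected iteration runtime of $\Theta(\nu(r_k,n)) = \Theta(\log n)$, which is the claim. I do not anticipate a genuinely hard step here: the heavy lifting (the per-iteration reduction in Theorem~\ref{th:perIteration}, the volume-to-count correspondence in Proposition~\ref{prop:nodesInBallUniform}, and the approximate-query bound in Lemma~\ref{remark:approxQueryBound}) has all been done earlier, so the proof is essentially a substitution followed by selecting the correct branch. The only point demanding a little care is ensuring that the constants $C_d$ and $\gamma^d$ are genuinely absorbed into the $\Theta$ under the \emph{constant-$d$} hypothesis and that the $n$ in the numerator exactly cancels the $n$ in the denominator of $\frac{\log n}{n}$, leaving the clean $\Theta(\log n)$; this cancellation is precisely why this choice of shrink rate is the ``right'' one for obtaining logarithmic expected per-iteration cost.
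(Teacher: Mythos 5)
Your proposal is correct and follows essentially the same route as the paper: combine Proposition~\ref{prop:nodesInBallUniform} with Lemma~\ref{remark:approxQueryBound} to get $\mathbb{E}(\nu(r,n)) = \Theta(n V_d(r))$, substitute the shrinking radius so the $n$ cancels and $nV_d(r_k) = \Theta(\log n)$, and then invoke Theorem~\ref{th:perIteration}. Your explicit note that the $\Omega(\log n)$ branch of Theorem~\ref{th:perIteration} is the one that yields the tight $\Theta(\log n)$ bound is a small clarification the paper leaves implicit, but the argument is the same.
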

\begin{proof}
\revisionTwo{Combining Proposition~\ref{prop:nodesInBallUniform} with Lemma~\ref{remark:approxQueryBound}  ${\mathbb{E}(\nu(r,n)) = \Theta(n V_{d}(r))}$. Given constant $d$, by construction 
$n V_{d}(r) = n \big(\frac{\pi^{d/2}}{\Gamma({1 + d/2})} \big(\gamma \big(\frac{\log n}{n} \big)^{1/d} \big)^{d}\big) = \Theta(\log n)$, where $\gamma$ is constant. Theorem~\ref{th:perIteration} completes the proof.}
\end{proof}

\begin{corollary} \label{cor:noniidRuntimeShrinking}
\revisionTwo{Given Assumptions \ref{assumption:A}-\ref{assumption:D} and constant $d$, and assuming nodes are sampled such that $\mathbb{E}(| \mathrm{V}_{\mathrm{near}, r} |) = \Omega(N_T V_{d}(r))$, and assuming $r_{k} = \gamma \big(\frac{\log n}{n} \big)^{1/d}$ then the {expected} runtime of GBRRT/GABRRT iteration $i$ is $\Omega( \log n)$.}
\end{corollary}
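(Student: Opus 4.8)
The plan is to mirror the argument used for Corollary~\ref{cor:noniidruntime}, replacing the constant-ball volume computation with the shrinking-ball computation already carried out in the proof of Theorem~\ref{lem:iidPerIterationRuntimeShrinking}. The only structural difference between this corollary and Theorem~\ref{lem:iidPerIterationRuntimeShrinking} is that the i.i.d.\ sampling assumption (which, via Proposition~\ref{prop:nodesInBallUniform} and Lemma~\ref{remark:approxQueryBound}, yields the two-sided bound $\mathbb{E}(\nu(r,n)) = \Theta(n V_{d}(r))$) is relaxed to the one-sided sampling assumption $\mathbb{E}(| \mathrm{V}_{\mathrm{near}, r} |) = \Omega(N_T V_{d}(r))$. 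This weaker hypothesis can only support a lower bound, which is precisely why the conclusion is stated as $\Omega(\log n)$ rather than $\Theta(\log n)$.

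First I would invoke Lemma~\ref{lemma:nodesInBall}, whose hypothesis is exactly the sampling assumption of the corollary, to obtain $\mathbb{E}(\nu(r,n)) = \Omega(n V_{d}(r))$ (taking $N_T = n$). Next I would substitute the prescribed radius $r_{k} = \gamma \big(\frac{\log n}{n}\big)^{1/d}$ and reuse the volume identity established in the proof of Theorem~\ref{lem:iidPerIterationRuntimeShrinking}, namely that for constant $d$ and constant $\gamma$ one has $n V_{d}(r_{k}) = \Theta(\log n)$. Composing these two facts yields $\mathbb{E}(\nu(r,n)) = \Omega(\log n)$.

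Finally I would feed this into Theorem~\ref{th:perIteration}. Because $\mathbb{E}(\nu(r,n)) = \Omega(\log n)$ satisfies the hypothesis of the second (``alternatively'') branch of that theorem, the expected per-iteration runtime equals $\Theta(\nu(r,n))$; and since $\nu(r,n) = \Omega(\log n)$ from the preceding step, the expected runtime of iteration $i$ is $\Omega(\log n)$, as claimed.

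The one point requiring care — and the main (modest) obstacle — is the transitivity of the asymptotic bounds across the two regimes: I must verify that the $\Omega(\cdot)$ lower bound coming from the non-i.i.d.\ sampling assumption survives composition with the $\Theta(\cdot)$ volume identity to produce a genuine $\Omega(\log n)$ statement (rather than collapsing to something vacuous), and that this $\Omega(\log n)$ growth of $\mathbb{E}(\nu(r,n))$ indeed triggers the correct branch of Theorem~\ref{th:perIteration}. Once these asymptotic classes are lined up, the result follows immediately, making this corollary a direct composition of Theorem~\ref{lem:iidPerIterationRuntimeShrinking} and Lemma~\ref{lemma:nodesInBall}.
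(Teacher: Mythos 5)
Your proposal is correct and follows essentially the same route as the paper, which proves this corollary by combining Theorem~\ref{lem:iidPerIterationRuntimeShrinking} with Lemma~\ref{lemma:nodesInBall}; you have simply made the composition explicit (lower bound on $\mathbb{E}(\nu(r,n))$ from the sampling assumption, the volume identity $n V_{d}(r_k) = \Theta(\log n)$, and the second branch of Theorem~\ref{th:perIteration}). No gaps.
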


\noindent \revisionTwo{Corollary~\ref{cor:noniidRuntimeShrinking} results from combining Theorem~\ref{lem:iidPerIterationRuntimeShrinking} 
with Lemma~\ref{lemma:nodesInBall}.}

\revisionTwo{A recent result from Solovey et al.\ \cite{solovey2020revisiting} has highlighted a small error in Karaman and Frazolli's proof of almost sure {\it asymptotic optimally} for RRT$^*$ \cite{karaman2011sampling}. (Note: the proof of {\it probabilistic completeness} in \cite{karaman2011sampling} is still valid.) The original  proof from \cite{karaman2011sampling} has motivated the use of ${r_{k} = \gamma \big(\frac{\log n}{n} \big)^{1/d}}$ in a number of algorithms. Solovey et al. show that using a larger ${r_{k} = \gamma \big(\frac{\log n}{n} \big)^{1/(d+1)}}$ {\it is sufficient} to achieve almost sure asymptotic optimality. Whether or not using ${r_{k} =  \gamma \big(\frac{\log n}{n} \big)^{1/d}}$ is also sufficient is now an open problem \cite{solovey2020revisiting}.
The main difference between \cite{solovey2020revisiting} and \cite{karaman2011sampling} is a consideration of how sampling order affects local rewiring direction.  
While GBRRT and GABRRT are feasible (and not asymptotically optimal) planning algorithms, the reverse tree focusing heuristic is concerned with local wiring direction. 
While further study of the relationship between asymptotic optimality and focusing heuristics is outside the scope of this paper, we both (i) use ${r_{k} = \gamma \big(\frac{\log n}{n} \big)^{1/(d+1)}}$ in our experiments and (ii) acknowledge that future results may determine that ${r_{k} \leq  \gamma \big(\frac{\log n}{n} \big)^{1/d}}$ is sufficient for focusing heuristic purposes in GBRRT and/or GABRRT.  We now derive results for ${r_{k} = \gamma \left(\frac{\log n}{n} \right)^{1/(d+1)}}$.
}

\begin{theorem}\label{lem:iidPerIterationRuntimeShrinkingSlow}
\revisionTwo{Given Assumptions \ref{assumption:A}-\ref{assumption:D} and constant $d$, and assuming nodes are drawn uniformly at random and i.i.d., and assuming $r_{k} = \gamma \big(\frac{\log n}{n} \big)^{1/(d+1)}$, then the expected runtime of GBRRT/GABRRT iteration $i$ is $\Theta \parenShort{\left(\log n \right)^{d/(d+1)} \sqrt[d+1]{n}}$.}
\end{theorem}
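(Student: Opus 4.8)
The plan is to follow the same three-step template used to establish Theorem~\ref{lem:iidPerIterationRuntimeShrinking}, changing only the exponent in the neighborhood radius and carrying the resulting power arithmetic through. First I would invoke Proposition~\ref{prop:nodesInBallUniform} together with Lemma~\ref{remark:approxQueryBound} to conclude that, under the uniform i.i.d.\ sampling assumption and constant $d$, the expected number of neighbors returned by the (approximate) range query satisfies $\mathbb{E}(\nu(r,n)) = \Theta(n V_{d}(r))$, where $V_{d}(r) = \frac{\pi^{d/2}}{\Gamma(1+d/2)} r^{d}$ is the volume of the $d$-ball of radius $r$.

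The second step is the core calculation: substitute $r_{k} = \gamma \parenShort{\frac{\log n}{n}}^{1/(d+1)}$ into $n V_{d}(r)$. Since $d$ and $\gamma$ are constant, the $\frac{\pi^{d/2}}{\Gamma(1+d/2)}$ and $\gamma^{d}$ prefactors are absorbed into the asymptotic constant, leaving $n \cdot \parenShort{\frac{\log n}{n}}^{d/(d+1)} = (\log n)^{d/(d+1)} \, n^{1 - d/(d+1)} = (\log n)^{d/(d+1)} \, n^{1/(d+1)}$, using the exponent identity $1 - \frac{d}{d+1} = \frac{1}{d+1}$. Hence $\mathbb{E}(\nu(r,n)) = \Theta\parenShort{(\log n)^{d/(d+1)} \sqrt[d+1]{n}}$.

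The final step is to select the correct branch of Theorem~\ref{th:perIteration}. Because $n^{1/(d+1)}$ grows polynomially in $n$ and thus dominates any poly-logarithmic factor, we have $(\log n)^{d/(d+1)} \sqrt[d+1]{n} = \Omega(\log n)$, so $\mathbb{E}(\nu(r,n)) = \Omega(\log n)$ and the second ($\Omega$) branch of Theorem~\ref{th:perIteration} applies, giving expected iteration runtime $\Theta(\nu(r,n)) = \Theta\parenShort{(\log n)^{d/(d+1)} \sqrt[d+1]{n}}$ as claimed. Note that Theorem~\ref{th:perIteration} already absorbs the forward/reverse balance $n = \forwardTreeCount + \reverseTreeCount$, so no separate bookkeeping of the two trees is needed here.

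I do not expect a genuine obstacle: the argument is structurally identical to Theorem~\ref{lem:iidPerIterationRuntimeShrinking}, and the only points requiring care are (i) performing the exponent bookkeeping $1 - d/(d+1) = 1/(d+1)$ correctly, and (ii) verifying that the larger radius pushes $\mathbb{E}(\nu(r,n))$ into the $\Omega(\log n)$ regime, so that the per-iteration cost is governed by the number of returned neighbors rather than by the $\mathcal{O}(\log n)$ query and heap overhead. One subtlety worth a remark is that the $\min(\cdot,\delta_{hr})$ cap in the definition of $r_{k}$ is asymptotically irrelevant (as already noted before Theorem~\ref{lem:iidPerIterationRuntimeShrinking}), since the shrinking term eventually drops below $\delta_{hr}$ and the analysis may proceed on the uncapped expression.
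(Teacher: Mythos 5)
Your proposal is correct and follows essentially the same route as the paper: invoke Proposition~\ref{prop:nodesInBallUniform} and Lemma~\ref{remark:approxQueryBound} to get $\mathbb{E}(\nu(r,n)) = \Theta(n V_{d}(r))$, substitute the radius to obtain $\Theta\parenShort{(\log n)^{d/(d+1)}\sqrt[d+1]{n}}$, and conclude via Theorem~\ref{th:perIteration}. Your explicit check that this quantity is $\Omega(\log n)$ (so the $\Theta(\nu(r,n))$ branch of Theorem~\ref{th:perIteration} applies) is a detail the paper leaves implicit, but it is the right justification and does not change the argument.
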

\begin{proof}
\revisionTwo{Combining Proposition~\ref{prop:nodesInBallUniform} with Lemma~\ref{remark:approxQueryBound}  ${\mathbb{E}(\nu(r,n)) = \Theta(n V_{d}(r))}$. Given constant $d$, by construction 
$n V_{d}(r) = n \big(\frac{\pi^{d/2}}{\Gamma({1 + d/2})} \big(\gamma \big(\frac{\log n}{n} \big)^{1/(d+1)} \big)^{d}\big) = \Theta\parenShort{\big(\log n \big)^{d/(d+1)} \sqrt[d+1]{n}}$, where $\gamma$ is a constant. Applying Theorem~\ref{th:perIteration} completes the proof.} 
\end{proof}

\begin{corollary} \label{cor:noniidPerIterationRuntimeShrinkingSlow}
\revisionTwo{Given Assumptions \ref{assumption:A}-\ref{assumption:D} and constant $d$, and assuming nodes are sampled such that ${\mathbb{E}(| \mathrm{V}_{\mathrm{near}, r} |) = \Omega(N_T V_{d}(r))}$, and assuming \newline $r_{k} = \gamma \big(\frac{\log n}{n} \big)^{1/(d + 1)}$, then the {expected} runtime of GBRRT/GABRRT iteration $i$ is $ \Omega\parenShort{\left(\log n \right)^{d/(d+1)} \sqrt[d+1]{n}}$.}
\end{corollary}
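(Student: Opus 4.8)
The plan is to mirror the proofs of Corollary~\ref{cor:noniidruntime} and Corollary~\ref{cor:noniidRuntimeShrinking}: the lower bound will follow from Theorem~\ref{lem:iidPerIterationRuntimeShrinkingSlow} by replacing the uniform-i.i.d.\ $\Theta$ estimate on $\mathbb{E}(\nu(r,n))$ with the weaker $\Omega$ estimate that Lemma~\ref{lemma:nodesInBall} supplies under the more general sampling assumption. First I would invoke the hypothesis $\mathbb{E}(| \mathrm{V}_{\mathrm{near}, r} |) = \Omega(N_T V_{d}(r))$ and apply Lemma~\ref{lemma:nodesInBall} to obtain $\mathbb{E}(\nu(r,n)) = \Omega(n V_{d}(r))$; this is the only step in which the non-uniform sampling assumption is used.

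Next I would reuse the volume scaling already established inside Theorem~\ref{lem:iidPerIterationRuntimeShrinkingSlow}. Substituting $r_{k} = \gamma \big(\frac{\log n}{n}\big)^{1/(d+1)}$ into $V_{d}(r) = \frac{\pi^{d/2}}{\Gamma(1 + d/2)} r^{d}$ gives, for constant $d$ and constant $\gamma$, $n V_{d}(r) = \Theta\parenShort{\left(\log n\right)^{d/(d+1)} \sqrt[d+1]{n}}$. Chaining this with the $\Omega$ bound from the first step yields $\mathbb{E}(\nu(r,n)) = \Omega\parenShort{\left(\log n\right)^{d/(d+1)} \sqrt[d+1]{n}}$.

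Finally I would verify which branch of Theorem~\ref{th:perIteration} is triggered. Since $\sqrt[d+1]{n}$ dominates any polylogarithmic factor, $n V_{d}(r)$ grows strictly faster than $\log n$, so $\mathbb{E}(\nu(r,n)) = \Omega(\log n)$ and the second branch applies, giving that the expected iteration runtime equals $\Theta(\nu(r,n))$ and hence is $\Omega\parenShort{\left(\log n\right)^{d/(d+1)} \sqrt[d+1]{n}}$, as claimed. The argument is essentially bookkeeping; the only point requiring a moment of care is confirming that the second case of Theorem~\ref{th:perIteration}, rather than the first, is the relevant one, which is precisely the observation that $n V_{d}(r) \neq \mathcal{O}(\log n)$ for this choice of $r_{k}$. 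No estimate beyond Lemma~\ref{lemma:nodesInBall} and the already-established volume scaling is needed.
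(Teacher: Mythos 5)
Your proposal is correct and follows essentially the same route as the paper, which proves this corollary in one line by combining the $1/(d+1)$ shrinking-radius theorem (Theorem~\ref{lem:iidPerIterationRuntimeShrinkingSlow}) with Lemma~\ref{lemma:nodesInBall}; you simply spell out the intermediate steps (the $\Omega(nV_d(r))$ bound on $\mathbb{E}(\nu(r,n))$, the volume scaling, and the check that the second branch of Theorem~\ref{th:perIteration} applies), all of which are the implicit content of the paper's argument. The extra care about which branch of Theorem~\ref{th:perIteration} is triggered is a useful clarification but not a different approach.
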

\noindent \revisionTwo{Corollary~\ref{cor:noniidPerIterationRuntimeShrinkingSlow} results from combining Lemma~\ref{lem:iidPerIterationRuntimeShrinking}
with Lemma~\ref{lemma:nodesInBall}.}

\bibliography{citation}

\vspace{-1.3cm}

\begin{IEEEbiography}[{\includegraphics[width=1in,height=1.25in,clip,keepaspectratio]{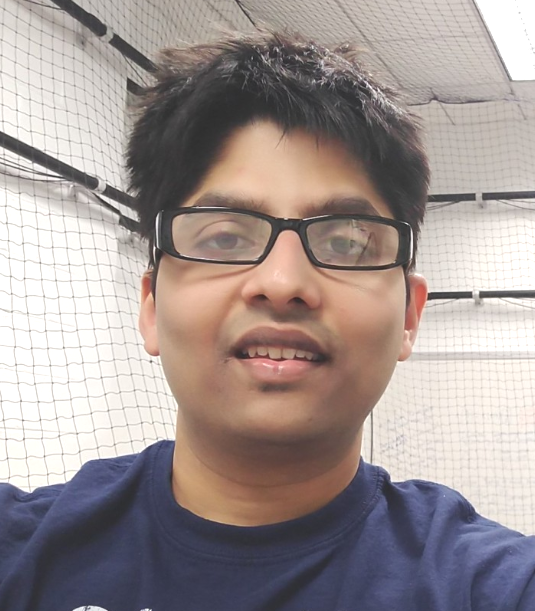}}]{Sharan Nayak}
received his Bachelor of Engineering (B.E.) degree in electronics and communications engineering from Visveswaraya Technological University, India, in 2009; his M.S. degree in electrical and computer engineering from Georgia Institute of Technology, Atlanta, GA, USA, in 2011 and M.S. degree in aerospace engineering from University of Maryland (UMD), College Park, MD in 2020. He is currently pursuing a Ph.D. degree in the department of Aerospace Engineering at UMD under the supervision of Dr. Michael Otte. His research interests are in motion planning of single and multi-agent autonomous systems. He is a student member of IEEE and a recipient of the Clark Doctoral Fellowship.
\end{IEEEbiography}

\vspace{-1.2cm}

\begin{IEEEbiography}[{\includegraphics[width=1in,height=1.25in,clip,keepaspectratio]{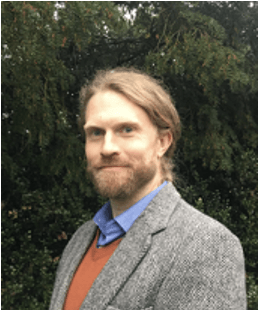}}]{Michael Otte}
(M’07) received the B.S. degrees in aeronautical
engineering and computer science from Clarkson University, Potsdam,
New York, USA, in 2005, and the M.S. and Ph.D. degrees in computer
science from the University of Colorado Boulder, Boulder, CO, USA, in
2007 and 2011, respectively.
From 2011 to 2014, he was a Postdoctoral Associate at the Massachusetts
Institute of Technology. From 2014 to 2015, he was a Visiting Scholar at
the U.S. Air Force Research Lab. From 2016 to 2018, he was a National
Research Council RAP Postdoctoral Associate at the U.S. Naval Research
Lab. He has been with the Department of Aerospace Engineering, at the
University of Maryland, College Park, MD, USA, since 2018. He is the
author of over 30 articles, and his research interests include autonomous robotics, motion planning, and multi-agent systems.
\end{IEEEbiography}
\nopagebreak

\end{document}